\newtheorem{theorem}{Theorem}[section]
\newtheorem{lemma}[theorem]{Lemma}
\newtheorem{corollary}[theorem]{Corollary}
\theoremstyle{definition}
\newtheorem{observation}{Observation}
\newcommand{\bx}{\mathbf{x}}
\newcommand{\by}{\mathbf{y}}
\newcommand{\bo}{\mathbf{o}}
\newcommand{\ba}{\mathbf{a}}
\newcommand{\bz}{\mathbf{z}}
\newcommand{\be}{\mathbf{e}}
\newcommand{\bw}{\mathbf{w}}
\newcommand{\bv}{\mathbf{v}}
\newcommand{\bu}{\mathbf{u}}
\newcommand{\bp}{\mathbf{p}}
\newcommand{\bq}{\mathbf{q}}
\newcommand{\bone}{\mathbf{1}}
\newcommand{\bzero}{\mathbf{0}}
\title{Stronger Approximation Guarantees for Non-Monotone $\gamma$-Weakly DR-Submodular Maximization}
\author{
  Hareshkumar Jadav\textsuperscript{1},
  Ranveer Singh\textsuperscript{1},
  Vaneet Aggarwal\textsuperscript{2}\thanks{Emails: \href{mailto:hareshjadavcse@gmail.com}{hareshjadavcse@gmail.com}, \href{mailto:ranveer@iiti.ac.in}{ranveer@iiti.ac.in}, \href{mailto:vaneet@purdue.edu}{vaneet@purdue.edu}. This work will be presented in part in International Conference on Autonomous Agents and Multiagent Systems (AAMAS), May 2026.}\\
  \textsuperscript{1}IIT Indore, India, 
  \textsuperscript{2}Purdue University, USA
}
\date{}
\begin{document}

\maketitle

\begin{abstract}
    Maximizing submodular objectives under constraints is a fundamental problem in machine learning and optimization. We study the maximization of a nonnegative, non-monotone $\gamma$-weakly DR-submodular function over a down-closed convex body. Our main result is an approximation algorithm whose guarantee depends smoothly on \(\gamma\); in particular, when \(\gamma=1\) (the DR-submodular case) our bound recovers the \(0.401\) approximation factor, while for \(\gamma<1\) the guarantee degrades gracefully and, it improves upon previously reported bounds for \(\gamma\)-weakly DR-submodular maximization under the same constraints. Our approach combines a Frank–Wolfe–guided continuous-greedy framework with a \(\gamma\)-aware double-greedy step, yielding a simple yet effective procedure for handling non-monotonicity. This results in state-of-the-art guarantees for non-monotone \(\gamma\)-weakly DR-submodular maximization over down-closed convex bodies.
\end{abstract}

Keywords: Combinatorial Optimization, Weakly DR Submodular Function, Approximation Algorithm

\section{Introduction}

Submodular maximization under various constraints is a central problem in optimization and theoretical computer science. Foundational work established much of the modern toolkit and sparked a rich line of research~\cite{conforti1984submodular,fisher2009analysis,korte1978analysis,nemhauser1978best,nemhauser1978analysis}. Informally, a set function is submodular if marginal gains decrease as the chosen set grows (a discrete “diminishing returns’’ property). One key reason the problem remains important is its wide scope: many classic combinatorial tasks can be cast as maximizing a submodular objective, including \emph{Max-Cut}, the \emph{assignment problem}, \emph{facility location}, and \emph{max bisection}~\cite{haastad2001some,karp2009reducibility,chekuri2005polynomial,feige2006approximation,cornuejols1977uncapacitated,austrin2016better}.

On the continuous side, Diminishing-Returns (DR) submodular models support Maximum A Posteriori (MAP) inference for determinantal point processes~\cite{Bian2017NeurIPS,Bian2019ICML}. Informally, a DR-submodular function is a continuous analogue of submodularity in which marginal gains along each coordinate decrease as the current point increases (a “diminishing returns’’ property in $\mathbb{R}^n$). We give the formal definition in the preliminaries. The problem also arise in online allocation and learning~\cite{chen2018online,Zhang2019NeurIPS}.

Classical results for continuous DR-submodular maximization are based on projection-free, first-order methods with provable guarantees. 
Foundational work developed the geometry and algorithms for continuous DR-submodular maximization~\cite{Bian2017NeurIPS}, as well as optimal algorithms for continuous non-monotone DR-submodular objectives~\cite{niazadeh2020jmlr}. 
This paradigm extends to online and stochastic information models~\cite{chen2018online,zhang2022stochastic}, and recent progress sharpens bounds and constraint handling via DR-based analyses~\cite{buchbinder2024constrained}. 
Within this line, the \emph{weakly} DR framework broadens modeling reach by relaxing diminishing returns to a factor \(\gamma\in(0,1]\): roughly, marginal gains are still decreasing, but only up to a controlled multiplicative slack $\gamma$. This provides unified algorithms and guarantees~\cite{hassani2017gradient,pedramfar2023unified,pedramfar2024unified,pedramfar2024linear,nie2024gradient}.
These developments motivate our focus on continuous DR and weakly DR objectives over down-closed convex bodies.

A range of first-order methods are known for continuous DR and weakly DR maximization over down-closed convex bodies. 
\emph{Projected gradient} methods are the most basic scheme: they take a step in the direction of the gradient and then project back to the feasible region \(P\)~\cite{hassani2017gradient,zhang2022stochastic}. 
\emph{Frank--Wolfe (projection-free)} methods, also known as conditional gradient methods, avoid projections by instead solving a linear subproblem \(\max_{y\in P} \langle y,\nabla F(x)\rangle\) and moving towards this direction; they leverage DR/weakly-DR restricted concavity to certify progress~\cite{jaggi2013fw,lacoste2015linearFW,Bian2017NeurIPS,niazadeh2020jmlr,pedramfar2023unified}. 
\emph{Double--greedy} style methods adapt the discrete bracketing idea to the continuous setting. They keep two solutions, a “lower’’ and an “upper’’ one, and repeatedly adjust their coordinates in opposite directions so that the two solutions move closer together~\cite{buchbinder2015tight,niazadeh2020jmlr,pedramfar2023unified}. 
These techniques have also been extended to online, bandit, and stochastic models~\cite{chen2018online,zhang2022stochastic,pedramfar2023unified}.

At a high level, our algorithm has two interacting phases. 
We combine a $\gamma$-aware Frank--Wolfe--guided measured continuous greedy step with a $\gamma$-aware double--greedy step. 
Intuitively, the Frank--Wolfe component drives global progress by following promising directions inside the convex body, while the double--greedy component resolves local conflicts between including or excluding mass on each coordinate. 
Together, these phases produce a single solution whose quality we certify via a combined performance guarantee.
In our method, we design a $\gamma$-aware \emph{Frank-Wolfe guided measured continuous Greedy ($\gamma$-FWG)} algorithm with $\gamma$-dependent thresholds and progress certificates, pair it with a $\gamma$-aware \emph{double--greedy}, and then \emph{optimize a convex mixture of their certificates} to obtain a performance curve $\Phi_\gamma$ that strictly improves the baseline \cite{Bian2017NeurIPS,PedramfarQuinnAggarwal2024} for all $\gamma\in(0,1)$ while matching the DR boundary at $\gamma=1$ \cite{buchbinder2024constrained}, yielding state-of-the-art $\gamma$-dependent guarantees.

 Beyond establishing new approximation guarantees and improving the best-known bounds, our approach introduces the following \emph{technical novelties}:

\begin{itemize}
    \item We design a novel, $\gamma$-aware Frank--Wolfe–guided measured continuous greedy algorithm for the non-monotone $\gamma$-weakly DR setting. Our method introduces $\gamma$-dependent threshold schedules and progress certificates that balance ascent along Frank--Wolfe directions with measured updates, while preserving feasibility and ensuring a monotone decay of the residual gap.

\item Weakly-DR functions behave asymmetrically. In the DR case, one has the `naive' inequality
\begin{equation}
F(\mathbf{x})\ \ge\ \frac{F(\mathbf{x} \vee \mathbf{y}) + F(\mathbf{x} \wedge \mathbf{y})}{2}.
\end{equation}
In contrast, in the $\gamma$-weakly DR setting, Lemma~3.1 yields
\begin{equation}
F(\mathbf{x})\ \ge\ \frac{\gamma^{2}\,F(\mathbf{x} \vee \mathbf{y}) + F(\mathbf{x} \wedge \mathbf{y})}{1+\gamma^{2}}.
\end{equation}
Here only the $F(\mathbf{x}\vee\mathbf{y})$ term is scaled by $\gamma^{2}$, so the two sides of the inequality are no longer treated symmetrically. Similar one-sided $\gamma$-dependence appears in Lemmas~2.1, 2.2, and several auxiliary results in the appendix. This loss of symmetry breaks the standard potential-based arguments used in classical DR analyses. Our approach therefore uses a case-based progress analysis that explicitly tracks one-sided marginal decay and introduces $\gamma$-aware thresholds that couple Frank--Wolfe steps with measured updates, allowing us to certify progress despite this asymmetry.
    
    \item In parallel, we adapt the classical double–greedy potential to a $\gamma$-weighted variant that explicitly balances asymmetric gains and losses, yielding tight progress guarantees across the weakly-DR regime.
\end{itemize}

\subsection{Our Contribution}
In this paper, we consider \emph{non-monotone} $\gamma$-weakly DR-submodular objectives over a down-closed convex body $P \subseteq [0,1]^n$ with $0 < \gamma \le 1$.
Informally, $\gamma$-weak DR means that marginal gains decrease as coordinates increase, but only up to a factor $\gamma \in (0,1]$ capturing objectives that exhibit \emph{partial}, rather than full, diminishing returns.
In this regime, the canonical approximation envelope is $\kappa(\gamma) = \gamma e^{-\gamma}$ (which recovers $e^{-1}$ at $\gamma = 1$) \cite{Bian2017NeurIPS,PedramfarQuinnAggarwal2024}.
In \cite{Bian2017NeurIPS}, this approximation guarantee is achieved with time complexity $\mathcal{O}(1/\varepsilon)$, where $\varepsilon > 0$ is an accuracy parameter, while \cite{PedramfarQuinnAggarwal2024} attains the same guarantee with running time $\mathcal{O}(1/\varepsilon^3)$.
In contrast, our algorithm achieves the $\Phi_\gamma$-approximation in time $\mathrm{Poly}(n, \delta^{-1})$.
Recently, Buchbinder and Feldman introduced a novel technique that yields a $0.401$-approximation for the (fully) DR-submodular case \cite{buchbinder2024constrained}; their algorithm also runs in time $\mathrm{Poly}(n, \delta^{-1})$.

\emph{This paper aims to close the gap between weakly and full DR.}
We develop $\gamma$-aware algorithms and analyses that (i) recover the classical DR constant at $\gamma=1$ \cite{buchbinder2024constrained} and (ii) strictly improve upon the baseline $\kappa(\gamma)=\gamma e^{-\gamma}$ throughout the weakly-DR regime   \cite{Bian2017NeurIPS,PedramfarQuinnAggarwal2024}.
Our approach combines a $\gamma$-aware Frank--Wolfe--guided measured continuous greedy subroutine with a $\gamma$-aware double--greedy, and then optimizes a convex mixture of their certificates.
The resulting guarantee $\Phi_\gamma$ is an \emph{objective value} determined by three tunable parameters $(\alpha,r,t_s)$—where $\alpha$ is the mixing weight between certificates and $(r,t_s)$ govern schedule/tuning in the FW-guided and double--greedy components—and we choose these to maximize $\Phi_\gamma$ at each given $\gamma$.
A formal statement of our main result appears as Theorem~\ref{thm:main}.
To contextualize our guarantees, Figure~\ref{fig:phi-gamma} plots the optimized curve $\Phi_\gamma$, and Table~\ref{tab:phi-params} reports representative values and the associated parameter choices $(\alpha,r,t_s)$.

In this direction, our \emph{key contributions} are summarized below—each item highlights a distinct component of our algorithmic framework and its guarantee.
\begin{itemize}
  \item 
  We present a \(\gamma\)-aware Frank--Wolfe--guided measured continuous greedy and a \(\gamma\)-aware double--greedy,
  each delivering explicit constant-factor guarantees for \(\gamma\)-weakly DR objectives over down-closed convex bodies.

  \item 
  We derive a parameter-optimized convex mixture of the two certificates producing a performance curve \(\Phi_\gamma\)
  that strictly improves the prior baseline \(\kappa(\gamma)=\gamma e^{-\gamma}\) for all \(\gamma\in(0,1)\) and
  matches the classical DR constant at \(\gamma=1\).

  \item 
  Our proofs are modular and avoid curvature assumptions; they recover the DR boundary as a special case
  and extend smoothly across the weakly-DR spectrum.

  \item 
  The methods use only first-order information and linear optimization over \(P\) (Frank--Wolfe oracles),
  making them projection-free and suitable for large-scale instances.
\end{itemize}

Non-monotone $\gamma$-weakly DR-submodular objectives naturally arise when full DR-submodular models, such as continuous budget allocation, DPP-based diversity objectives, and mean-field inference for probabilistic submodular models, are augmented with practical penalties including over-exposure costs, risk or variance regularization, and intensity constraints. These augmentations retain the weak DR structure while inducing non-monotonicity, which is exactly the regime targeted by our algorithm.

\begin{figure}[t]
  \centering
\includegraphics[width=0.77\linewidth]{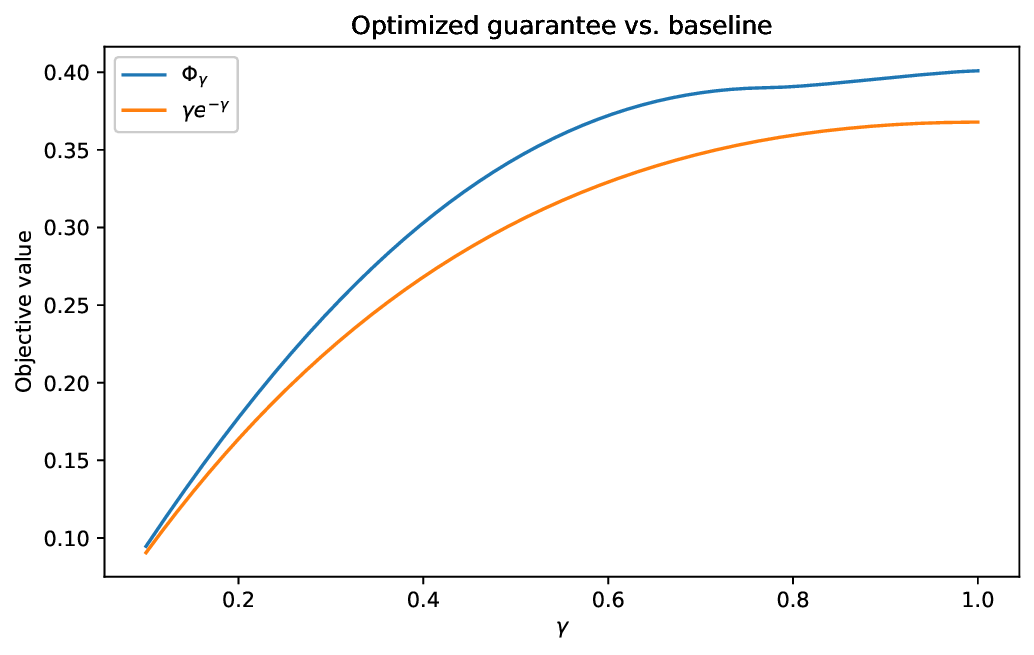}%
  \caption{Approximation guarantee versus weakly-DR parameter. 
  The horizontal axis is the weakly-DR parameter $\gamma\in(0,1]$ and the vertical axis is the approximation factor. 
  We plot our optimized guarantee $\Phi_\gamma$ (blue curve) alongside the non-monotone weakly-DR baseline $\kappa(\gamma)=\gamma e^{-\gamma}$ (orange curve). 
  Across the entire regime $\gamma\in(0,1)$, $\Phi_\gamma$ strictly exceeds $\kappa(\gamma)$, and at $\gamma=1$ (full DR) our curve reaches $0.401$, matching the current best bound. 
  Selected parameter choices $(\alpha,r,t_s)$ used to construct $\Phi_\gamma$ are reported in Table~\ref{tab:phi-params}.}
  \label{fig:phi-gamma}
\end{figure}

\begin{table}[t]
\centering
\begin{tabular}{@{}rrrrrr@{}}
\toprule
$\gamma$ & $\Phi_\gamma$ &  $\gamma e^{-\gamma}$ & $\alpha$ & $r$ & $t_s$ \\
\midrule
0.1 & 0.095 & 0.090 & 0.001 & 3.750 & 0.000 \\
0.2 & 0.178 & 0.164 & 0.000 & 3.083 & 0.000 \\
0.3 & 0.247 & 0.222 & 0.000 & 2.833 & 0.000 \\
0.4 & 0.303 & 0.268 & 0.000 & 2.667 & 0.000 \\
0.5 & 0.345 & 0.303 & 0.000 & 2.583 & 0.000 \\
0.6 & 0.372 & 0.329 & 0.000 & 2.417 & 0.000 \\
0.7 & 0.387 & 0.348 & 0.000 & 2.333 & 0.000 \\
0.8 & 0.391 & 0.359 & 0.054 & 2.250 & 0.075 \\
0.9 & 0.396 & 0.366 & 0.160 & 2.083 & 0.267 \\
1.0 & 0.401 & 0.368 & 0.197 & 2.220 & 0.368 \\
\bottomrule
\end{tabular}
\caption{Numerical comparison of our optimized guarantee $\Phi_\gamma$ against non-monotone weakly-DR baseline $\kappa(\gamma)=\gamma e^{-\gamma}$. Each row corresponds to a choice of the weakly-DR parameter $\gamma\in(0,1]$; the second and third columns report the achieved approximation factor $\Phi_\gamma$ and the baseline, respectively. The final three columns list representative internal parameters $(\alpha,r,t_s)$ used to construct $\Phi_\gamma$: $\alpha$ is the convex mixing weight between the two certificates, while $r$ and $t_s$ are schedule/tuning parameters of the $\gamma$ aware FW-guided measured continuous greedy and $\gamma$-aware double--greedy components (see the algorithm description). All values are rounded to three decimals.}
\label{tab:phi-params}
\end{table}

\subsection{Related Work}
For non-monotone, $\gamma$-weakly DR objectives over down-closed convex bodies, unified analyses establish the \emph{baseline envelope} $\kappa(\gamma)=\gamma e^{-\gamma}$ \cite{PedramfarQuinnAggarwal2024}. 
An earlier line derives the same envelope via continuous/“measured” greedy combined with projection-free (Frank--Wolfe) arguments adapted to one-sided weakly-DR gradients \cite{Bian2017NeurIPS}. 
A complementary \emph{stationary-point baseline} shows that any first-order stationary point obtained by projected or mirror ascent achieves $\gamma^2/(1+\gamma^2)$ of $\mathrm{OPT}$ (recovering $1/2$ at $\gamma=1$), with stochastic/online refinements using non-oblivious surrogates and unbiased gradients \cite{hassani2017gradient,chen2018online,zhang2022stochastic}.

When $\gamma=1$, the problem reduces to continuous non-monotone DR-submodular maximization over down-closed convex bodies.
This line begins with the multilinear/continuous-greedy framework \cite{chekuri2011nonmonotone} and the \emph{Measured Continuous Greedy} guarantee of $1/e\approx0.367$ \cite{FeldmanNaorSchwartzWard2011}, with subsequent improvements culminating in the current best constant $0.401$ \cite{buchbinder2024constrained}.
Hardness results still leave a notable gap for non-monotone objectives under common constraints (with no better than $\approx 0.478$ \cite{gharan2011submodular}), underscoring the importance of tighter algorithms at $\gamma=1$.
Our guarantees match this DR boundary while delivering strict improvements over $\kappa(\gamma)$ for every $\gamma\in(0,1)$.

\section{Preliminaries and Notation}\label{sec:prelim}
In this section, we introduce the basic notation, definitions, and assumptions used throughout the paper. We use boldface letters (e.g., \(\mathbf{x}, \mathbf{y}\)) to denote vectors in \(\mathbb{R}^n\), and write a vector as \(\mathbf{x} = (x_1, \cdots, x_n)\). The all-ones and all-zeros vectors are denoted by \(\mathbf{1}\) and \(\mathbf{0}\), respectively. We use \(\mathbf{e}_i\) to denote the \(i\)-th standard basis vector in \(\mathbb{R}^n\). Let \(N\) be the ground set with \(|N| = n\) elements. The discrete and continuous hypercubes are defined as
\[
\{0,1\}^n \;=\; \bigl\{\mathbf{x}\in\mathbb{R}^n : x_i\in\{0,1\}\ \forall i\bigr\},
\]
and
\[
[0,1]^n \;=\; \bigl\{\mathbf{x}\in\mathbb{R}^n : x_i\in[0,1]\ \forall i\bigr\},
\]
respectively. For a positive integer \(n\), we write \([n] := \{1,2,\hdots,n\}\).

For \(\mathbf{x},\mathbf{y}\in\mathbb{R}^n\), we use the componentwise order:  
\(\mathbf{x}\le \mathbf{y}\) if and only if \(x_i\le y_i\) for all \(i\) (and \(\mathbf{x}<\mathbf{y}\) if and only if \(x_i<y_i\) for all \(i\)).  
The \emph{join} and \emph{meet} are defined as
\[
\mathbf{x}\vee \mathbf{y} := (\max\{x_i,y_i\})_{i=1}^n,
\quad
\mathbf{x}\wedge \mathbf{y} := (\min\{x_i,y_i\})_{i=1}^n.
\]
We also use the elementwise (Hadamard) product \(\mathbf{x}\odot \mathbf{y}\in\mathbb{R}^n\), defined by
\((\mathbf{x}\odot \mathbf{y})_i := x_i\,y_i\) for each \(i\in[n]\), and the standard inner product
\(\langle \mathbf{x},\mathbf{y}\rangle := \sum_{i=1}^n x_i y_i\).  
For vectors with entries in \([0,1]\), the coordinatewise \emph{probabilistic sum} is
\begin{equation}
    \mathbf{x} \oplus \mathbf{y} \;:=\; \mathbf{1}-\bigl(\mathbf{1}-\mathbf{x}\bigr)\odot\bigl(\mathbf{1}-\mathbf{y}\bigr).
\end{equation}
For vectors \(\mathbf{x}^{(1)},\hdots,\mathbf{x}^{(m)}\in[0,1]^n\), we write
\begin{align}
\bigoplus_{j=1}^m \mathbf{x}^{(j)}
&= \mathbf{1}-\bigodot_{j=1}^m\bigl(\mathbf{1}-\mathbf{x}^{(j)}\bigr) = \mathbf{1} - \big((\mathbf{1}-\mathbf{x}^{(1)})\odot\cdots\odot (\mathbf{1}-\mathbf{x}^{(m)})\big).
\label{eq:oplus-odot}
\end{align}

The operators \(\odot\) and \(\oplus\) bind more tightly than vector addition or subtraction, so
\(\mathbf{x} + \mathbf{y}\odot \mathbf{z}\) means \(\mathbf{x} + (\mathbf{y}\odot \mathbf{z})\).  
When a scalar function is applied to a vector, it is interpreted elementwise; for instance, for \(\mathbf{x}\in[0,1]^n\), the vector \(e^{\mathbf{x}}\) has entries \(\bigl(e^{\mathbf{x}}\bigr)_i = e^{x_i}\).

A set \(P\subseteq\mathbb{R}^n\) is \emph{convex} if
\(\lambda \mathbf{x}+(1-\lambda)\mathbf{y}\in P\) for all \(\mathbf{x},\mathbf{y}\in P\) and \(\lambda\in[0,1]\).
A \emph{convex body} is a compact, convex set with nonempty interior.
A polytope \(P\subseteq[0,1]^n\) is \emph{down-closed} if \(\mathbf{y}\in P\) implies
\(\mathbf{x}\in P\) for every \(\mathbf{x}\in\mathbb{R}^n\) with \(\mathbf{0}\le \mathbf{x}\le \mathbf{y}\).
We say that \(P\) is \emph{solvable} if linear optimization over \(P\) can be performed in polynomial time.
The (Euclidean) \emph{diameter} of a set \(P\subseteq\mathbb{R}^n\) is
\[
D := \sup\{\ \|\mathbf{x}-\mathbf{y}\|_2 : \mathbf{x},\mathbf{y}\in P\ \}.
\]

A nonnegative set function \(f:\{0,1\}^n\to\mathbb{R}_{\ge 0}\) is
\emph{submodular} if, for all \(\mathbf{x},\mathbf{y}\in\{0,1\}^n\) with \(\mathbf{x}\le \mathbf{y}\) and for all
\(\mathbf{a}\in\{0,1\}^n\),
\[
f(\mathbf{x}\vee \mathbf{a})-f(\mathbf{x})\;\ge\;f(\mathbf{y}\vee \mathbf{a})-f(\mathbf{y}).
\]
In the continuous case, a nonnegative function \(F:[0,1]^n\to\mathbb{R}_{\ge 0}\) is
\emph{diminishing-returns (DR) submodular} if, for all \(\mathbf{x},\mathbf{y}\in[0,1]^n\)
with \(\mathbf{x}\le \mathbf{y}\), any coordinate \(i\in[n]\), and any \(c>0\) such that
\(\mathbf{x}+c\,\mathbf{e}_i,\ \mathbf{y}+c\,\mathbf{e}_i\in[0,1]^n\),
\[
F(\mathbf{x}+c\,\mathbf{e}_i)-F(\mathbf{x})\;\ge\;F(\mathbf{y}+c\,\mathbf{e}_i)-F(\mathbf{y}).
\]
If \(F\) is differentiable, this is equivalent to \(\nabla F(\mathbf{x}) \ge \nabla F(\mathbf{y})\) for all \(\mathbf{x}\le \mathbf{y}\)~\cite{Bian2019ICML}.

A nonnegative
function \(F:[0,1]^n\to\mathbb{R}_{\ge 0}\) is \emph{\(\gamma\)-weakly DR-submodular}
if, for all \(\mathbf{x},\mathbf{y}\in[0,1]^n\) with \(\mathbf{x}\le \mathbf{y}\), any \(i\in[n]\), and any \(c>0\)
with \(\mathbf{x}+c\,\mathbf{e}_i,\ \mathbf{y}+c\,\mathbf{e}_i\in[0,1]^n\),
\begin{equation}
F(\mathbf{x}+c\,\mathbf{e}_i)-F(\mathbf{x})\;\ge\;\gamma\big(F(\mathbf{y}+c\,\mathbf{e}_i)-F(\mathbf{y})\big).
\end{equation}
When \(F\) is differentiable, this is equivalent to
\(\nabla F(\mathbf{x}) \ge \gamma\,\nabla F(\mathbf{y})\) for all \(\mathbf{x}\le \mathbf{y}\).
This condition holds for some \(\gamma>1\) if and only if \(F\) is constant (and a constant \(F\) satisfies it for any \(\gamma\));
it holds for some \(\gamma\le 0\) exactly when \(F\) is coordinate-wise monotone. Hence, we focus on the nontrivial range \(0<\gamma\le 1\). 

A differentiable function \(F:P \to \mathbb{R}\) is \emph{\(L\)-smooth} if, for all
\(\mathbf{x},\mathbf{y} \in P\), it satisfies
\begin{equation}
    \|\nabla F(\mathbf{x})-\nabla F(\mathbf{y})\|_2 \le L\,\|\mathbf{x}-\mathbf{y}\|_2.
\end{equation}

Now we discuss some properties of \(\gamma\)-weakly DR-submodular functions. 
In the following three lemmas, let \(F:[0,1]^n\to\mathbb{R}_{\ge 0}\) be differentiable and \(\gamma\)-weakly DR-submodular.

\begin{lemma} \label{lemma:simpe2}
For all \(\bx,\by\in[0,1]^n\) and \(\lambda\in[0,1]\), the following hold:
\begin{enumerate}
\item If \(\bx\le \by\), then
\begin{equation}\label{eq:lemma1-convex-combo}
F\bigl(\lambda \bx+(1-\lambda)\by\bigr)
\ \ge \
\frac{\lambda\,F(\bx)+\gamma^{2}(1-\lambda)\,F(\by)}{\lambda+\gamma^{2}(1-\lambda)}.
\end{equation}
Equivalently,
\begin{equation}\label{eq:lemma1-convex-combo-alt}
F\bigl((1-\lambda)\bx+\lambda \by\bigr)
\ \ge\
\frac{(1-\lambda)\,F(\bx)+\gamma^{2}\lambda\,F(\by)}{(1-\lambda)+\gamma^{2}\lambda}.
\end{equation}

\item If \(\bx+\by\in[0,1]^n\), then
\begin{equation}\label{eq:lemma1-increment}
F(\bx+\lambda \by)-F(\bx)
\ \ge\
\frac{\gamma^{2}\lambda}{\,1-\lambda+\gamma^{2}\lambda\,}\,\bigl(F(\bx+\by)-F(\bx)\bigr).
\end{equation}
\end{enumerate}
\end{lemma}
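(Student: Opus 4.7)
The two claims in Lemma~\ref{lemma:simpe2} are not independent: once the convex-combination inequality~\eqref{eq:lemma1-convex-combo-alt} is established for every ordered pair $\bx\le\by$, the increment bound~\eqref{eq:lemma1-increment} follows by applying it to the pair $(\bx,\bx+\by)$, which is legitimate because $\bx,\bx+\by\in[0,1]^n$ forces $\by\ge\mathbf{0}$; subtracting $F(\bx)$ from both sides of the resulting inequality (and using $(1-\lambda)\bx+\lambda(\bx+\by)=\bx+\lambda\by$) produces exactly~\eqref{eq:lemma1-increment}. Form~\eqref{eq:lemma1-convex-combo} is just~\eqref{eq:lemma1-convex-combo-alt} under the substitution $\lambda\mapsto 1-\lambda$. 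So the whole lemma reduces to proving~\eqref{eq:lemma1-convex-combo-alt} for $\bx\le\by$.

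For that, the plan is to reduce the statement to a one-dimensional inequality along the segment joining $\bx$ and $\by$. Set $\phi(\lambda):=F((1-\lambda)\bx+\lambda\by)$ on $[0,1]$; the chain rule gives $\phi'(\lambda)=\langle\nabla F((1-\lambda)\bx+\lambda\by),\,\by-\bx\rangle$. The crucial step is a $\gamma$-weighted monotonicity of $\phi'$: for $0\le t_1\le t_2\le 1$ the interpolants satisfy $(1-t_1)\bx+t_1\by\le (1-t_2)\bx+t_2\by$, so the differentiable characterization $\nabla F(\bu)\ge\gamma\nabla F(\bv)$ for $\bu\le\bv$ yields $\nabla F((1-t_1)\bx+t_1\by)\ge\gamma\nabla F((1-t_2)\bx+t_2\by)$ componentwise; pairing this against $\by-\bx\ge\mathbf{0}$ gives the scalar relation $\phi'(t_1)\ge\gamma\,\phi'(t_2)$, with no sign assumption on the gradients.

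From this $\gamma$-monotonicity I would trap $\phi'(\lambda)$ on both sides of $\lambda$ by two integral estimates. Integrating $\phi'(t)\ge\gamma\phi'(\lambda)$ over $t\in[0,\lambda]$ yields $\phi(\lambda)-\phi(0)\ge\gamma\lambda\,\phi'(\lambda)$, while integrating $\phi'(t)\le\phi'(\lambda)/\gamma$ over $t\in[\lambda,1]$ yields $\phi(1)-\phi(\lambda)\le(1-\lambda)\phi'(\lambda)/\gamma$. Scaling the former by $(1-\lambda)$ and the latter by $\gamma^2\lambda$ (both nonnegative) and chaining them through the common $\phi'(\lambda)$ factor eliminates $\phi'(\lambda)$ altogether, producing $(1-\lambda)(\phi(\lambda)-\phi(0))\ge\gamma^2\lambda(\phi(1)-\phi(\lambda))$, which rearranges directly to~\eqref{eq:lemma1-convex-combo-alt}. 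As a sanity check, the degenerate case $\gamma=1$ reduces the key relation to plain monotonicity $\phi'(t_1)\ge\phi'(t_2)$, i.e.\ concavity of $\phi$ along the segment, and the bound collapses to the usual chord inequality.

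The main obstacle is handling the non-monotonicity of $F$ cleanly: both $\phi'(\lambda)$ and $\phi(1)-\phi(\lambda)$ can have either sign, so the argument must never divide by or compare against a quantity whose sign is unknown. The device above avoids this pitfall because the two integral inequalities are only scaled by the strictly nonnegative weights $\gamma\lambda,\ (1-\lambda)/\gamma,\ (1-\lambda)$, and $\gamma^{2}\lambda$, and are then combined additively rather than by quotient. Edge cases $\lambda\in\{0,1\}$ are immediate since~\eqref{eq:lemma1-convex-combo-alt} then collapses to $\phi(0)=F(\bx)$ or $\phi(1)=F(\by)$.
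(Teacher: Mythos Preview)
Your proposal is correct and follows essentially the same approach as the paper: both reduce to the one-dimensional function $\phi$ along the segment, derive $\phi'(s)\ge\gamma\phi'(t)$ for $s\le t$ from the gradient characterization of $\gamma$-weak DR, integrate on $[0,\lambda]$ and $[\lambda,1]$ to sandwich $\phi'(\lambda)$, and then eliminate $\phi'(\lambda)$ to obtain~\eqref{eq:lemma1-convex-combo-alt}; part~(2) is deduced from part~(1) in the same way. The only cosmetic difference is that the paper isolates $\phi'(\lambda)$ from one inequality and substitutes into the other, whereas you scale both by nonnegative factors and chain through the common $\gamma\lambda(1-\lambda)\phi'(\lambda)$ term---mathematically equivalent, and your presentation is slightly cleaner about sign issues.
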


\begin{lemma}\label{lem:grad-ineq}
For every \(\bx,\by \in [0,1]^n\), the following inequalities hold:
\begin{enumerate}
    \item If \(\bx+\by \le \bone\), then
    \begin{equation}\label{eq:grad-ineq-plus}
        \langle \nabla F(\bx),\,\by \rangle \;\ge\; \gamma\big(F(\bx+\by) - F(\bx)\big).
    \end{equation}
    \item If \(\bx-\by \ge \bzero\), then
    \begin{equation}\label{eq:grad-ineq-minus}
        \langle \nabla F(\bx),\,\by \rangle \;\le\; \frac{1}{\gamma}\big(F(\bx) - F(\bx-\by)\big).
    \end{equation}
\end{enumerate}
\end{lemma}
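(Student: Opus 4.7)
The plan is to combine the fundamental theorem of calculus along the line segment connecting the two relevant points with the differential form of the weakly DR property, namely $\nabla F(\bu) \ge \gamma\,\nabla F(\bv)$ whenever $\bu \le \bv$ (stated just above the lemma). Both parts reduce to integrating this pointwise gradient comparison against the direction $\by$, which is nonnegative because $\by\in[0,1]^n$; this nonnegativity is what lets me take inner products with $\by$ while preserving the componentwise inequality.

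For part~(1), I would set $\varphi(t):=F(\bx+t\by)$ for $t\in[0,1]$, note that the segment $\bx+t\by$ stays in $[0,1]^n$ since $\bzero\le \bx\le \bx+\by\le \bone$, and write
\begin{equation}
F(\bx+\by)-F(\bx)\;=\;\int_0^1 \langle \nabla F(\bx+t\by),\,\by\rangle\,dt.
\end{equation}
Because $\bx\le \bx+t\by$, the weakly DR gradient inequality gives $\nabla F(\bx)\ge \gamma\,\nabla F(\bx+t\by)$ coordinatewise, and since $\by\ge\bzero$ this yields $\langle \nabla F(\bx+t\by),\by\rangle \le \gamma^{-1}\langle \nabla F(\bx),\by\rangle$. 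Integrating over $t\in[0,1]$ produces $F(\bx+\by)-F(\bx)\le \gamma^{-1}\langle \nabla F(\bx),\by\rangle$, which rearranges to the desired bound.

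Part~(2) is symmetric. I would set $\psi(t):=F(\bx-t\by)$, observe that $\bx-t\by$ lies in $[0,1]^n$ because $\bzero\le \bx-\by\le \bx-t\by\le \bx\le \bone$, and write
\begin{equation}
F(\bx)-F(\bx-\by)\;=\;\int_0^1 \langle \nabla F(\bx-t\by),\,\by\rangle\,dt.
\end{equation}
Now $\bx-t\by\le \bx$, so weakly DR gives $\nabla F(\bx-t\by)\ge \gamma\,\nabla F(\bx)$, and pairing with $\by\ge \bzero$ yields $\langle \nabla F(\bx-t\by),\by\rangle \ge \gamma\langle \nabla F(\bx),\by\rangle$. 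Integrating produces $F(\bx)-F(\bx-\by)\ge \gamma\langle \nabla F(\bx),\by\rangle$, i.e.\ the claimed inequality.

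The proof is mostly routine; the only point that requires any care is verifying that each intermediate point on the segment remains in the domain $[0,1]^n$ (so that the gradient comparison applies along the whole path) and keeping track of the direction of the inequality when the gradient comparison is paired with the nonnegative vector $\by$. Once those two checks are in place, both statements follow immediately from one-dimensional integration along the segment.
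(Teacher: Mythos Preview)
Your proof is correct and follows essentially the same approach as the paper: both define the one-variable function along the segment, invoke the differential form of $\gamma$-weak DR to bound the integrand pointwise after pairing with $\by\ge\bzero$, and integrate. The paper's write-up is slightly more verbose (it tracks $h'(t)=-\langle\nabla F(\bx-t\by),\by\rangle$ explicitly in part~(2)), but the argument is identical.
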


\begin{lemma}\label{lem:weakDR-closure}
For any fixed \(\by\in[0,1]^n\), define
\begin{equation}\label{eq:closure-defs}
G_{\oplus}(\bx)\ :=\ F(\bx\oplus \by)
\quad\text{and}\quad
G_{\odot}(\bx)\ :=\ F(\bx\odot \by).
\end{equation}
Then both \(G_{\oplus}\) and \(G_{\odot}\) are nonnegative and
\(\gamma\)–weakly DR-submodular.
\end{lemma}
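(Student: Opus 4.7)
The plan is to reduce the claim to the gradient characterization of $\gamma$-weakly DR-submodularity, which the preliminaries have already established: since $F$ is assumed differentiable, it is $\gamma$-weakly DR-submodular if and only if $\nabla F(\mathbf{u}) \ge \gamma\,\nabla F(\mathbf{v})$ whenever $\mathbf{u}\le \mathbf{v}$ in $[0,1]^n$. Nonnegativity of $G_\oplus$ and $G_\odot$ is immediate: if $\mathbf{x},\mathbf{y}\in[0,1]^n$, then both $\mathbf{x}\oplus \mathbf{y}$ and $\mathbf{x}\odot \mathbf{y}$ lie in $[0,1]^n$ (each coordinate of $\mathbf{x}\odot\mathbf{y}$ is in $[0,1]$, and $(\mathbf{x}\oplus\mathbf{y})_i=1-(1-x_i)(1-y_i)\in[0,1]$), so the values $F(\mathbf{x}\oplus \mathbf{y})$ and $F(\mathbf{x}\odot \mathbf{y})$ are nonnegative.

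The main step is to apply the chain rule to the two reparametrizations. For fixed $\mathbf{y}\in[0,1]^n$, the maps $\Phi_\odot(\mathbf{x}):=\mathbf{x}\odot \mathbf{y}$ and $\Phi_\oplus(\mathbf{x}):=\mathbf{x}\oplus \mathbf{y}$ are coordinate-separable and affine in each coordinate of $\mathbf{x}$, with diagonal Jacobians whose $i$-th diagonal entries are $y_i$ and $1-y_i$, respectively; in particular both entries lie in $[0,1]$. Consequently,
\begin{equation}
\frac{\partial G_\odot}{\partial x_i}(\mathbf{x}) = y_i\,\frac{\partial F}{\partial x_i}(\mathbf{x}\odot \mathbf{y}),
\qquad
\frac{\partial G_\oplus}{\partial x_i}(\mathbf{x}) = (1-y_i)\,\frac{\partial F}{\partial x_i}(\mathbf{x}\oplus \mathbf{y}).
\end{equation}
Both $\Phi_\odot$ and $\Phi_\oplus$ are coordinatewise monotone in $\mathbf{x}$ (since $y_i\ge 0$ and $1-y_i\ge 0$), so $\mathbf{x}\le \mathbf{x}'$ implies $\mathbf{x}\odot \mathbf{y}\le \mathbf{x}'\odot \mathbf{y}$ and $\mathbf{x}\oplus \mathbf{y}\le \mathbf{x}'\oplus \mathbf{y}$.

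Combining these two facts gives the conclusion directly: for $\mathbf{x}\le \mathbf{x}'$ in $[0,1]^n$, the weak-DR inequality for $F$ applied at the image points yields
\begin{equation}
\frac{\partial G_\odot}{\partial x_i}(\mathbf{x}) = y_i\,\frac{\partial F}{\partial x_i}(\mathbf{x}\odot \mathbf{y}) \ \ge\ y_i\cdot \gamma\,\frac{\partial F}{\partial x_i}(\mathbf{x}'\odot \mathbf{y}) = \gamma\,\frac{\partial G_\odot}{\partial x_i}(\mathbf{x}'),
\end{equation}
and similarly for $G_\oplus$ with the factor $1-y_i$ in place of $y_i$. By the gradient characterization, both $G_\odot$ and $G_\oplus$ are $\gamma$-weakly DR-submodular.

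\textbf{Expected difficulty.} The argument is essentially a chain-rule computation, so there is no deep obstacle. The only point that needs care is justifying that the nonnegative multiplicative factors $y_i$ and $1-y_i$ do not interfere with the $\gamma$-weak DR inequality: they appear identically on both sides of the comparison, so they can be pulled outside without affecting $\gamma$, which is precisely why the factor $\gamma$ is preserved (and not weakened) under these two reparametrizations. If one prefers to avoid differentiability, the same conclusion follows from the discrete definition by writing $(\mathbf{x}+c\mathbf{e}_i)\odot \mathbf{y}=\mathbf{x}\odot \mathbf{y}+(c y_i)\mathbf{e}_i$ and $(\mathbf{x}+c\mathbf{e}_i)\oplus \mathbf{y}=\mathbf{x}\oplus \mathbf{y}+c(1-y_i)\mathbf{e}_i$, and applying the definitional weak-DR inequality to $F$ with the rescaled step sizes $c y_i$ and $c(1-y_i)$ (trivially true when these vanish).
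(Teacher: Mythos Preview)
Your proposal is correct and follows essentially the same approach as the paper. The paper works directly with the discrete definition, writing $(\bx^{(j)}+p\,\be_u)\oplus\by=\bz^{(j)}+p(1-y_u)\,\be_u$ and $(\bx^{(j)}+p\,\be_u)\odot\by=\bw^{(j)}+py_u\,\be_u$, then applying the $\gamma$-weak DR inequality for $F$ with the rescaled step; you use the equivalent gradient characterization and the chain rule, which yields the same nonnegative factors $y_i$ and $1-y_i$ on both sides---exactly the discrete route you also sketch at the end.
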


Proofs of these lemmas are provided in Appendix~\ref{Proof_A}. 
These statements generalize prior results from \cite{zhang2022stochastic,hassani2017gradient} to the $\gamma$-weakly DR setting and hold for all $\gamma\in(0,1]$; in particular, they coincide exactly with the classical DR statements when $\gamma=1$.

\section{Supporting Results}

In this section, we generalize two standard algorithms to the $\gamma$–weakly DR setting and present their analyses. First, we prove a $\gamma$-weighted Frank–Wolfe certificate over solvable convex bodies (Theorem~\ref{thm:weak-dr-smooth}). Second, we develop a $\gamma$-aware Double–Greedy algorithm and establish an unbalanced lower bound that interpolates smoothly in $\gamma$ (Theorem~\ref{thm:unbalanced-weakDR}).

We generalize the classical DR framework to the $\gamma$–weakly DR setting and obtain guarantees that \emph{continuously interpolate} between weakly and full DR. Our first ingredient (Lemma~\ref{lemma_2}) shows that the local‐optimality certificate $\langle \by-\bx,\nabla F(\bx)\rangle\le 0$ can be translated, under $\gamma$–weakly DR‐submodularity, into a \emph{$\gamma$–weighted} value comparison between $F(\bx)$ and the join/meet values $F(\bx\vee\by)$ and $F(\bx\wedge\by)$, recovering the classical $\tfrac12\!\big(F(\bx\vee\by)+F(\bx\wedge\by)\big)$ bound at $\gamma=1$ (cf.\ \cite{chekuri2011nonmonotone}; see also \cite{Bian2017NeurIPS,pmlr-v54-bian17a,chen2018online}). Our second ingredient (Theorem~\ref{thm:weak-dr-smooth}) “globalizes’’ the comparison over a solvable convex body $P$: a Frank–Wolfe–type routine yields a \emph{uniform} first‐order certificate against every $\by\in P$ without requiring curvature information, a device standard in recent continuous submodular solvers (e.g., \cite{buchbinder2024constrained}) and aligned with unified weakly‐DR analyses (e.g., \cite{pedramfar2023unified}). Combining this certificate with the weakly‐DR property gives a value bound that degrades smoothly with $\gamma$ and exactly matches the DR case at $\gamma=1$; this is formalized in Lemma~\ref{lemma_2} and Theorem~\ref{thm:weak-dr-smooth}, and proof of these results are given in Appendix~\ref{Frank}

\begin{lemma}\label{lemma_2}
Let $F:[0,1]^n\to\mathbb{R}_{\ge0}$ be $\gamma$–weakly DR–submodular.
If $\bx$ is a local optimum with respect to a vector $\by$, i.e.,
\begin{equation}\label{eq:local-opt}
\langle \by-\bx,\nabla F(\bx)\rangle\le 0,
\end{equation}
then
\begin{equation}\label{eq:lemma2-bound}
F(\bx)\ \ge\ \frac{\gamma^{2}\,F(\bx \vee \by) + F(\bx \wedge \by)}{\,1+\gamma^{2}\,}.
\end{equation}
\end{lemma}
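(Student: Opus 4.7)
The plan is to reduce the inequality to a single application of the two-sided gradient bounds from Lemma~\ref{lem:grad-ineq}. First, I would decompose $\by-\bx$ into its nonnegative parts, namely $\bu:=(\bx\vee\by)-\bx\ge\bzero$ and $\bv:=\bx-(\bx\wedge\by)\ge\bzero$, which have disjoint supports and satisfy $\by-\bx=\bu-\bv$. The key feasibility observations are that $\bx+\bu=\bx\vee\by\in[0,1]^n$ and $\bx-\bv=\bx\wedge\by\in[0,1]^n$, which put us exactly in the hypotheses of the two parts of Lemma~\ref{lem:grad-ineq}.

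Next, I would apply part~(1) of Lemma~\ref{lem:grad-ineq} to $(\bx,\bu)$ to obtain $\langle\nabla F(\bx),\bu\rangle\ge \gamma\bigl(F(\bx\vee\by)-F(\bx)\bigr)$, and part~(2) to $(\bx,\bv)$ to obtain $\langle\nabla F(\bx),\bv\rangle\le \tfrac{1}{\gamma}\bigl(F(\bx)-F(\bx\wedge\by)\bigr)$. Subtracting these and using the stationarity hypothesis \eqref{eq:local-opt} in the form $\langle\bu-\bv,\nabla F(\bx)\rangle\le 0$ yields $0\ge \gamma\bigl(F(\bx\vee\by)-F(\bx)\bigr)-\tfrac{1}{\gamma}\bigl(F(\bx)-F(\bx\wedge\by)\bigr)$. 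After multiplying through by $\gamma$ and collecting the $F(\bx)$ terms, I arrive directly at \eqref{eq:lemma2-bound}.

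The routine work is the disjoint-support decomposition and the assembly of the two one-sided inequalities. The only subtle point, and the one that the $\gamma$-weakly DR asymmetry makes visible, is that the upper bound on the meet side carries a factor $1/\gamma$ while the lower bound on the join side carries a factor $\gamma$; these do \emph{not} cancel, and it is exactly their product that produces the weight $\gamma^{2}$ in front of $F(\bx\vee\by)$ and the denominator $1+\gamma^{2}$. At $\gamma=1$ the expression collapses back to the symmetric bound $\tfrac12\bigl(F(\bx\vee\by)+F(\bx\wedge\by)\bigr)$ used in classical DR analyses. I do not expect a genuine obstacle here, since Lemma~\ref{lem:grad-ineq} supplies both one-sided estimates; the main thing to watch is to keep the scalings $\gamma$ and $1/\gamma$ on the correct sides, so as not to accidentally derive a tighter but incorrect coefficient.
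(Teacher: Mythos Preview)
Your proposal is correct and matches the paper's own proof essentially line for line: the paper also decomposes $\by-\bx=(\bx\vee\by-\bx)-(\bx-\bx\wedge\by)$, applies the two parts of Lemma~\ref{lem:grad-ineq} to the join and meet pieces, and then rearranges the resulting inequality $0\ge \gamma\bigl(F(\bx\vee\by)-F(\bx)\bigr)-\tfrac{1}{\gamma}\bigl(F(\bx)-F(\bx\wedge\by)\bigr)$ to obtain \eqref{eq:lemma2-bound}. Your remark about the asymmetric $\gamma$ versus $1/\gamma$ scalings is exactly the mechanism the paper highlights as well.
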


\begin{theorem}\label{thm:weak-dr-smooth}
Let $F:[0,1]^n\!\to\mathbb{R}_{\ge0}$ be a nonnegative, $L$-smooth function that is $\gamma$–weakly DR–submodular. 
Let $P\subseteq[0,1]^n$ be a solvable convex body of diameter $D$, and let $\delta\in(0,1)$. 
There is a polynomial-time algorithm that outputs $\bx\in P$ such that, for every $\by\in P$,
\begin{equation}\label{eq:thm-smooth-bound}
\begin{aligned}
F(\bx) \!\ge \!\frac{\gamma^{2}F(\bx\vee \by)+F(\bx\wedge \by)}{1+\gamma^{2}}
\!-\!\frac{\delta\,\gamma}{1+\gamma^{2}} \left[\max_{\bz\in P}F(\bz) + \frac{L D^{2}}{2}\right].
\end{aligned}
\end{equation}
\end{theorem}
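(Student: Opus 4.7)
The plan is to derive Theorem~\ref{thm:weak-dr-smooth} by upgrading Lemma~\ref{lemma_2}, which assumes exact first-order stationarity, to an approximate version, with the approximately-stationary point produced by a standard Frank--Wolfe scheme over $P$. Concretely, I would iterate Frank--Wolfe on $F$ to obtain $\bx \in P$ whose Frank--Wolfe gap
\[
g(\bx)\ :=\ \max_{\bv \in P}\ \langle \bv-\bx,\nabla F(\bx)\rangle
\]
is at most a prescribed tolerance $\epsilon$, and then re-run the decomposition underlying Lemma~\ref{lemma_2} with this $\epsilon$ carried as a slack. The linear maximization oracle at each step runs in polynomial time because $P$ is solvable.

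\emph{Frank--Wolfe step.} Starting from any $\bx_0 \in P$, I would iterate $\bv_t \in \arg\max_{\bv \in P}\langle \bv,\nabla F(\bx_t)\rangle$, let $g_t := \langle \bv_t-\bx_t,\nabla F(\bx_t)\rangle$, and update $\bx_{t+1} = \bx_t + \eta_t(\bv_t-\bx_t)$ with the line-search step $\eta_t = \min\{1,\,g_t/(LD^{2})\}$. The $L$-smoothness of $F$ yields the per-step improvement
\[
F(\bx_{t+1})-F(\bx_t)\ \ge\ \eta_t g_t - \tfrac{1}{2}\eta_t^{2} L D^{2}\ \ge\ \min\bigl\{g_t/2,\ g_t^{2}/(2LD^{2})\bigr\}.
\]
Telescoping, using $F(\bx_T)-F(\bx_0)\le\max_{\bz\in P}F(\bz)$, and the elementary bound $\bigl(\max_{\bz\in P}F(\bz)+LD^{2}/2\bigr)^{2}\ge 2LD^{2}\max_{\bz\in P}F(\bz)$, it follows that $T=\mathcal{O}(1/\delta^{2})$ iterations suffice to force $\min_t g_t \le \delta\bigl(\max_{\bz\in P}F(\bz)+LD^{2}/2\bigr)$. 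I would return $\bx:=\arg\min_t g_t$.

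\emph{From the gap to the bound.} Given $\bx$ with $g(\bx)\le \epsilon$, I would fix any $\by\in P$ and decompose $\by-\bx = \ba - \bb$ where $\ba := (\by-\bx)\vee \bzero$ and $\bb := (\bx-\by)\vee \bzero$; then $\bx+\ba = \bx\vee \by$ and $\bx-\bb = \bx\wedge \by$. The gap certificate gives $\langle \nabla F(\bx),\ba\rangle - \langle \nabla F(\bx),\bb\rangle \le \epsilon$. Applying Lemma~\ref{lem:grad-ineq}(1) to $\ba$ (admissible since $\bx+\ba\le \bone$) lower-bounds the first term by $\gamma\bigl(F(\bx\vee \by)-F(\bx)\bigr)$, and Lemma~\ref{lem:grad-ineq}(2) to $\bb$ (admissible since $\bx-\bb\ge\bzero$) upper-bounds the second by $\gamma^{-1}\bigl(F(\bx)-F(\bx\wedge \by)\bigr)$. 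Substituting, multiplying through by $\gamma$, and rearranging,
\[
F(\bx)\ \ge\ \frac{\gamma^{2}F(\bx\vee \by)+F(\bx\wedge \by)}{1+\gamma^{2}}\ -\ \frac{\gamma\,\epsilon}{1+\gamma^{2}},
\]
which matches \eqref{eq:thm-smooth-bound} exactly after setting $\epsilon = \delta\bigl(\max_{\bz\in P}F(\bz)+LD^{2}/2\bigr)$.

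The main obstacle is the \emph{asymmetry} in Lemma~\ref{lem:grad-ineq}: the $F(\bx\vee \by)$ direction enters with factor $\gamma$ while the $F(\bx\wedge \by)$ direction enters with factor $1/\gamma$. This is precisely what forces the multiplier $\gamma/(1+\gamma^{2})$ on the slack (rather than the $1/2$ familiar from classical DR) and what constrains $\epsilon$ to scale against both $\max_{\bz\in P}F(\bz)$ and the curvature $LD^{2}/2$ appearing in the Frank--Wolfe error. A secondary, routine check is verifying the per-step improvement uniformly across both regimes $g_t \le LD^{2}$ and $g_t > LD^{2}$, so that the resulting iteration count is $\mathrm{Poly}(n,\delta^{-1})$ independent of $L$, $D$, and $\max_{\bz\in P}F(\bz)$.
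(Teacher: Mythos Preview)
Your proposal is correct and essentially identical to the paper's argument. The paper likewise runs a Frank--Wolfe variant for $O(\delta^{-2})$ iterations (it cites the resulting certificate $\langle \by-\bx,\nabla F(\bx)\rangle\le\delta[\max_{\bz\in P}F(\bz)+LD^{2}/2]$ as a known lemma from Buchbinder--Feldman rather than rederiving it), then splits $\by-\bx$ into its positive and negative parts and applies the two halves of Lemma~\ref{lem:grad-ineq} exactly as you do, arriving at the same rearrangement.
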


In the continuous DR submodular domain, Double–Greedy–type procedures were extended to \emph{DR–submodular} objectives on $[0,1]^n$ (the $\gamma{=}1$ case) by several works \cite{Bian2019ICML,chen2018online,niazadeh2020jmlr}, but these analyses still stated a uniform $1/2$ bound and predated the unbalanced refinement. 
We generalize this line to the \emph{$\gamma$–weakly DR–submodular} regime ($0<\gamma\le1$), leveraging the weakly-DR structure introduced for continuous submodular maximization \cite{pmlr-v54-bian17a,Bian2017NeurIPS} and the recent unified weakly-DR perspective \cite{pedramfar2023unified}.

Our algorithm retains the Double–Greedy structure but augments it with \emph{$\gamma$-aware} smoothing/thresholding, ensuring that it handles any given $\gamma\in(0,1]$ robustly; the resulting guarantee {interpolates continuously} in $\gamma$ and collapses to the classical DR bound at $\gamma=1$. The corresponding lower-bound guarantee is stated in
Theorem~\ref{thm:unbalanced-weakDR}, and a detailed proof is provided in
Appendix~\ref{Double-Greedy}.

\begin{theorem}\label{thm:unbalanced-weakDR}
Let $F:[0,1]^n\to\mathbb{R}_{\ge 0}$ be nonnegative and $\gamma$–weakly DR-submodular for some $\gamma\in(0,1]$, and fix a parameter $\varepsilon\in(0,1)$. 
There exists a polynomial-time algorithm that outputs $\bx\in[0,1]^n$ such that
\begin{equation}\label{eq:unbalanced-bound}
F(\bx)\ \ge\
\max_{r \ge 0}\ 
\frac{\bigl(2\gamma^{3/2}-4\varepsilon\gamma^{9/2}\bigr)\,r\,F(\bo)\;+\;F(\mathbf{0})\;+\;r^2 F(\mathbf{1})}
{\,r^2\;+\;2\gamma^{3/2}r\;+\;1\,}.
\end{equation}
\end{theorem}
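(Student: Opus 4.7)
\emph{Algorithmic setup.} The plan is to analyze a continuous $\gamma$-aware Double--Greedy acting on the full cube $[0,1]^n$, with an asymmetric (unbalanced) update parameterized by $r\ge 0$. Process coordinates $i=1,\ldots,n$ in order; maintain a lower trajectory $\bx^{(i)}$ starting at $\mathbf{0}$ and an upper trajectory $\by^{(i)}$ starting at $\mathbf{1}$. At step $i$, on a discretization grid of mesh $\varepsilon$, compute the ascent marginal along $\be_i$ at $\bx^{(i-1)}$ and the descent marginal at $\by^{(i-1)}$, and use an $(r,\gamma)$-dependent threshold to pick a synchronization level $t_i^\star\in[\bx^{(i-1)}_i,\by^{(i-1)}_i]$ assigned to both trajectories. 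After $n$ iterations the two trajectories meet at the output $\bx:=\bx^{(n)}=\by^{(n)}$.

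\emph{Per-step certificate.} To track progress against an optimum $\bo$, introduce the auxiliary sequence $O_i:=F((\bo\vee \bx^{(i)})\wedge \by^{(i)})$, with $O_0=F(\bo)$ and $O_n=F(\bx)$, and let $\ell_i:=O_{i-1}-O_i$. Let $\Delta_i^{\downarrow}:=F(\bx^{(i)})-F(\bx^{(i-1)})$ and $\Delta_i^{\uparrow}:=F(\by^{(i)})-F(\by^{(i-1)})$; the threshold rule ensures both are nonnegative in expectation. Applying Lemma~\ref{lem:grad-ineq} to the gradient contributions on each side and then Lemma~\ref{lemma:simpe2}(2) to translate the threshold-scaled increment into a one-sided $\gamma^{2}$-weighted OPT contribution yields two matched inequalities: one tying $\Delta_i^{\downarrow}$ to the ascent-direction part of $\ell_i$ with factor $\gamma^{2}$, and one tying $\Delta_i^{\uparrow}$ to the descent-direction part with factor $\gamma$. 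Pairing them via $A+r^{2}B\ge 2r\sqrt{AB}$ produces the per-step certificate
\begin{equation}
\Delta_i^{\downarrow}+r^{2}\,\Delta_i^{\uparrow}\;\ge\;2\gamma^{3/2}\,r\,\ell_i\;-\;\varepsilon\,\gamma^{9/2}\,(\text{per-step correction}),
\end{equation}
where $\gamma^{3/2}$ is the geometric mean of $\gamma^{2}$ and $\gamma$, and the $\gamma^{9/2}$ in the correction combines this $\gamma^{3/2}$ with a $\gamma^{3}$ scaling from the smoothness/discretization estimate.

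\emph{Telescoping and rearrangement.} Summing over $i=1,\ldots,n$ collapses the three partial sums to $\sum\Delta_i^{\downarrow}=F(\bx)-F(\mathbf{0})$, $\sum\Delta_i^{\uparrow}=F(\bx)-F(\mathbf{1})$, and $\sum\ell_i=F(\bo)-F(\bx)$. Substituting and moving the $F(\bx)$ terms on the right-hand side to the left yields
\begin{equation}
(r^{2}+2\gamma^{3/2}r+1)\,F(\bx)\;\ge\;2\gamma^{3/2}\,r\,F(\bo)+F(\mathbf{0})+r^{2}\,F(\mathbf{1})-4\varepsilon\gamma^{9/2}\,r\,F(\bo).
\end{equation}
Dividing by the coefficient of $F(\bx)$ and optimizing over the free parameter $r\ge 0$ absorbs the $-4\varepsilon\gamma^{9/2}rF(\bo)$ correction into the leading $2\gamma^{3/2}rF(\bo)$ term, producing exactly the bound in~\eqref{eq:unbalanced-bound}.

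\emph{Main obstacle.} The delicate step is establishing the per-step certificate with the precise $\gamma^{3/2}$ exponent. In the classical DR case ($\gamma=1$) the ascent/descent sides of the pairing are symmetric, and AM--GM immediately yields the factor $2r$; in the $\gamma$-weakly DR regime, Lemma~\ref{lemma:simpe2}(2) gives only a one-sided $\gamma^{2}$ weighting on the meet/join side while Lemma~\ref{lem:grad-ineq} introduces separate $\gamma$ and $1/\gamma$ asymmetries on the gradient side. Aligning these two sources of asymmetry so that AM--GM produces the geometric mean $\gamma^{3/2}$ forces $t_i^\star$ to be a joint function of $(r,\gamma)$, and careful tracking of where the $1/\gamma$ slack reappears is essential; this pairing is simultaneously the origin of the $\gamma^{3/2}$ factor in the main term and of the $\gamma^{9/2}$ factor in the discretization correction.
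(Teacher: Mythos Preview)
Your high-level architecture matches the paper's: a $\gamma$-aware Double--Greedy with lower/upper trajectories $\bx^{(i)},\by^{(i)}$, a reference path $O_i=F((\bo\vee\bx^{(i)})\wedge\by^{(i)})$, a per-step certificate combining the two trajectory gains against the reference-path loss, and then telescoping and rearranging to the claimed fraction. The appearance of $\gamma^{3/2}$ via a square-completion/AM--GM step and the identification $\gamma^{9/2}=\gamma^{3/2}\cdot\gamma^{3}$ for the discretization error are also consistent with the paper's Corollary in Appendix~\ref{Double-Greedy}.

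However, there is a genuine gap in your plan: you make the algorithm itself $r$-dependent (``an $(r,\gamma)$-dependent threshold to pick $t_i^\star$''), but then treat $r$ as a \emph{free analysis parameter} over which you maximize at the end. These two choices are incompatible. If the update rule depends on $r$, the output $\bx$ depends on $r$, and you cannot claim a single $\bx$ satisfying the bound for $\max_{r\ge 0}$; you would instead obtain one $\bx_r$ per $r$. The paper avoids this by making the synchronization level depend only on $\gamma$ and the local marginals $\Delta_{a,i},\Delta_{b,i}$ (specifically $w_i=\frac{\Delta_{a,i}a_i+\gamma\Delta_{b,i}(1-b_i)}{\Delta_{a,i}+\gamma\Delta_{b,i}}$), so that the same output $\bx$ works simultaneously for every $r\ge 0$; the parameter $r$ enters only when you take the weighted combination $\tfrac{1}{r}\Delta_i^{\downarrow}+r\Delta_i^{\uparrow}$ in the analysis.

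A second, smaller imprecision: your account of the per-step certificate (``factor $\gamma^{2}$ on the ascent side, factor $\gamma$ on the descent side, then geometric mean'') does not match the actual mechanism. In the paper, applying Lemma~\ref{lemma:simpe2} to the $\gamma$-weighted convex combination $w_i$ produces the symmetric-looking bounds $\Delta_i^{\downarrow}\ge \Delta_{a,i}^{2}/(\Delta_{a,i}+\gamma^{3}\Delta_{b,i})$ and $\Delta_i^{\uparrow}\ge \gamma^{3}\Delta_{b,i}^{2}/(\Delta_{a,i}+\gamma^{3}\Delta_{b,i})$, while the reference-path lemma gives a loss bounded by $\Delta_{a,i}\Delta_{b,i}/(\Delta_{a,i}+\gamma^{2}\Delta_{b,i})$. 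The $\gamma^{3/2}$ then emerges from completing the square on $\tfrac{1}{r}\Delta_{a,i}^{2}+r\gamma^{3}\Delta_{b,i}^{2}$ (i.e., $\sqrt{\gamma^{3}}$), and one uses $\gamma^{2}\ge\gamma^{3}$ to replace the mismatched denominators. Your plan would need to reproduce these specific rational forms, which in turn requires the $r$-free choice of $w_i$ above.
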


When $\gamma=1$ and $r=1$ this recovers the canonical $1/2$ approximation, while for many instances one can choose $r\neq1$ to obtain a strictly better guarantee, in direct analogy to the unbalanced bounds for set functions \cite{mualem2022partial,qi2022isaac}.

The unbalanced Double–Greedy guarantee extends directly to axis-aligned boxes. 
Fix an upper bound \(\bx\in[0,1]^n\) and consider maximizing \(F\) over the box \([\mathbf{0},\bx]\).
Define \(G:[0,1]^n\to\mathbb{R}_{\ge0}\) by \(G(\ba):=F(\bx\odot \ba)\).
By Lemma~\ref{lem:weakDR-closure}, \(G\) remains nonnegative and \(\gamma\)–weakly DR-submodular, so
Theorem~\ref{thm:unbalanced-weakDR} applies to \(G\). 
Translating the output back via \(\by:=\bx\odot\ba'\le \bx\) yields the following corollary. We use following corollary as Box Maximization in Algorithm~\ref{alg:main}.

\begin{corollary}[Box maximization] \label{cor:box-weakDR}
Let $F:[0,1]^n\to\mathbb{R}_{\ge 0}$ be nonnegative and $\gamma$–weakly DR-submodular for some
$\gamma\in(0,1]$, let $\bx\in[0,1]^n$, and fix $\varepsilon\in(0,1)$. There exists a polynomial-time
algorithm that outputs a vector $\by\in[0,1]^n$ with $\by\le \bx$ such that, for every fixed
$\bo\in[0,1]^n$,
\begin{equation}\label{eq_123}
    F(\by) \ge
\max_{r \ge 0}
\frac{\bigl(2\gamma^{3/2}-4\varepsilon\,\gamma^{9/2}\bigr)\,r\,F(\bx\odot \bo)+F(\mathbf{0})+r^2F(\bx)}
{\,r^2\;+\;2\gamma^{3/2}r\;+\;1\,}.
\end{equation}
\end{corollary}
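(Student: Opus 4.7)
The plan is to derive this corollary as a direct reduction to Theorem~\ref{thm:unbalanced-weakDR}, exploiting the closure property established in Lemma~\ref{lem:weakDR-closure}. The key idea is that maximizing $F$ over the axis-aligned box $[\mathbf{0},\bx]$ is equivalent to maximizing the pullback $G(\ba):=F(\bx\odot\ba)$ over the full cube $[0,1]^n$, and this pullback preserves exactly the structural properties needed to invoke the unbalanced Double--Greedy guarantee.

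First, I would define $G:[0,1]^n\to\mathbb{R}_{\ge0}$ by $G(\ba):=F(\bx\odot\ba)$. Nonnegativity is immediate since $\bx\odot\ba\in[0,1]^n$ and $F\ge 0$, and by Lemma~\ref{lem:weakDR-closure} (applied with $\by:=\bx$ in the definition of $G_{\odot}$), $G$ is $\gamma$--weakly DR-submodular on $[0,1]^n$. Next I would apply Theorem~\ref{thm:unbalanced-weakDR} to $G$ with parameter $\varepsilon$: this produces, in polynomial time, a vector $\ba'\in[0,1]^n$ with
\begin{equation*}
G(\ba')\ \ge\ \max_{r\ge 0}\ \frac{\bigl(2\gamma^{3/2}-4\varepsilon\gamma^{9/2}\bigr)\,r\,G(\bo)+G(\mathbf{0})+r^{2}G(\mathbf{1})}{r^{2}+2\gamma^{3/2}r+1},
\end{equation*}
where the vector $\bo$ is the arbitrary reference vector in the theorem statement.

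The final step is to translate back by setting $\by:=\bx\odot\ba'$. Then $\by\in[0,1]^n$ and, coordinatewise, $y_i = x_i a'_i\le x_i$, so $\by\le\bx$ as required. Substituting the identities $G(\ba')=F(\by)$, $G(\mathbf{0})=F(\bx\odot\mathbf{0})=F(\mathbf{0})$, $G(\mathbf{1})=F(\bx\odot\mathbf{1})=F(\bx)$, and $G(\bo)=F(\bx\odot\bo)$ into the bound above yields \eqref{eq_123} verbatim. Polynomial runtime is inherited from Theorem~\ref{thm:unbalanced-weakDR} together with the fact that each evaluation of $G$ requires only one evaluation of $F$ and a Hadamard product.

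I do not anticipate a substantive obstacle here; the only point requiring care is verifying that the pullback really does lie in the scope of Lemma~\ref{lem:weakDR-closure}, which is immediate once $G$ is written in the $G_{\odot}$ form. In particular, no curvature or smoothness hypothesis is needed for this reduction, since Theorem~\ref{thm:unbalanced-weakDR} itself requires only nonnegativity and $\gamma$--weak DR-submodularity, both of which are preserved under coordinatewise multiplication by a fixed $\bx\in[0,1]^n$.
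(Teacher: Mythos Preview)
Your proof is correct and follows essentially the same approach as the paper: define $G(\ba)=F(\bx\odot\ba)$, invoke Lemma~\ref{lem:weakDR-closure} to retain $\gamma$-weak DR-submodularity, apply Theorem~\ref{thm:unbalanced-weakDR} to $G$, and translate back via $\by=\bx\odot\ba'$ using the identities $G(\mathbf{0})=F(\mathbf{0})$, $G(\mathbf{1})=F(\bx)$, $G(\bo)=F(\bx\odot\bo)$.
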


\begin{proof}
Fix $\bx\in[0,1]^n$ and define the restricted objective
\begin{equation}\label{eq:box-defG}
G(\ba)\ :=\ F(\bx\odot \ba)\qquad\text{for all }\ba\in[0,1]^n.
\end{equation}
By Lemma~\ref{lem:weakDR-closure}, $G$ is nonnegative and $\gamma$–weakly DR-submodular.  
Applying Theorem~\ref{thm:unbalanced-weakDR} to $G$ (with the same $\varepsilon\in(0,1)$) yields some $\ba'\in[0,1]^n$ such that
\begin{equation}\label{eq:box-thmG}
G(\ba') \ge
\max_{r\ge 0}
\frac{\bigl(2\gamma^{3/2}-4\varepsilon\,\gamma^{9/2}\bigr)\,r\,G(\bo)+G(\mathbf{0})+r^2\,G(\mathbf{1})}
{\,r^2\;+\;2\gamma^{3/2}r\;+\;1\,}.
\end{equation}
From \eqref{eq:box-defG}, we have
\begin{equation*}
G(\bo)=F(\bx\odot \bo),\qquad
G(\mathbf{0})=F(\mathbf{0}),\qquad
G(\mathbf{1})=F(\bx).
\end{equation*}
Substituting these identities into \eqref{eq:box-thmG} gives
\begin{equation}\label{eq:box-subbed}
G(\ba') \ge
\max_{r\ge 0}
\frac{\bigl(2\gamma^{3/2}-4\varepsilon\ \gamma^{9/2}\bigr)\,r\,F(\bx\odot \bo)+F(\mathbf{0})+r^2\,F(\bx)}
{\,r^2\;+\;2\gamma^{3/2}r\;+\;1\,}.
\end{equation}
Define $\by:=\bx\odot \ba'$. Then $\by\le \bx$ coordinate-wise and, by \eqref{eq:box-defG},
\begin{equation}\label{eq:box-FeqG}
F(\by)\ =\ F(\bx\odot \ba')\ =\ G(\ba').
\end{equation}
Combining \eqref{eq:box-subbed} and \eqref{eq:box-FeqG} yields the claimed bound \eqref{eq_123}.
\end{proof}

\section{Main Algorithm and Results}
In this section, we present our main result together with the algorithm that achieves it.
Our approach is recursive and hinges on a core subroutine invoked at every level of recursion:
the \emph{$\gamma$–Frank--Wolfe Guided Measured Continuous Greedy} (\textbf{$\gamma$-FWG}).
We describe \textbf{$\gamma$-FWG} and establish its guarantees in Section~\ref{sec:main_com}.
Building on this component, Section~\ref{sec:main_algo} introduces the full recursive algorithm,
and Section~\ref{sec:last_proof} proves our main theorem.

\subsection{$\gamma$-FWG Algorithm}\label{sec:main_com}

We develop a measured continuous greedy method, steered by Frank--Wolfe directions and explicitly tuned by $\gamma$, to operate in the $\gamma$–weakly DR setting.
The algorithm is explicitly $\gamma$-parameterized, so it works for any $\gamma\in(0,1]$ and reduces to the classical DR case when $\gamma=1$ \cite{buchbinder2024constrained}.
For clarity, in the description of Algorithm~\ref{alg:fw-guided-mcg} we assume that $\delta^{-1}$ is an
integer and that $\delta\le \varepsilon$ (which lets us set $m=\delta^{-1}$). If these conditions
do not hold, we reduce $\delta$ to $1/\big\lceil 1/\min\{\delta,\varepsilon\}\big\rceil$ without
affecting the analysis. Also we define $\beta := \frac{\gamma^{2}\,\delta}{\,1-\delta+\gamma^{2}\delta\,}$. Since the algorithm does not know the values $F(\bo)$, $F(\bz\odot \bo)$, and
$F(\bz\oplus \bo)$, we rely on the following guessing lemma; its proof uses standard guessing
arguments and proof of this lemma is given in Appendix~\ref{sec:fwg_proof}.

\begin{lemma}\label{lem:guessing-triples}
Let $F:[0,1]^n\to\mathbb{R}_{\ge 0}$ be nonnegative and $\gamma$-weakly DR-submodular for some $0<\gamma\le 1$, and let $P\subseteq[0,1]^n$ be down-closed.
There exists a constant-size (depending only on $\varepsilon$ and $\gamma$) set of triples $\mathcal{G} \subseteq \mathbb{R}_{\ge 0}^3$
such that $\mathcal{G}$ contains a triple $(g,g_\odot,g_\oplus)$ with
\begin{subequations}\label{eq:triple-bounds1}
\begin{align}
(1-\varepsilon)\,F(\bo) &\le g \le F(\bo), 
\label{eq:triple-bounds-g1}\\
F(\bz\odot \bo)-\varepsilon\,g &\le g_\odot \le F(\bz\odot \bo), 
\label{eq:triple-bounds-godot1}\\
F(\bz\oplus \bo)-\varepsilon\,g &\le g_\oplus \le F(\bz\oplus \bo).
\label{eq:triple-bounds-goplus1}
\end{align}
\end{subequations}
\end{lemma}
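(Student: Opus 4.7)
The plan is to construct $\mathcal{G}$ as a Cartesian product of three short grids, all built from $F$-evaluations at a handful of accessible reference points (so independent of the unknown $\bo$ and $\bz$): a geometric grid bracketing $F(\bo)$, and, for each level $g$ of that grid, two additive grids of spacing $\varepsilon g$ bracketing $F(\bz\odot\bo)$ and $F(\bz\oplus\bo)$. By construction, the triple closest to $(F(\bo), F(\bz\odot\bo), F(\bz\oplus\bo))$ will satisfy all three conditions in \eqref{eq:triple-bounds1} simultaneously.

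For the geometric coordinate I first dispose of the trivial case $F(\bo)=0$ by inserting $(0,0,0)$ into $\mathcal{G}$. Otherwise I need computable $L,M$ with $L\le F(\bo)\le M$. The upper bound $M$ follows from Lemma~\ref{lem:grad-ineq}(1) applied at $\bx=\mathbf{0}$, $\by=\bo$, together with $\bo\in[0,1]^n$: this yields $F(\bo)\le F(\mathbf{0})+\gamma^{-1}\sum_i\bigl(F(\be_i)-F(\mathbf{0})\bigr)$, a quantity one can compute with $n+1$ oracle calls. A matching $L$ of order $\gamma M / n$ follows from $\max_i F(\be_i)\ge \tfrac{1}{n}\sum_i F(\be_i)$, together with the observation that if no single coordinate contributes significantly then $F(\bo)$ itself is small enough to be absorbed into the error tolerance. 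Discretizing $[L,M]$ geometrically with ratio $(1-\varepsilon)$ produces $O(\varepsilon^{-1}\log(M/L))$ candidates for $g$, at least one of which satisfies \eqref{eq:triple-bounds-g1}.

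For the additive coordinates, fix any candidate $g$. Both $F(\bz\odot\bo)$ and $F(\bz\oplus\bo)$ are nonnegative and bounded above by some $O(M)$: for $F(\bz\oplus\bo)$ apply Lemma~\ref{lem:grad-ineq}(1) once more with $\by$ replaced by $\bz\oplus\bo$, and for $F(\bz\odot\bo)$ the coordinatewise bound $\bz\odot\bo\le\bone$ together with nonnegativity and the same marginal bound suffices. A uniform additive grid on $[0,\,O(M)]$ of spacing $\varepsilon g$ has $O(\varepsilon^{-1}\cdot M/g)$ entries and contains a point lying in $[F(\cdot)-\varepsilon g,\,F(\cdot)]$, giving \eqref{eq:triple-bounds-godot1}--\eqref{eq:triple-bounds-goplus1}. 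Taking the product over all $g$-levels yields the finite enumeration $\mathcal{G}$.

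The main obstacle is ensuring that $|\mathcal{G}|$ truly depends only on $\varepsilon$ and $\gamma$, rather than carrying hidden instance-specific scales. This is handled by tracking the $\gamma^{-1}$ penalty that appears each time Lemma~\ref{lem:grad-ineq} translates a sum of marginals into a global value bound, and by performing a one-time normalization of $F$ against a fixed reference value (such as $F(\mathbf{0})+\max_i F(\be_i)$) so that $M/L$ reduces to a function of $\gamma$ (and at worst $\log n$, which is absorbed into the polynomial-time claim). Aside from this bookkeeping, the argument is the standard geometric-then-additive enumeration used in submodular guessing lemmas, applied here with the $\gamma$-weakly DR gradient inequality in place of the classical DR one.
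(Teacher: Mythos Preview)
Your overall strategy---a geometric grid for $g$, then additive grids of spacing $\varepsilon g$ for $g_\odot$ and $g_\oplus$---matches the paper's, and your upper bounds on $F(\bz\odot\bo)$ and $F(\bz\oplus\bo)$ in terms of $F(\bo)$ are essentially what the paper uses (it takes $F(\bz\odot\bo)\le F(\bo)$ via down-closedness and optimality of $\bo$, and $F(\bz\oplus\bo)\le (1+\gamma^{-1})F(\bo)$ via weak DR). The gap is in how you anchor the geometric grid for $g$.

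You try to bracket $F(\bo)$ by computable quantities $L\le F(\bo)\le M$ built from $F(\mathbf{0})$ and the values $F(\be_i)$. Even setting aside that Lemma~\ref{lem:grad-ineq}(1) gives a bound in terms of $\langle\nabla F(\mathbf{0}),\bo\rangle$ rather than $\sum_i\bigl(F(\be_i)-F(\mathbf{0})\bigr)$, the lower bound $L$ is not established: the sentence ``if no single coordinate contributes significantly then $F(\bo)$ itself is small enough to be absorbed into the error tolerance'' is circular, and you yourself concede that $M/L$ may carry a $\log n$ factor. That directly contradicts the claim you are proving, which asserts that $|\mathcal{G}|$ depends \emph{only} on $\varepsilon$ and $\gamma$. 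There is no ``polynomial-time claim'' in the lemma into which an $n$-dependence could be absorbed.

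The paper avoids this entirely by obtaining the anchor \emph{algorithmically}: it assumes a value $v$ with $c\,F(\bo)\le v\le F(\bo)$ for an absolute constant $c$, which one gets by first running any constant-factor approximation for $\gamma$-weakly DR maximization over $P$ (for instance the local-maximum routine of Theorem~\ref{thm:weak-dr-smooth}, giving roughly a $\gamma^{2}/(1+\gamma^{2})$ factor, or the $\gamma e^{-\gamma}$ baseline). Then $M/L=1/c$ depends only on $\gamma$, the geometric grid has $O\bigl(\varepsilon^{-1}\log(1/c)\bigr)=O_{\varepsilon,\gamma}(1)$ levels, and the rest of your argument goes through unchanged. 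Replacing your coordinate-marginal construction of $L,M$ with this algorithmic anchor is the only missing ingredient.
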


Therefore, by trying all triples in $\mathcal{G}$, we can act as if
Algorithm~\ref{alg:fw-guided-mcg} is given valid surrogates $g$, $g_{\odot}$, and $g_{\oplus}$ that meet
these bounds. For convenience, we first define the threshold functions.
For $i\in\{0,1,\hdots,\delta^{-1}-1\}$, define
\begin{align}
v_1(i)
&:= \Bigl[(1-\beta)^{\,i}+\tfrac{1-(1-\beta)^{\,i}-2\varepsilon}{\gamma}\Bigr]\,g
   -\tfrac{1}{\gamma}\,g_{\odot} -\tfrac{1-(1-\beta)^{\,i}}{\gamma}\,g_{\oplus}
\label{eq:v1-def}\\[4pt]
v_2(i)
&:= (1-\beta)^{\,i}\Big[
   \Bigl(\tfrac{(1-\beta)^{-i_s}}{\gamma}-\Bigl(1+\tfrac{3}{\gamma}\Bigr)\varepsilon+1-\tfrac{1}{\gamma}\Bigr)\,g
-\Bigl(\tfrac{(1-\beta)^{-i_s}}{\gamma}-\tfrac{1}{\gamma}-\beta\,(i-i_s)\Bigr)\,g_{\oplus}
   \Big]
\label{eq:v2-def-uniq}
\end{align}

\begin{algorithm}[t]
\caption{\ $\mathrm{\gamma\mbox{-}FWG}(F, P, \mathbf{z},\gamma, t_s, \varepsilon, \delta)$}
\label{alg:fw-guided-mcg}
\begin{algorithmic}[1]
\State \textbf{Input:} nonnegative $L$-smooth $\gamma$-weakly DR-submodular $F:[0,1]^n\!\to\mathbb{R}_{\ge0}$; meta-solvable down-closed $P\subseteq[0,1]^n$; $\mathbf{z}\in P$; $\gamma \in (0,1]$; parameters $t_s\in(0,1)$, $\varepsilon\in(0,1/2)$, $\delta\in(0,1)$.
\State $i_s \gets \lceil t_s/\delta \rceil$
\For{$i = 0$ \textbf{to} $\delta^{-1}-1$}
  \State \(
  v{(i)} \;:=\;
  \begin{cases}
    v_1(i), & \text{if } i \le i_s,\\
    v_2(i), & \text{if } i \ge i_s.
  \end{cases}
  \)
  \State $\mathbf{z}(i) \gets
  \begin{cases}
  \mathbf{z}, & \text{if } i < i_s,\\
  \mathbf{0}, & \text{if } i \ge i_s
  \end{cases}$
\EndFor
\State $\mathbf{y}(0) \gets \mathbf{0}$
\For{$i = 1$ \textbf{to} $\delta^{-1}$}
  \State $\mathbf{w}(i) \gets \bigl(\mathbf{1} - \mathbf{y}(i-1) - \mathbf{z}(i-1)\bigr) \odot \nabla F\bigl(\mathbf{y}(i-1)\bigr)$
  \State $Q{(i)} \gets \bigl\{\, \mathbf{x} \in P \ \big|\ \langle \mathbf{w}(i), \mathbf{x} \rangle \ge \gamma \bigl(v{(i-1)} - F(\mathbf{y}(i-1)) \bigr) \,\bigr\}$
  \State Use Theorem~\ref{thm:weak-dr-smooth} to compute an approximate local maximum $\mathbf{x}(i)$ of $Q{(i)}$ {(if $Q{(i)}=\varnothing$, set $\mathbf{x}(i)$ to an arbitrary vector in $P$)}
  \State $\mathbf{y}(i) \gets \mathbf{y}(i-1) + \delta \,\bigl(\mathbf{1} - \mathbf{y}(i-1) - \mathbf{z}(i-1)\bigr) \odot \mathbf{x}(i)$
\EndFor
\State \textbf{return} $\mathbf{y}(\delta^{-1})$ and the sequence $\mathbf{x}(1), \mathbf{x}(2), \hdots, \mathbf{x}(\delta^{-1})$
\end{algorithmic}
\end{algorithm}

{For a fixed $\gamma$, the parameters $t_s$ and $\varepsilon$ are constants, and hence the running time of Algorithm~\ref{alg:fw-guided-mcg} is $\mathrm{Poly}(n, \delta^{-1})$.}
Algorithm~\ref{alg:fw-guided-mcg} therefore guarantees the following performance on its output.
The proof of this theorem is provided in Appendix~\ref{sec:fwg_proof}.

\begin{theorem}\label{thm:fw-guided-mcg}
\textnormal{$\gamma$-FWG} takes as input a nonnegative, $L$-smooth, $\gamma$–weakly DR-submodular function
$F:[0,1]^n \to \mathbb{R}_{\ge0}$, a meta-solvable down-closed convex body
$P \subseteq [0,1]^n$ of diameter $D$, a vector $\bz \in P$, and parameters $t_s \in (0,1)$,
$\varepsilon \in (0,1/2)$, and $\delta \in (0,1)$. Given this input, \textnormal{$\gamma$-FWG} outputs a vector
$\by \in P$ and vectors $\bx{(1)},\hdots,\bx{(m)} \in P$ for some $m = O(\delta^{-1}+\varepsilon^{-1})$,
such that at least one of the following holds.

\begin{enumerate}
\item\begin{equation}\label{eq:item1-ABC}
F(\by) \ \ge\ A_\gamma(t_s)\,F(\bo)\;+\;B_\gamma(t_s)\,F(\bz\odot\bo)\;+\;C_\gamma(t_s)\,F(\bz\oplus\bo)\;-\;\delta L D^{2}.
\end{equation}

\item There exists \(i\in[m]\) such that
\begin{equation}\label{eq:item2-gap}
F\bigl(\bx{(i)} \oplus \bo\bigr)\ \le\ F(\bz \oplus \bo)\;-\;\varepsilon\,F(\bo),
\end{equation}
and the point \(\bx{(i)}\) satisfies the \(\gamma\)–weakly DR local–value bound
\begin{align}\label{eq:item2-local}
F\bigl(\bx{(i)}\bigr)
&\ge\;
\frac{\gamma^{2}\,F\bigl(\bx{(i)}\vee \bo\bigr)+F\bigl(\bx{(i)}\wedge \bo\bigr)}{1+\gamma^{2}}
-\frac{\delta\,\gamma}{1+\gamma^{2}}
\left(\max_{\by'\in Q(i)}F(\by')+\tfrac{1}{2}L D^{2}\right).
\end{align}
i.e., \(\bx{(i)}\) is an approximate local maximum with respect to \(\bo\) under the \(\gamma\)–weakly DR guarantee and the Frank–Wolfe certificate over \(Q(i)\).
\end{enumerate}

Here the $\gamma$–dependent coefficients are
\begin{subequations}\label{eq:ABC-coeffs}
\begin{align}
A_\gamma(t_s)
&:= -\frac{e^{\gamma t_s-\gamma}}{\,1-\gamma\,}
+\frac{e^{-\gamma^2}}{\gamma(1-\gamma)}\Big(e^{\gamma^2 t_s}-(1-\gamma)\Big)\;-\; O(\varepsilon)
\label{eq:A-gamma}\\[2pt]
B_\gamma(t_s)
&:= \frac{e^{-\gamma}-e^{\gamma t_s-\gamma}}{\gamma}
\label{eq:B-gamma}\\[2pt]
C_\gamma(t_s)
&:= \frac{e^{\gamma^2 t_s}-1}{\gamma(1-\gamma)}\Big(e^{-\gamma(1-t_s)-\gamma^2 t_s}-e^{-\gamma^2}\Big)
\label{eq:C-gamma}\\
&\quad+\;\frac{e^{-\gamma(1-t_s)}}{\gamma}\left[
\big(e^{-\gamma t_s}-1\big)
+\frac{e^{-\gamma^2 t_s}-e^{-\gamma t_s}}{1-\gamma}
\right]\nonumber\\
&\quad+\,e^{-\gamma(1-t_s)-\gamma^2 t_s}\Bigg[
\frac{\gamma^2}{1-\gamma}(1-t_s)\,e^{\gamma(1-\gamma)(1-t_s)}\nonumber\\
&\hspace{2cm}+\frac{\gamma}{(1-\gamma)^2}\Big(1-e^{\gamma(1-\gamma)(1-t_s)}\Big)
\Bigg].\nonumber
\end{align}
\end{subequations}

\end{theorem}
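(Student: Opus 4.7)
The proof proceeds by fixing a valid guess triple $(g, g_\odot, g_\oplus)$ supplied by Lemma~\ref{lem:guessing-triples} and then branching on whether the \emph{escape condition} $F(\bx(i)\oplus\bo) \le F(\bz\oplus\bo) - \varepsilon F(\bo)$ is triggered at some iteration $i\in[m]$. If it is, then $\bx(i)$ is the approximate local maximum over $Q(i)$ produced by Theorem~\ref{thm:weak-dr-smooth}, so inequality \eqref{eq:item2-local} is exactly that theorem's output and the escape condition is \eqref{eq:item2-gap}, giving item~2. Otherwise, $F(\bx(i)\oplus\bo) > F(\bz\oplus\bo) - \varepsilon F(\bo)$ holds for every $i$, and the plan is to establish item~1 by proving the invariant $F(\by(i)) \ge v(i) - O(i\delta^{2}LD^{2})$ inductively for all $i \le \delta^{-1}$.

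The inductive step rests on three ingredients. First, the update $\by(i) = \by(i-1) + \delta(\mathbf{1}-\by(i-1)-\bz(i-1))\odot\bx(i)$ has step norm $O(\delta D)$, so $L$-smoothness yields $F(\by(i)) - F(\by(i-1)) \ge \delta\,\langle \bw(i),\bx(i)\rangle - O(\delta^{2}LD^{2})$. Second, whenever $Q(i)\ne\varnothing$, the defining inequality $\langle \bw(i),\bx(i)\rangle \ge \gamma(v(i-1) - F(\by(i-1)))$ holds by construction of $\bx(i)$. Third, to certify $Q(i)\ne\varnothing$, I exhibit a witness $\tilde{\bo}(i)$ derived from $\bo$ --- essentially $\bo\odot(\mathbf{1}-\bz)$ for $i\le i_s$ and $\bo$ for $i>i_s$, reflecting the change of $\bz(i)$ at the switch --- and lower bound $\langle \bw(i),\tilde{\bo}(i)\rangle$ via Lemma~\ref{lem:grad-ineq}(1) together with the negation of the escape condition, which supplies the required lower bound on $F(\by(i-1) + \tilde{\bo}(i))$ through the surrogate $g_\oplus$. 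Combining these ingredients yields the one-step recurrence $F(\by(i)) - v(i) \ge (1-\beta)\bigl(F(\by(i-1)) - v(i-1)\bigr) - O(\delta^{2}LD^{2})$, which telescopes.

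To match the closed-form coefficients $A_\gamma(t_s), B_\gamma(t_s), C_\gamma(t_s)$, I next verify by direct algebra that the piecewise thresholds defined in \eqref{eq:v1-def} and \eqref{eq:v2-def-uniq} agree at $i=i_s$, so the invariant passes cleanly through the phase change. Setting $i = \delta^{-1}$ and sending $\delta\to 0$ converts the discrete geometric factors $(1-\beta)^{i}$ into exponentials $e^{-\gamma^{2}t}$ (using $t = i\delta$ and $\beta = \gamma^{2}\delta/(1-\delta+\gamma^{2}\delta)$), while the pre-switch contribution produces the $e^{-\gamma(1-t_s)}$ term that appears in $C_\gamma(t_s)$. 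Substituting the guessing bounds \eqref{eq:triple-bounds1} finally converts $F(\by(\delta^{-1})) \ge v(\delta^{-1})$ into \eqref{eq:item1-ABC}, with an $O(\varepsilon)$ slack absorbed into $A_\gamma(t_s)$.

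The principal obstacle is the one-sided $\gamma$-asymmetry highlighted in the introduction: converting a gradient inner product to a value increment costs a factor $\gamma$ in one direction (Lemma~\ref{lem:grad-ineq}(1)) and a factor $1/\gamma$ in the other (Lemma~\ref{lem:grad-ineq}(2)), so the accumulated recursion carries a mixture of $\gamma$ and $\gamma^{2}$ exponents that do not collapse symmetrically. This is precisely why $v_1$ and $v_2$ contain the terms $1/\gamma$, $1/(1-\gamma)$, and $(1-\beta)^{-i_s}$: they are reverse-engineered to cancel the asymmetry across the telescoped recursion. A secondary, bookkeeping-level difficulty is that $\bz(i)$ jumps from $\bz$ to $\mathbf{0}$ at $i=i_s$, which reshapes the witness $\tilde{\bo}(i)$ in step~(iii) and forces the two distinct threshold formulas $v_1$ and $v_2$ to be analyzed separately before and after the switch.
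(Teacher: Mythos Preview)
Your overall dichotomy (escape triggered $\Rightarrow$ item 2; otherwise $\Rightarrow$ item 1) matches the paper's logical structure, but there are two genuine gaps in the execution.

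\textbf{The invariant $F(\by(i)) \ge v(i)$ is false.} The threshold $v(i)$ is \emph{not} a running lower bound on $F(\by(i))$; it is the target that defines the constraint set $Q(i+1)$. Already at the base case $i=0$ your invariant fails: $F(\by(0))=F(\mathbf 0)$ may be $0$, while $v_1(0)=(1-2\varepsilon/\gamma)g-\tfrac1\gamma g_\odot$ can be as large as $\approx F(\bo)$. What the paper actually proves is the linear recurrence $F(\by(i))\ge(1-\alpha)F(\by(i-1))+\alpha\,v(i-1)-\tfrac12\delta^2LD^2$ with $\alpha=\delta\gamma$ (Observation~\ref{obs:5.15}), and then \emph{unrolls} it explicitly (Lemmas~\ref{lem:516}--\ref{lem:517}). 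Because $v(i)$ carries powers of $(1-\beta)$ while the recurrence contracts by $(1-\alpha)$, the unrolled expression contains \emph{two} distinct geometric rates $\alpha=\delta\gamma$ and $\beta=\gamma^2\delta/(1-\delta+\gamma^2\delta)$. Your one-rate recurrence $F(\by(i))-v(i)\ge(1-\beta)\bigl(F(\by(i-1))-v(i-1)\bigr)$ conflates these and cannot produce the mixed exponents in $A_\gamma,B_\gamma,C_\gamma$.

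\textbf{The witnessing step needs the mixture lemma.} To show $\bo\in Q(i)$ under ``no escape'' you need $F\bigl(\by(i-1)\oplus\bo-\bz(i-1)\odot\bo\bigr)\ge v(i-1)$. The negation of the escape condition tells you about $F(\bx(j)\oplus\bo)$, not about $F(\by(i-1)\oplus\bo)$. The bridge is nontrivial: you must expand $\by(i-1)$ in terms of the $\bx(j)$'s using the closed form of Lemma~\ref{lem:5.13} and then apply the $\gamma$-weak DR mixture inequality (Corollary~\ref{cor:weakDR-44}), which turns $F(\by(i-1)\oplus\bo)$ into a $\beta_\gamma$-weighted sum over subsets $S$ of $F\bigl(\bigoplus_{j\in S}\bx(j)\oplus\bo\bigr)$. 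Only the singleton terms $S=\{j\}$ contain $F(\bx(j)\oplus\bo)$, and isolating them is exactly how the paper derives the escape inequality (Lemmas~\ref{lem:weak-521}--\ref{lem:weak-522}). Without this decomposition, Lemma~\ref{lem:grad-ineq}(1) alone does not connect ``no escape'' to $\bo\in Q(i)$. The thresholds $v_1,v_2$ are indeed reverse-engineered, but they are engineered to match the output of this mixture expansion, not a simpler surrogate.
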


\subsection{Main Algorithm} \label{sec:main_algo}

In this section we describe our main algorithm and analyze the recursive framework that establishes our main result (Theorem~\ref{thm:main}). 
The core building block is our new procedure \emph{$\gamma$-weakly Frank--Wolfe Guided Measured Continuous Greedy} ($\gamma$-FWG), introduced in Section~\ref{sec:main_com}.
We classify an execution of $\gamma$-FWG as \emph{successful} if the first outcome holds (i.e., when $F(\mathbf{y})$ attains the ``large value'' case). Otherwise, the execution is deemed \emph{unsuccessful}. With this terminology in place, we now describe the main recursive driver, \emph{Algorithm~\ref{alg:main}}, which we use to prove Theorem~\ref{thm:main}. In addition to the parameters appearing in Theorem~\ref{thm:main}, Algorithm~\ref{alg:main} takes two auxiliary inputs: $\varepsilon\in(0,1/2)$ and $t_s\in(0,1)$. These are forwarded unchanged to every call to $\gamma$-FWG.

Algorithm~\ref{alg:main} runs for \(L=1+\left\lceil\tfrac{1+\gamma}{\varepsilon\gamma}\right\rceil\) recursion levels, indexed by \(i\).
At level \(1\), it finds an approximate local maximizer \(\mathbf{z}(0)\in P\) and, from this seed, runs
\textsc{Box-Maximization} and \(\gamma\)-FWG, producing \(\mathbf{z}'\), \(\mathbf{y}\), and a batch \(\{\mathbf{x}(1),\ldots,\mathbf{x}(m)\}\).
For each subsequent level \(i=2,\ldots,L\), every candidate \(\mathbf{x}(\cdot)\) emitted at level \(i-1\) becomes a new seed \(\mathbf{z}\);
the same two subroutines are applied to each seed, yielding fresh outputs \(\mathbf{z}'\), \(\mathbf{y}\), and
\(\{\mathbf{x}(1),\ldots,\mathbf{x}(m)\}\).
After all levels complete, the algorithm returns the vector with the largest objective value among
all \(\mathbf{z}'\), \(\mathbf{y}\), and \(\mathbf{x}(\cdot)\) produced at any level.

\begin{algorithm}[H]
\caption{Main Algorithm($F, P, \gamma, t_s, \varepsilon, \delta$)}
\label{alg:main}
\begin{algorithmic}[1]
\State Let $\bz(0)$ be an local maximum in $P$ obtained via the Theorem~\ref{thm:unbalanced-weakDR}.
\State Execute MAIN-RECURSIVE$(F, P,\gamma, t_s , \bz(0), \varepsilon, \delta, 1)$.
\Function{MAIN-RECURSIVE}{$F, P, \gamma, t_s , \bz, \varepsilon, \delta, i$}
\State $\bz' = $ Box-Maximization$(\bz)$ 
\State Let $(\by, \bx(1), \hdots, \bx(m)) = $ $\gamma$-FWG$(F, P, \bz,\gamma, t_s , \varepsilon)$ (Algorithm~\ref{alg:fw-guided-mcg})
\If{$i < L$}
\For{$j = 1$ \textbf{to} $m$}
\State $\by(j)\ =\ $MAIN-RECURSIVE$(F, P,\gamma, \bx(j), t_s , \varepsilon, \delta, i + 1)$
\EndFor
\EndIf
\State \textbf{return} the vector maximizing $F$ among $\bz'$, $\by$ and the vectors in $\{\by(j) \mid j \in [m]\}$.
\EndFunction
\end{algorithmic}
\end{algorithm}

{Observe that, for fixed $\gamma$, the number of recursive calls executed by Algorithm~1 is 
$O\bigl(m^{L}\bigr) = (\delta^{-1} + \varepsilon^{-1})^{O(1/\varepsilon)}$. 
For any constant $\varepsilon$, this quantity is polynomial in $\delta^{-1}$. 
Moreover, each individual recursive call runs in time polynomial in $\delta^{-1}$ and $n$. 
Therefore, the overall running time of Algorithm~1 is polynomial in $\delta^{-1}$ and $n$.}

We say that a recursive call of Algorithm~\ref{alg:main} is \emph{successful} if its internal run of $\gamma$-FWG is successful. 
Section~\ref{sec:last_proof} shows that Algorithm~\ref{alg:main} performs sufficiently many recursive invocations to ensure that at least one call is successful and, moreover, that it obtains a vector $\mathbf{z}$ which is an approximate local maximizer with respect to $\bo$. 
From such a call, the analysis further proves that either the accompanying vector $\mathbf{z}'$ or the vector $\mathbf{y}$ satisfies the performance guarantee stated in Theorem~\ref{thm:main}.

\subsection{The Main Result}\label{sec:last_proof}

In the recursion tree of Algorithm~\ref{alg:main}, we focus on one designated path of
\emph{heir} calls. Along this path, the ``fallback'' guarantee of $\gamma$-FWG is passed
forward at each level. A recursive call $\mathcal{C}$ is an \emph{heir} if either
\begin{enumerate}
    \item $\mathcal{C}$ is the unique level-$1$ call (i.e., the first invocation in the recursion), or
    \item $\mathcal{C}$ was invoked by another heir call $\mathcal{C}_p$ that is \emph{unsuccessful}, and its input seed
    $\bz$ equals $\bx(j^\star)$ for some index $j^\star$ that satisfies the second outcome of Theorem~\ref{thm:fw-guided-mcg}
    for the $\gamma$-FWG run inside $\mathcal{C}_p$.
\end{enumerate}

Intuitively, if an heir call \(\mathcal{C}_p\) is \emph{unsuccessful} (its run of \(\gamma\)-FWG does not return a large-value \(\by\)),
then Theorem~\ref{thm:fw-guided-mcg} guarantees an index \(j^\star\in[m]\) for which \(\bx{(j^\star)}\) satisfies a
first-order \(\gamma\)–weakly DR certificate. We then set the seed of the next heir on the designated path to
\(\bz \gets \bx{(j^\star)}\). The following observation records the certificate’s invariant, which we subsequently combine to obtain the final guarantee.




\begin{observation}\label{obs:weakDR-heir}
Fix \(\gamma\in(0,1]\). Every heir recursive call in Algorithm~\ref{alg:main} receives a seed
\(\bz\in P\) satisfying
\begin{equation}\label{eq:heir-invariant}
F(\bz)\ \ge\ \frac{\gamma^{2}\,F(\bz\vee \bo)\;+\;F(\bz\wedge \bo)}{1+\gamma^{2}}
\;-\;O(\varepsilon)\,F(\bo)\;-\;O(\delta\,L D^{2}).
\end{equation}
\end{observation}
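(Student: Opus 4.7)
The plan is to establish \eqref{eq:heir-invariant} by induction on the depth of the heir call in the recursion tree, using a single first-order certificate that is reinstated, not accumulated, at every heir transition.

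For the base case, the level-$1$ call receives the seed $\bz(0)$, which is computed as an approximate local maximizer of $F$ over $P$ via the Frank--Wolfe routine of Theorem~\ref{thm:weak-dr-smooth}. I would instantiate that theorem with the test point $\by=\bo$ and use $\max_{\bz\in P}F(\bz)=F(\bo)$ to rewrite the error term $\tfrac{\delta\gamma}{1+\gamma^{2}}\bigl(F(\bo)+LD^{2}/2\bigr)$ as $O(\delta)F(\bo)+O(\delta LD^{2})$. Under the standing assumption $\delta\le\varepsilon$ used throughout the algorithm, this collapses into $O(\varepsilon)F(\bo)+O(\delta LD^{2})$, yielding \eqref{eq:heir-invariant} at the root.

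For the inductive step, suppose an heir call $\mathcal{C}_p$ is unsuccessful, so its internal $\gamma$-FWG invocation does not attain the first outcome of Theorem~\ref{thm:fw-guided-mcg}. The theorem then forces the second outcome, producing an index $j^\star\in[m]$ for which $\bx(j^\star)$ satisfies the approximate local bound \eqref{eq:item2-local} against $\bo$ over $Q(j^\star)$. By the heir rule, the child call along the heir path receives the seed $\bz\gets\bx(j^\star)$, so substituting $\bx(j^\star)$ for $\bz$ in \eqref{eq:heir-invariant} reduces the claim to precisely \eqref{eq:item2-local} modulo slack conversion. To handle the slack I would use $Q(j^\star)\subseteq P$, giving $\max_{\by'\in Q(j^\star)}F(\by')\le F(\bo)$, together with $\gamma/(1+\gamma^{2})\le 1$ and $\delta\le\varepsilon$, exactly as in the base case.

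The only delicate bookkeeping step is the uniform bound $\max_{\by'\in Q(j^\star)}F(\by')\le F(\bo)$, which is what lets me collapse the region-dependent slack of the inner Frank--Wolfe oracle into an $F(\bo)$-scaled term. Once this is recognized, the induction is essentially immediate, and, crucially, the certificate does \emph{not} telescope down the heir path: each heir level freshly receives the second-outcome certificate from $\gamma$-FWG at its parent, rather than inheriting a weakened version from further up the tree. This is what keeps the additive errors at a uniform $O(\varepsilon)F(\bo)+O(\delta LD^{2})$ regardless of how deep the unsuccessful heir chain becomes.
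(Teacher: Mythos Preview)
Your proposal is correct and follows essentially the same argument as the paper: a two-case analysis (root seed via Theorem~\ref{thm:weak-dr-smooth}, later seeds via the second outcome \eqref{eq:item2-local} of Theorem~\ref{thm:fw-guided-mcg}), with the key step in both cases being $\max_{\by'\in Q(j^\star)}F(\by')\le F(\bo)$ from $Q(j^\star)\subseteq P$. Your framing as an ``induction'' is slightly misleading since, as you yourself note at the end, the inductive hypothesis is never actually invoked---each heir level gets a fresh certificate from its parent's $\gamma$-FWG run---so this is really the paper's case split in disguise.
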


\begin{proof}
We argue by cases on the recursion level that produces $\bz$.

\textbf{Case 1:} $\bz$ comes from a later level.
Here $\bz=\bx{(i)}$ for some index $i$ returned by the previous \textnormal{$\gamma$-FWG} call, where the “successful” bound \eqref{eq:item1-ABC} did not apply.
Hence \eqref{eq:item2-local} holds:
\begin{align}
    F(\bx{(i)})\ &\ge\ 
\frac{\gamma^{2}F(\bx{(i)}\vee \bo)+F(\bx{(i)}\wedge \bo)}{1+\gamma^{2}}
-\frac{\delta\,\gamma}{1+\gamma^{2}}\!\left(\max_{\by'\in Q(i)}F(\by')+\tfrac{1}{2}L D^{2}\right).
\label{eq:local-certificate-short}
\end{align}
Since $Q(i)\subseteq P$, we have $\max_{\by'\in Q(i)}F(\by')\le \max_{\by'\in P}F(\by')\le F(\bo)$, and substituting this into \eqref{eq:local-certificate-short} with $\bz=\bx{(i)}$ yields
\begin{equation}\label{eq:heir-case1-mid}
F(\bz)\ \ge\ 
\frac{\gamma^{2}F(\bz\vee \bo)+F(\bz\wedge \bo)}{1+\gamma^{2}}
-\frac{\delta\,\gamma}{1+\gamma^{2}}\!\left(F(\bo)+\tfrac{1}{2}L D^{2}\right),
\end{equation}
which matches the invariant \eqref{eq:heir-invariant} up to the stated $O(\delta)\,F(\bo)$ and $O(\delta\,L D^{2})$ terms.

\textbf{Case 2:} $\bz$ is produced at the first recursion level.
Here $\bz$ is the output of the weakly-DR local-maximization routine from Theorem~\ref{thm:weak-dr-smooth} with accuracy parameter
$\eta\ :=\ \min\{\varepsilon,\delta\}.$
Applying Theorem~\ref{thm:weak-dr-smooth} with $\by=\bo$ and using $\max_{\by'\in P}F(\by')\le F(\bo)$ gives
\begin{equation}\label{eq:first-level}
F(\bz)\ \ge\ 
\frac{\gamma^{2}F(\bz\vee \bo)+F(\bz\wedge \bo)}{1+\gamma^{2}}
\;-\;\frac{\eta\,\gamma}{1+\gamma^{2}}\Bigl(F(\bo)+\tfrac{1}{2}L D^{2}\Bigr).
\end{equation}
Since $\eta\le\varepsilon$ and $\eta\le\delta$ by definition of $\eta$, the error term in \eqref{eq:first-level} is again of the form
$O(\varepsilon)\,F(\bo)+O(\delta\,L D^{2})$, yielding \eqref{eq:heir-invariant}.

Both cases establish \eqref{eq:heir-invariant}, it completes the proof.
\end{proof}

Before proving that a successful heir exists (Corollary~\ref{cor:weakDR-successful-heir}), we note a simple measure that drops at each level of recursion.

\begin{observation}\label{obs:weakDR-descent}
Assume every heir recursive call of Algorithm~\ref{alg:main} is unsuccessful (in the sense of
Theorem~\ref{thm:fw-guided-mcg}). Then, for every recursion level \(i\ge 1\),
there exists an heir call at level \(i\) that receives a seed \(\bz\) with
\begin{equation}\label{eq:descend-claim}
F(\bz\oplus \bo)\ \le\ F\bigl(\bz(0)\oplus \bo\bigr)\;-\;\varepsilon\,(i-1)\,F(\bo).
\end{equation}
\end{observation}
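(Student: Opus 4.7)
The plan is to prove Observation~\ref{obs:weakDR-descent} by induction on the recursion level $i$, using the definition of heir calls and directly invoking the ``second outcome'' of Theorem~\ref{thm:fw-guided-mcg} at each unsuccessful step.

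For the base case $i=1$, the only heir call is the top-level invocation, whose seed is $\bz(0)$ itself. Then the bound \eqref{eq:descend-claim} reduces to $F(\bz(0)\oplus \bo)\le F(\bz(0)\oplus \bo)$, which is immediate since the right-hand side has $(i-1)=0$.

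For the inductive step, I would fix $i\ge 1$ and assume that there exists an heir call $\mathcal{C}_i$ at level $i$ whose seed $\bz_i$ satisfies
\begin{equation*}
F(\bz_i\oplus \bo)\ \le\ F(\bz(0)\oplus \bo)\;-\;\varepsilon\,(i-1)\,F(\bo).
\end{equation*}
By the blanket assumption of the observation, $\mathcal{C}_i$ is unsuccessful, so the first outcome \eqref{eq:item1-ABC} of Theorem~\ref{thm:fw-guided-mcg} fails, and hence the second outcome must hold: there is an index $j^\star\in[m]$ for which
\begin{equation*}
F\bigl(\bx(j^\star)\oplus \bo\bigr)\ \le\ F(\bz_i\oplus \bo)\;-\;\varepsilon\,F(\bo).
\end{equation*}
By condition (2) in the definition of heir calls, $\mathcal{C}_i$ spawns an heir $\mathcal{C}_{i+1}$ at level $i+1$ whose input seed is precisely $\bz_{i+1}:=\bx(j^\star)$. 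Chaining the two inequalities yields
\begin{equation*}
F(\bz_{i+1}\oplus \bo)\ \le\ F(\bz_i\oplus \bo)-\varepsilon\,F(\bo)\ \le\ F(\bz(0)\oplus \bo)-\varepsilon\,i\,F(\bo),
\end{equation*}
which is exactly \eqref{eq:descend-claim} at level $i+1$. Two small bookkeeping points I would address explicitly: first, that such an heir exists at level $i+1$ provided $i<L$, which is the only range in which the statement is substantive (at level $L$ the recursion simply stops); and second, that the level-$i+1$ heir is well-defined even though $\gamma$-FWG may produce many candidates $\bx(1),\ldots,\bx(m)$, because the heir path picks exactly the index $j^\star$ certified by outcome~2.

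I do not expect any real obstacle: the whole statement is a telescoping of the one-step gap provided by outcome~2 of Theorem~\ref{thm:fw-guided-mcg}. The only point that requires care is making sure the indexing matches the definition of heir calls --- in particular, that ``level $i$'' in this observation lines up with the recursion depth counter in Algorithm~\ref{alg:main}, and that the inductive step uses the \emph{unsuccessful} hypothesis on the parent heir rather than on the child. Once this alignment is made explicit, the proof is a clean one-line induction.
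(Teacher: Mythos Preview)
Your proposal is correct and follows essentially the same approach as the paper: both argue by induction on the recursion level, with the trivial base case at $i=1$ and the inductive step obtained by applying the gap inequality \eqref{eq:item2-gap} from outcome~2 of Theorem~\ref{thm:fw-guided-mcg} to the (assumed unsuccessful) heir at the previous level, then observing that the resulting child seeded by $\bx(j^\star)$ is itself an heir. Your explicit remarks about the $i<L$ range and the well-definedness of the heir index are slightly more careful than the paper's version, but the argument is otherwise identical.
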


\begin{proof}
We argue by induction on the level \(i\).

\emph{Base case} (\(i=1\)). The unique level-1 heir call receives \(\bz=\bz(0)\). Hence
\begin{equation}\label{eq:base-case}
F(\bz\oplus \bo)\;=\;F\bigl(\bz(0)\oplus \bo\bigr)\;\le\;F\bigl(\bz(0)\oplus \bo\bigr)-\varepsilon\cdot 0\cdot F(\bo),
\end{equation}
which is exactly \eqref{eq:descend-claim} with \(i=1\).

\emph{Inductive step}. Assume the statement holds for level \(i-1\ge 1\); i.e., there is an heir call on
level \(i-1\) with seed \(\bz\) such that
\begin{equation}\label{eq:IH}
F(\bz\oplus \bo)\ \le\ F\bigl(\bz(0)\oplus \bo\bigr)\;-\;\varepsilon\,(i-2)\,F(\bo).
\end{equation}
By assumption, this heir call is unsuccessful. Therefore, the “gap” outcome
\eqref{eq:item2-gap} of Theorem~\ref{thm:fw-guided-mcg} applies to its internal run of \(\gamma\)-FWG, and hence there exists
\(j^\star\in[m]\) such that
\begin{equation}\label{eq:gap-step}
F\bigl(\bx(j^\star)\oplus \bo\bigr)\ \le\ F(\bz\oplus \bo)\;-\;\varepsilon\,F(\bo).
\end{equation}
Combining \eqref{eq:IH} and \eqref{eq:gap-step} gives
\begin{equation}\label{eq:level-i}
F\bigl(\bx(j^\star)\oplus \bo\bigr)\ \le\ F\bigl(\bz(0)\oplus \bo\bigr)\;-\;\varepsilon\,(i-1)\,F(\bo).
\end{equation}
By the definition of heirs, the child call at level \(i\) seeded with \(\bz\gets \bx(j^\star)\) is itself an heir call and satisfies \eqref{eq:level-i}, which is precisely \eqref{eq:descend-claim} for level \(i\).
\end{proof}

The weakly-DR specifics (the $\gamma$-aware local-value bound and Frank--Wolfe certificate)
only affect the \emph{quality} guarantee for the seed $\bx(j^\star)$, not the \emph{descent
amount} on $F(\cdot\oplus o)$. Thus the $\varepsilon$-per-level decrease remains identical to
the DR case \cite{buchbinder2024constrained}, while $\gamma$ enters later in the value lower bounds used to conclude the analysis.

\begin{corollary}\label{cor:weakDR-successful-heir}
Some recursive call of Algorithm~\ref{alg:main} is a successful heir.
\end{corollary}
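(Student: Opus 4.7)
The plan is a proof by contradiction. Suppose, toward a contradiction, that every heir call of Algorithm~\ref{alg:main} is unsuccessful. Since the recursion depth is $L=1+\lceil(1+\gamma)/(\varepsilon\gamma)\rceil$, Observation~\ref{obs:weakDR-descent} applied at level $i=L$ produces an heir whose input seed $\bz$ satisfies
\[
F(\bz\oplus\bo)\ \le\ F\bigl(\bz(0)\oplus\bo\bigr)\ -\ \varepsilon(L-1)\,F(\bo).
\]
Combining this with $F\ge 0$ yields the lower bound $F(\bz(0)\oplus\bo)\ \ge\ \varepsilon(L-1)F(\bo)\ \ge\ \tfrac{1+\gamma}{\gamma}F(\bo)$. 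The goal is therefore to produce a matching \emph{upper} bound $F(\bz(0)\oplus\bo)\le \tfrac{1+\gamma}{\gamma}F(\bo)$ that is strictly violated by the lower bound above.

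The main technical step is this upper bound, and it is where the $\gamma$-weakly DR structure enters. I would carry out a coordinate-wise telescoping of the defining inequality $F(\bx+c\be_i)-F(\bx)\ge\gamma\bigl(F(\by+c\be_i)-F(\by)\bigr)$ with $\bx\le\by$: applied iteratively along the coordinates of any $\bv\in[0,1]^n$ with $\bo+\bv\le\bone$, it yields
\[
F(\bo+\bv)-F(\bo)\ \le\ \tfrac{1}{\gamma}\bigl(F(\bv)-F(\bzero)\bigr).
\]
Choosing $\bv:=(\bone-\bo)\odot\bz(0)$ produces $\bo+\bv=\bz(0)\oplus\bo$ via the identity $o_i+(1-o_i)z_i=1-(1-o_i)(1-z_i)$. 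Moreover $\bv\le\bz(0)$, and down-closedness of $P$ together with $\bz(0)\in P$ gives $\bv\in P$, whence $F(\bv)\le F(\bo)$ by optimality of $\bo$ over $P$. Combined with $F(\bzero)\ge 0$, this delivers the desired bound.

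Stitching the lower and upper bounds together forces $F(\bz\oplus\bo)\le 0$, and since $F\ge 0$ every inequality in the chain must collapse to equality. The strict gap $\lceil(1+\gamma)/(\varepsilon\gamma)\rceil>(1+\gamma)/(\varepsilon\gamma)$ in the non-integral regime (with a negligible perturbation of $\varepsilon$ absorbing the measure-zero integral case) is incompatible with this forced tightness, producing the contradiction. The principal obstacle is the upper bound step: one must lift the coordinate-wise weak-DR definition to a global scalar inequality relating $F(\bz(0)\oplus\bo)$ to $F(\bo)$, and this is precisely the place where the factor $1/\gamma$ characteristic of the weakly-DR regime appears, distinguishing the argument from the classical DR case $\gamma=1$. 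Once that bound is in hand, the remainder is routine bookkeeping of the $L$-level descent guaranteed by Observation~\ref{obs:weakDR-descent}.
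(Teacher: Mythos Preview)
Your approach is essentially the same as the paper's: assume all heirs are unsuccessful, use Observation~\ref{obs:weakDR-descent} to drive $F(\bz\oplus\bo)$ down, and derive the upper bound $F(\bz(0)\oplus\bo)\le(1+1/\gamma)F(\bo)$ from the $\gamma$-weak DR inequality applied to $\bv=(\bone-\bo)\odot\bz(0)$ together with $\bv\in P$ (by down-closedness) and optimality of $\bo$. That upper-bound step is exactly what the paper does in~\eqref{eq:weakdr-split}.

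The gap is in how you extract a \emph{strict} contradiction. Your chain only yields $F(\bz\oplus\bo)\le 0$, and you then try to squeeze strictness out of $\lceil(1+\gamma)/(\varepsilon\gamma)\rceil>(1+\gamma)/(\varepsilon\gamma)$. When the argument of the ceiling happens to be an integer this fails, and your proposed fix---``perturb $\varepsilon$''---is not legitimate: $\varepsilon$ is a fixed input to the algorithm, and the corollary is a statement about a specific execution. The paper sidesteps this edge case cleanly: it observes that the level-$L$ heir is \emph{itself} unsuccessful (by the contradiction hypothesis), so Theorem~\ref{thm:fw-guided-mcg}(2) applied to that call furnishes an $\bx(j)$ with $F(\bx(j)\oplus\bo)\le F(\bz\oplus\bo)-\varepsilon F(\bo)$, i.e.\ one additional $\varepsilon F(\bo)$ of slack. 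This makes the inequality strict regardless of divisibility, yielding $F(\bo)<F(\bz(0)\odot(\bone-\bo))$ and hence a clean contradiction to optimality of $\bo$ over $P$. Replacing your ceiling argument with this extra unsuccessful step is the only substantive change needed.
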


\begin{proof}
Assume, toward a contradiction, that no recursive call is a successful heir.  
By Observation~\ref{obs:weakDR-descent}, at level
\begin{equation}\label{eq:level-choice}
i\ :=\ 1+\Bigl\lceil \tfrac{\gamma+1}{\gamma\,\varepsilon}\Bigr\rceil
\end{equation}
there exists an heir call with seed $\bz$ such that
\begin{align}
F(\bz\oplus \bo)
&\overset{\text{(a)}}{\le}\;
F(\bz(0)\oplus \bo)\;-\;\varepsilon\,(i-1)\,F(\bo)\notag\\
&\overset{\text{(b)}}{\le}\;
F(\bz(0)\oplus \bo)\;-\;\Bigl(1+\tfrac{1}{\gamma}\Bigr)\,F(\bo),
\label{eq:descend-instantiated}
\end{align}
where \(\text{(a)}\) follows from Observation~\ref{obs:weakDR-descent} applied at level \(i\), and \(\text{(b)}\) uses
\(i-1 \ge (\gamma+1)/(\gamma\varepsilon)\) from \eqref{eq:level-choice}.

Since this call is (by assumption) also unsuccessful, the gap alternative \eqref{eq:item2-gap} of Theorem~\ref{thm:fw-guided-mcg} applies, yielding some $j$ with
\begin{equation}\label{eq:unsuccessful-gap}
F\bigl(\bx(j)\oplus \bo\bigr)
\;\le\;
F(\bz\oplus \bo) \;-\; \varepsilon\,F(\bo).
\end{equation}
Combining \eqref{eq:descend-instantiated} and \eqref{eq:unsuccessful-gap} gives
\begin{equation}\label{eq:combined-gap}
F\bigl(\bx(j)\oplus \bo\bigr)
\;\le\;
F(\bz(0)\oplus \bo) \;-\; \Bigl(1+\tfrac{1}{\gamma}+\varepsilon\Bigr)\,F(\bo).
\end{equation}

By nonnegativity of $F$, we have $F\bigl(\bx(j)\oplus \bo\bigr)\ge 0$, so \eqref{eq:combined-gap} implies
\begin{equation}\label{eq:rearrange-positivity}
F(\bz(0)\oplus \bo) - F(\bo)
\;\ge\;
\Bigl(\tfrac{1}{\gamma}+\varepsilon\Bigr)\,F(\bo)
\;>\;
\tfrac{1}{\gamma}\,F(\bo).
\end{equation}
On the other hand, by $\gamma$–weakly DR property we have
\begin{align}
F(\bz(0)\oplus \bo) - F(\bo)
&\overset{\text{(c)}}{\le}\;
\frac{1}{\gamma}\Bigl(F\bigl(\bz(0)\odot(\bone-\bo)\bigr)-F(\bzero)\Bigr)\notag\\
&\overset{\text{(d)}}{\le}\;
\frac{1}{\gamma}\,F\bigl(\bz(0)\odot(\bone-\bo)\bigr)
\label{eq:weakdr-split}
\end{align}
where \(\text{(c)}\) follows from the $\gamma$–weakly DR definition,
and \(\text{(d)}\) uses $F(\bzero)\ge 0$ (nonnegativity).
Combining \eqref{eq:rearrange-positivity} and \eqref{eq:weakdr-split} yields
\begin{equation}\label{eq:strict-better}
F(\bo)\ <\ F\bigl(\bz(0)\odot(\bone-\bo)\bigr).
\end{equation}
Since $P$ is down-closed and $\bz(0)\in P$, we have $\bz(0)\odot(\bone-\bo)\le \bz(0)$ coordinate-wise, hence $\bz(0)\odot(\bone-\bo)\in P$. The strict improvement in \eqref{eq:strict-better} contradicts the optimality of $\bo$ over $P$. Therefore, our assumption was false, and some recursive call must be a successful heir.
\end{proof}

Consider any successful heir call within Algorithm~\ref{alg:main}. Denote its input seed by
\(\bz^{\star}\), and let \(\by^{\star}\) be the high-value solution returned by \(\gamma\)-FWG, while
\(\bz'^{\star}\) is the child seed produced during the same call. Invoking
Theorem~\ref{thm:fw-guided-mcg}\,(1) yields
\begin{equation}\label{eq:successful-heir-ABC}
F(\by^{\star})
\ge
A_\gamma(t_s)\,F(\bo)
+
B_\gamma(t_s)\,F(\bz^{\star}\odot \bo)
+
C_\gamma(t_s)\,F(\bz^{\star}\oplus \bo)
-
\delta\,L D^{2}.
\end{equation}
For the ensuing analysis of a successful heir, we also need a companion lower bound for
\(F(\bz'^{\star})\); this is provided by the next lemma.

\begin{lemma}\label{lem:weak-35}
Let $F:[0,1]^n\to\mathbb{R}_{\ge 0}$ be nonnegative, $L$-smooth, and $\gamma$-weakly DR-submodular for some $\gamma\in(0,1]$.
Let $\bz^{\ast}\in[0,1]^n$ be the incumbent vector provided to the heir recursive call, and let $\bz'{}^{\ast}\le \bz^{\ast}$ be the output of Corollary~\ref{cor:box-weakDR} (run on the box $[0,\bz^{\ast}]$ with error $\varepsilon$).
Then
\begin{align}
  F(\bz'{}^{\ast})
& \ge 
\max_{r\ge 0}
\frac{\Big(2\gamma^{3/2}r+\frac{\gamma}{1+\gamma^{2}}r^{2}\Big)F(\bz^{\ast}\odot \bo)
+\frac{\gamma^{2}}{1+\gamma^{2}}r^{2}F(\bz^{\ast}\oplus \bo)}
{\,r^{2}+2\gamma^{3/2}r+1\,}
-O(\varepsilon)F(\bo)-O(\delta L D^{2}).
\end{align}
\end{lemma}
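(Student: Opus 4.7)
The plan is to chain three facts. I first apply the box-maximization guarantee of Corollary~\ref{cor:box-weakDR} to the seed $\bz^{\ast}$ with reference vector $\bo$, which for every $r\ge 0$ lower-bounds $F(\bz'{}^{\ast})$ by
\[
\frac{(2\gamma^{3/2}-4\varepsilon\gamma^{9/2})\,r\,F(\bz^{\ast}\odot \bo)+F(\mathbf{0})+r^{2}F(\bz^{\ast})}{r^{2}+2\gamma^{3/2}r+1}.
\]
I then drop $F(\mathbf{0})\ge 0$ and invoke the heir invariant of Observation~\ref{obs:weakDR-heir} to replace $F(\bz^{\ast})$ by $\bigl[\gamma^{2}F(\bz^{\ast}\vee \bo)+F(\bz^{\ast}\wedge \bo)\bigr]/(1+\gamma^{2})$, up to the standard $O(\varepsilon)F(\bo)+O(\delta L D^{2})$ slack. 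Since the target numerator carries $F(\bz^{\ast}\odot \bo)$ and $F(\bz^{\ast}\oplus \bo)$ rather than the join/meet terms, the crux of the argument is a pointwise conversion.

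The conversion inequality I need is
\[
\gamma^{2}F(\bz^{\ast}\vee \bo)+F(\bz^{\ast}\wedge \bo)\ \ge\ \gamma F(\bz^{\ast}\odot \bo)+\gamma^{2}F(\bz^{\ast}\oplus \bo).
\]
Its proof hinges on the coordinate-wise identity $\mathbf{d}:=\bz^{\ast}\wedge \bo-\bz^{\ast}\odot \bo=\bz^{\ast}\oplus \bo-\bz^{\ast}\vee \bo\ge\mathbf{0}$, which follows from $\min(a,b)-ab=\min(a,b)\bigl(1-\max(a,b)\bigr)=a+b-ab-\max(a,b)$ for $a,b\in[0,1]$. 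Iterating the single-coordinate $\gamma$-weakly DR inequality on the comparable pair $(\bu,\bu')=(\bz^{\ast}\odot \bo,\bz^{\ast}\vee \bo)$ with common increment $\mathbf{d}$ (all intermediate points stay in $[0,1]^{n}$ because $\bu+\mathbf{d}=\bz^{\ast}\wedge \bo\le\mathbf{1}$ and $\bu'+\mathbf{d}=\bz^{\ast}\oplus \bo\le\mathbf{1}$) yields
\[
F(\bz^{\ast}\wedge \bo)-F(\bz^{\ast}\odot \bo)\ \ge\ \gamma\bigl(F(\bz^{\ast}\oplus \bo)-F(\bz^{\ast}\vee \bo)\bigr).
\]
Multiplying by $\gamma$ and then using $F(\bz^{\ast}\wedge \bo)\ge\gamma F(\bz^{\ast}\wedge \bo)$ (from $\gamma\le 1$ and $F\ge 0$), a step that discards a $(1-\gamma)F(\bz^{\ast}\wedge \bo)\ge 0$ slack, delivers the claim.

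Substituting back into the Corollary bound reproduces the target numerator exactly and leaves three residual error contributions proportional to $\frac{4\varepsilon\gamma^{9/2}r\,F(\bz^{\ast}\odot \bo)}{r^{2}+2\gamma^{3/2}r+1}$, $\frac{r^{2}\,O(\varepsilon)F(\bo)}{r^{2}+2\gamma^{3/2}r+1}$, and $\frac{r^{2}\,O(\delta L D^{2})}{r^{2}+2\gamma^{3/2}r+1}$. Each collapses into the advertised bound uniformly in $r$: the first via $r/(r^{2}+2\gamma^{3/2}r+1)\le 1/(2\gamma^{3/2})$ together with $F(\bz^{\ast}\odot \bo)\le F(\bo)$, which holds because $P$ is down-closed (so $\bz^{\ast}\odot \bo\in P$) and $\bo$ maximizes $F$ over $P$; the remaining two via $r^{2}/(r^{2}+2\gamma^{3/2}r+1)\le 1$. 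Taking $\max_{r\ge 0}$ on both sides then gives the lemma.

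I expect the main obstacle to be the $\gamma$-asymmetry flagged in the introduction. A bare iteration of weakly-DR only produces the balanced bound $\gamma F(\bz^{\ast}\vee \bo)+F(\bz^{\ast}\wedge \bo)\ge F(\bz^{\ast}\odot \bo)+\gamma F(\bz^{\ast}\oplus \bo)$, whereas the heir invariant supplies a $(\gamma^{2},1)$-weighting of the join/meet pair and the target numerator demands a $(\gamma,\gamma^{2})$-weighting of the product/probabilistic-sum pair. Aligning the two forces the multiply-by-$\gamma$ step above followed by discarding a $(1-\gamma)F(\bz^{\ast}\wedge \bo)$ slack; getting this coefficient arithmetic right, rather than any calculus or algorithmic subtlety, is where the proof absorbs the asymmetry of the $\gamma$-weakly DR setting.
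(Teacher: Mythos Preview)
Your proposal is correct and follows essentially the same route as the paper's own proof: apply Corollary~\ref{cor:box-weakDR}, drop $F(\mathbf{0})$, plug in the heir invariant from Observation~\ref{obs:weakDR-heir}, and convert the $(\vee,\wedge)$ pair to the $(\odot,\oplus)$ pair via the swap inequality $\gamma F(\vee)+F(\wedge)\ge F(\odot)+\gamma F(\oplus)$ combined with the slack step $F(\wedge)\ge\gamma F(\wedge)$. The paper states this swap inequality as \eqref{eq:weakDR-swap} without proof, whereas you supply one via the identity $\bz^{\ast}\wedge\bo-\bz^{\ast}\odot\bo=\bz^{\ast}\oplus\bo-\bz^{\ast}\vee\bo$ and telescoped weakly-DR; and you apply the multiply-by-$\gamma$ and the $F(\wedge)\ge\gamma F(\wedge)$ steps in the reverse order from the paper, but these are cosmetic differences, not different ideas.
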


\begin{proof}
Applying Corollary~\ref{cor:box-weakDR} to the box $[0,\bz^{\ast}]$ (with error $\varepsilon$) gives
\begin{align}
F(\bz'{}^{\ast})
&\ge
\max_{r\ge 0}
\frac{\bigl(2\gamma^{3/2}-4\varepsilon\,\gamma^{9/2}\bigr) r F(\bz^{\ast}\odot \bo)+F(\bzero)+r^{2}\,F(\bz^{\ast})}
{\,r^{2}+2\gamma^{3/2}r+1\,}.
\label{eq:box-bound}
\end{align}
Since $F(\bzero)\ge 0$, dropping the nonnegative $F(\bzero)$ can only decrease the right-hand side, therefore we get (a);
\begin{align}
F(\bz'{}^{\ast})
&\overset{\text{(a)}}{\ge}
\max_{r\ge 0}
\frac{\bigl(2\gamma^{3/2}-4\varepsilon\,\gamma^{9/2}\bigr)\,r\,F(\bz^{\ast}\odot \bo)\;+\;r^{2}\,F(\bz^{\ast})}
{\,r^{2}+2\gamma^{3/2}r+1\,}\notag\\
&\overset{\text{(b)}}{\ge}
\max_{r\ge 0}
\frac{2\gamma^{3/2}\,r\,F(\bz^{\ast}\odot \bo)\;+\;r^{2}\,F(\bz^{\ast})}
{\,r^{2}+2\gamma^{3/2}r+1\,}
\;-\;O(\varepsilon)\,F(\bo),
\label{eq:eps-drop}
\end{align}
where (b) uses $F(\bz^{\ast}\odot \bo)\le F(\bo)$ and the bound
\(
\displaystyle \frac{r}{r^{2}+2\gamma^{3/2}r+1}\le 1
\),
so the negative perturbation term
\(
-\,4\varepsilon\,\gamma^{9/2}\,\frac{r}{r^{2}+2\gamma^{3/2}r+1}\,F(\bz^{\ast}\odot\bo)
\)
is at worst $-O(\varepsilon)\,F(\bo)$ after maximizing over $r\ge 0$.

Next, since $\bz^{\ast}$ is the seed of an heir recursive call, Observation~\ref{obs:weakDR-heir} yields
\begin{align}
F(\bz^{\ast})
&\ge
\frac{\gamma^{2}F(\bz^{\ast}\vee \bo)+F(\bz^{\ast}\wedge \bo)}{1+\gamma^{2}}
-O(\varepsilon)F(\bo)-O(\delta L D^{2}).
\label{eq:heir-inv}
\end{align}
Because $F\ge 0$ and $\gamma\le 1$, we also have $F(\bz^{\ast}\wedge \bo)\ge \gamma\,F(\bz^{\ast}\wedge \bo)$; combining this with the weakly-DR “swap” inequality
\begin{equation}\label{eq:weakDR-swap}
\gamma\,F(\bx\vee\by)+F(\bx\wedge\by)\ \ge\ \gamma\,F(\bx\oplus\by)+F(\bx\odot\by)
\quad(\forall\,\bx,\by\in[0,1]^n),
\end{equation}
we get the refined lower bound
\begin{align}
F(\bz^{\ast})
&\ge
\frac{\gamma^{2}F(\bz^{\ast}\oplus \bo)+\gamma\,F(\bz^{\ast}\odot \bo)}{1+\gamma^{2}}
-O(\varepsilon)F(\bo)-O(\delta L D^{2}).
\label{eq:heir-inv-swapped}
\end{align}

Substituting \eqref{eq:heir-inv-swapped} into \eqref{eq:eps-drop}, and noting that
\(\frac{r^{2}}{r^{2}+2\gamma^{3/2}r+1}\le 1\),
preserves the additive error
$-O(\varepsilon)\,F(\bo)-O(\delta L D^{2})$ and yields coefficients
\begin{align*}
    \frac{2\gamma^{3/2}r}{r^{2}+2\gamma^{3/2}r+1}
\;+\;
\frac{\gamma}{1+\gamma^{2}}\cdot\frac{r^{2}}{r^{2}+2\gamma^{3/2}r+1}
&\quad\text{for }F(\bz^{\ast}\odot \bo),\\
\frac{\gamma^{2}}{1+\gamma^{2}}\cdot\frac{r^{2}}{r^{2}+2\gamma^{3/2}r+1}
&\quad\text{for }F(\bz^{\ast}\oplus \bo),
\end{align*}
which is exactly the claimed form.
\end{proof}

We have established two certified lower bounds for any \emph{successful} heir call:
one for the Frank--Wolfe–guided output \(\by^{\star}\) (Theorem~\ref{thm:fw-guided-mcg}(1))
and one for the box-restricted child \(\bz'^{\star}\) (Lemma~\ref{lem:weak-35}).
Since the algorithm returns the better of these two values, \emph{any} convex combination
of the two bounds remains a valid lower bound on the algorithm’s output.
Let \(\alpha\in[0,1]\) be the mixing parameter.
Putting the two bounds into a common form and combining them gives, for any
\(r\ge 0\) and \(t_s\in(0,1)\),
\begin{align}
F(\textnormal{ALG})
\ \ge\ 
&(1-\alpha)\,A_\gamma(t_s)\, F(\bo)+\;\Big[(1-\alpha)\,B_\gamma(t_s) \;+\;\alpha\,D_\gamma(r)\Big]\;F(\bz^{\star}\odot\bo)\notag\\
&+\;\Big[(1-\alpha)\,C_\gamma(t_s) \;+\;\alpha\,E_\gamma(r)\Big] \;F(\bz^{\star}\oplus\bo)-\;O(\varepsilon)\,F(\bo) \;-\;O(\delta\,L D^{2}),
\label{eq:convex-combo-master}
\end{align}
where
\begin{equation}\label{eq:DE-defs}
D_\gamma(r)\ :=\ \frac{2\gamma^{3/2}\,r+\frac{\gamma}{1+\gamma^2}\,r^2}{\,r^2+2\gamma^{3/2}r+1\,},
\qquad
E_\gamma(r)\ :=\ \frac{\frac{\gamma^2}{1+\gamma^2}\,r^2}{\,r^2+2\gamma^{3/2}r+1\,}.
\end{equation}

To extract a \emph{pure} multiplicative factor in front of \(F(\bo)\), we choose parameters so that
the coefficients multiplying \(F(\bz^{\star}\odot\bo)\) and \(F(\bz^{\star}\oplus\bo)\) are nonnegative,
allowing these terms to be dropped by nonnegativity of \(F\).
Define the feasible set
\begin{align}
\mathcal{F}_\gamma
:= \Bigl\{(\alpha,r,t_s)&\in[0,1]\times[0,\infty)\times(0,1)\ :\notag\\
&\hspace{0.25cm}(1-\alpha)B_\gamma(t_s)+\alpha D_\gamma(r) \ge 0, 
(1-\alpha)C_\gamma(t_s)+\alpha E_\gamma(r) \ge 0\Bigr\}.
\label{eq:feas-region}
\end{align}
For any \((\alpha,r,t_s)\in\mathcal{F}_\gamma\), inequality \eqref{eq:convex-combo-master} yields
\begin{equation}\label{eq:drop-terms}
F(\textnormal{ALG})
\ \ge\ 
\Bigl[(1-\alpha)\,A_\gamma(t_s)\;-\;O(\varepsilon)\Bigr]\,F(\bo)\;-\;O(\delta\,L D^{2}).
\end{equation}
This motivates optimizing the leading factor:
\begin{equation}\label{eq:phi-gamma-prob}
\Phi_\gamma\ :=\ \max_{(\alpha,r,t_s)\in\mathcal{F}_\gamma}\ (1-\alpha)\,A_\gamma(t_s).
\end{equation}

Our approximation guarantee $\Phi_\gamma$ is defined as the optimal value of the
maximization problem in~\eqref{eq:phi-gamma-prob}. This is an optimization over only
three scalar parameters $(\alpha,r,t_s)$ with simple linear feasibility constraints
in~\eqref{eq:feas-region}, so for any fixed $\gamma$ the problem is easy to solve
numerically. In particular, even though $\Phi_\gamma$ does not appear to admit a
closed-form expression as a function of $\gamma$, it can be computed to any desired
accuracy in polynomial time (in the inverse of the discretization step) by a direct
grid search.

For each fixed $\gamma$, we solve~\eqref{eq:phi-gamma-prob} by performing an explicit
grid search over $(r,t_s)$ on a bounded domain $r\in[0,r_{\max}]$, $t_s\in(0,1)$, and
then optimizing over $\alpha$ in closed form using the linear constraints
in~\eqref{eq:feas-region}. For every grid point $(r,t_s)$, we determine the interval
of feasible $\alpha\in[0,1]$ for which both inequalities in~\eqref{eq:feas-region}
hold, and then choose the endpoint of this interval that maximizes
$(1-\alpha)A_\gamma(t_s)$. This yields a candidate triple $(\alpha,r,t_s)$ and a
corresponding candidate value of $\Phi_\gamma$, and we keep the best one over all grid
points. The running time is polynomial in the inverse grid step. This is precisely
the procedure implemented in our Python code to generate Fig.~\ref{fig:phi-gamma}
and the parameter table (we used $r_{\max}=10$, since larger values of $r_{\max}$
did not change the outcome).


\emph{Boundary case \(\gamma=1\).}
Several coefficients (e.g., \(A_\gamma,B_\gamma,C_\gamma\)) contain factors of \((1-\gamma)\) in the
denominator; consequently, the expressions inside \eqref{eq:convex-combo-master} may exhibit an apparent \(0/0\)
form as \(\gamma\to 1\).
We interpret all such terms by taking their continuous limits, and
evaluate via L’Hôpital’s rule where needed.
Substituting these limits into \eqref{eq:convex-combo-master} yields the DR (i.e., \(\gamma=1\)) specialization of our
mixture bound, and optimizing \eqref{eq:phi-gamma-prob} at \(\gamma=1\) reproduces the current best DR guarantee
(\(0.401\)) of Buchbinder and Feldman~\cite{buchbinder2024constrained}.

\paragraph{What the optimization achieves.}
The optimized guarantee \(\Phi_\gamma\) (a) exactly matches the current best DR constant at \(\gamma=1\),
and (b) strictly improves on the non-monotone weakly-DR baseline \(\kappa(\gamma)=\gamma e^{-\gamma}\)
for all \(\gamma\in(0,1)\). Intuitively, the mixture balances Frank--Wolfe–guided progress with
box-restricted improvements through \((\alpha,r,t_s)\), certifying the best factor per \(\gamma\).

\begin{theorem}\label{thm:main}
Fix \(\gamma\in(0,1]\) and \(\delta\in(0,1)\).
Let \(F:[0,1]^n\to\mathbb{R}_{\ge 0}\) be a nonnegative, \(L\)-smooth, \(\gamma\)-weakly DR-submodular function,
and let \(P\subseteq[0,1]^n\) be a down-closed, meta-solvable convex body of diameter \(D\).
There exists a polynomial-time algorithm that returns a point \(\bx\in P\) such that
\begin{equation}\label{eq:main-guarantee}
F(\bx)\ \ge\ \Phi_\gamma\cdot \max_{\by\in P} F(\by)\;-\;O\!\left(\delta\,D^{2}L\right),
\end{equation}
where \(\Phi_\gamma\) is the optimal value of \eqref{eq:phi-gamma-prob}.
\end{theorem}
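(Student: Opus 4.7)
The plan is to stitch together the scaffolding already in place. Corollary~\ref{cor:weakDR-successful-heir} guarantees that the recursion tree of Algorithm~\ref{alg:main} contains at least one \emph{successful heir} call; pick any such call and denote its seed by $\bz^{\star}$, its $\gamma$-FWG output by $\by^{\star}$, and its box-maximization output by $\bz'^{\star}$. Since Algorithm~\ref{alg:main} returns the argmax of $F$ over all candidates it ever produces, I immediately have $F(\textnormal{ALG}) \ge \alpha\,F(\bz'^{\star}) + (1-\alpha)\,F(\by^{\star})$ for every $\alpha\in[0,1]$. On the right-hand side, Theorem~\ref{thm:fw-guided-mcg}(1) supplies a lower bound on $F(\by^{\star})$ with coefficients $(A_\gamma(t_s), B_\gamma(t_s), C_\gamma(t_s))$ against $(F(\bo), F(\bz^{\star}\odot\bo), F(\bz^{\star}\oplus\bo))$, while Lemma~\ref{lem:weak-35} supplies a lower bound on $F(\bz'^{\star})$ with coefficients $(0, D_\gamma(r), E_\gamma(r))$ against the same three quantities, modulo an additive $O(\varepsilon)F(\bo)+O(\delta L D^{2})$ error. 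Plugging both into the convex combination reproduces the master inequality~\eqref{eq:convex-combo-master}.

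The next step is to turn that inequality into a clean multiplicative guarantee against $F(\bo)$. To do so, I restrict $(\alpha, r, t_s)$ to the feasibility set $\mathcal{F}_\gamma$ defined in~\eqref{eq:feas-region}, where both ``contamination'' coefficients $(1-\alpha)B_\gamma(t_s)+\alpha D_\gamma(r)$ and $(1-\alpha)C_\gamma(t_s)+\alpha E_\gamma(r)$ are nonnegative. Since $F\ge 0$, the two terms $F(\bz^{\star}\odot\bo)$ and $F(\bz^{\star}\oplus\bo)$ can then be dropped, leaving $F(\textnormal{ALG}) \ge [(1-\alpha)A_\gamma(t_s) - O(\varepsilon)]\,F(\bo) - O(\delta L D^{2})$. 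Maximizing the leading coefficient over $(\alpha, r, t_s)\in\mathcal{F}_\gamma$ is, by definition~\eqref{eq:phi-gamma-prob}, the quantity $\Phi_\gamma$. Using $F(\bo) = \max_{\by\in P} F(\by) = \textnormal{OPT}$ then delivers the announced bound~\eqref{eq:main-guarantee}.

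For the running time claim, fix $\varepsilon$ as a small constant depending only on $\gamma$. The recursion depth is $L = 1 + \lceil(1+\gamma)/(\varepsilon\gamma)\rceil = O(1)$, each level branches $m = O(\delta^{-1} + \varepsilon^{-1}) = O(\delta^{-1})$ ways, and each recursive call invokes only polynomial-time subroutines (Theorem~\ref{thm:weak-dr-smooth}, Corollary~\ref{cor:box-weakDR}, and Algorithm~\ref{alg:fw-guided-mcg}); thus the total running time of Algorithm~\ref{alg:main} is $\mathrm{Poly}(n,\delta^{-1})$, as required.

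I expect the \emph{hard part} of this particular theorem to be bookkeeping rather than substantive: once Theorem~\ref{thm:fw-guided-mcg}, Lemma~\ref{lem:weak-35}, and Corollary~\ref{cor:weakDR-successful-heir} are taken as black boxes, the statement reduces to ``combine two certificates linearly and optimize three scalars.'' The only place care is required is the boundary case $\gamma=1$: several of $A_\gamma, B_\gamma, C_\gamma$ carry a factor $(1-\gamma)$ in the denominator and must be interpreted as continuous limits via L'Hôpital's rule, and one must separately check that $\mathcal{F}_\gamma$ remains nonempty throughout $\gamma\in(0,1]$, which is what underlies the smooth interpolation visible in Figure~\ref{fig:phi-gamma} and the match to the $0.401$ DR constant of~\cite{buchbinder2024constrained} at $\gamma=1$.
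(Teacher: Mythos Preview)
Your proposal is correct and follows essentially the same route as the paper's proof: invoke Corollary~\ref{cor:weakDR-successful-heir} to locate a successful heir, combine the certificates from Theorem~\ref{thm:fw-guided-mcg}(1) and Lemma~\ref{lem:weak-35} into the convex mixture~\eqref{eq:convex-combo-master}, restrict to $\mathcal{F}_\gamma$, drop the nonnegative cross terms, and optimize. The only point where you diverge is the final handling of the $-O(\varepsilon)F(\bo)$ residual: you fix $\varepsilon$ as a constant (which preserves the polynomial running time but leaves the guarantee at $\Phi_\gamma - O(\varepsilon)$ rather than exactly $\Phi_\gamma$), whereas the paper sets $\varepsilon=\Theta(\delta)$ to absorb that term into the $O(\delta L D^{2})$ error.
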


\begin{proof}
Let \(\bo\in\arg\max_{\by\in P}F(\by)\).
By Corollary~\ref{cor:weakDR-successful-heir}, Algorithm~\ref{alg:main} has at least one
\emph{successful} heir call.
For such a call with seed \(\bz^\star\), Theorem~\ref{thm:fw-guided-mcg}~(1)
and Lemma~\ref{lem:weak-35} provide two certified lower bounds on the returned candidates
\(\by^\star\) and \(\bz'{}^\star\).
Forming any convex combination of these two bounds yields \eqref{eq:convex-combo-master},
with \(D_\gamma, E_\gamma\) given in \eqref{eq:DE-defs}.

Choose \((\alpha,r,t_s)\in\mathcal{F}_\gamma\) (cf.\ \eqref{eq:feas-region})
so that the coefficients of \(F(\bz^\star\odot \bo)\) and \(F(\bz^\star\oplus \bo)\) in
\eqref{eq:convex-combo-master} are nonnegative.
Dropping these nonnegative contributions gives \eqref{eq:drop-terms}:
\begin{equation}
    F(\textnormal{ALG})\ \ge\ \bigl[(1-\alpha)A_\gamma(t_s)-O(\varepsilon)\bigr]\,F(\bo)\;-\;O(\delta L D^2).
\end{equation}
Maximizing over feasible \((\alpha,r,t_s)\) yields \(\Phi_\gamma\) from \eqref{eq:phi-gamma-prob},
so, for an optimal choice, 
\begin{equation}
    F(\textnormal{ALG})\ \ge\ \bigl[\Phi_\gamma-O(\varepsilon)\bigr]\,F(\bo)\;-\;O(\delta L D^2).
\end{equation}
Finally, set \(\varepsilon=\Theta(\delta)\) to absorb the \(-O(\varepsilon)F(\bo)\) term into the
\(-O(\delta L D^2)\) smoothing error. 
\end{proof}

\section{Conclusion}
This paper develops a unified, projection-free framework for maximizing continuous, non-monotone $\gamma$-weakly DR-submodular functions over down-closed convex bodies. Our method couples a $\gamma$-aware Frank--Wolfe–guided measured continuous greedy with a $\gamma$-aware double--greedy, and optimizes a convex mixture of their certificates through three tunable parameters $(\alpha,r,t_s)$. Across the entire weakly-DR spectrum, the resulting guarantee $\Phi_\gamma$ \emph{strictly} improves the canonical non-monotone baseline $\kappa(\gamma)=\gamma e^{-\gamma}$, and at the DR boundary ($\gamma=1$) it \emph{matches} the current best constant $0.401$. We show improvements over prior work in Figure~\ref{fig:phi-gamma} and Table~\ref{tab:phi-params}.

\bibliographystyle{plain} 
\bibliography{ref}

\appendix

\newtheorem{exe}{Lemma}
\renewcommand{\theexe}{2.\arabic{exe}}  
\setcounter{exe}{0}

\section{Proofs of Section~\ref{sec:prelim}  Lemmas}\label{Proof_A}

In this appendix, we provide detailed proofs of the fundamental properties of $\gamma$-weakly DR-submodular functions and extend classical results for DR-submodular functions \cite{buchbinder2024constrained} to the generalized $\gamma$-weakly setting. 
For clarity, we restate lemmas before presenting their proofs.

\begin{exe}
Let $F:[0,1]^n\to\mathbb{R}_{\ge 0}$ be differentiable and $\gamma$-weakly DR-submodular.
Then for all $\bx,\by\in[0,1]^n$ and $\lambda\in[0,1]$ the following hold:
\begin{enumerate}
\item[\textnormal{(1)}] If $\bx\le \by$, then
\begin{equation}\label{eq:exe-convex-form-1}
F\bigl(\lambda \bx+(1-\lambda)\by\bigr)
\ \ge \
\frac{\lambda\,F(\bx)+\gamma^{2}(1-\lambda)\,F(\by)}{\lambda+\gamma^{2}(1-\lambda)}.
\end{equation}
Equivalently,
\begin{equation}\label{eq:exe-convex-form-2}
F\bigl((1-\lambda)\bx+\lambda \by\bigr)
\ \ge\
\frac{(1-\lambda)\,F(\bx)+\gamma^{2}\lambda\,F(\by)}{(1-\lambda)+\gamma^{2}\lambda}.
\end{equation}
\item[\textnormal{(2)}] If $\by\ge \bzero$ and $\bx+\by\in[0,1]^n$, then
\begin{equation}\label{eq:exe-increment-form}
F(\bx+\lambda \by)-F(\bx)
\ \ge\
\frac{\gamma^{2}\lambda}{\,1-\lambda+\gamma^{2}\lambda\,}\,\bigl(F(\bx+\by)-F(\bx)\bigr).
\end{equation}
\end{enumerate}
\end{exe}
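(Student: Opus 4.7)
The plan is to avoid a direct integration argument (which only delivers a factor of $\gamma$) and instead sandwich the directional derivative $\langle \nabla F(\bz),\by-\bx\rangle$ at the \emph{intermediate} convex combination $\bz:=\lambda\bx+(1-\lambda)\by$ between two quantities whose ratio is exactly $\gamma^{2}$, using Lemma~\ref{lem:grad-ineq}\,(1) and~(2). Part~(2) is then an immediate corollary of part~(1) via the change of variables $(\bx,\by)\mapsto(\bx,\bx+\by)$.

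For part~(1), I would fix $\bx\le \by$ and $\lambda\in[0,1]$, set $\bz:=\lambda\bx+(1-\lambda)\by$, and observe that the two chord segments meeting at $\bz$ decompose as $\by-\bz=\lambda(\by-\bx)\ge\bzero$ and $\bz-\bx=(1-\lambda)(\by-\bx)\ge\bzero$, both nonnegative multiples of the common direction $\by-\bx$. I would then invoke Lemma~\ref{lem:grad-ineq}\,(1) at $\bz$ with displacement $\by-\bz$ (admissible because $\bz+(\by-\bz)=\by\in[0,1]^n$) and Lemma~\ref{lem:grad-ineq}\,(2) at $\bz$ with displacement $\bz-\bx$ (admissible because $\bz-(\bz-\bx)=\bx\ge\bzero$) to obtain, respectively,
\[
\lambda\,\langle \nabla F(\bz),\,\by-\bx\rangle\ \ge\ \gamma\bigl(F(\by)-F(\bz)\bigr),
\qquad
(1-\lambda)\,\langle \nabla F(\bz),\,\by-\bx\rangle\ \le\ \tfrac{1}{\gamma}\bigl(F(\bz)-F(\bx)\bigr).
\]
Eliminating the common directional derivative (its lower bound from the first must not exceed its upper bound from the second, irrespective of sign) gives $\gamma^{2}(1-\lambda)(F(\by)-F(\bz))\le \lambda(F(\bz)-F(\bx))$, which rearranges directly to \eqref{eq:lemma1-convex-combo}; the equivalent form \eqref{eq:lemma1-convex-combo-alt} is just the relabeling $\lambda\leftrightarrow 1-\lambda$, and the degenerate endpoints $\lambda\in\{0,1\}$ hold by inspection.

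For part~(2), the hypotheses $\by\ge\bzero$ and $\bx+\by\in[0,1]^n$ let me apply part~(1) to the pair $(\bx',\by'):=(\bx,\bx+\by)$, which satisfies $\bx'\le \by'$ in $[0,1]^n$. With mixing weight $\mu:=1-\lambda$, the convex combination $\mu \bx'+(1-\mu)\by'$ simplifies to $\bx+\lambda\by$, so part~(1) specializes to
\[
F(\bx+\lambda\by)\ \ge\ \frac{(1-\lambda)F(\bx)+\gamma^{2}\lambda\,F(\bx+\by)}{(1-\lambda)+\gamma^{2}\lambda},
\]
and subtracting $F(\bx)$ from both sides collapses the right-hand side to the multiple of $F(\bx+\by)-F(\bx)$ claimed in \eqref{eq:lemma1-increment} after a one-line algebraic simplification.

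The main delicate point is the appearance of $\gamma^{2}$ rather than $\gamma$: a one-sided argument, e.g.\ directly integrating the $\gamma$-monotone directional derivative $\phi'(t):=\langle \by-\bx,\nabla F(\bx+t(\by-\bx))\rangle$ along the whole segment, yields only the looser factor $\gamma$. The $\gamma^{2}$ emerges precisely because the \emph{upward} Lemma~\ref{lem:grad-ineq}\,(1) contributes a factor $\gamma$ and the \emph{downward} Lemma~\ref{lem:grad-ineq}\,(2) contributes $1/\gamma$, and these two one-sided certificates are coupled at the common midpoint $\bz$; this is the first instance of the asymmetric one-sided $\gamma$-dependence flagged in the introduction and propagated through the later analyses.
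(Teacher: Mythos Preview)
Your proof is correct and follows essentially the same route as the paper: both arguments pivot at the intermediate point $\bz=\lambda\bx+(1-\lambda)\by$, extract one factor of $\gamma$ from each side (upward toward $\by$, downward toward $\bx$), and eliminate the common directional derivative to obtain the $\gamma^{2}$ weighting. The only difference is packaging: the paper inlines the integration (defining $\phi(t)=F(\bx+t\bv)$ and integrating $\phi'$ over $[0,\lambda]$ and $[\lambda,1]$), whereas you invoke Lemma~\ref{lem:grad-ineq} as a black box for those same two bounds; since Lemma~\ref{lem:grad-ineq} is proved independently of Lemma~\ref{lemma:simpe2}, there is no circularity, and your derivation of part~(2) from part~(1) is the reverse of the paper's order but equally valid.
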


\begin{proof}
Fix $\bx\in[0,1]^n$ and a direction $\bv\ge 0$ such that $\bx+t \bv\in[0,1]^n$ for all
$t\in[0,1]$. Define the univariate function
\begin{equation}\label{eq:phi-def}
    \phi(t)\ :=\ F(\bx+t \bv), \qquad t\in[0,1].
\end{equation}
By the chain rule, $\phi$ is differentiable and
\begin{equation}\label{eq:phi-deriv}
    \phi'(t)
    \ =\
    \bigl\langle \nabla F(\bx+t \bv),\,\bv\bigr\rangle.
\end{equation}

Now fix $0\le s\le t\le 1$. Since $\bv\ge 0$, we have $\bx+s \bv\le \bx+t \bv$, and by
$\gamma$-weak DR-submodularity this implies
\begin{equation}\label{eq:grad-gamma}
    \nabla F(\bx+s \bv)
    \ \ge\
    \gamma\,\nabla F(\bx+t \bv).
\end{equation}
Taking inner products of both sides of \eqref{eq:grad-gamma} with $\bv\ge 0$ and using
\eqref{eq:phi-deriv} gives
\begin{equation}\label{eq:gamma-monotone}
    \phi'(s)
    \ =\
    \bigl\langle \nabla F(\bx+s\bv),\bv\bigr\rangle
    \ \ge\
    \gamma\,\bigl\langle \nabla F(\bx+t\bv),\bv\bigr\rangle
    \ =\
    \gamma\,\phi'(t),
    \qquad 0\le s\le t\le 1.
\end{equation}

Next fix $\lambda\in(0,1)$. For $t\in[\lambda,1]$, applying \eqref{eq:gamma-monotone}
with $s=\lambda$ gives $\phi'(\lambda)\ge \gamma\,\phi'(t)$, so
\[
    \phi'(t)\ \le\ \frac{1}{\gamma}\,\phi'(\lambda).
\]
Integrating this upper bound over $t\in[\lambda,1]$ and using the fundamental theorem of
calculus yields
\begin{equation}\label{eq:segment-upper}
    \phi(1)-\phi(\lambda)
    \ =\
    \int_\lambda^1 \phi'(t)\,dt
    \ \le\
    \int_\lambda^1 \frac{1}{\gamma}\,\phi'(\lambda)\,dt
    \ =\
    \frac{1-\lambda}{\gamma}\,\phi'(\lambda).
\end{equation}

Similarly, for $s\in[0,\lambda]$, applying \eqref{eq:gamma-monotone} with $t=\lambda$
gives $\phi'(s)\ge \gamma\,\phi'(\lambda)$. Integrating this lower bound over
$s\in[0,\lambda]$ we obtain
\begin{equation}\label{eq:segment-lower}
    \phi(\lambda)-\phi(0)
    \ =\
    \int_0^\lambda \phi'(s)\,ds
    \ \ge\
    \int_0^\lambda \gamma\,\phi'(\lambda)\,ds
    \ =\
    \gamma\,\lambda\,\phi'(\lambda).
\end{equation}

From \eqref{eq:segment-upper} we get
\[
    \phi'(\lambda)
    \ \ge\
    \frac{\gamma}{1-\lambda}\,\bigl(\phi(1)-\phi(\lambda)\bigr),
\]
and substituting this lower bound into \eqref{eq:segment-lower} gives
\begin{equation}\label{eq:ratio-phi}
    \phi(\lambda)-\phi(0)
    \ \ge\
    \gamma\,\lambda\cdot
    \frac{\gamma}{1-\lambda}\,\bigl(\phi(1)-\phi(\lambda)\bigr)
    \ =\
    \frac{\gamma^{2}\lambda}{1-\lambda}\,\bigl(\phi(1)-\phi(\lambda)\bigr).
\end{equation}
Multiplying both sides of \eqref{eq:ratio-phi} by $1-\lambda$ and expanding, we obtain
\[
    (1-\lambda)\,\phi(\lambda)- (1-\lambda)\,\phi(0)
    \ \ge\
    \gamma^{2}\lambda\,\phi(1)-\gamma^{2}\lambda\,\phi(\lambda).
\]
Rearranging the terms involving $\phi(\lambda)$ to the left and the remaining terms to
the right yields
\[
    \bigl(1-\lambda+\gamma^{2}\lambda\bigr)\,\phi(\lambda)
    \ \ge\
    (1-\lambda)\,\phi(0)+\gamma^{2}\lambda\,\phi(1).
\]
Dividing by $1-\lambda+\gamma^{2}\lambda>0$ we get
\begin{equation}\label{eq:phi-convex-combo}
    \phi(\lambda)
    \ \ge\
    \frac{(1-\lambda)\,\phi(0)+\gamma^{2}\lambda\,\phi(1)}
         {(1-\lambda)+\gamma^{2}\lambda}.
\end{equation}

To prove part~(2), take $\bv=\by\ge 0$ and $\phi(t)=F(\bx+t\by)$ as in
\eqref{eq:phi-def}. Since $\bx+\by\in[0,1]^n$, the entire segment
$\{\bx+t\by : t\in[0,1]\}$ lies in $[0,1]^n$, so the above argument applies. In this
case,
\[
    \phi(0)=F(\bx),\qquad
    \phi(1)=F(\bx+\by),\qquad
    \phi(\lambda)=F(\bx+\lambda\by).
\]
Substituting these expressions into \eqref{eq:phi-convex-combo} gives
\begin{equation}\label{eq:part2-main}
    F(\bx+\lambda \by)
    \ \ge\
    \frac{(1-\lambda)F(\bx)+\gamma^{2}\lambda F(\bx+\by)}
         {(1-\lambda)+\gamma^{2}\lambda}.
\end{equation}
Subtracting $F(\bx)$ from both sides of \eqref{eq:part2-main}, we obtain
\[
    F(\bx+\lambda\by)-F(\bx)
    \ \ge\
    \frac{(1-\lambda)F(\bx)+\gamma^{2}\lambda F(\bx+\by)}
         {(1-\lambda)+\gamma^{2}\lambda}
    \;-\; F(\bx).
\]
Writing $F(\bx)$ as $\frac{(1-\lambda)+\gamma^{2}\lambda}{(1-\lambda)+\gamma^{2}\lambda}F(\bx)$
and simplifying the numerator, we get
\[
    F(\bx+\lambda\by)-F(\bx)
    \ \ge\
    \frac{\gamma^{2}\lambda}{1-\lambda+\gamma^{2}\lambda}\,\bigl(F(\bx+\by)-F(\bx)\bigr),
\]
which is exactly \eqref{eq:exe-increment-form}. This proves part~(2).

For part~(1), assume $\bx\le \by$ and define $\bv := \by-\bx\ge 0$. For $t\in[0,1]$
set
\begin{equation}\label{eq:phi-convex-path}
    \phi(t)
    \ :=\
    F\bigl(\bx+t(\by-\bx)\bigr)
    \ =\
    F\bigl((1-t)\bx+t\by\bigr).
\end{equation}
Again, the segment between $\bx$ and $\by$ lies in $[0,1]^n$, so
\eqref{eq:phi-convex-combo} applies. Here
\[
    \phi(0)=F(\bx),\qquad
    \phi(1)=F(\by),\qquad
    \phi(\lambda)=F\bigl((1-\lambda)\bx+\lambda\by\bigr).
\]
Substituting into \eqref{eq:phi-convex-combo}, we obtain
\begin{equation}\label{eq:part1-main}
    F\bigl((1-\lambda)\bx+\lambda\by\bigr)
    \ \ge\
    \frac{(1-\lambda)\,F(\bx)+\gamma^{2}\lambda\,F(\by)}
         {(1-\lambda)+\gamma^{2}\lambda},
\end{equation}
which is exactly \eqref{eq:exe-convex-form-2}. Finally, replacing $\lambda$ by $1-\lambda$
in \eqref{eq:part1-main} gives
\[
    F\bigl(\lambda\bx+(1-\lambda)\by\bigr)
    \ \ge\
    \frac{\lambda\,F(\bx)+\gamma^{2}(1-\lambda)\,F(\by)}
         {\lambda+\gamma^{2}(1-\lambda)},
\]
which is \eqref{eq:exe-convex-form-1}. This proves part~(1).
\end{proof}

\begin{exe}
Let $F : [0,1]^n \to \mathbb{R}_{\ge 0}$ be a differentiable $\gamma$-weakly DR-submodular function. 
Then, for every $\bx,\by \in [0,1]^n$ with $\by \ge \bzero$, the following inequalities hold:
\begin{enumerate}
    \item If $\bx+\by \le \bone$, then
    \begin{equation}\label{eq:grad-lower-main}
        \big\langle \nabla F(\bx),\,\by \big\rangle
        \;\ge\;
        \gamma\big(F(\bx+\by) - F(\bx)\big).
    \end{equation}
    \item If $\bx-\by \ge \bzero$, then
    \begin{equation}\label{eq:grad-upper-main}
        \big\langle \nabla F(\bx),\,\by \big\rangle
        \;\le\;
        \frac{1}{\gamma}\big(F(\bx) - F(\bx-\by)\big).
    \end{equation}
\end{enumerate}
\end{exe}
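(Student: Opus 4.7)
The plan is to reduce both inequalities to a one-dimensional integration along the segment between the two arguments of $F$, and then invoke the differential form of $\gamma$-weak DR-submodularity, namely $\nabla F(\bu) \ge \gamma \nabla F(\bv)$ whenever $\bu \le \bv$, recorded in the preliminaries. The only nonroutine issue is keeping track of which endpoint plays the role of the ``lower'' point, since this determines whether a factor of $\gamma$ or $1/\gamma$ appears on the right-hand side.

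For part~(1), I would define $\phi(t) := F(\bx + t\by)$ on $t\in[0,1]$, which is well defined because $\bx + \by \le \bone$ and $\by \ge \bzero$ ensure the whole segment lies in $[0,1]^n$. By the chain rule and the fundamental theorem of calculus,
\begin{equation*}
F(\bx+\by) - F(\bx) \;=\; \int_0^1 \langle \nabla F(\bx + t\by),\,\by \rangle\,dt.
\end{equation*}
Since $\bx \le \bx + t\by$ for every $t\in[0,1]$, $\gamma$-weak DR yields $\nabla F(\bx) \ge \gamma\,\nabla F(\bx+t\by)$ coordinatewise. Taking inner products with $\by \ge \bzero$ preserves the inequality, so $\langle \nabla F(\bx),\by\rangle \ge \gamma\,\langle \nabla F(\bx+t\by),\by\rangle$ for every $t\in[0,1]$. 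Integrating over $t\in[0,1]$ (the left side is constant in $t$) gives $\langle \nabla F(\bx),\by\rangle \ge \gamma\bigl(F(\bx+\by)-F(\bx)\bigr)$, as claimed.

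For part~(2), I would instead set $\psi(t) := F(\bx - (1-t)\by)$ on $t\in[0,1]$, so $\psi(0) = F(\bx-\by)$ and $\psi(1) = F(\bx)$; the hypothesis $\bx - \by \ge \bzero$ together with $\by \ge \bzero$ keeps the segment in $[0,1]^n$. The fundamental theorem of calculus now gives
\begin{equation*}
F(\bx) - F(\bx-\by) \;=\; \int_0^1 \langle \nabla F(\bx - (1-t)\by),\,\by\rangle\,dt.
\end{equation*}
For $t\in[0,1]$ we have $\bx - (1-t)\by \le \bx$, so $\gamma$-weak DR now gives $\nabla F(\bx - (1-t)\by) \ge \gamma\,\nabla F(\bx)$. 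Pairing with $\by \ge \bzero$ and integrating yields $F(\bx) - F(\bx-\by) \ge \gamma\,\langle \nabla F(\bx),\by\rangle$, which is equivalent to the desired bound after dividing by $\gamma>0$.

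The argument is essentially mechanical once the endpoints are set correctly; the main conceptual point, and the only place requiring care, is recognizing that in part~(1) the ``lower'' point is $\bx$ (so $\gamma$ multiplies the RHS), while in part~(2) the ``lower'' point is $\bx-(1-t)\by$ (so $1/\gamma$ multiplies the RHS). No additional lemma beyond the differential characterization of $\gamma$-weak DR-submodularity stated in the preliminaries is required.
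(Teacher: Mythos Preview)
Your proof is correct and follows essentially the same approach as the paper: define the one-dimensional restriction along the segment, apply the differential characterization $\nabla F(\bu)\ge\gamma\nabla F(\bv)$ for $\bu\le\bv$, pair with $\by\ge\bzero$, and integrate. The only cosmetic difference is that in part~(2) the paper parameterizes by $h(t)=F(\bx-t\by)$ rather than your $\psi(t)=F(\bx-(1-t)\by)$, which is just the substitution $s=1-t$.
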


\begin{proof}
We first prove part~(1). Assume $\bx+\by\le\bone$ and define
\begin{equation}\label{eq:g-def}
    g(t) \;:=\; F(\bx+t \by),
    \qquad t\in[0,1].
\end{equation}
The condition $\bx+\by\le\bone$ and $\by\ge\bzero$ implies $\bx+t\by\in[0,1]^n$ for all
$t\in[0,1]$. By the chain rule,
\begin{equation}\label{eq:g-deriv}
    g'(t)
    \;=\;
    \big\langle \nabla F(\bx+t \by),\,\by\big\rangle.
\end{equation}

For each $t\in[0,1]$ we have $\bx\le \bx+t\by$, so by $\gamma$-weak DR-submodularity,
\begin{equation}\label{eq:grad-pointwise-1}
    \nabla F(\bx)
    \;\ge\;
    \gamma\,\nabla F(\bx+t \by),
    \qquad 0\le t\le 1.
\end{equation}
Taking inner products of both sides of \eqref{eq:grad-pointwise-1} with $\by\ge\bzero$ and
using \eqref{eq:g-deriv} gives
\begin{equation}\label{eq:g-prime-bound}
    \big\langle \nabla F(\bx),\,\by\big\rangle
    \;\ge\;
    \gamma\,\big\langle \nabla F(\bx+t\by),\,\by\big\rangle
    \;=\;
    \gamma\,g'(t),
    \qquad 0\le t\le 1.
\end{equation}
Equivalently,
\begin{equation}\label{eq:g-prime-upper}
    g'(t)
    \;\le\;
    \frac{1}{\gamma}\,\big\langle \nabla F(\bx),\,\by\big\rangle,
    \qquad 0\le t\le 1.
\end{equation}

Integrating the bound \eqref{eq:g-prime-upper} over $t\in[0,1]$ and using the fundamental
theorem of calculus yields
\begin{equation}\label{eq:g-integral}
    F(\bx+\by)-F(\bx)
    \;=\;
    \int_0^1 g'(t)\,dt
    \;\le\;
    \int_0^1 \frac{1}{\gamma}\,\big\langle \nabla F(\bx),\,\by\big\rangle\,dt
    \;=\;
    \frac{1}{\gamma}\,\big\langle \nabla F(\bx),\,\by\big\rangle.
\end{equation}
Rearranging \eqref{eq:g-integral} gives
\begin{equation}\label{eq:grad-lower-main-again}
    \big\langle \nabla F(\bx),\,\by\big\rangle
    \;\ge\;
    \gamma\big(F(\bx+\by)-F(\bx)\big),
\end{equation}
which is exactly \eqref{eq:grad-lower-main}.

We now prove part~(2). Assume $\bx-\by\ge\bzero$ and define
\begin{equation}\label{eq:h-def}
    h(t) \;:=\; F(\bx-t \by),
    \qquad t\in[0,1].
\end{equation}
The condition $\bx-\by\ge\bzero$ and $\by\ge\bzero$ implies $\bx-t\by\in[0,1]^n$ for all
$t\in[0,1]$. Again by the chain rule,
\begin{equation}\label{eq:h-deriv}
    h'(t)
    \;=\;
    \big\langle \nabla F(\bx-t \by),\, -\by\big\rangle
    \;=\;
    -\,\big\langle \nabla F(\bx-t \by),\,\by\big\rangle.
\end{equation}

For each $t\in[0,1]$ we have $\bx-t\by\le \bx$, so $\gamma$-weak DR-submodularity gives
\begin{equation}\label{eq:grad-pointwise-2}
    \nabla F(\bx-t \by)
    \;\ge\;
    \gamma\,\nabla F(\bx),
    \qquad 0\le t\le 1.
\end{equation}
Taking inner products with $\by\ge\bzero$ and using \eqref{eq:h-deriv} we obtain
\begin{equation}\label{eq:inner-ineq-2}
    \big\langle \nabla F(\bx-t \by),\,\by\big\rangle
    \;\ge\;
    \gamma\,\big\langle \nabla F(\bx),\,\by\big\rangle,
    \qquad 0\le t\le 1,
\end{equation}
and hence
\begin{equation}\label{eq:hprime-bound}
    h'(t)
    \;=\;
    -\,\big\langle \nabla F(\bx-t \by),\,\by\big\rangle
    \;\le\;
    -\,\gamma\,\big\langle \nabla F(\bx),\,\by\big\rangle,
    \qquad 0\le t\le 1.
\end{equation}

Integrating \eqref{eq:hprime-bound} over $t\in[0,1]$ and applying the fundamental theorem
of calculus gives
\begin{equation}\label{eq:h-integral}
    F(\bx-\by)-F(\bx)
    \;=\;
    \int_0^1 h'(t)\,dt
    \;\le\;
    \int_0^1 -\,\gamma\,\big\langle \nabla F(\bx),\,\by\big\rangle\,dt
    \;=\;
    -\,\gamma\,\big\langle \nabla F(\bx),\,\by\big\rangle.
\end{equation}
Multiplying \eqref{eq:h-integral} by $-1$ yields
\begin{equation}\label{eq:grad-upper-ineq}
    F(\bx)-F(\bx-\by)
    \;\ge\;
    \gamma\,\big\langle \nabla F(\bx),\,\by\big\rangle.
\end{equation}
Rearranging \eqref{eq:grad-upper-ineq}, we obtain
\begin{equation}\label{eq:grad-upper-main-again}
    \big\langle \nabla F(\bx),\,\by\big\rangle
    \;\le\;
    \frac{1}{\gamma}\big(F(\bx)-F(\bx-\by)\big),
\end{equation}
which is exactly \eqref{eq:grad-upper-main}. This completes the proof.
\end{proof}

\begin{exe}
Let $F:[0,1]^n\to\mathbb{R}_{\ge0}$ be nonnegative and $\gamma$–weakly DR-submodular for some
$\gamma\in(0,1]$. For any fixed $\by\in[0,1]^n$, define
\begin{equation}\label{eq:Gplus-def}
G_{\oplus}(\bx)\ :=\ F(\bx\oplus \by)
\end{equation}
and
\begin{equation}\label{eq:Gdot-def}
G_{\odot}(\bx)\ :=\ F(\bx\odot \by).
\end{equation}
Then both $G_{\oplus}$ and $G_{\odot}$ are nonnegative and $\gamma$–weakly DR-submodular, that is,
for all $\bx^{(1)},\bx^{(2)}\in[0,1]^n$ with $\bx^{(1)}\le \bx^{(2)}$, any coordinate $u\in[n]$,
and any step $p\in[0,1-\bx^{(2)}_u]$ such that the updates stay in $[0,1]^n$, we have
\begin{equation}\label{eq:Gplus-DR}
G_{\oplus}\bigl(\bx^{(1)}+p\,\be_u\bigr)-G_{\oplus}(\bx^{(1)})
\;\ge\;
\gamma\Bigl(G_{\oplus}\bigl(\bx^{(2)}+p\,\be_u\bigr)-G_{\oplus}(\bx^{(2)})\Bigr)
\end{equation}
and
\begin{equation}\label{eq:Gdot-DR}
G_{\odot}\bigl(\bx^{(1)}+p\,\be_u\bigr)-G_{\odot}(\bx^{(1)})
\;\ge\;
\gamma\Bigl(G_{\odot}\bigl(\bx^{(2)}+p\,\be_u\bigr)-G_{\odot}(\bx^{(2)})\Bigr).
\end{equation}
\end{exe}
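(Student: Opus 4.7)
The proof proceeds by reducing each weak--DR inequality for the auxiliary functions $G_{\oplus}$ and $G_{\odot}$ to an appropriate instance of the weak--DR inequality for the original function $F$. The key observation is that translating the input of $G_{\odot}$ (respectively $G_{\oplus}$) by $p\,\be_u$ corresponds to translating the input of $F$ by a \emph{rescaled} amount $p\,y_u\,\be_u$ (respectively $p(1-y_u)\,\be_u$) along the same coordinate, while the componentwise order of the two starting points is preserved under both $\bx\mapsto \bx\odot\by$ and $\bx\mapsto \bx\oplus\by$.

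First I would dispose of nonnegativity: for every $\bx\in[0,1]^n$, both $\bx\odot\by$ and $\bx\oplus\by$ lie in $[0,1]^n$, so $G_{\odot}(\bx),G_{\oplus}(\bx)\ge 0$ since $F$ is nonnegative on its domain. For the $G_{\odot}$ inequality, set $\tilde{\bx}^{(j)}:=\bx^{(j)}\odot\by$ for $j=1,2$. Because $\by\ge\bzero$, the order $\bx^{(1)}\le\bx^{(2)}$ transfers to $\tilde{\bx}^{(1)}\le\tilde{\bx}^{(2)}$. A direct computation gives
\[
G_{\odot}\!\bigl(\bx^{(j)}+p\,\be_u\bigr)-G_{\odot}\!\bigl(\bx^{(j)}\bigr)
\;=\;
F\!\bigl(\tilde{\bx}^{(j)}+(p\,y_u)\,\be_u\bigr)-F\!\bigl(\tilde{\bx}^{(j)}\bigr),
\]
and feasibility $\tilde{\bx}^{(j)}+(p\,y_u)\be_u\in[0,1]^n$ follows from $x_u^{(j)}+p\le 1$ and $y_u\in[0,1]$. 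Invoking the $\gamma$-weak DR property of $F$ at $\tilde{\bx}^{(1)}\le\tilde{\bx}^{(2)}$ with coordinate $u$ and step $c=p\,y_u$ yields \eqref{eq:Gdot-DR}; the edge case $p\,y_u=0$ is trivial because both sides vanish.

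For the $G_{\oplus}$ inequality, set $\tilde{\bx}^{(j)}:=\bx^{(j)}\oplus\by$. Since $(\bx\oplus\by)_i=1-(1-x_i)(1-y_i)$ is nondecreasing in each $x_i$, one again has $\tilde{\bx}^{(1)}\le\tilde{\bx}^{(2)}$. The crucial identity is
\[
G_{\oplus}\!\bigl(\bx^{(j)}+p\,\be_u\bigr)-G_{\oplus}\!\bigl(\bx^{(j)}\bigr)
\;=\;
F\!\bigl(\tilde{\bx}^{(j)}+p(1-y_u)\,\be_u\bigr)-F\!\bigl(\tilde{\bx}^{(j)}\bigr),
\]
obtained from the fact that bumping $x_u$ by $p$ shifts the $u$-th coordinate of $\bx\oplus\by$ by exactly $p(1-y_u)$. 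A short check verifies feasibility: $\tilde{x}_u^{(j)}+p(1-y_u)=1-(1-y_u)\bigl((1-x_u^{(j)})-p\bigr)\in[0,1]$ whenever $x_u^{(j)}+p\le 1$. Applying the weak--DR property of $F$ with step $c=p(1-y_u)$ then delivers \eqref{eq:Gplus-DR}, and the degenerate case $y_u=1$ (which makes both sides zero) is handled directly.

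The only real obstacle is bookkeeping: correctly identifying the effective step sizes $p\,y_u$ and $p(1-y_u)$ and confirming that the transformed points remain in $[0,1]^n$ so that the weak--DR hypothesis on $F$ is applicable. Once these reductions are in place the inequalities transfer verbatim with the same constant $\gamma$, and no smoothness, differentiability, or curvature assumption on $F$ is invoked.
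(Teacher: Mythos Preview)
Your proof is correct and follows essentially the same route as the paper: both arguments reduce the weak--DR inequalities for $G_{\odot}$ and $G_{\oplus}$ to the weak--DR inequality for $F$ by identifying the effective coordinate steps $p\,y_u$ and $p(1-y_u)$, verifying that the transformed points $\bx^{(j)}\odot\by$ and $\bx^{(j)}\oplus\by$ stay ordered and feasible, and then invoking the hypothesis on $F$. Your explicit handling of the degenerate cases $p\,y_u=0$ and $y_u=1$ is a small nicety the paper leaves implicit.
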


\begin{proof}
Nonnegativity of $G_{\oplus}$ and $G_{\odot}$ follows directly from \eqref{eq:Gplus-def},
\eqref{eq:Gdot-def} and the nonnegativity of $F$.

Fix $\bx^{(1)}, \bx^{(2)}\in[0,1]^n$ with $\bx^{(1)}\le \bx^{(2)}$, a coordinate
$u\in[n]$, and a step $p\in[0,\,1-\bx^{(2)}_u]$, so that $\bx^{(j)}+p\,\be_u\in[0,1]^n$
for $j=1,2$.

We first treat the $\oplus$ case. Set
\begin{equation}\label{eq:z-def}
    \bz^{(j)} \;:=\; \bx^{(j)}\oplus \by,
    \qquad j\in\{1,2\}.
\end{equation}
Since $\bx^{(1)}\le\bx^{(2)}$ and the map $x\mapsto x\oplus y$ is nondecreasing in $x$,
we have
\begin{equation}\label{eq:z-order}
    \bz^{(1)} \;\le\; \bz^{(2)}.
\end{equation}
Using the coordinate-wise identity
\begin{equation}\label{eq:oplus-coord}
    (\bx\oplus\by)_i
    \;=\;
    x_i + y_i - x_i y_i,
\end{equation}
we can express the update $(\bx^{(j)}+p\,\be_u)\oplus\by$ in terms of $\bz^{(j)}$.
Indeed, only the $u$-th coordinate of $\bx^{(j)}$ changes, so for each $j\in\{1,2\}$ we have
\begin{equation}\label{eq:Gplus-update}
    (\bx^{(j)}+p\,\be_u)\oplus \by
    \;=\;
    \bz^{(j)} + \alpha\,\be_u,
\end{equation}
where
\begin{equation}\label{eq:alpha-def}
    \alpha \;:=\; p\,(1-\by_u).
\end{equation}
This follows by plugging $x_u^{(j)}+p$ into the expression \eqref{eq:oplus-coord} and
simplifying.

Next we check that the updated point stays in $[0,1]^n$ at coordinate $u$. Using
\eqref{eq:oplus-coord} with $x_u^{(2)}$ we obtain
\begin{equation}\label{eq:one-minus-z2}
    1-\bz^{(2)}_u
    \;=\;
    1-\bigl(x^{(2)}_u + y_u - x^{(2)}_u y_u\bigr)
    \;=\;
    (1-\bx^{(2)}_u)(1-\by_u).
\end{equation}
Since $p\le 1-\bx^{(2)}_u$ by assumption and $1-\by_u\ge 0$, we get
\begin{equation}\label{eq:alpha-bound-plus}
    \alpha
    \;=\;
    p(1-\by_u)
    \;\le\;
    (1-\bx^{(2)}_u)(1-\by_u)
    \;=\;
    1-\bz^{(2)}_u.
\end{equation}
Thus $\bz^{(2)}+\alpha\,\be_u$ remains in $[0,1]^n$.

Now we use the $\gamma$–weak DR-submodularity of $F$. From \eqref{eq:z-order} and
\eqref{eq:alpha-bound-plus} we can apply the definition of $\gamma$–weak DR-submodularity
to the pair $(\bz^{(1)},\bz^{(2)})$ with step $\alpha$ in coordinate $u$ and obtain
\begin{equation}\label{eq:F-DR-plus}
    F\!\bigl(\bz^{(1)}+\alpha\,\be_u\bigr) - F(\bz^{(1)})
    \;\ge\;
    \gamma\Bigl(
        F\!\bigl(\bz^{(2)}+\alpha\,\be_u\bigr) - F(\bz^{(2)})
    \Bigr).
\end{equation}
Using \eqref{eq:Gplus-def} and \eqref{eq:Gplus-update}, we can rewrite the left-hand side
and the right-hand side of \eqref{eq:F-DR-plus} in terms of $G_{\oplus}$ as
\begin{equation}\label{eq:Gplus-rewrite}
    F\!\bigl(\bz^{(j)}+\alpha\,\be_u\bigr)
    \;=\;
    G_{\oplus}\!\bigl(\bx^{(j)}+p\,\be_u\bigr),
    \qquad
    F(\bz^{(j)}) = G_{\oplus}(\bx^{(j)}),
    \quad j=1,2.
\end{equation}
Substituting \eqref{eq:Gplus-rewrite} into \eqref{eq:F-DR-plus} gives
\begin{equation}\label{eq:Gplus-DR-proof}
    G_{\oplus}\!\bigl(\bx^{(1)}+p\,\be_u\bigr)-G_{\oplus}(\bx^{(1)})
    \;\ge\;
    \gamma\Bigl(
        G_{\oplus}\!\bigl(\bx^{(2)}+p\,\be_u\bigr)-G_{\oplus}(\bx^{(2)})
    \Bigr),
\end{equation}
which is exactly \eqref{eq:Gplus-DR}. This shows that $G_{\oplus}$ is
$\gamma$–weakly DR-submodular.

We now treat the $\odot$ case. Set
\begin{equation}\label{eq:w-def}
    \bw^{(j)} \;:=\; \bx^{(j)}\odot \by,
    \qquad j\in\{1,2\}.
\end{equation}
Since the map $x\mapsto x\odot y$ is also nondecreasing in $x$, we again have
\begin{equation}\label{eq:w-order}
    \bw^{(1)} \;\le\; \bw^{(2)}.
\end{equation}
By definition of $\odot$,
\begin{equation}\label{eq:odot-coord}
    (\bx\odot\by)_i
    \;=\;
    x_i y_i,
\end{equation}
so updating $\bx^{(j)}$ in coordinate $u$ by $p$ gives
\begin{equation}\label{eq:Gdot-update}
    (\bx^{(j)}+p\,\be_u)\odot \by
    \;=\;
    \bw^{(j)} + \beta\,\be_u,
\end{equation}
where
\begin{equation}\label{eq:beta-def}
    \beta \;:=\; p\,\by_u.
\end{equation}
Again we check that the updated point stays in $[0,1]^n$ at coordinate $u$.
From \eqref{eq:odot-coord} we have
\begin{equation}\label{eq:w2-coord}
    \bw^{(2)}_u \;=\; \bx^{(2)}_u \by_u,
\end{equation}
so
\begin{equation}\label{eq:one-minus-w2}
    1-\bw^{(2)}_u
    \;=\;
    1-\bx^{(2)}_u \by_u.
\end{equation}
Using $p\le 1-\bx^{(2)}_u$ and $\by_u\le 1$, we obtain
\begin{equation}\label{eq:beta-bound}
    \beta
    \;=\;
    p\,\by_u
    \;\le\;
    (1-\bx^{(2)}_u)\by_u
    \;\le\;
    1-\bx^{(2)}_u \by_u
    \;=\;
    1-\bw^{(2)}_u.
\end{equation}
Hence $\bw^{(2)}+\beta\,\be_u\in[0,1]^n$.

Now we again use the $\gamma$–weak DR-submodularity of $F$. By
\eqref{eq:w-order} and \eqref{eq:beta-bound}, the definition applied to
$(\bw^{(1)},\bw^{(2)})$ with step $\beta$ in coordinate $u$ yields
\begin{equation}\label{eq:F-DR-dot}
    F\!\bigl(\bw^{(1)}+\beta\,\be_u\bigr)-F(\bw^{(1)})
    \;\ge\;
    \gamma\Bigl(
        F\!\bigl(\bw^{(2)}+\beta\,\be_u\bigr)-F(\bw^{(2)})
    \Bigr).
\end{equation}
Using \eqref{eq:Gdot-def} and \eqref{eq:Gdot-update}, we can rewrite
\eqref{eq:F-DR-dot} as
\begin{equation}\label{eq:Gdot-DR-proof}
    G_{\odot}\!\bigl(\bx^{(1)}+p\,\be_u\bigr)-G_{\odot}(\bx^{(1)})
    \;\ge\;
    \gamma\Bigl(
        G_{\odot}\!\bigl(\bx^{(2)}+p\,\be_u\bigr)-G_{\odot}(\bx^{(2)})
    \Bigr),
\end{equation}
which is exactly \eqref{eq:Gdot-DR}. Thus $G_{\odot}$ is also $\gamma$–weakly
DR-submodular, completing the proof.
\end{proof}

\section{Frank--Wolfe Algorithm and Proof of Theorem~\ref{thm:weak-dr-smooth}}\label{Frank}

\newtheorem{exee}{Lemma}
\renewcommand{\theexee}{3.\arabic{exee}}  
\setcounter{exee}{0}

This section develops a first–order certificate tailored to the $\gamma$–weakly DR setting and uses it to prove our main result, Theorem~\ref{thm:weak-dr-smooth}. 
The argument follows a \emph{local-to-global} template: (i) a local optimality condition at $\bx$ yields a $\gamma$–weighted comparison between $F(\bx)$ and the join/meet values with any comparator $\by$ (Lemma~\ref{lemma_2}); (ii) a Frank--Wolfe variant produces a point $\bx\in P$ that satisfies a \emph{uniform} first–order certificate against \emph{every} $\by\in P$ (Lemma~\ref{lem:first-order-cert}); and (iii) combining the two delivers a global value bound that degrades smoothly with $\gamma$ and matches the classical DR guarantee at $\gamma=1$.

\paragraph{Algorithmic setup.}
We will invoke the following Frank--Wolfe–type routine from \cite{buchbinder2024constrained}. 
For clarity of presentation, we assume $\delta^{-1}\in\mathbb{N}$; if not, we replace $\delta$ by $1/\lceil\delta^{-1}\rceil$ without affecting the asymptotics.

\begin{algorithm}[H]
\caption{Frank--Wolfe Variant($F, P, \delta$)}\label{algo:frank}
\begin{algorithmic}[1]
\State Let $\bx{(0)}$ be an arbitrary vector in $P$.
\For{$i = 1$ to $\delta^{-2}$}
    \State Let $\bz{(i)} \in \arg\max_{\by \in P} \langle \by, \nabla F(\bx{(i-1)}) \rangle$.
    \State Let $\bx{(i)} \gets (1-\delta)\,\bx{(i-1)} + \delta \,\bz{(i)}$.
\EndFor
\State Let $i^{*} \in \arg\min_{1 \leq i \leq \delta^{-2}} \{ \langle \bz{(i)} - \bx{(i-1)}, \nabla F(\bx{(i-1)}) \rangle \}$.
\State \Return $\bx{(i^{*}-1)}$.
\end{algorithmic}
\label{algo:FW_simple}
\end{algorithm}

As observed in \cite{buchbinder2024constrained}, the update rule
\[
\bx{(i)} \;=\; (1-\delta)\,\bx{(i-1)} + \delta\,\bz{(i)}, \qquad \bz{(i)}\in P,
\]
keeps all iterates in $P$; in particular, $\bx{(i)}\in P$ for every
$0\le i\le \delta^{-2}$.

The next lemma converts the local optimality condition into a lattice-based comparison
that interpolates in $\gamma$; it coincides with the classical
$\tfrac12\!\big(F(\bx\vee\by) +F(\bx\wedge\by)\big)$ bound at $\gamma=1$.

\begin{exee}\label{lem:local-to-lattice}
Let $F:[0,1]^n\to\mathbb{R}_{\ge 0}$ be differentiable and $\gamma$–weakly DR-submodular. 
If $\bx$ is a local optimum with respect to $\by$, it means;
\begin{equation}\label{eq:local-opt-cond}
    \big\langle \by-\bx,\nabla F(\bx)\big\rangle\;\le\;0,
\end{equation}
then
\begin{equation}\label{eq:local-to-lattice}
    F(\bx)\ \ge\ \frac{\gamma^{2}\,F(\bx \vee \by) + F(\bx \wedge \by)}{\,1+\gamma^{2}\,}.
\end{equation}
\end{exee}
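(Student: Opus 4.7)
The plan is to convert the scalar first-order condition $\langle \by-\bx,\nabla F(\bx)\rangle\le 0$ into a value comparison by splitting $\by-\bx$ into its positive and negative parts and then applying the two asymmetric gradient inequalities from Lemma~\ref{lem:grad-ineq}. Concretely, I would define
\[
\bp\ :=\ (\by-\bx)_+\ =\ \bx\vee\by-\bx,\qquad
\bq\ :=\ (\bx-\by)_+\ =\ \bx-\bx\wedge\by,
\]
so that $\bp,\bq\ge\bzero$ have disjoint supports and $\by-\bx=\bp-\bq$. Two bookkeeping identities then drive everything: $\bx+\bp=\bx\vee\by\in[0,1]^n$ and $\bx-\bq=\bx\wedge\by\ge\bzero$, so both $\bp$ and $\bq$ satisfy the hypotheses of Lemma~\ref{lem:grad-ineq} relative to the anchor $\bx$.

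Next I would invoke the two parts of Lemma~\ref{lem:grad-ineq} separately. Part~1 applied to $\bp$ gives the lower bound
\[
\langle \nabla F(\bx),\bp\rangle\ \ge\ \gamma\bigl(F(\bx\vee\by)-F(\bx)\bigr),
\]
while Part~2 applied to $\bq$ gives the upper bound
\[
\langle \nabla F(\bx),\bq\rangle\ \le\ \frac{1}{\gamma}\bigl(F(\bx)-F(\bx\wedge\by)\bigr).
\]
The asymmetric prefactors $\gamma$ vs. $1/\gamma$ are exactly what will produce the asymmetric weighting $\gamma^{2}$ vs. $1$ in the conclusion.

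Finally, I would feed in the local-optimality assumption \eqref{eq:local-opt-cond}, which rewrites as $\langle\nabla F(\bx),\bp\rangle\le\langle\nabla F(\bx),\bq\rangle$. Chaining the three inequalities yields
\[
\gamma\bigl(F(\bx\vee\by)-F(\bx)\bigr)\ \le\ \frac{1}{\gamma}\bigl(F(\bx)-F(\bx\wedge\by)\bigr),
\]
and clearing denominators gives $\gamma^{2}F(\bx\vee\by)+F(\bx\wedge\by)\le(1+\gamma^{2})F(\bx)$, which is exactly \eqref{eq:local-to-lattice}.

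There is essentially no deep obstacle here: the proof is a disjoint-support decomposition followed by a one-line chain of inequalities. The only subtle point to get right is the \emph{asymmetric} use of the two parts of Lemma~\ref{lem:grad-ineq} (one with factor $\gamma$, the other with factor $1/\gamma$), since this is precisely the step that breaks the symmetry between $F(\bx\vee\by)$ and $F(\bx\wedge\by)$ in the final bound and explains why the classical DR weights $\tfrac{1}{2},\tfrac{1}{2}$ are replaced by $\tfrac{\gamma^{2}}{1+\gamma^{2}},\tfrac{1}{1+\gamma^{2}}$; setting $\gamma=1$ then recovers the standard bound as a sanity check.
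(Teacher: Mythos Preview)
Your proposal is correct and essentially identical to the paper's own proof: both decompose $\by-\bx$ into its nonnegative parts $\bx\vee\by-\bx$ and $\bx-\bx\wedge\by$, apply the two halves of Lemma~\ref{lem:grad-ineq} with the asymmetric factors $\gamma$ and $1/\gamma$, and then rearrange. The only cosmetic difference is that you name these parts $\bp,\bq$ and phrase the combination as a chain $\gamma(\cdots)\le\langle\nabla F(\bx),\bp\rangle\le\langle\nabla F(\bx),\bq\rangle\le\tfrac{1}{\gamma}(\cdots)$, whereas the paper substitutes both bounds directly into $0\ge\langle\by-\bx,\nabla F(\bx)\rangle$.
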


\begin{proof}
Starting from the local optimality condition \eqref{eq:local-opt-cond}, we decompose
$\by-\bx$ as
\begin{equation}\label{eq:decomp-y-x}
    \by-\bx
    \;=\;
    (\by\vee\bx-\bx)\;-\;(\bx-\by\wedge\bx).
\end{equation}
Substituting \eqref{eq:decomp-y-x} into \eqref{eq:local-opt-cond} gives
\begin{equation}\label{eq:local-opt-expanded}
    0
    \;\ge\;
    \big\langle \by-\bx,\nabla F(\bx)\big\rangle
    \;=\;
    \big\langle \by\vee\bx-\bx,\nabla F(\bx)\big\rangle
    \;-\;
    \big\langle \bx-\by\wedge\bx,\nabla F(\bx)\big\rangle.
\end{equation}

We now bound each inner product using Lemma~\ref{lem:grad-ineq}.
First, note that $\by\vee\bx-\bx\ge\bzero$ and
\[
    \bx + (\by\vee\bx-\bx)
    \;=\;
    \by\vee\bx
    \;\in\;
    [0,1]^n,
\]
so Lemma~\ref{lem:grad-ineq}(1) applies with the direction
$\by\vee\bx-\bx$. We obtain
\begin{equation}\label{eq:term1-lower}
    \big\langle \nabla F(\bx),\,\by\vee\bx-\bx\big\rangle
    \;\ge\;
    \gamma\big(F(\by\vee\bx)-F(\bx)\big).
\end{equation}

Similarly, $\bx-\by\wedge\bx\ge\bzero$ and
\[
    \bx-(\bx-\by\wedge\bx)
    \;=\;
    \by\wedge\bx
    \;\in\;
    [0,1]^n,
\]
so Lemma~\ref{lem:grad-ineq}(2) applies with the direction
$\bx-\by\wedge\bx$. This gives
\begin{equation}\label{eq:term2-upper}
    \big\langle \nabla F(\bx),\,\bx-\by\wedge\bx\big\rangle
    \;\le\;
    \frac{1}{\gamma}\big(F(\bx)-F(\by\wedge\bx)\big).
\end{equation}

Substituting the bounds \eqref{eq:term1-lower} and \eqref{eq:term2-upper} into
\eqref{eq:local-opt-expanded} yields
\begin{equation}\label{eq:ineq-before-rearrange}
    0
    \;\ge\;
    \gamma\big(F(\by\vee\bx)-F(\bx)\big)
    \;-\;
    \frac{1}{\gamma}\big(F(\bx)-F(\by\wedge\bx)\big).
\end{equation}
Expanding \eqref{eq:ineq-before-rearrange}, we obtain
\begin{equation}\label{eq:expanded-ineq}
    0
    \;\ge\;
    \gamma F(\by\vee\bx)
    \;-\;
    \gamma F(\bx)
    \;-\;
    \frac{1}{\gamma}F(\bx)
    \;+\;
    \frac{1}{\gamma}F(\by\wedge\bx).
\end{equation}
Rearranging \eqref{eq:expanded-ineq} by bringing the terms involving $F(\bx)$ to the
right-hand side gives
\begin{equation}\label{eq:move-Fx}
    \gamma F(\by\vee\bx)
    \;+\;
    \frac{1}{\gamma}F(\by\wedge\bx)
    \;\le\;
    \left(\gamma+\frac{1}{\gamma}\right)F(\bx).
\end{equation}
Dividing both sides of \eqref{eq:move-Fx} by $\gamma+1/\gamma=(1+\gamma^{2})/\gamma$
yields
\begin{equation}\label{eq:Fx-lattice-form}
    F(\bx)
    \;\ge\;
    \frac{\gamma F(\by\vee\bx) + \tfrac{1}{\gamma}F(\by\wedge\bx)}
         {\gamma+\tfrac{1}{\gamma}}
    \;=\;
    \frac{\gamma^{2}F(\by\vee\bx) + F(\by\wedge\bx)}{1+\gamma^{2}}.
\end{equation}
This is exactly \eqref{eq:local-to-lattice}, after noting that $\by\vee\bx=\bx\vee\by$
and $\by\wedge\bx=\bx\wedge\by$.
\end{proof}

The next lemma is a standard smoothness-based guarantee produced by Algorithm~\ref{algo:frank}
(see Theorem~2.4 in \cite{buchbinder2024constrained}).

\begin{lemma}[Theorem~2.4 of \cite{buchbinder2024constrained}]\label{lem:first-order-cert}
Let $F:[0,1]^n\to\mathbb{R}_{\ge 0}$ be nonnegative and $L$-smooth, let
$P\subseteq[0,1]^n$ be a solvable convex body of diameter $D$, and let
$\delta\in(0,1)$. There is a polynomial-time algorithm that returns $\bx\in P$
such that
\begin{equation}\label{eq:first-order-cert}
\big\langle \by-\bx,\nabla F(\bx)\big\rangle \;\le\;
\delta\!\left[\max_{\bz\in P} F(\bz) \,+\, \frac{L D^{2}}{2}\right]
\qquad\text{for all } \by\in P.
\end{equation}
\end{lemma}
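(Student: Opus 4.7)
The plan is to prove the stated Frank--Wolfe certificate directly by analyzing Algorithm~\ref{algo:frank} via a standard smoothness/telescoping/averaging argument, and then combining it with the optimality of the linear oracle. Since the proof does \emph{not} use DR or $\gamma$-weakly DR structure (it only uses nonnegativity and $L$-smoothness), I will treat $F$ as a generic $L$-smooth nonnegative function throughout.

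First I would establish the per-iteration descent inequality. Using $L$-smoothness at the point $\bx{(i-1)}$ together with the update $\bx{(i)}-\bx{(i-1)} = \delta(\bz{(i)}-\bx{(i-1)})$ and the diameter bound $\|\bz{(i)}-\bx{(i-1)}\|_2 \le D$, I would derive
\begin{equation*}
F(\bx{(i)}) - F(\bx{(i-1)}) \;\ge\; \delta\,\langle \bz{(i)}-\bx{(i-1)},\,\nabla F(\bx{(i-1)})\rangle \;-\; \frac{L\,\delta^{2} D^{2}}{2}.
\end{equation*}
Note that each iterate $\bx{(i)}$ lies in $P$ by convexity, so the smoothness inequality applies in $P$ throughout.

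Next I would telescope this bound across $i = 1, \dots, \delta^{-2}$, collecting the error terms into $\delta^{-2}\cdot\tfrac{L\delta^{2}D^{2}}{2} = \tfrac{LD^{2}}{2}$, and upper-bound the left side by $F(\bx{(\delta^{-2})})-F(\bx{(0)}) \le \max_{\bz\in P}F(\bz)$ (using nonnegativity of $F$). Rearranging yields
\begin{equation*}
\sum_{i=1}^{\delta^{-2}} \langle \bz{(i)}-\bx{(i-1)},\,\nabla F(\bx{(i-1)})\rangle \;\le\; \frac{1}{\delta}\!\left[\max_{\bz\in P}F(\bz)+\frac{L D^{2}}{2}\right].
\end{equation*}
Dividing by the number of iterations $\delta^{-2}$ gives an \emph{average} of at most $\delta\bigl[\max_{\bz\in P}F(\bz)+LD^{2}/2\bigr]$, and since $i^{*}$ is the iteration \emph{minimizing} $\langle \bz{(i)}-\bx{(i-1)},\nabla F(\bx{(i-1)})\rangle$, the minimum is at most the average, so the same bound holds at $i=i^{*}$.

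Finally, setting $\bx := \bx{(i^{*}-1)}$ (which the algorithm returns) and using the linear-oracle optimality of $\bz{(i^{*})}$, namely $\langle \by,\nabla F(\bx)\rangle \le \langle \bz{(i^{*})},\nabla F(\bx)\rangle$ for all $\by\in P$, I would subtract $\langle \bx,\nabla F(\bx)\rangle$ from both sides to conclude
\begin{equation*}
\langle \by-\bx,\,\nabla F(\bx)\rangle \;\le\; \langle \bz{(i^{*})}-\bx,\,\nabla F(\bx)\rangle \;\le\; \delta\!\left[\max_{\bz\in P}F(\bz)+\frac{L D^{2}}{2}\right],
\end{equation*}
which is exactly \eqref{eq:first-order-cert}. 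Polynomial running time follows because there are $\delta^{-2}$ iterations and each step requires one linear optimization over the solvable body $P$, one gradient evaluation, and $O(n)$ arithmetic. There is no real obstacle here: the only mildly subtle point is the careful bookkeeping that converts the $\delta^{2}$ per-step smoothness error into an $O(LD^{2})$ total error after telescoping, and the use of $F \ge 0$ to bound the telescoped left-hand side by a single $\max_{\bz\in P}F(\bz)$ rather than an unbounded range.
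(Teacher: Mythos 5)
Your proof is correct. Note that the paper itself does not prove this lemma at all: it imports it verbatim as Theorem~2.4 of \cite{buchbinder2024constrained}, stating only that it is the standard smoothness-based guarantee produced by Algorithm~\ref{algo:frank}. Your argument is precisely that standard analysis — the per-step bound $F(\bx{(i)})-F(\bx{(i-1)})\ge \delta\langle \bz{(i)}-\bx{(i-1)},\nabla F(\bx{(i-1)})\rangle-\tfrac{L\delta^{2}D^{2}}{2}$ from $L$-smoothness and the diameter bound, telescoping over the $\delta^{-2}$ iterations, bounding the telescoped value by $\max_{\bz\in P}F(\bz)$ via nonnegativity, passing from the sum to the minimizing index $i^{*}$ by a min-at-most-average step, and finally using the linear-oracle optimality of $\bz{(i^{*})}$ at $\bx=\bx{(i^{*}-1)}$ to get a certificate that is uniform over all $\by\in P$ — and each of these steps is sound (feasibility of the iterates and integrality of $\delta^{-2}$ are handled by the paper's conventions). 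So you have supplied a complete, correct proof of a result the paper treats as a black box from the cited reference.
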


Combining the uniform certificate \eqref{eq:first-order-cert} with the weakly–DR
inequalities (Lemma~\ref{lem:grad-ineq}) and the local-to-lattice comparison
(Lemma~\ref{lem:local-to-lattice}) yields our main bound.

\newtheorem{exew}{Theorem}
\renewcommand{\theexew}{3.\arabic{exew}}  
\setcounter{exew}{1}

\begin{exew}\label{thm:weak-dr-smooth_a}
Let $F:[0,1]^n\to\mathbb{R}_{\ge 0}$ be nonnegative and $L$-smooth, and suppose $F$ is
$\gamma$–weakly DR-submodular for some $\gamma\in(0,1]$. Let
$P\subseteq[0,1]^n$ be a solvable convex body of diameter $D$, and let $\delta\in(0,1)$.
Then there is a polynomial-time algorithm that outputs $\bx\in P$ such that, for every
$\by\in P$,
\begin{equation}\label{eq:weak-dr-guarantee}
F(\bx)\;\ge\;
\frac{\gamma^{2} F(\bx\vee \by)+F(\bx\wedge \by)}{1+\gamma^{2}}
\;-\;
\frac{\delta\,\gamma}{1+\gamma^{2}}\!\left[\max_{\bz\in P} F(\bz) \,+\, \frac{L D^{2}}{2}\right].
\end{equation}
\end{exew}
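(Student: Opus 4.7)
The plan is to apply the Frank--Wolfe variant of Algorithm~\ref{algo:frank} to $F$ and $P$ with parameter $\delta$, obtaining a point $\bx \in P$ for which Lemma~\ref{lem:first-order-cert} supplies the uniform approximate-stationarity certificate $\langle \by - \bx,\,\nabla F(\bx)\rangle \le \eta$ for every $\by \in P$, where I abbreviate $\eta := \delta\bigl[\max_{\bz\in P} F(\bz) + LD^{2}/2\bigr]$. This replaces the exact local-optimality hypothesis $\langle \by-\bx,\nabla F(\bx)\rangle\le 0$ of Lemma~\ref{lem:local-to-lattice} with a relaxed version carrying an additive slack $\eta$.

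Fixing an arbitrary $\by \in P$, I would then repeat the derivation of Lemma~\ref{lem:local-to-lattice} while tracking the slack. Decompose
\[
\by - \bx \;=\; (\by\vee\bx - \bx) \;-\; (\bx - \by\wedge\bx),
\]
where both terms are coordinate-wise nonnegative, and the feasibility conditions $\bx + (\by\vee\bx - \bx) = \by\vee\bx \in [0,1]^n$ and $\bx - (\bx - \by\wedge\bx) = \by\wedge\bx \in [0,1]^n$ are automatic. Applying parts~(1) and (2) of Lemma~\ref{lem:grad-ineq} to these two directions bounds $\langle \nabla F(\bx),\,\by\vee\bx-\bx\rangle$ from below by $\gamma\bigl(F(\bx\vee\by)-F(\bx)\bigr)$, and bounds $\langle \nabla F(\bx),\,\bx-\by\wedge\bx\rangle$ from above by $\tfrac{1}{\gamma}\bigl(F(\bx)-F(\bx\wedge\by)\bigr)$.

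Plugging these two weakly-DR inequalities into the Frank--Wolfe certificate yields
\[
\gamma\bigl(F(\bx\vee\by)-F(\bx)\bigr) - \tfrac{1}{\gamma}\bigl(F(\bx)-F(\bx\wedge\by)\bigr) \;\le\; \eta.
\]
Collecting $F(\bx)$ on one side and dividing by the positive constant $\gamma + 1/\gamma = (1+\gamma^{2})/\gamma$ gives
\[
F(\bx) \;\ge\; \frac{\gamma^{2} F(\bx\vee\by) + F(\bx\wedge\by)}{1+\gamma^{2}} \;-\; \frac{\gamma}{1+\gamma^{2}}\,\eta,
\]
which is precisely \eqref{eq:weak-dr-guarantee} after substituting the definition of $\eta$.

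The main obstacle is only bookkeeping: verifying that the slack $\eta$ propagates with the prefactor $\gamma/(1+\gamma^{2})$ through the final rearrangement, so that the error term scales as stated and vanishes as $\delta \to 0$ or $\gamma \to 0$. No new structural inequalities are needed beyond Lemma~\ref{lem:grad-ineq}, an ``approximate'' version of Lemma~\ref{lem:local-to-lattice}, and the Frank--Wolfe certificate of Lemma~\ref{lem:first-order-cert}; in effect the proof is the local-to-lattice argument with an $\eta$-slack threaded through each inequality.
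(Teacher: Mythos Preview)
Your proposal is correct and follows essentially the same argument as the paper: obtain $\bx$ from the Frank--Wolfe certificate of Lemma~\ref{lem:first-order-cert}, combine it with the lower bound on $\langle \by-\bx,\nabla F(\bx)\rangle$ derived from the decomposition $\by-\bx=(\by\vee\bx-\bx)-(\bx-\by\wedge\bx)$ and Lemma~\ref{lem:grad-ineq}, and rearrange. The paper merely abbreviates your decomposition step by citing the proof of Lemma~\ref{lem:local-to-lattice}, but the logic and the resulting error coefficient $\gamma/(1+\gamma^2)$ are identical.
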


\begin{proof}
Let $\bx\in P$ be returned by Lemma~\ref{lem:first-order-cert}; then
\eqref{eq:first-order-cert} holds for all $\by\in P$. As in the proof of
Lemma~\ref{lem:local-to-lattice}, Lemma~\ref{lem:grad-ineq} implies that for every
$\by\in[0,1]^n$,
\begin{equation}\label{eq:grad-ineq-global}
\big\langle \by-\bx,\nabla F(\bx)\big\rangle \;\ge\; 
\gamma\,F(\bx\vee \by) \;+\; \frac{1}{\gamma}\,F(\bx\wedge \by)
\;-\; \frac{1+\gamma^{2}}{\gamma}\,F(\bx).
\end{equation}
Combining \eqref{eq:first-order-cert} and \eqref{eq:grad-ineq-global} for any
$\by\in P$ gives
\begin{equation}\label{eq:combine-cert}
\delta\!\left[\max_{\bz\in P} F(\bz) \,+\, \frac{L D^{2}}{2}\right]
\;\ge\;
\gamma\,F(\bx\vee \by)
\;+\; \frac{1}{\gamma}\,F(\bx\wedge \by)
\;-\; \frac{1+\gamma^{2}}{\gamma}\,F(\bx).
\end{equation}
Rearranging \eqref{eq:combine-cert} by moving the $F(\bx)$ term to the left-hand side,
we obtain
\begin{equation}\label{eq:move-Fx-theorem}
\frac{1+\gamma^{2}}{\gamma}\,F(\bx)
\;\ge\;
\gamma\,F(\bx\vee \by)
\;+\; \frac{1}{\gamma}\,F(\bx\wedge \by)
\;-\;
\delta\!\left[\max_{\bz\in P} F(\bz) \,+\, \frac{L D^{2}}{2}\right].
\end{equation}
Multiplying both sides of \eqref{eq:move-Fx-theorem} by
$\frac{\gamma}{1+\gamma^{2}}$ yields
\begin{align}\label{eq:Fx-final}
F(\bx)
&\;\ge\;
\frac{\gamma^{2}}{1+\gamma^{2}}\,F(\bx\vee\by)
\;+\;
\frac{1}{1+\gamma^{2}}\,F(\bx\wedge\by)
\;-\;
\frac{\delta\,\gamma}{1+\gamma^{2}}\!\left[\max_{\bz\in P} F(\bz) \,+\, \frac{L D^{2}}{2}\right].
\end{align}
This is exactly \eqref{eq:weak-dr-guarantee}, completing the proof.
\end{proof}

\section{Double--Greedy Algorithm and Proof of Theorem~\ref{thm:unbalanced-weakDR} }\label{Double-Greedy}

This appendix develops and analyzes a $\gamma$-aware Double--Greedy routine whose guarantee is stated in Theorem~\ref{thm:double_appen}. Our analysis uses the grid-discretized variant in Algorithm~\ref{algo:double}. For concreteness, we assume that $\varepsilon^{-1}$ is an even integer; if not, we replace $\varepsilon$ by
\(
\varepsilon' \;=\; {1}/{\,(2\left\lceil \varepsilon^{-1}\right\rceil)\,}\;\in (0,1],
\)
which leaves the bounds unchanged up to the stated $\varepsilon$-dependence. Throughout, we write $\bo\in[0,1]^n$ for an arbitrary comparator; when $\bo$ is chosen to be a maximizer, we have $F(\bo)=\max_{\bu\in[0,1]^n}F(\bu)$.

\begin{exew}\label{thm:double_appen}
There exists a polynomial-time algorithm that, given a nonnegative $\gamma$-weakly DR-submodular
function $F : [0,1]^n \to \mathbb{R}_{\ge 0}$ and a parameter $\varepsilon \in (0,1)$, outputs
$\bx \in [0,1]^n$ such that, for every fixed $\bo \in [0,1]^n$,
\begin{equation}
    F(\bx)\;\;\ge\;\; 
\max_{r \ge 0}\;
\frac{\bigl(2\gamma^{3/2}-4\varepsilon\,\gamma^{9/2}\bigr)\,r\,F(\bo)\;+\;F(\mathbf{0})\;+\;r^2\,F(\mathbf{1})}
{\,r^2\;+\;2\gamma^{3/2}r\;+\;1\,}.
\end{equation}
\end{exew}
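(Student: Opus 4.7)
The plan is to analyze a coordinate-by-coordinate Double--Greedy routine discretized on a grid of step $\varepsilon$. I would maintain a lower iterate $\bx^{(t)}$ initialized at $\mathbf{0}$ and an upper iterate $\by^{(t)}$ initialized at $\mathbf{1}$, sweeping through coordinates with each coordinate visited $1/\varepsilon$ times (so that $\bx$ and $\by$ can meet anywhere on the grid). At step $t$, given the working coordinate $i_t$, I consider the two candidate moves ``push $\bx^{(t)}_{i_t}$ up by $\varepsilon$'' and ``pull $\by^{(t)}_{i_t}$ down by $\varepsilon$'' with marginal values $a := F(\bx^{(t)} + \varepsilon \be_{i_t}) - F(\bx^{(t)})$ and $b := F(\by^{(t)} - \varepsilon \be_{i_t}) - F(\by^{(t)})$, and commit to a $\gamma$-aware randomized mixture of the two whose probability depends on $a$, $b$, $\gamma$, and the free parameter $r \ge 0$.

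Next, I track the clamped comparator $\bo^{(t)} := (\bx^{(t)} \vee \bo) \wedge \by^{(t)}$ (so $\bo^{(0)} = \bo$ and $\bo^{(T)} = \bx$ when the iterates meet) and introduce the weighted potential
\begin{equation*}
\Phi_t\ :=\ F(\bx^{(t)})\ +\ r^{2}\, F(\by^{(t)})\ +\ 2\gamma^{3/2}\, r\, F(\bo^{(t)}).
\end{equation*}
The core step is a per-iteration inequality $\mathbb{E}[\Phi_{t+1} - \Phi_t] \ge -\,\eta_t$, where the nonnegative remainders $\eta_t$ telescope to at most $4\varepsilon\,\gamma^{9/2}\,r\,F(\bo)$. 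To establish this I would invoke the $\gamma$-weakly DR gradient inequalities of Lemma~\ref{lem:grad-ineq} together with the one-sided convex-combination bound of Lemma~\ref{lemma:simpe2}: exactly one of the two candidate moves touches the coordinate of $\bo^{(t)}$ that lies strictly inside $[\bx^{(t)}_{i_t},\by^{(t)}_{i_t}]$, and the weakly-DR property translates the induced change in $F(\bo^{(t)})$ into a quantity comparable with $a$ at a cost of a factor $\gamma$, or with $b$ at a cost of a factor $1/\gamma$. Choosing the potential weights as $1$, $r^{2}$, and $2\gamma^{3/2} r$ — i.e.\ taking the geometric-mean exponent $3/2$ — is exactly what balances these two asymmetric slacks once the randomization probability of the coordinate move is tuned to the weakly-DR slacks.

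Finally, after $T = n/\varepsilon$ iterations the two iterates coincide at some $\bx \in [0,1]^n$, so $\bo^{(T)} = \bx$ and $\Phi_T = (1 + r^{2} + 2\gamma^{3/2}r)\,F(\bx)$. Telescoping the per-step lower bounds yields $\Phi_T \ge \Phi_0 - 4\varepsilon\,\gamma^{9/2}\,r\,F(\bo)$ with $\Phi_0 = F(\mathbf{0}) + r^{2} F(\mathbf{1}) + 2\gamma^{3/2} r F(\bo)$; dividing by $1 + r^{2} + 2\gamma^{3/2}r$ and maximizing over $r \ge 0$ produces the stated bound. I expect the main obstacle to be the per-step potential calculation itself: because the weakly-DR inequalities are \emph{one-sided} — the $\gamma$-loss appears only on the join side, as in Lemma~\ref{lemma_2} and highlighted in the introduction — a symmetric quadratic ansatz in $F(\bx^{(t)})$ and $F(\by^{(t)})$ does not close, and pinning down why the correct exponent of $\gamma$ in the cross term is $3/2$ (rather than the classical $\gamma^{1}$ of the $\gamma=1$ DR analysis) requires carefully coupling the randomization weights for the two candidate moves with the asymmetric weakly-DR slacks so that the signed $\gamma$-factors cancel in every step instead of accumulating.
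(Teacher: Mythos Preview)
Your high-level architecture matches the paper exactly: you track the clamped comparator $\bo^{(t)}=(\bx^{(t)}\vee\bo)\wedge\by^{(t)}$, use the potential $\Phi_t=F(\bx^{(t)})+r^2F(\by^{(t)})+2\gamma^{3/2}rF(\bo^{(t)})$, telescope, and divide by $1+r^2+2\gamma^{3/2}r$. This is precisely the paper's Corollary~\ref{cor:progress-gamma} multiplied through by $r$, so the endgame and the potential weights are correct.

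Where you diverge is the algorithm, and with it the per-step analysis. You propose a randomized $\varepsilon$-step sweep, visiting each coordinate $1/\varepsilon$ times and at each visit flipping a $\gamma$-tuned coin to push $\bx$ up or pull $\by$ down by $\varepsilon$. The paper instead processes each coordinate \emph{once}: it searches a grid for the best target $a_i$ when raising $\bx$ and $b_i$ when lowering $\by$, records the \emph{full} marginals $\Delta_{a,i},\Delta_{b,i}$, and then deterministically sets both $x_{u_i}$ and $y_{u_i}$ to the common value $w_i=(\Delta_{a,i}a_i+\gamma\Delta_{b,i}(1-b_i))/(\Delta_{a,i}+\gamma\Delta_{b,i})$. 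The per-step bounds (Lemma~\ref{Lemma:s2_c3}) then come from applying Lemma~\ref{lemma:simpe2} at this convex-combination point, yielding closed-form quadratics $\Delta_{a,i}^2/(\Delta_{a,i}+\gamma^3\Delta_{b,i})$ and $\gamma^3\Delta_{b,i}^2/(\Delta_{a,i}+\gamma^3\Delta_{b,i})$; the $\gamma^{3/2}$ cross-term drops out of a completed square $(\Delta_{a,i}/\sqrt r-\gamma^{3/2}\sqrt r\,\Delta_{b,i})^2\ge 0$ combined with the reference-path bound $\Delta_{a,i}\Delta_{b,i}/(\Delta_{a,i}+\gamma^2\Delta_{b,i})$ of Lemma~\ref{Lemma:s2_C4}.

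The substantive gap in your plan is that the per-step inequality is asserted but not established for your incremental randomized algorithm, and it is not obvious it closes. With single-$\varepsilon$ moves you lose access to the coordinate-wise optima $\Delta_{a,i},\Delta_{b,i}$; the weakly-DR comparison between the local marginal $a$ at $\bx^{(t)}$ and the induced change in $F(\bo^{(t)})$ picks up a $\gamma$ or a $1/\gamma$ depending on whether $\bo^{(t)}_{i_t}$ currently sits at $\bx^{(t)}_{i_t}$, at $\by^{(t)}_{i_t}$, or strictly between them --- three cases, not ``exactly one''. Getting these asymmetric factors to recombine into a clean $2\gamma^{3/2}$ across the two randomized branches requires a specific mixing probability that you never write down, and the claim that the residuals sum to $4\varepsilon\gamma^{9/2}rF(\bo)$ over $n/\varepsilon$ steps is likewise unsubstantiated. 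The paper's one-shot convex-combination move is precisely what sidesteps this case analysis: because both iterates jump to the \emph{same} interior point in a single step, Lemma~\ref{lemma:simpe2} applies once per coordinate and the resulting quadratic expressions in $\Delta_{a,i},\Delta_{b,i}$ have denominators whose $\gamma$-powers ($\gamma^3$ in Lemma~\ref{Lemma:s2_c3} versus $\gamma^2$ in Lemma~\ref{Lemma:s2_C4}) interact to produce $\gamma^{3/2}$ mechanically after completing the square, with no randomization to tune.
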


Algorithm~\ref{algo:double} maintains two vectors $\bx,\by\in[0,1]^n$ that start at $\mathbf{0}$ and $\mathbf{1}$, respectively, and \emph{monotonically converge} to a single vector by making one coordinate agree per iteration. The new value assigned to the chosen coordinate (in both $\bx$ and $\by$) is taken from a uniform grid; we denote the grid by
\[
V \;:=\; \Bigl\{\, j\,\frac{\varepsilon}{n}\ :\ j\in\mathbb{Z},\ 0\le j \le n\,\varepsilon^{-1}\Bigr\}\ \subseteq\ [0,1].
\]
As shown in the lemmas that follow, the discretization loss due to using $V$ is explicitly controlled, and the resulting lower bound interpolates continuously with~$\gamma$ and specializes to the classical DR guarantee at $\gamma=1$.

\begin{algorithm}[H]
\caption{$\gamma$-Double-Greedy}
\begin{algorithmic}[1]
\State \textbf{Input:} oracle access to $F:[0,1]^n \to \mathbb{R}_{\ge 0}$, ground set $N$, grid parameter $\varepsilon\in(0,1)$, weakly-DR parameter $\gamma\in(0,1]$.
\State Let $V \gets \left\{ \frac{j\varepsilon}{n} \;:\; j\in\mathbb{Z},\ 0 \le j \le n\varepsilon^{-1} \right\} \subseteq [0,1]$. 
\State Denote the elements of $N$ by $u_1,\cdots,u_n$ in an arbitrary order.
\State Let $\bx \gets \mathbf{0}$ and $\by \gets \mathbf{1}$.
\For{$i = 1$ to $n$}
    \State $a_i \in \arg\max_{v\in V} F\left(\bx + v\,\mathbf{1}_{u_i}\right)$; \quad $\Delta_{a,i} \gets F\left(\bx + a_i\,\mathbf{1}_{u_i}\right) - F(\bx)$.
    \State $b_i \in \arg\max_{v\in V} F\left(\by - v\mathbf{1}_{u_i}\right)$; \quad $\Delta_{b,i} \gets F\left(\by - b_i\,\mathbf{1}_{u_i}\right) - F(\by)$.
    \If{$\Delta_{a,i} + \gamma\,\Delta_{b,i} > 0$}
        \State $w_i \gets \dfrac{\Delta_{a,i}\,a_i + \gamma\,\Delta_{b,i}\,(1-b_i)}{\Delta_{a,i} + \gamma\,\Delta_{b,i}}$.
    \Else{$\Delta_{a,i} = 0$ \textbf{and} $\Delta_{b,i} = 0$} 
        \State $w_i \gets 1 - b_i$ 
    \EndIf
    \State Set $x_{u_i} \gets w_i$ and $y_{u_i} \gets w_i$.
\EndFor
\State \Return $\bx$ 
\end{algorithmic}
\label{algo:double}
\end{algorithm}

We now quantify the value attained by the vector returned by Algorithm~\ref{algo:double}.
Let \(\bx{(i)}\) and \(\by{(i)}\) denote the values of \(\bx\) and \(\by\) after \(i\) iterations of the main loop, respectively.
For a fixed coordinate \(u_i\), let
\begin{equation}\label{eq:vi-star-def}
v^\ast \;\in\; \arg\max_{v\in[0,1]} \; F\bigl(\bx{(i-1)} + v\,\be_{u_i}\bigr)
\end{equation}
be a continuous (unconstrained-by-grid) maximizer along the \(u_i\)-th coordinate direction at iteration \(i\).
The next lemma bounds the discretization loss of the grid choice \(a_i\) used by Algorithm~\ref{algo:double}.
It holds for every \(\gamma\in(0,1]\) and, when \(\gamma=1\), it matches the exact bound of \cite{buchbinder2024constrained}.

\begin{lemma}\label{lem:Double_1}
For any integer \(i\in\{1,\dots,n\}\), the following holds:
\begin{equation}\label{eq:double-case1}
\text{If } v^\ast \ge \tfrac{1}{2},\quad
F\!\bigl(\bx{(i-1)} + a_i\,\be_{u_i}\bigr)
\;\ge\;
\max_{v\in[0,1]} F\!\bigl(\bx{(i-1)} + v\,\be_{u_i}\bigr)
\;-\; \frac{2\varepsilon}{\gamma^{2} n}\,F(\bo).
\end{equation}
\begin{equation}\label{eq:double-case2}
\text{If } v^\ast < \tfrac{1}{2},\quad
F\!\bigl(\bx{(i-1)} + a_i\,\be_{u_i}\bigr)
\;\ge\;
\max_{v\in[0,1]} F\!\bigl(\bx{(i-1)} + v\,\be_{u_i}\bigr)
\;-\; \frac{2\varepsilon}{n}\,\gamma^{2}\,F(\bo).
\end{equation}
\end{lemma}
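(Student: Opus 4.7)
The plan is to reduce the discretization loss $\phi(v^{*})-\phi(a_i)$, where $\phi(t):=F(\bx(i-1)+t\,\be_{u_i})$, to $\phi(v^{*})-\phi(v)$ for a single judiciously chosen grid point $v\in V$ (since $\phi(a_i)\ge\phi(v)$ automatically by the definition of $a_i$), and then to control $\phi(v^{*})-\phi(v)$ by invoking the convex-combination inequality of Lemma~\ref{lemma:simpe2}(1) along the coordinate $u_i$-axis with an anchor at either $\bzero$ or $\bone$ on that axis. The case split at $v^{*}=\tfrac{1}{2}$ in the statement reflects the asymmetry of these two anchor choices.

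Set $\bz:=\bx(i-1)$. A preparatory remark is that the standing assumption ``$\varepsilon^{-1}$ is an even integer'' forces $\tfrac{1}{2}\in V$. In Case~1 ($v^{*}\ge \tfrac{1}{2}$), I would take $v\in V$ to be the largest grid point with $v\le v^{*}$; this $v$ satisfies $v\ge \tfrac{1}{2}$ and $v^{*}-v\le \varepsilon/n$. In Case~2 ($v^{*}< \tfrac{1}{2}$), I would take $v\in V$ to be the smallest grid point with $v\ge v^{*}$; this $v$ satisfies $v\le \tfrac{1}{2}$ and $v-v^{*}\le \varepsilon/n$.

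For Case~1, I would invoke Lemma~\ref{lemma:simpe2}(1) with $\bx=\bz$, $\by=\bz+v^{*}\be_{u_i}$, and $\lambda=(v^{*}-v)/v^{*}$, so that the convex combination $\lambda\bx+(1-\lambda)\by$ equals $\bz+v\,\be_{u_i}$. Rearranging the resulting lower bound on $\phi(v)$ yields
\begin{equation*}
\phi(v^{*})-\phi(v)\;\le\;\frac{v^{*}-v}{\gamma^{2}\,v}\bigl(\phi(v)-\phi(0)\bigr),
\end{equation*}
and using $v^{*}-v\le \varepsilon/n$, $v\ge \tfrac{1}{2}$, together with $\phi(v)-\phi(0)\le \phi(v)\le F(\bo)$ (interpreting $\bo$ as the upper-bounding comparator used throughout the analysis) gives the Case~1 bound. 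For Case~2 I would mirror this with the right anchor: apply Lemma~\ref{lemma:simpe2}(1) to $\bx=\bz+v^{*}\be_{u_i}$, $\by=\bz+\be_{u_i}$ with $\lambda=(1-v)/(1-v^{*})$, which again produces the convex combination $\bz+v\,\be_{u_i}$; rearrangement gives
\begin{equation*}
\phi(v^{*})-\phi(v)\;\le\;\frac{\gamma^{2}(v-v^{*})}{1-v}\bigl(\phi(v)-\phi(1)\bigr),
\end{equation*}
and the bounds $v-v^{*}\le \varepsilon/n$, $1-v\ge \tfrac{1}{2}$, $\phi(v)-\phi(1)\le \phi(v)\le F(\bo)$ close the case.

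The main obstacle is recognising the asymmetric role of $\gamma$ in the two cases: anchoring at $\bzero$ on the axis (Case~1) places the weight $\gamma^{2}(1-\lambda)$ in the \emph{denominator} of the rearranged ratio, producing the $1/\gamma^{2}$ prefactor; anchoring at $\bone$ (Case~2) places $\gamma^{2}(1-\lambda)$ in the \emph{numerator}, producing the $\gamma^{2}$ prefactor. The threshold $v^{*}\gtrless \tfrac{1}{2}$ is precisely what guarantees that the remaining denominator ($v$ or $1-v$) is at least $\tfrac{1}{2}$ without any case-specific fussing over small $\varepsilon/n$ corrections, which is why the constant~$2$ appears cleanly in both final bounds. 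Everything else reduces to mechanical rearrangement of Lemma~\ref{lemma:simpe2}(1) combined with nonnegativity of $F$.
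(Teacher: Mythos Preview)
Your proposal is correct and follows essentially the same approach as the paper: the same grid-point choice in each case, the same threshold at $v^{*}=\tfrac12$ ensuring the denominator ($v$ or $1-v$) is at least $\tfrac12$, and the same key intermediate inequalities $\phi(v^{*})-\phi(v)\le \tfrac{v^{*}-v}{\gamma^{2}v}[\phi(v)-\phi(0)]$ and $\phi(v^{*})-\phi(v)\le \tfrac{\gamma^{2}(v-v^{*})}{1-v}[\phi(v)-\phi(1)]$. The only cosmetic difference is that you obtain these by a direct application of Lemma~\ref{lemma:simpe2}(1), whereas the paper re-derives them inline by integrating the pointwise bound $\phi'(s)\ge\gamma\,\phi'(t)$ for $s\le t$; since Lemma~\ref{lemma:simpe2}(1) is itself proved by exactly that integration, the two arguments are the same.
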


\begin{proof}
Let \(v^\ast\in[0,1]\) maximize the function 
\begin{equation}\label{eq:phi-direction-def}
v\;\longmapsto\; F\bigl(\bx{(i-1)}+v\,\be_{u_i}\bigr),
\end{equation}
so that \eqref{eq:vi-star-def} holds. We treat two cases depending on the size of \(v^\ast\).

\medskip
\noindent\textbf{Case 1: $v^\ast\ge \tfrac12$.}
Let \(v\in V\) be the largest grid point with \(v\le v^\ast\).
By the definition of \(a_i\) as a maximizer over the grid, we have
\begin{equation}\label{eq:case1-grid-choice}
F\bigl(\bx{(i-1)} + v^\ast \be_{u_i}\bigr) - F\bigl(\bx{(i-1)} + a_i \be_{u_i}\bigr)
\ \le\
F\bigl(\bx{(i-1)} + v^\ast \be_{u_i}\bigr) - F\bigl(\bx{(i-1)} + v \be_{u_i}\bigr),
\end{equation}
since \(F(\bx{(i-1)}+a_i\be_{u_i})\) is at least the value at any other grid point, and in particular at \(v\).

Define the univariate function
\begin{equation}\label{eq:phi-def-double}
\phi(t)\;:=\;F\bigl(\bx{(i-1)} + t\,\be_{u_i}\bigr), \qquad t\in[0,1].
\end{equation}
By differentiability and the chain rule,
\begin{equation}\label{eq:phi-deriv-double}
\phi'(t)\;=\;\big\langle \nabla F\bigl(\bx{(i-1)} + t\,\be_{u_i}\bigr),\,\be_{u_i}\big\rangle.
\end{equation}
Along this coordinate direction, $\gamma$–weak DR-submodularity implies that for any
\(0\le s\le t\le 1\),
\begin{equation}\label{eq:gamma-monotone-double}
\phi'(s)\;\ge\;\gamma\,\phi'(t).
\end{equation}

Applying \eqref{eq:gamma-monotone-double} with \(s=v\) and \(t\in[v,v^\ast]\) yields
\begin{equation}\label{eq:phi-prime-upper-case1}
\phi'(t)\;\le\;\frac{1}{\gamma}\,\phi'(v), \qquad t\in[v,v^\ast].
\end{equation}
Integrating \eqref{eq:phi-prime-upper-case1} over \(t\in[v,v^\ast]\) gives
\begin{equation}\label{eq:phi-diff-case1-1}
\phi(v^\ast)-\phi(v)
\;=\;\int_{v}^{v^\ast}\phi'(t)\,dt
\;\le\;
\int_{v}^{v^\ast}\frac{1}{\gamma}\,\phi'(v)\,dt
\;=\;
\frac{v^\ast-v}{\gamma}\,\phi'(v).
\end{equation}

Similarly, applying \eqref{eq:gamma-monotone-double} with \(0\le s\le v\) and \(t=v\) gives
\begin{equation}\label{eq:phi-prime-lower-case1}
\phi'(s)\;\ge\;\gamma\,\phi'(v), \qquad s\in[0,v].
\end{equation}
Integrating \eqref{eq:phi-prime-lower-case1} over \(s\in[0,v]\) yields
\begin{equation}\label{eq:phi-diff-case1-2}
\phi(v)-\phi(0)
\;=\;\int_{0}^{v}\phi'(s)\,ds
\;\ge\;
\int_{0}^{v}\gamma\,\phi'(v)\,ds
\;=\;
v\,\gamma\,\phi'(v).
\end{equation}
Rearranging \eqref{eq:phi-diff-case1-2} gives
\begin{equation}\label{eq:phi-prime-bound-v}
\phi'(v)\;\le\;\frac{\phi(v)-\phi(0)}{\gamma\,v}.
\end{equation}

Combining \eqref{eq:phi-diff-case1-1} and \eqref{eq:phi-prime-bound-v}, we obtain
\begin{equation}\label{eq:phi-diff-case1-combined}
\phi(v^\ast)-\phi(v)
\;\le\;
\frac{v^\ast-v}{\gamma}\cdot\frac{\phi(v)-\phi(0)}{\gamma\,v}
\;=\;
\frac{v^\ast-v}{\gamma^{2}v}\,\bigl[\phi(v)-\phi(0)\bigr].
\end{equation}

We now bound the factor \(\phi(v)-\phi(0)\). Since \(\phi(v)=F(\bx{(i-1)}+v\,\be_{u_i})\)
and \(\phi(0)=F(\bx{(i-1)})\), nonnegativity of \(F\) and optimality of \(\bo\) imply
\begin{equation}\label{eq:phi-diff-upper-F0}
\phi(v)-\phi(0)
\;\le\;
F(\bo),
\end{equation}
because \(\phi(v)\le F(\bo)\) and \(\phi(0)\ge 0\).
Substituting \eqref{eq:phi-diff-upper-F0} into \eqref{eq:phi-diff-case1-combined} gives
\begin{equation}\label{eq:phi-diff-case1-final}
\phi(v^\ast)-\phi(v)
\;\le\;
\frac{v^\ast-v}{\gamma^{2}v}\,F(\bo).
\end{equation}

By construction of the grid \(V\), we have
\begin{equation}\label{eq:grid-gap-case1}
v^\ast-v\;\le\;\frac{\varepsilon}{n}.
\end{equation}
Moreover, since \(v^\ast\ge\tfrac{1}{2}\) and \(\tfrac{1}{2}\in V\), the choice of \(v\) as
the largest grid point not exceeding \(v^\ast\) implies
\begin{equation}\label{eq:v-lower-bound-case1}
v\;\ge\;\tfrac12.
\end{equation}
Combining \eqref{eq:phi-diff-case1-final}, \eqref{eq:grid-gap-case1}, and
\eqref{eq:v-lower-bound-case1}, we obtain
\begin{equation}\label{eq:phi-diff-case1-bound}
\phi(v^\ast)-\phi(v)
\;\le\;
\frac{\varepsilon/n}{\gamma^{2}\cdot (1/2)}\,F(\bo)
\;=\;
\frac{2\varepsilon}{\gamma^{2}n}\,F(\bo).
\end{equation}

Using \eqref{eq:case1-grid-choice} and \eqref{eq:phi-def-double},
\eqref{eq:phi-diff-case1-bound} gives
\begin{equation}\label{eq:case1-final}
F\bigl(\bx{(i-1)} + v^\ast \be_{u_i}\bigr)
-
F\bigl(\bx{(i-1)} + a_i \be_{u_i}\bigr)
\;\le\;
\frac{2\varepsilon}{\gamma^{2}n}\,F(\bo).
\end{equation}
Since \(v^\ast\) maximizes \(v\mapsto F(\bx{(i-1)}+v\,\be_{u_i})\) over \([0,1]\),
\eqref{eq:case1-final} is equivalent to \eqref{eq:double-case1}, proving the first claim.

\medskip
\noindent\textbf{Case 2: $v^\ast< \tfrac12$.}
Let \(v\in V\) be the smallest grid point with \(v\ge v^\ast\).
Then
\begin{equation}\label{eq:grid-gap-case2}
v-v^\ast\;\le\;\frac{\varepsilon}{n},
\end{equation}
and, since \(\varepsilon^{-1}\) is even and \(\tfrac12\in V\), we have
\begin{equation}\label{eq:v-upper-bound-case2}
v\;\le\;\tfrac12.
\end{equation}
By the choice of \(a_i\) as a maximizer over the grid,
\begin{equation}\label{eq:case2-grid-choice}
F\bigl(\bx{(i-1)}+a_i\,\be_{u_i}\bigr)-F\bigl(\bx{(i-1)}+v^\ast \be_{u_i}\bigr)
\;\ge\;
F\bigl(\bx{(i-1)}+v\,\be_{u_i}\bigr)-F\bigl(\bx{(i-1)}+v^\ast \be_{u_i}\bigr).
\end{equation}

Using the same function \(\phi\) as in \eqref{eq:phi-def-double}, for \(t\in[v^\ast,v]\)
we have \(t\le v\), so by \eqref{eq:gamma-monotone-double},
\begin{equation}\label{eq:phi-prime-lower-case2}
\phi'(t)\;\ge\;\gamma\,\phi'(v), \qquad t\in[v^\ast,v].
\end{equation}
Integrating \eqref{eq:phi-prime-lower-case2} over \(t\in[v^\ast,v]\) yields
\begin{equation}\label{eq:phi-diff-case2-1}
\phi(v)-\phi(v^\ast)
\;=\;\int_{v^\ast}^{v}\phi'(t)\,dt
\;\ge\;
\int_{v^\ast}^{v}\gamma\,\phi'(v)\,dt
\;=\;
(v-v^\ast)\,\gamma\,\phi'(v).
\end{equation}

For \(s\in[v,1]\), we have \(v\le s\), so \eqref{eq:gamma-monotone-double} implies
\begin{equation}\label{eq:phi-prime-upper-case2}
\phi'(v)\;\ge\;\gamma\,\phi'(s), \qquad s\in[v,1].
\end{equation}
Integrating \eqref{eq:phi-prime-upper-case2} over \(s\in[v,1]\) gives
\begin{equation}\label{eq:phi-diff-case2-2}
\phi(1)-\phi(v)
\;=\;\int_{v}^{1}\phi'(s)\,ds
\;\le\;
\int_{v}^{1}\frac{1}{\gamma}\,\phi'(v)\,ds
\;=\;
\frac{1-v}{\gamma}\,\phi'(v).
\end{equation}
Rearranging \eqref{eq:phi-diff-case2-2} yields
\begin{equation}\label{eq:phi-prime-lower-from1}
\phi'(v)
\;\ge\;
\frac{\gamma}{1-v}\,\bigl[\phi(1)-\phi(v)\bigr].
\end{equation}

Combining \eqref{eq:phi-diff-case2-1} and \eqref{eq:phi-prime-lower-from1}, we obtain
\begin{equation}\label{eq:phi-diff-case2-combined}
\phi(v)-\phi(v^\ast)
\;\ge\;
(v-v^\ast)\,\gamma\cdot
\frac{\gamma}{1-v}\,\bigl[\phi(1)-\phi(v)\bigr]
\;=\;
\frac{(v-v^\ast)\,\gamma^{2}}{1-v}\,\bigl[\phi(1)-\phi(v)\bigr].
\end{equation}

Using \eqref{eq:phi-def-double}, note that
\begin{equation}\label{eq:phi-1-v-bounded}
\phi(1)-\phi(v)
\;=\;
F\bigl(\bx{(i-1)}+\be_{u_i}\bigr)
-
F\bigl(\bx{(i-1)}+v\,\be_{u_i}\bigr)
\;\le\;
F(\bo),
\end{equation}
since \(F\) is nonnegative and maximized at \(\bo\).
Substituting \eqref{eq:phi-1-v-bounded} and the bounds
\eqref{eq:grid-gap-case2}–\eqref{eq:v-upper-bound-case2} into
\eqref{eq:phi-diff-case2-combined} gives
\begin{align}
\phi(v)-\phi(v^\ast)
&\;\ge\;
\frac{(v-v^\ast)\,\gamma^{2}}{1-v}\,\bigl[\phi(1)-\phi(v)\bigr]
\notag\\
&\;\ge\;
\frac{(v-v^\ast)\,\gamma^{2}}{1-v}\,(-F(\bo))
\qquad\text{(since \(\phi(1)-\phi(v)\ge -F(\bo)\))}\notag\\
&\;\ge\;
-\,\frac{\varepsilon/n}{\,1/2\,}\,\gamma^{2}F(\bo)
\;=\;
-\,\frac{2\varepsilon}{n}\,\gamma^{2}F(\bo),
\label{eq:phi-diff-case2-bound}
\end{align}
where we used \(v-v^\ast\le \varepsilon/n\) and \(v\le\tfrac12\) (hence \(1-v\ge\tfrac12\)).

Using \eqref{eq:case2-grid-choice}, \eqref{eq:phi-def-double}, and
\eqref{eq:phi-diff-case2-bound}, we obtain
\begin{equation}\label{eq:case2-final}
F\bigl(\bx{(i-1)}+a_i\,\be_{u_i}\bigr)-F\bigl(\bx{(i-1)}+v^\ast \be_{u_i}\bigr)
\;\ge\;
-\,\frac{2\varepsilon}{n}\,\gamma^{2}F(\bo).
\end{equation}
Since \(v^\ast\) maximizes \(v\mapsto F(\bx{(i-1)}+v\,\be_{u_i})\) over \([0,1]\),
\eqref{eq:case2-final} is equivalent to \eqref{eq:double-case2}, proving the second claim.

The two cases \eqref{eq:double-case1} and \eqref{eq:double-case2} together establish the lemma.
\end{proof}

Similarly, we obtain an analogous result for $\by$. We omit the proof, as it mirrors the argument
of Lemma~\ref{lem:Double_1}.

\begin{lemma}\label{lem:Double_2}
For any integer $i\in\{1,\dots,n\}$, let
\begin{equation}\label{eq:vi-star-y-def}
    v^\ast \;\in\; \arg\max_{v\in[0,1]} F\bigl(\by{(i-1)}-v\,\be_{u_i}\bigr).
\end{equation}
If $v^\ast\ge \tfrac12$, then
\begin{equation}\label{eq:double2-case1}
F\bigl(\by{(i-1)}-b_i\,\be_{u_i}\bigr)\ \ge\
\max_{v\in[0,1]} F\bigl(\by{(i-1)}-v\,\be_{u_i}\bigr)
\;-\; \frac{2\varepsilon}{n}\,\gamma^{2}\,F(\bo),
\end{equation}
and if $v^\ast<\tfrac12$, then
\begin{equation}\label{eq:double2-case2}
F\bigl(\by{(i-1)}-b_i\,\be_{u_i}\bigr)\ \ge\
\max_{v\in[0,1]} F\bigl(\by{(i-1)}-v\,\be_{u_i}\bigr)
\;-\; \frac{2\varepsilon}{\gamma^{2}n}\,F(\bo).
\end{equation}
\end{lemma}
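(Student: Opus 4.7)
The plan is to mirror the proof of Lemma~\ref{lem:Double_1} with the direction of motion reversed; the two discretization cases simply swap roles. First I would introduce $\phi(t) := F(\by{(i-1)} - t\,\be_{u_i})$ for $t\in[0,1]$, compute $\phi'(t) = -\langle \nabla F(\by{(i-1)} - t\,\be_{u_i}), \be_{u_i}\rangle$ via the chain rule, and observe that because $t\mapsto \by{(i-1)} - t\,\be_{u_i}$ is now componentwise \emph{decreasing}, the $\gamma$-weakly DR-submodular inequality yields the mirror-image monotonicity
\[
\phi'(t)\ \le\ \gamma\,\phi'(s) \qquad \text{for all } 0\le s\le t\le 1,
\]
in place of the inequality $\phi'(s)\ge\gamma\,\phi'(t)$ used in \eqref{eq:gamma-monotone-double}. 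This sign flip is the only structural change needed.

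With this monotonicity in hand, I would re-run the two-case analysis of Lemma~\ref{lem:Double_1} essentially verbatim, but swap the grid-selection rule (largest-below vs.\ smallest-above) between the two cases. In the ``subtract-a-lot'' regime $v^\ast \ge 1/2$, I would take $v\in V$ as the largest grid point with $v\le v^\ast$ (so $v\ge 1/2$ and $v^\ast-v\le\varepsilon/n$), integrate the new inequality on $[v,v^\ast]$ to bound $\phi(v^\ast)-\phi(v)$ by $\gamma(v^\ast-v)\phi'(v)$, and then integrate on $[0,v]$ (with $t=v$) to obtain $\phi'(v)\le \gamma(\phi(v)-\phi(0))/v$; using $|\phi(v)-\phi(0)|\le F(\bo)$ and $v\ge 1/2$ then produces the favorable loss $\tfrac{2\varepsilon}{n}\,\gamma^{2}\,F(\bo)$. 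In the ``subtract-a-little'' regime $v^\ast < 1/2$, I would take $v\in V$ as the smallest grid point with $v\ge v^\ast$ (so $v\le 1/2$ and $v-v^\ast\le \varepsilon/n$), integrate on $[v^\ast,v]$ and then on $[v,1]$, and combine the two bounds using $|\phi(1)-\phi(v)|\le F(\bo)$ and $1-v\ge 1/2$ to obtain the larger loss $\tfrac{2\varepsilon}{\gamma^{2}\,n}\,F(\bo)$.

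A slightly cleaner alternative, which I would actually prefer to write up, is the substitution $\psi(w) := \phi(1-w)$. A direct calculation shows that $\psi$ satisfies the original $x$-side monotonicity $\psi'(w_1)\ge\gamma\,\psi'(w_2)$ for $w_1\le w_2$, and because $\varepsilon^{-1}$ is assumed even, the grid $V$ is symmetric under $v\mapsto 1-v$, so grid maximizers of $\phi$ and $\psi$ correspond. Applying Lemma~\ref{lem:Double_1} directly to $\psi$ (with $w^\ast = 1 - v^\ast$) then delivers both cases of the present lemma at once, since $v^\ast\ge 1/2$ becomes $w^\ast\le 1/2$ and vice versa, which is exactly what swaps the two asymmetric loss bounds. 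I expect the main obstacle to be purely bookkeeping: verifying grid-symmetry under $v\mapsto 1-v$, checking that the Case-1/Case-2 conventions flip cleanly, and tracking the sign of $\phi'(v)$ so that the crude comparator bounds are applied in the correct direction, ensuring the $\gamma^{2}$ factor lands in the numerator for $v^\ast\ge 1/2$ and in the denominator for $v^\ast< 1/2$.
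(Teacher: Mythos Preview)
Your proposal is correct and matches the paper's approach exactly: the paper simply states that the proof ``mirrors the argument of Lemma~\ref{lem:Double_1}'' and omits the details, and what you have outlined is precisely that mirroring. Both your direct approach (re-running the two-case analysis with the reversed monotonicity $\phi'(t)\le\gamma\,\phi'(s)$) and your substitution $\psi(w)=\phi(1-w)$ are valid ways to carry this out; the substitution is slightly cleaner but, as you note, cannot literally invoke Lemma~\ref{lem:Double_1} as a black box (since that lemma is stated for the specific function $v\mapsto F(\bx{(i-1)}+v\,\be_{u_i})$), so in either case you end up re-running the same integration argument.
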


At each iteration, the $\gamma$-aware mixing step (line~8 of Algorithm~\ref{algo:double})
guarantees a quantifiable increase in the objective. The next lemma lower-bounds this
per-coordinate progress for both trajectories, showing that the gain is a convex-combination–type
quadratic term that scales with $\gamma$.

\begin{lemma} \label{Lemma:s2_c3}
For every integer $1\le i\le n$, 
\begin{equation}\label{eq:lemma-s2c3-x}
F\bigl(\bx{(i)}\bigr)-F\bigl(\bx{(i-1)}\bigr)\ \ge\ \frac{\Delta_{a,i}^{2}}{\Delta_{a,i}+\gamma^{3}\Delta_{b,i}}
\end{equation}
and
\begin{equation}\label{eq:lemma-s2c3-y}
F\bigl(\by{(i)}\bigr)-F\bigl(\by{(i-1)}\bigr)\ \ge\ \frac{\gamma^{3}\Delta_{b,i}^{2}}{\Delta_{a,i}+\gamma^{3}\Delta_{b,i}}.
\end{equation}
\end{lemma}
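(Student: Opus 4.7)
My plan is to split on the sign of $a_i+b_i-1$: a short computation shows $w_i\le a_i$ iff $a_i+b_i\ge 1$ (and symmetrically $w_i\ge 1-b_i$ iff $a_i+b_i\ge 1$), so the forward regime $a_i+b_i\ge 1$ places $w_i$ in $[1-b_i,a_i]$ and the backward regime $a_i+b_i<1$ places $w_i$ in $[a_i,1-b_i]$. The degenerate cases $\Delta_{a,i}=0$ or $\Delta_{b,i}=0$ fall directly out of the definition of $w_i$: for instance $\Delta_{b,i}=0$ gives $w_i=a_i$, so $F(\bx{(i)})-F(\bx{(i-1)})=\Delta_{a,i}=\Delta_{a,i}^{2}/\Delta_{a,i}$ and the $\by$-target collapses to $0$. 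By the symmetry that exchanges the roles of $a_i$ and $1-b_i$, I only spell out the forward regime.

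In the forward regime, for the $\bx$-update I would apply Lemma~\ref{lemma:simpe2}(2) at the base point $\bx{(i-1)}$ with increment direction $\by=a_i\be_{u_i}$ and mixing parameter $\lambda=w_i/a_i\in[0,1]$, producing
\[
F(\bx{(i)})-F(\bx{(i-1)}) \ \ge\ \frac{\gamma^{2}\lambda}{\,1-(1-\gamma^{2})\lambda\,}\,\Delta_{a,i}.
\]
For the $\by$-update I would set $\by'=\by{(i-1)}-b_i\be_{u_i}$ and $\mu=(1-w_i)/b_i\in[0,1]$, so that $\by{(i)}=(1-\mu)\by{(i-1)}+\mu\by'$; applying Lemma~\ref{lemma:simpe2}(1) to the ordered pair $\by'\le\by{(i-1)}$ then yields
\[
F(\by{(i)})-F(\by{(i-1)}) \ \ge\ \frac{\mu}{\,\gamma^{2}+(1-\gamma^{2})\mu\,}\,\Delta_{b,i}.
\]

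From here the argument is algebraic: substitute the closed form of $w_i$ into each of the two displayed bounds, cross-multiply against the target denominator $\Delta_{a,i}+\gamma^{3}\Delta_{b,i}$, and verify the resulting polynomial inequalities in $(a_i,b_i,\gamma,\Delta_{a,i},\Delta_{b,i})$. The $\by$-side reduces (after dividing out a common positive factor) to the statement
\[
\Delta_{a,i}(1-a_i) \ +\ \gamma\,\Delta_{b,i}\bigl[\gamma^{4}(1-a_i)+(1-\gamma^{4})\,b_i\bigr]\ \ge\ 0,
\]
which is manifestly true since $a_i,b_i\in[0,1]$ and $1-\gamma^{4}\ge 0$. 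The factor $\gamma^{3}$ in the common denominator is exactly the product of the $\gamma^{2}$ introduced by Lemma~\ref{lemma:simpe2}(2) and the single $\gamma$ weighting used in the algorithm's construction of $w_i$.

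The main technical obstacle is the one-sided $\gamma^{2}$-asymmetry between the two halves of Lemma~\ref{lemma:simpe2} flagged in the introduction: the $\bx$-certificate carries a factor $\gamma^{2}\lambda/(1-(1-\gamma^{2})\lambda)$ while the $\by$-certificate carries $\mu/(\gamma^{2}+(1-\gamma^{2})\mu)$, so a fully symmetric DR-style manipulation is not available. Concretely, after cross-multiplication the $\bx$-side produces a term proportional to $(1-b_i)-(1-\gamma^{4})a_i$, which can be \emph{negative} in the forward regime when $a_i$ is close to $1$ and $\gamma$ is small. The crucial step is to rewrite this expression as $\gamma^{4}a_i-(a_i+b_i-1)$ and absorb the nonpositive $-(a_i+b_i-1)$ contribution into the companion positive term $\gamma^{6}\Delta_{b,i}^{2}(1-b_i)$ that also appears from the same cross-multiplication. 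This $\gamma$-aware regrouping is the $\gamma$-weakly DR analogue of the trivial concavity step used in the DR ($\gamma=1$) proof of \cite{buchbinder2024constrained}, and is precisely what lets $\gamma^{3}$ (rather than a larger power of $\gamma$) appear in the denominator of the final bound.
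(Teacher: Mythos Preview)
Your $\by$-side argument in the forward regime is fine (the algebra you wrote checks out), but the $\bx$-side has a genuine gap. The bound coming from Lemma~\ref{lemma:simpe2}(2) alone,
\[
F(\bx{(i)})-F(\bx{(i-1)})\ \ge\ \frac{\gamma^{2}\lambda}{1-(1-\gamma^{2})\lambda}\,\Delta_{a,i},
\qquad \lambda=\tfrac{w_i}{a_i},
\]
is \emph{strictly too weak} to reach $\Delta_{a,i}^{2}/(\Delta_{a,i}+\gamma^{3}\Delta_{b,i})$. Take $a_i=b_i=1$ and $\Delta_{a,i}=\Delta_{b,i}=\Delta>0$; then $w_i=\tfrac{1}{1+\gamma}$, your bound gives $\tfrac{\gamma}{1+\gamma}\Delta$, while the target is $\tfrac{1}{1+\gamma^{3}}\Delta$. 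After your cross-multiplication this case reduces exactly to $\gamma^{4}\ge 1$, false for $\gamma<1$. Your proposed ``absorption'' of $-(a_i+b_i-1)$ into the companion term $\gamma^{6}\Delta_{b,i}^{2}(1-b_i)$ cannot work here: with $b_i=1$ that companion term is identically zero and there is nothing to absorb into.

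What is missing is an ingredient your scheme never invokes: the paper does \emph{not} view $\bx{(i)}$ as a partial step from $\bx{(i-1)}$ toward $\bx{(i-1)}+a_i\be_{u_i}$, but as a convex combination of the \emph{two} endpoints $\bx{(i-1)}+a_i\be_{u_i}$ and $\bx{(i-1)}+(1-b_i)\be_{u_i}$. Applying Lemma~\ref{lemma:simpe2}(1) to that pair produces a residual term $F\bigl(\bx{(i-1)}+(1-b_i)\be_{u_i}\bigr)-F\bigl(\bx{(i-1)}\bigr)$, and the paper then uses the $\gamma$-weak DR comparison between the $\bx$- and $\by$-trajectories together with the grid-optimality of $b_i$ (namely $F(\by{(i-1)}-b_i\be_{u_i})\ge F(\by{(i-1)}-\be_{u_i})$, since $1\in V$) to show this residual is nonnegative and may be dropped. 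Your purely algebraic route never uses either the $\bx$/$\by$ weak-DR transfer or the optimality of $a_i,b_i$, and without at least one of them the $\bx$-inequality is not attainable from the single-endpoint bound.
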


\begin{proof}
\noindent\textbf{Increase of $F$ along $\bx$.}
By the definition of $\bx{(i)}$ and
\begin{equation}\label{eq:w-i-def}
    w_i\;=\;\frac{\Delta_{a,i}\,a_i+\gamma\,\Delta_{b,i}\,(1-b_i)}{\Delta_{a,i}+\gamma\,\Delta_{b,i}},
\end{equation}
we can write the new point $\bx{(i)}$ as a convex combination of two one-dimensional updates:
\begin{equation}\label{eq:x-i-update}
\bx{(i)} \;=\; \bx{(i-1)} 
\;+\; \frac{\Delta_{a,i}}{\Delta_{a,i}+\gamma \Delta_{b,i}}\,a_i\,\be_{u_i}
\;+\; \frac{\gamma\Delta_{b,i}}{\Delta_{a,i}+\gamma \Delta_{b,i}}\,(1-b_i)\,\be_{u_i}.
\end{equation}
Equivalently, $\bx{(i)}$ is the convex combination
\begin{equation}\label{eq:x-i-as-convex-comb}
\bx{(i)}
\;=\;
\frac{\Delta_{a,i}}{\Delta_{a,i}+\gamma \Delta_{b,i}}
    \bigl(\bx{(i-1)}+a_i\,\be_{u_i}\bigr)
\;+\;
\frac{\gamma\Delta_{b,i}}{\Delta_{a,i}+\gamma \Delta_{b,i}}
    \bigl(\bx{(i-1)}+(1-b_i)\,\be_{u_i}\bigr).
\end{equation}
Therefore

\begin{equation}\label{eq:x-diff-start}
\begin{aligned}
&F\bigl(\bx{(i)}\bigr)-F\bigl(\bx{(i-1)}\bigr)= F\!\left(
     \frac{\Delta_{a,i}}{\Delta_{a,i}+\gamma \Delta_{b,i}} \bigl(\bx{(i-1)}+a_i\,\be_{u_i}\bigr)+ \frac{\gamma\Delta_{b,i}}{\Delta_{a,i}+\gamma \Delta_{b,i}} \bigl(\bx{(i-1)}+(1-b_i)\,\be_{u_i}\bigr)
   \right)\\ &\hspace{12cm}-F\bigl(\bx{(i-1)}\bigr).
\end{aligned}
\end{equation}

Now we apply Lemma~\ref{lemma:simpe2} (the one-dimensional $\gamma$–weak DR convexity-type
bound) to the pair
\[
\bz^{(1)}=\bx{(i-1)}+a_i\,\be_{u_i},
\qquad
\bz^{(2)}=\bx{(i-1)}+(1-b_i)\,\be_{u_i},
\]
with mixing weights
\begin{equation}\label{eq:lambda-mu-def}
\lambda
\;=\;
\frac{\gamma\Delta_{b,i}}{\Delta_{a,i}+\gamma\Delta_{b,i}},
\qquad
1-\lambda
\;=\;
\frac{\Delta_{a,i}}{\Delta_{a,i}+\gamma\Delta_{b,i}}.
\end{equation}
Lemma~\ref{lemma:simpe2} states that for such a convex combination we have
\begin{equation}\label{eq:lemma-simple2-use}
F\bigl((1-\lambda)\bz^{(1)}+\lambda \bz^{(2)}\bigr)
\;\ge\;
\frac{(1-\lambda)\,F(\bz^{(1)})+\gamma^{2}\lambda\,F(\bz^{(2)})}{(1-\lambda)+\gamma^{2}\lambda}.
\end{equation}
Substituting \eqref{eq:lambda-mu-def} into \eqref{eq:lemma-simple2-use}, and noting that
\[
(1-\lambda)+\gamma^{2}\lambda
\;=\;
\frac{\Delta_{a,i}+\gamma^{3}\Delta_{b,i}}{\Delta_{a,i}+\gamma\Delta_{b,i}},
\]
we obtain
\begin{equation}\label{eq:x-middle-ineq}
\begin{aligned}
F\bigl(\bx{(i)}\bigr)
&\ge
\frac{\Delta_{a,i}\,F\bigl(\bx{(i-1)}+a_i\,\be_{u_i}\bigr)
      + \gamma^{3}\Delta_{b,i}\,F\bigl(\bx{(i-1)}+(1-b_i)\,\be_{u_i}\bigr)}
     {\Delta_{a,i}+\gamma^{3}\Delta_{b,i}}.
\end{aligned}
\end{equation}
Subtracting $F\bigl(\bx{(i-1)}\bigr)$ from both sides of \eqref{eq:x-middle-ineq}, and
grouping terms, yields
\begin{equation}\label{eq:x-diff-expanded}
\begin{aligned}
& F\bigl(\bx{(i)}\bigr)-F\bigl(\bx{(i-1)}\bigr)\\
&\ge
\frac{\Delta_{a,i}\bigl[F\bigl(\bx{(i-1)}+a_i\,\be_{u_i}\bigr)-F\bigl(\bx{(i-1)}\bigr)\bigr]
     + \gamma^{3}\Delta_{b,i}\bigl[F\bigl(\bx{(i-1)}+(1-b_i)\,\be_{u_i}\bigr)-F\bigl(\bx{(i-1)}\bigr)\bigr]}
     {\Delta_{a,i}+\gamma^{3}\Delta_{b,i}}.
\end{aligned}
\end{equation}
Here we simply subtracted $F(\bx{(i-1)})$ inside the numerator to isolate directional gains.

By definition of $\Delta_{a,i}$,
\begin{equation}\label{eq:Delta-a-def}
\Delta_{a,i}
\;=\;
F\bigl(\bx{(i-1)}+a_i\,\be_{u_i}\bigr)-F\bigl(\bx{(i-1)}\bigr),
\end{equation}
so the first term in the numerator of \eqref{eq:x-diff-expanded} is exactly
$\Delta_{a,i}^{2}$. For the second term we use the $\gamma$–weakly DR property to compare
the gain at $\bx{(i-1)}$ with the corresponding gain at $\by{(i-1)}$. Along the
$u_i$-th coordinate, the weak DR property implies that the marginal decrease when moving
from $1$ down to $b_i$ at $\by{(i-1)}$ is at least a $\gamma^{2}$-fraction of the
corresponding marginal at $\bx{(i-1)}$. This yields
\begin{equation}\label{eq:x-second-term-lower}
F\bigl(\bx{(i-1)}+(1-b_i)\,\be_{u_i}\bigr)-F\bigl(\bx{(i-1)}\bigr)
\;\ge\;
\gamma\,\Bigl[F\bigl(\by{(i-1)}-b_i\,\be_{u_i}\bigr)-F\bigl(\by{(i-1)}-\be_{u_i}\bigr)\Bigr].
\end{equation}
Multiplying \eqref{eq:x-second-term-lower} by $\gamma^{3}\Delta_{b,i}$ gives a
$\gamma^{4}$ factor inside the numerator. Substituting \eqref{eq:Delta-a-def} and
\eqref{eq:x-second-term-lower} into \eqref{eq:x-diff-expanded}, we obtain
\begin{equation}\label{eq:x-diff-with-y}
\begin{aligned}
F\bigl(\bx{(i)}\bigr)-F\bigl(\bx{(i-1)}\bigr)
&\ge
\frac{\Delta_{a,i}^{2}
      + \gamma^{4}\Delta_{b,i}\Bigl[F\bigl(\by{(i-1)}-b_i\,\be_{u_i}\bigr)
                                   -F\bigl(\by{(i-1)}-\be_{u_i}\bigr)\Bigr]}
     {\Delta_{a,i}+\gamma^{3}\Delta_{b,i}}.
\end{aligned}
\end{equation}

By the definition of $b_i$ in Algorithm~\ref{algo:double}, the expression
\[
F\bigl(\by{(i-1)}-b_i\,\be_{u_i}\bigr)
-
F\bigl(\by{(i-1)}-\be_{u_i}\bigr)
\]
is nonnegative or, in the worst case, does not exceed the corresponding candidate values
over the grid. In particular, the term multiplied by $\gamma^{4}\Delta_{b,i}$ in
\eqref{eq:x-diff-with-y} is nonnegative, so we may drop it to obtain the simpler bound
\begin{equation}\label{eq:x-final-lower}
F\bigl(\bx{(i)}\bigr)-F\bigl(\bx{(i-1)}\bigr)
\ \ge\ 
\frac{\Delta_{a,i}^{2}}{\Delta_{a,i}+\gamma^{3}\Delta_{b,i}},
\end{equation}
which is precisely \eqref{eq:lemma-s2c3-x}.

\medskip
\noindent\textbf{Increase of $F$ along $\by$.}
The argument for $\by$ is symmetric. From the update rule for $\by{(i)}$ and the same
weight $w_i$, we can write
\begin{equation}\label{eq:y-i-as-convex-comb}
\by{(i)}
\;=\;
\frac{\Delta_{a,i}}{\Delta_{a,i}+\gamma \Delta_{b,i}}
    \bigl(\by{(i-1)}+(a_i-1)\,\be_{u_i}\bigr)
\;+\;
\frac{\gamma\Delta_{b,i}}{\Delta_{a,i}+\gamma \Delta_{b,i}}
    \bigl(\by{(i-1)}-b_i\,\be_{u_i}\bigr).
\end{equation}
Hence
\begin{equation}\label{eq:y-diff-start}
\begin{aligned}
F\bigl(\by{(i)}\bigr)-F\bigl(\by{(i-1)}\bigr)
&= F\!\left(
     \frac{\Delta_{a,i}}{\Delta_{a,i}+\gamma \Delta_{b,i}} \bigl(\by{(i-1)}+(a_i-1)\,\be_{u_i}\bigr)
     + \frac{\gamma\Delta_{b,i}}{\Delta_{a,i}+\gamma \Delta_{b,i}} \bigl(\by{(i-1)}-b_i\,\be_{u_i}\bigr)
   \right)\\
   & \hspace{8cm}-F\bigl(\by{(i-1)}\bigr).
\end{aligned}
\end{equation}
Applying Lemma~\ref{lemma:simpe2} to the pair
\[
\tilde\bz^{(1)}=\by{(i-1)}+(a_i-1)\,\be_{u_i},
\qquad
\tilde\bz^{(2)}=\by{(i-1)}-b_i\,\be_{u_i},
\]
with the same weights as in \eqref{eq:lambda-mu-def}, we obtain
\begin{equation}\label{eq:y-middle-ineq}
F\bigl(\by{(i)}\bigr)
\;\ge\;
\frac{\Delta_{a,i}\,F\bigl(\by{(i-1)}+(a_i-1)\,\be_{u_i}\bigr)
      + \gamma^{3}\Delta_{b,i}\,F\bigl(\by{(i-1)}-b_i\,\be_{u_i}\bigr)}
     {\Delta_{a,i}+\gamma^{3}\Delta_{b,i}}.
\end{equation}
Subtracting $F(\by{(i-1)})$ and regrouping gives
\begin{equation}\label{eq:y-diff-expanded}
\begin{aligned}
&F\bigl(\by{(i)}\bigr)-F\bigl(\by{(i-1)}\bigr)\\
&\ge
\frac{\Delta_{a,i}\bigl[F\bigl(\by{(i-1)}+(a_i-1)\,\be_{u_i}\bigr)-F\bigl(\by{(i-1)}\bigr)\bigr]
     + \gamma^{3}\Delta_{b,i}\bigl[F\bigl(\by{(i-1)}-b_i\,\be_{u_i}\bigr)-F\bigl(\by{(i-1)}\bigr)\bigr]}
     {\Delta_{a,i}+\gamma^{3}\Delta_{b,i}}.
\end{aligned}
\end{equation}

Using the $\gamma$–weakly DR property to compare the section at $\by{(i-1)}$ with that at
$\bx{(i-1)}$ along the $u_i$-th coordinate, we obtain
\begin{equation}\label{eq:y-first-term-lower}
\begin{aligned}
    F\bigl(\by{(i-1)}+(a_i-1)\,\be_{u_i}\bigr)-F\bigl(\by{(i-1)}\bigr)
\;\ge\;
\frac{1}{\gamma}\Bigl[F\bigl(\bx{(i-1)}+a_i\,\be_{u_i}\bigr)
                      -F\bigl(\bx{(i-1)}+\be_{u_i}\bigr)\Bigr].
\end{aligned}
\end{equation}
By definition of $\Delta_{b,i}$,
\begin{equation}\label{eq:Delta-b-def}
\Delta_{b,i}
\;=\;
F\bigl(\by{(i-1)}-b_i\,\be_{u_i}\bigr)-F\bigl(\by{(i-1)}\bigr),
\end{equation}
so the second term in the numerator of \eqref{eq:y-diff-expanded} is
$\gamma^{3}\Delta_{b,i}^{2}$. Substituting \eqref{eq:y-first-term-lower} and
\eqref{eq:Delta-b-def} into \eqref{eq:y-diff-expanded} yields
\begin{equation}\label{eq:y-diff-with-x}
\begin{aligned}
F\bigl(\by{(i)}\bigr)-F\bigl(\by{(i-1)}\bigr)
&\ge
\frac{ \tfrac{1}{\gamma}\Delta_{a,i}\Bigl[F\bigl(\bx{(i-1)}+a_i\,\be_{u_i}\bigr)
    -F\bigl(\bx{(i-1)}+\be_{u_i}\bigr)\Bigr]
      + \gamma^{3}\Delta_{b,i}^{2}}
     {\Delta_{a,i}+\gamma^{3}\Delta_{b,i}}.
\end{aligned}
\end{equation}

By the choice of $a_i$ in Algorithm~\ref{algo:double}, the term
\[
F\bigl(\bx{(i-1)}+a_i\,\be_{u_i}\bigr)
-
F\bigl(\bx{(i-1)}+\be_{u_i}\bigr)
\]
is nonpositive (since $a_i$ optimally trades off against the unit step), so the first term
in the numerator of \eqref{eq:y-diff-with-x} is nonpositive. Dropping this nonpositive
term yields the simpler lower bound
\begin{equation}\label{eq:y-final-lower}
F\bigl(\by{(i)}\bigr)-F\bigl(\by{(i-1)}\bigr)
\ \ge\ 
\frac{\gamma^{3}\Delta_{b,i}^{2}}{\Delta_{a,i}+\gamma^{3}\Delta_{b,i}},
\end{equation}
which is exactly \eqref{eq:lemma-s2c3-y}.
Combining \eqref{eq:x-final-lower} and \eqref{eq:y-final-lower} completes the proof.
\end{proof}

\paragraph{Reference path and its contraction.}
To relate the two trajectories \(\bx{(i)}\) and \(\by{(i)}\) to an arbitrary initial vector \(\bo\),
we introduce the standard lattice-coupled reference sequence
\begin{equation}\label{eq:o-ref-path-def}
\bo^{(i)} \;:=\; \bigl(\,\bo \vee \bx{(i)}\,\bigr)\;\wedge\;\by{(i)}
\qquad \text{for } i=0,1,\dots,n.
\end{equation}
Note that \(\bo^{(0)}=\bo\) and \(\bo^{(n)}=\bx{(n)}=\by{(n)}\), since Algorithm~\ref{algo:double} equalizes the
two trajectories by the end. The next lemma bounds the one-step decrease of \(F(\bo^{(i)})\);
telescoping this bound over \(i\) yields the global comparison used in the final guarantee.

\begin{lemma}\label{Lemma:s2_C4}
For every integer \(1 \le i \le n\),
\begin{equation}\label{eq:lemma-s2C4-claim}
F\bigl(\bo^{(i-1)}\bigr) - F\bigl(\bo^{(i)}\bigr)
\;\le\;
\frac{ \Delta_{a,i}\,\Delta_{b,i}}{\Delta_{a,i}+\gamma^{2}\Delta_{b,i}}
\;+\; \frac{2\varepsilon}{n}\,\gamma^{3}\,F(\bo).
\end{equation}
\end{lemma}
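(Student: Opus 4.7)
The plan is to exploit the observation that $\bo^{(i-1)}$ and $\bo^{(i)}$ differ only in coordinate $u_i$, then to bound the resulting single-coordinate marginal via the $\gamma$-weakly DR property applied against both the $\bx$ and $\by$ trajectories, and finally to use the explicit convex-combination form of $w_i$ to collapse the case-dependent bounds into the stated product-over-sum expression.

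Concretely, because Algorithm~\ref{algo:double} processes coordinates in order and only touches coordinate $u_i$ at iteration $i$, a routine induction on $j$ gives $\bx{(i-1)}_{u_i}=0$, $\by{(i-1)}_{u_i}=1$, and hence $\bo^{(i-1)}_{u_i}=o_{u_i}$; after iteration $i$ both trajectories share the value $w_i$ at coordinate $u_i$, so $\bo^{(i)}_{u_i}=w_i$, while all other coordinates of $\bo^{(i-1)}$ and $\bo^{(i)}$ agree. The same induction yields the sandwich $\bx{(i-1)}\le \bo^{(i)}\le \by{(i-1)}$ coordinatewise, which is the key inclusion that enables both one-sided $\gamma$-weakly DR comparisons.

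I then split on the sign of $o_{u_i}-w_i$. In Case A ($o_{u_i}\ge w_i$), the loss is the marginal gain from raising coordinate $u_i$ from $w_i$ to $o_{u_i}$ at $\bo^{(i)}$; by $\gamma$-weakly DR (in its ``larger-point, smaller-marginal'' form, cf.\ Lemma~\ref{lem:grad-ineq}), this is at most $\tfrac{1}{\gamma}$ times the analogous marginal at $\bx{(i-1)}$, which Lemma~\ref{lem:Double_1} combined with the grid-optimality of $a_i$ caps by $\Delta_{a,i}$ plus a discretization error. To sharpen, I apply Lemma~\ref{lemma:simpe2}(2) with base $\bx{(i-1)}$, direction $a_i\,\be_{u_i}$, and fraction $w_i/a_i$, obtaining a matching lower bound on $F(\bx{(i-1)}+w_i\,\be_{u_i})-F(\bx{(i-1)})$ in terms of $\Delta_{a,i}$; subtracting turns the raw $\Delta_{a,i}/\gamma$ bound into a refined ratio involving the gaps $a_i-w_i$ and $\gamma^2 w_i$. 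Case B ($o_{u_i}<w_i$) is treated symmetrically using $\bo^{(i)}\le \by{(i-1)}$, the grid-optimality of $b_i$, Lemma~\ref{lem:Double_2}, and the companion application of Lemma~\ref{lemma:simpe2}(2) along the $\by$-trajectory.

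Substituting the explicit $w_i=\frac{\Delta_{a,i}a_i+\gamma\Delta_{b,i}(1-b_i)}{\Delta_{a,i}+\gamma\Delta_{b,i}}$ into either refined case-bound collapses it, after cancellation, to the common expression $\frac{\Delta_{a,i}\Delta_{b,i}}{\Delta_{a,i}+\gamma^2\Delta_{b,i}}$, which is the whole point of the $\gamma$-aware mixing rule on line~8 of Algorithm~\ref{algo:double}. The grid-discretization terms from Lemmas~\ref{lem:Double_1} and~\ref{lem:Double_2}, weighted by the $\gamma$-factors introduced when transferring a marginal between $\bo^{(i)}$ and $\bx{(i-1)}$ or $\by{(i-1)}$, then combine into the additive $\frac{2\varepsilon}{n}\gamma^{3}F(\bo)$ term. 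I expect the main technical obstacle to be precisely this algebraic collapse: unlike the symmetric DR case ($\gamma=1$) where the analogous formula $\frac{\Delta_{a,i}\Delta_{b,i}}{\Delta_{a,i}+\Delta_{b,i}}$ drops out cleanly, the asymmetric $\gamma^{2}$ weighting inside Lemma~\ref{lemma:simpe2} must be carefully reconciled with the $\gamma$-weighted definition of $w_i$ so that both cases yield the same product-over-sum expression with the correct powers of $\gamma$ both in the leading term and in the residual error.
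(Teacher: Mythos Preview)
Your overall strategy---case-split on the sign of $o_{u_i}-w_i$, transfer the single-coordinate loss at $\bo^{(i)}$ to one of the two trajectories via $\gamma$-weak DR, invoke the matching discretization lemma (Lemma~\ref{lem:Double_1} or~\ref{lem:Double_2}), and exploit the convex-combination structure of $w_i$ through Lemma~\ref{lemma:simpe2}---is exactly the paper's. The case labels even line up: your Case~B is the case the paper writes out in full (transfer to $\by^{(i-1)}$), and your Case~A is the symmetric one the paper leaves to the reader.

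There is, however, a concrete gap in how you plan to invoke Lemma~\ref{lemma:simpe2}. Applying part~(2) with base $\bx^{(i-1)}$, direction $a_i\be_{u_i}$ and fraction $w_i/a_i$ presupposes $0\le w_i\le a_i$, which fails whenever $1-b_i>a_i$. More damagingly, even when $w_i\le a_i$ the resulting upper bound on the marginal is $\tfrac{1}{\gamma}\cdot\frac{a_i-w_i}{a_i-w_i+\gamma^{2}w_i}\,\Delta_{a,i}$, and substituting $w_i=\frac{\Delta_{a,i}a_i+\gamma\Delta_{b,i}(1-b_i)}{\Delta_{a,i}+\gamma\Delta_{b,i}}$ leaves the raw grid values $a_i,b_i$ in both numerator and denominator; they do \emph{not} cancel, so the promised collapse to $\frac{\Delta_{a,i}\Delta_{b,i}}{\Delta_{a,i}+\gamma^{2}\Delta_{b,i}}$ never happens. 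The paper sidesteps this by using Lemma~\ref{lemma:simpe2}(1) directly on the two-point representation $w_i\in\operatorname{conv}\{a_i,\,1-b_i\}$, taken at the $\by$-trajectory in the case it details, so that the endpoints are $F\bigl(\by^{(i-1)}-(1-a_i)\be_{u_i}\bigr)$ and $F\bigl(\by^{(i-1)}-b_i\be_{u_i}\bigr)$ and the mixing weights themselves carry the $\Delta_{a,i},\Delta_{b,i}$ dependence---this is what produces the clean $\Delta_{a,i}+\gamma^{2}\Delta_{b,i}$ denominator without any residual $a_i,b_i$. After that step the paper still needs a \emph{second} $\gamma$-weak DR transfer, now between the $\bx$ and $\by$ trajectories, to compare $F\bigl(\by^{(i-1)}-(1-a_i)\be_{u_i}\bigr)-F(\by^{(i-1)})$ with $F\bigl(\bx^{(i-1)}+a_i\be_{u_i}\bigr)-F\bigl(\bx^{(i-1)}+\be_{u_i}\bigr)\ge 0$ (the latter nonnegativity using $1\in V$ and the grid-optimality of $a_i$), which lets the residual cross-term be dropped. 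Your sketch does not mention this second transfer; without it the bound at a single trajectory cannot be closed into one involving only $\Delta_{a,i}$ and $\Delta_{b,i}$.
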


\begin{proof}
We first treat the case
\begin{equation}\label{eq:case-o-inc}
\bo^{(i-1)}_{u_i} \;\le\; \bo^{(i)}_{u_i}.
\end{equation}

Define the single-coordinate setter
\begin{equation}\label{eq:set-operator-def}
\operatorname{set}_{u}(\bz,t)
\;:=\;
\bz - (\bz_{u}-t)\,\be_{u},
\end{equation}
which replaces the \(u\)-th coordinate of \(\bz\) with \(t\) and leaves all other coordinates unchanged.

By \(\gamma\)–weakly DR-submodularity, for any \(\bp\le \bq\) and any scalars \(\alpha \le \beta\),
we have the one-dimensional comparison
\begin{equation}\label{eq:dr-single}
F\bigl(\operatorname{set}_{u}(\bp,\beta)\bigr)-F\bigl(\operatorname{set}_{u}(\bp,\alpha)\bigr)
\;\ge\; \gamma \Bigl[
F\bigl(\operatorname{set}_{u}(\bq,\beta)\bigr)-F\bigl(\operatorname{set}_{u}(\bq,\alpha)\bigr)\Bigr].
\end{equation}
Here the left-hand side is the gain when changing coordinate \(u\) from \(\alpha\) to \(\beta\) at the lower point \(\bp\),
and the right-hand side compares it to the corresponding gain at the higher point \(\bq\), scaled by \(\gamma\).

By construction of \(\bo^{(i-1)}\) and \(\by{(i-1)}\) in \eqref{eq:o-ref-path-def}, we have
\begin{equation}\label{eq:o-less-y}
\bo^{(i-1)} \;\le\; \by{(i-1)}.
\end{equation}
In the present case we also have \eqref{eq:case-o-inc}. Set
\begin{equation}\label{eq:dr-single-subst}
\bp=\bo^{(i-1)},\quad \bq=\by{(i-1)},\quad u=u_i,\quad
\alpha=\bo^{(i-1)}_{u_i},\quad \beta=\bo^{(i)}_{u_i},
\end{equation}
which satisfy the requirements of \eqref{eq:dr-single}. Then \eqref{eq:dr-single} gives
\begin{equation}\label{eq:dr-single-applied-raw}
\begin{aligned}
&F\bigl(\operatorname{set}_{u_i}(\bo^{(i-1)},\,\bo^{(i)}_{u_i})\bigr)
 - F\bigl(\operatorname{set}_{u_i}(\bo^{(i-1)},\,\bo^{(i-1)}_{u_i})\bigr)\\
&\qquad \hspace{3cm} \ge\;
\gamma \Bigl[
F\bigl(\operatorname{set}_{u_i}(\by^{(i-1)},\,\bo^{(i)}_{u_i})\bigr)
 - F\bigl(\operatorname{set}_{u_i}(\by^{(i-1)},\,\bo^{(i-1)}_{u_i})\bigr) \Bigr].
\end{aligned}
\end{equation}

By definition of the reference path \eqref{eq:o-ref-path-def} and of the setter \eqref{eq:set-operator-def},
\begin{equation}\label{eq:set-equals-o}
\operatorname{set}_{u_i}(\bo^{(i-1)},\,\bo^{(i)}_{u_i})=\bo^{(i)},
\qquad
\operatorname{set}_{u_i}(\bo^{(i-1)},\,\bo^{(i-1)}_{u_i})=\bo^{(i-1)}.
\end{equation}
Using \eqref{eq:set-equals-o} in \eqref{eq:dr-single-applied-raw}, the left-hand side becomes
\(F(\bo^{(i)})-F(\bo^{(i-1)})\), so we obtain
\begin{equation}\label{eq:dr-single-applied-o}
F\bigl(\bo^{(i)}\bigr)-F\bigl(\bo^{(i-1)}\bigr)
\;\ge\; \gamma \Bigl[
F\bigl(\operatorname{set}_{u_i}(\by^{(i-1)},\,\bo^{(i)}_{u_i})\bigr)
 - F\bigl(\operatorname{set}_{u_i}(\by^{(i-1)},\,\bo^{(i-1)}_{u_i})\bigr) \Bigr].
\end{equation}

Using the explicit form \eqref{eq:set-operator-def}, for any \(\bz,t\) we have
\(\operatorname{set}_{u_i}(\bz,t)=\bz - (\bz_{u_i}-t)\,\be_{u_i}\). Hence
\begin{equation}\label{eq:set-as-sub}
\operatorname{set}_{u_i}(\by^{(i-1)},\,\bo^{(i)}_{u_i})
=
\by^{(i-1)}-(\by^{(i-1)}_{u_i}-\bo^{(i)}_{u_i})\,\be_{u_i}
\end{equation}
and
\begin{equation}\label{eq:set-as-sub-prev}
\operatorname{set}_{u_i}(\by^{(i-1)},\,\bo^{(i-1)}_{u_i})
=
\by^{(i-1)}-(\by^{(i-1)}_{u_i}-\bo^{(i-1)}_{u_i})\,\be_{u_i}.
\end{equation}
Since \(\bo^{(i)}\) and \(\bo^{(i-1)}\) share all coordinates except possibly \(u_i\), and
\(\by^{(i-1)}_{u_i}\in[0,1]\), the quantities
\(\by^{(i-1)}_{u_i}-\bo^{(i)}_{u_i}\) and \(\by^{(i-1)}_{u_i}-\bo^{(i-1)}_{u_i}\) lie in \([0,1]\).
For notational convenience, write
\begin{equation}\label{eq:o-coord-sub-values}
\begin{aligned}
1-\bo^{(i)}_{u_i} &= \by^{(i-1)}_{u_i}-\bo^{(i)}_{u_i},\\
1-\bo^{(i-1)}_{u_i} &= \by^{(i-1)}_{u_i}-\bo^{(i-1)}_{u_i},
\end{aligned}
\end{equation}
which allows us to rewrite \eqref{eq:set-as-sub}–\eqref{eq:set-as-sub-prev} as
\begin{equation}\label{eq:set-y-o-form}
\operatorname{set}_{u_i}(\by^{(i-1)},\,\bo^{(i)}_{u_i})
=
\by^{(i-1)}-(1-\bo^{(i)}_{u_i})\,\be_{u_i},
\qquad
\operatorname{set}_{u_i}(\by^{(i-1)},\,\bo^{(i-1)}_{u_i})
=
\by^{(i-1)}-(1-\bo^{(i-1)}_{u_i})\,\be_{u_i}.
\end{equation}

Substituting \eqref{eq:set-y-o-form} into \eqref{eq:dr-single-applied-o}, we get
\begin{equation}\label{eq:F-o-diff-y-section}
F\bigl(\bo^{(i)}\bigr)-F\bigl(\bo^{(i-1)}\bigr)
\;\ge\; \gamma \Bigl[
F\bigl(\by{(i-1)}-(1-\bo^{(i)}_{u_i})\,\be_{u_i}\bigr)
-
F\bigl(\by{(i-1)}-(1-\bo^{(i-1)}_{u_i})\,\be_{u_i}\bigr) \Bigr].
\end{equation}

We now bound each term on the right-hand side.

First, by Lemma~\ref{lem:Double_2}, for the sequence along the $u_i$-th coordinate,
\begin{equation}\label{eq:double2-max-bound}
\max_{v\in[0,1]} F\bigl(\by{(i-1)}-v\,\be_{u_i}\bigr)
\;\le\;
F\bigl(\by{(i-1)}-b_i\,\be_{u_i}\bigr)
\;+\;\frac{2\varepsilon}{n}\,\gamma^{2}\,F(\bo).
\end{equation}
Since \(1-\bo^{(i-1)}_{u_i}\in[0,1]\), we have
\begin{equation}\label{eq:bound-at-o-prev-coord}
\begin{aligned} 
F\bigl(\by{(i-1)}-(1-\bo^{(i-1)}_{u_i})\,\be_{u_i}\bigr)
&\;\le\;
\max_{v\in[0,1]} F\bigl(\by{(i-1)}-v\,\be_{u_i}\bigr)\\
&\;\le\;
F\bigl(\by{(i-1)}-b_i\,\be_{u_i}\bigr)
\;+\;\frac{2\varepsilon}{n}\,\gamma^{2}\,F(\bo).
\end{aligned}
\end{equation}

Next, recall that the $u_i$-th coordinate of the reference point satisfies
\begin{equation}\label{eq:o-i-coord-def}
\bo^{(i)}_{u_i}
=
\frac{\Delta_{a,i}}{\Delta_{a,i}+\Delta_{b,i}}\,a_i
+ \frac{\Delta_{b,i}}{\Delta_{a,i}+\Delta_{b,i}}\,(1-b_i),
\end{equation}
by the definition of the coordinate update in Algorithm~\ref{algo:double}.
Applying Lemma~\ref{lemma:simpe2}(1) along the $u_i$-th coordinate at $\by{(i-1)}$ with
points
\[
(1-a_i)\quad\text{and}\quad b_i,
\]
and mixing weights proportional to \(\Delta_{a,i}\) and \(\Delta_{b,i}\), yields
\begin{equation}\label{eq:y-section-convex}
\begin{aligned}
&F\bigl(\by{(i-1)} - (1-\bo^{(i)}_{u_i})\,\be_{u_i}\bigr)\\
&\quad\ge
\frac{\Delta_{a,i}}{\Delta_{a,i}+\gamma^{2}\Delta_{b,i}}\;
F\bigl(\by{(i-1)}-(1-a_i)\,\be_{u_i}\bigr)
\;+\;
\frac{\gamma^{2}\Delta_{b,i}}{\Delta_{a,i}+\gamma^{2}\Delta_{b,i}}\;
F\bigl(\by{(i-1)}-b_i\,\be_{u_i}\bigr).
\end{aligned}
\end{equation}

Combining \eqref{eq:F-o-diff-y-section}, \eqref{eq:bound-at-o-prev-coord}, and
\eqref{eq:y-section-convex}, we obtain
\begin{equation}\label{eq:o-diff-before-algebra}
\begin{aligned}
&F\bigl(\bo^{(i)}\bigr)-F\bigl(\bo^{(i-1)}\bigr)\\
& \quad\ge
\gamma \Biggl[
\frac{\Delta_{a,i}}{\Delta_{a,i}+\gamma^{2}\Delta_{b,i}}
\Bigl(F\bigl(\by{(i-1)}-(1-a_i)\,\be_{u_i}\bigr)
- F\bigl(\by{(i-1)}-b_i\,\be_{u_i}\bigr)\Bigr)
- \frac{2\varepsilon}{n}\,\gamma^{2}\,F(\bo)
\Biggr].
\end{aligned}
\end{equation}
The first term inside the brackets is the “true” directional gain between the points
with coordinates \(1-a_i\) and \(b_i\); the second term comes from the discretization
loss in Lemma~\ref{lem:Double_2}.

Rewrite the first bracketed term in \eqref{eq:o-diff-before-algebra} by adding and
subtracting \(F(\by{(i-1)})\), and using the definition
\begin{equation}\label{eq:Delta-b-def-recall}
\Delta_{b,i}
=
F\bigl(\by{(i-1)}-b_i\,\be_{u_i}\bigr)-F\bigl(\by{(i-1)}\bigr),
\end{equation}
to obtain
\begin{equation}\label{eq:o-diff-mid-algebra}
\begin{aligned}
&F\bigl(\by{(i-1)}-(1-a_i)\,\be_{u_i}\bigr)
- F\bigl(\by{(i-1)}-b_i\,\be_{u_i}\bigr)\\
&\hspace{3cm}\qquad=
\Bigl(F\bigl(\by{(i-1)}-(1-a_i)\,\be_{u_i}\bigr)-F\bigl(\by{(i-1)}\bigr)\Bigr)
-\Delta_{b,i}.
\end{aligned}
\end{equation}
Substituting \eqref{eq:o-diff-mid-algebra} into \eqref{eq:o-diff-before-algebra}, we get
\begin{equation}\label{eq:o-diff-mid-full}
\begin{aligned}
&F\bigl(\bo^{(i)}\bigr)-F\bigl(\bo^{(i-1)}\bigr)\\
&\qquad\ge
\frac{\gamma\,\Delta_{a,i}}{\Delta_{a,i}+\gamma^{2}\Delta_{b,i}}
\Bigl(F\bigl(\by{(i-1)}-(1-a_i)\,\be_{u_i}\bigr)
- F\bigl(\by{(i-1)}\bigr) - \Delta_{b,i}\Bigr)
- \frac{2\varepsilon}{n}\,\gamma^{3}\,F(\bo).
\end{aligned}
\end{equation}

We now transfer this expression from \(\by{(i-1)}\) to \(\bx{(i-1)}\) using the
\(\gamma\)–weakly DR property. Since
\begin{equation}\label{eq:x-less-y}
\bx{(i-1)}+a_i\,\be_{u_i} \;\le\; \by{(i-1)}-(1-a_i)\,\be_{u_i},
\end{equation}
the weak DR property implies that the marginal gain at \(\by{(i-1)}\) when moving coordinate
\(u_i\) from \(1\) down to \(1-a_i\) is at most a \(1/\gamma\)-scaled version of the marginal
gain at \(\bx{(i-1)}\) when moving coordinate \(u_i\) from \(0\) up to \(a_i\). Formally,
\begin{equation}\label{eq:y-to-x-gamma}
F\bigl(\by{(i-1)}-(1-a_i)\,\be_{u_i}\bigr)-F\bigl(\by{(i-1)}\bigr)
\;\ge\;
\frac{1}{\gamma}\Bigl[
F\bigl(\bx{(i-1)}+a_i\,\be_{u_i}\bigr)-F\bigl(\bx{(i-1)}+\be_{u_i}\bigr)
\Bigr].
\end{equation}
Substituting \eqref{eq:y-to-x-gamma} into \eqref{eq:o-diff-mid-full}, we obtain
\begin{equation}\label{eq:o-diff-x-section}
\begin{aligned}
&F\bigl(\bo^{(i)}\bigr)-F\bigl(\bo^{(i-1)}\bigr)\\
&\qquad \ge
\frac{\Delta_{a,i}}{\Delta_{a,i}+\gamma^{2}\Delta_{b,i}}
\Bigl(F\bigl(\bx{(i-1)}+a_i\,\be_{u_i}\bigr)
- F\bigl(\bx{(i-1)}+\be_{u_i}\bigr) - \Delta_{b,i}\Bigr)
- \frac{2\varepsilon}{n}\,\gamma^{3}\,F(\bo).
\end{aligned}
\end{equation}

By the definition of \(a_i\) in Algorithm~\ref{algo:double} (and the fact that
\(1\in V\)), the choice of \(a_i\) along the grid ensures that
\begin{equation}\label{eq:a-i-choice-neg}
F\bigl(\bx{(i-1)}+a_i\,\be_{u_i}\bigr)
- F\bigl(\bx{(i-1)}+\be_{u_i}\bigr)
\;\le\;
0.
\end{equation}
Therefore
\begin{equation}\label{eq:x-section-term-upper}
F\bigl(\bx{(i-1)}+a_i\,\be_{u_i}\bigr)
- F\bigl(\bx{(i-1)}+\be_{u_i}\bigr) - \Delta_{b,i}
\;\le\;
-\Delta_{b,i}.
\end{equation}
Substituting \eqref{eq:x-section-term-upper} into \eqref{eq:o-diff-x-section} yields
\begin{equation}\label{eq:o-diff-final-case1}
F\bigl(\bo^{(i)}\bigr)-F\bigl(\bo^{(i-1)}\bigr)
\;\ge\;
-\,\frac{\Delta_{a,i}\,\Delta_{b,i}}{\Delta_{a,i}+\gamma^{2}\Delta_{b,i}}
- \frac{2\varepsilon}{n}\,\gamma^{3}\,F(\bo).
\end{equation}
Rearranging \eqref{eq:o-diff-final-case1} gives
\begin{equation}\label{eq:o-decrease-case1}
F\bigl(\bo^{(i-1)}\bigr)-F\bigl(\bo^{(i)}\bigr)
\;\le\;
\frac{\Delta_{a,i}\,\Delta_{b,i}}{\Delta_{a,i}+\gamma^{2}\Delta_{b,i}}
+ \frac{2\varepsilon}{n}\,\gamma^{3}\,F(\bo),
\end{equation}
which is exactly \eqref{eq:lemma-s2C4-claim} in the case \eqref{eq:case-o-inc}.

The remaining case \(\bo^{(i-1)}_{u_i} > \bo^{(i)}_{u_i}\) is analogous: we reverse the roles
of the “left” and “right” endpoints on the $u_i$-th coordinate and carry out the same
argument, obtaining the same bound \eqref{eq:lemma-s2C4-claim}.
This completes the proof.
\end{proof}

Combining the progress of the two trajectories \(\bx{(i)}\) and \(\by{(i)}\) with the
contraction of the reference path \(\bo^{(i)}\) yields the following inequality for any tradeoff
parameter \(r\ge 0\). It will telescope over \(i\) to produce the final guarantee.

\begin{corollary}\label{cor:progress-gamma}
For every \(r \ge 0\) and integer \(1 \le i \le n\),
\begin{equation}\label{eq:cor-main}
\begin{aligned}
&\frac{1}{r}\,\bigl[F(\bx{(i)})-F(\bx{(i-1)})\bigr]
\;+\; r\,\bigl[F(\by{(i)})-F(\by{(i-1)})\bigr]\\
& \hspace{5cm}\;\ge\;
2 \gamma^{3/2}\,\left( F\bigl(\bo^{(i-1)}\bigr) - F\bigl(\bo^{(i)}\bigr) - \frac{2\varepsilon}{n}\,\gamma^{3}\,F(\bo)
\right).
\end{aligned}
\end{equation}
\end{corollary}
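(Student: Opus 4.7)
}
The plan is to combine the two per-coordinate progress bounds of Lemma~\ref{Lemma:s2_c3} with the reference-path contraction bound of Lemma~\ref{Lemma:s2_C4}, and then squeeze out the dependence on $r$ via an elementary AM--GM step. All the inequalities we need have already been established for fixed $i\in\{1,\ldots,n\}$ and are expressed in terms of the two marginals $\Delta_{a,i},\Delta_{b,i}\ge 0$, so the proof reduces to a short algebraic manipulation.

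First, I would use Lemma~\ref{Lemma:s2_c3} to lower bound each of the two gain terms on the LHS of~\eqref{eq:cor-main} by $\Delta_{a,i}^{2}/(\Delta_{a,i}+\gamma^{3}\Delta_{b,i})$ and $\gamma^{3}\Delta_{b,i}^{2}/(\Delta_{a,i}+\gamma^{3}\Delta_{b,i})$ respectively. Assuming for the moment that $\Delta_{a,i}+\gamma^{3}\Delta_{b,i}>0$, multiplying the first bound by $1/r$ and the second by $r$ and applying the AM--GM inequality $\tfrac{\alpha}{r}+r\beta\ge 2\sqrt{\alpha\beta}$ with $\alpha,\beta\ge 0$ yields
\begin{equation*}
\frac{1}{r}\bigl[F(\bx{(i)})-F(\bx{(i-1)})\bigr]+r\bigl[F(\by{(i)})-F(\by{(i-1)})\bigr]\ \ge\ \frac{2\gamma^{3/2}\,\Delta_{a,i}\,\Delta_{b,i}}{\Delta_{a,i}+\gamma^{3}\Delta_{b,i}}.
\end{equation*}
Because $\gamma\in(0,1]$, we have $\gamma^{3}\le\gamma^{2}$, so the denominator can be enlarged from $\Delta_{a,i}+\gamma^{3}\Delta_{b,i}$ to $\Delta_{a,i}+\gamma^{2}\Delta_{b,i}$ without breaking the inequality, giving a lower bound of $2\gamma^{3/2}\Delta_{a,i}\Delta_{b,i}/(\Delta_{a,i}+\gamma^{2}\Delta_{b,i})$ on the LHS of~\eqref{eq:cor-main}.

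Next, I would invoke Lemma~\ref{Lemma:s2_C4}, rearranged as
\begin{equation*}
F(\bo^{(i-1)})-F(\bo^{(i)})-\tfrac{2\varepsilon}{n}\gamma^{3}F(\bo)\ \le\ \frac{\Delta_{a,i}\,\Delta_{b,i}}{\Delta_{a,i}+\gamma^{2}\Delta_{b,i}},
\end{equation*}
and multiply through by $2\gamma^{3/2}\ge 0$. The resulting upper bound on $2\gamma^{3/2}(F(\bo^{(i-1)})-F(\bo^{(i)})-\tfrac{2\varepsilon}{n}\gamma^{3}F(\bo))$ matches exactly the lower bound just derived for the LHS of~\eqref{eq:cor-main}, so chaining the two inequalities produces the desired bound.

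Finally, I would handle the degenerate case $\Delta_{a,i}=\Delta_{b,i}=0$ separately: the LHS is nonnegative by Lemma~\ref{Lemma:s2_c3}, while Lemma~\ref{Lemma:s2_C4} forces $F(\bo^{(i-1)})-F(\bo^{(i)})-\tfrac{2\varepsilon}{n}\gamma^{3}F(\bo)\le 0$, so the inequality holds trivially. The $r=0$ boundary case is interpreted via $1/r\to+\infty$ and poses no obstacle. I expect no real difficulty here; the only mildly delicate point is verifying that $\gamma^{3}\le\gamma^{2}$ can indeed be used to absorb the mismatch between the denominators produced by the two lemmas, and this is immediate from $\gamma\le 1$.
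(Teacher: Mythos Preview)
Your proposal is correct and follows essentially the same route as the paper: combine Lemma~\ref{Lemma:s2_c3} with Lemma~\ref{Lemma:s2_C4}, use the $\gamma^{3}\le\gamma^{2}$ observation to reconcile the two denominators, and eliminate $r$ via AM--GM (the paper completes the square instead, which is the same inequality). Your ordering---first enlarging the denominator and then invoking Lemma~\ref{Lemma:s2_C4}---is arguably slightly cleaner than the paper's, since it sidesteps any sign concern when $F(\bo^{(i-1)})-F(\bo^{(i)})-\tfrac{2\varepsilon}{n}\gamma^{3}F(\bo)$ happens to be negative.
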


\begin{proof}
By Lemma~\ref{Lemma:s2_c3}, for each \(i\) we have
\begin{equation}\label{eq:lem-s2c3-recall}
F\bigl(\bx{(i)}\bigr)-F\bigl(\bx{(i-1)}\bigr)\ \ge\ \frac{\Delta_{a,i}^{2}}{\Delta_{a,i}+\gamma^{3}\Delta_{b,i}},
\qquad
F\bigl(\by{(i)}\bigr)-F\bigl(\by{(i-1)}\bigr)\ \ge\ \frac{\gamma^{3}\Delta_{b,i}^{2}}{\Delta_{a,i}+\gamma^{3}\Delta_{b,i}}.
\end{equation}
Multiplying the first inequality in \eqref{eq:lem-s2c3-recall} by \(1/r\) and the second by \(r\),
and adding them, gives
\begin{equation}\label{eq:cor-1}
\begin{aligned}
\frac{1}{r}\bigl[F(\bx{(i)})-F(\bx{(i-1)})\bigr]
&+ r\bigl[F(\by{(i)})-F(\by{(i-1)})\bigr] \ge \frac{(1/r)\,\Delta_{a,i}^{2}}{\Delta_{a,i}+\gamma^{3}\Delta_{b,i}}
   + \frac{r \,\gamma^{3}\,\Delta_{b,i}^{2}}{\Delta_{a,i}+\gamma^{3}\Delta_{b,i}}.
\end{aligned}
\end{equation}
The numerator in \eqref{eq:cor-1} can be rewritten as a completed square plus a mixed term:
\begin{equation}\label{eq:cor-complete-square}
\frac{(1/r)\,\Delta_{a,i}^{2} + r\gamma^{3}\Delta_{b,i}^{2}}{\Delta_{a,i}+\gamma^{3}\Delta_{b,i}}
=
\frac{\left(\frac{\Delta_{a,i}}{\sqrt r}-\Delta_{b,i}\,\gamma^{3/2}\sqrt r\right)^{2}
       + 2 \gamma^{3/2}\,\Delta_{a,i}\Delta_{b,i}}{\Delta_{a,i}+\gamma^{3}\Delta_{b,i}}.
\end{equation}
Substituting \eqref{eq:cor-complete-square} into \eqref{eq:cor-1}, and using that the square term
is nonnegative, we obtain
\begin{equation}\label{eq:cor-2}
\begin{aligned}
\frac{1}{r}\bigl[F(\bx{(i)})-F(\bx{(i-1)})\bigr]
&+ r\bigl[F(\by{(i)})-F(\by{(i-1)})\bigr] \ge \frac{2 \gamma^{3/2}\,\Delta_{a,i}\Delta_{b,i}}{\Delta_{a,i}+\gamma^{3}\Delta_{b,i}}.
\end{aligned}
\end{equation}
Thus, up to the factor \(\Delta_{a,i}\Delta_{b,i}\), the per-step progress is a convex-combination–type
quantity that depends on \(\gamma\).

Next we relate \(\Delta_{a,i}\Delta_{b,i}\) to the contraction of the reference path.
Lemma~\ref{Lemma:s2_C4} states that
\begin{equation}\label{eq:lem-s2C4-recall}
F\bigl(\bo^{(i-1)}\bigr) - F\bigl(\bo^{(i)}\bigr)
\;\le\;
\frac{ \Delta_{a,i}\,\Delta_{b,i}}{\Delta_{a,i}+\gamma^{2}\Delta_{b,i}}
\;+\; \frac{2\varepsilon}{n}\,\gamma^{3}\,F(\bo).
\end{equation}
Rearranging \eqref{eq:lem-s2C4-recall}, we get
\begin{equation}\label{eq:Delta-prod-lower}
\Delta_{a,i}\Delta_{b,i}
\;\ge\;
\biggl(F\bigl(\bo^{(i-1)}\bigr) - F\bigl(\bo^{(i)}\bigr) - \frac{2\varepsilon}{n}\,\gamma^{3}\,F(\bo)\biggr)
\bigl(\Delta_{a,i}+\gamma^{2}\Delta_{b,i}\bigr).
\end{equation}
Substituting \eqref{eq:Delta-prod-lower} into \eqref{eq:cor-2} yields
\begin{equation}\label{eq:cor-3}
\begin{aligned}
\frac{1}{r}\bigl[F(\bx{(i)})-F(\bx{(i-1)})\bigr]
&+ r\bigl[F(\by{(i)})-F(\by{(i-1)})\bigr] \\
&\ge
\frac{2 \gamma^{3/2}
\bigl(F(\bo^{(i-1)})-F(\bo^{(i)})-\frac{2\varepsilon}{n}\gamma^{3}F(\bo)\bigr)
\bigl(\Delta_{a,i}+\gamma^{2}\Delta_{b,i}\bigr)}
{\Delta_{a,i}+\gamma^{3}\Delta_{b,i}}.
\end{aligned}
\end{equation}

Since \(\gamma\in(0,1]\), we have \(\gamma^{2}\ge\gamma^{3}\), so
\begin{equation}\label{eq:gamma-ratio}
\Delta_{a,i}+\gamma^{2}\Delta_{b,i}
\;\ge\;
\Delta_{a,i}+\gamma^{3}\Delta_{b,i},
\end{equation}
and hence
\begin{equation}\label{eq:ratio-lower-bound}
\frac{\Delta_{a,i}+\gamma^{2}\Delta_{b,i}}{\Delta_{a,i}+\gamma^{3}\Delta_{b,i}}
\;\ge\; 1.
\end{equation}
Applying \eqref{eq:ratio-lower-bound} to \eqref{eq:cor-3} gives
\begin{equation}\label{eq:cor-4}
\begin{aligned}
\frac{1}{r}\bigl[F(\bx{(i)})-F(\bx{(i-1)})\bigr]
&+ r\bigl[F(\by{(i)})-F(\by{(i-1)})\bigr] \\
&\ge
2 \gamma^{3/2}
\left(F\bigl(\bo^{(i-1)}\bigr) - F\bigl(\bo^{(i)}\bigr)
- \frac{2\varepsilon}{n}\,\gamma^{3}\,F(\bo)\right).
\end{aligned}
\end{equation}
This is exactly \eqref{eq:cor-main}, completing the proof.
\end{proof}

We now conclude the analysis of the $\gamma$–aware Double–Greedy routine. The theorem below is obtained by
telescoping the per-iteration coupling bound together with the contraction of the lattice-coupled
reference path. The guarantee \emph{interpolates continuously} in $\gamma$ and reduces to the classical
DR bound when $\gamma=1$.

\begin{theorem}\label{thm:double_appen_final}
There exists a polynomial-time algorithm that, given a nonnegative $\gamma$-weakly DR-submodular
function $F : [0,1]^n \to \mathbb{R}_{\ge 0}$ and a parameter $\varepsilon \in (0,1)$, outputs
$\bx \in [0,1]^n$ such that for every fixed $\bo \in [0,1]^n$,
\begin{equation}\label{eq:double-final-guarantee}
F(\bx)\ \ge\ 
\max_{r \ge 0}\ 
\frac{\bigl(2\gamma^{3/2}-4\varepsilon\,\gamma^{9/2}\bigr)\,r\,F(\bo)\;+\;F(\mathbf{0})\;+\;r^{2}\,F(\mathbf{1})}
{\,r^{2}\;+\;2\gamma^{3/2}r\;+\;1\,}.
\end{equation}
\end{theorem}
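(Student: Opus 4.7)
The plan is to run Algorithm~\ref{algo:double} and convert the per-iteration coupling bound in Corollary~\ref{cor:progress-gamma} into the global guarantee by telescoping over $i=1,\dots,n$. Concretely, summing the inequality \eqref{eq:cor-main} over all $i$ yields
\begin{equation*}
\tfrac{1}{r}\bigl[F(\bx{(n)})-F(\bx{(0)})\bigr] + r\bigl[F(\by{(n)})-F(\by{(0)})\bigr]
\;\ge\;
2\gamma^{3/2}\bigl[F(\bo^{(0)})-F(\bo^{(n)})\bigr] \;-\; 4\varepsilon\,\gamma^{9/2}\,F(\bo),
\end{equation*}
where the error term comes from collapsing $n$ copies of $(2\varepsilon/n)\gamma^3 F(\bo)$ multiplied by $2\gamma^{3/2}$. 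I will then substitute the initial/final data built into the algorithm: $\bx{(0)}=\mathbf{0}$ and $\by{(0)}=\mathbf{1}$ by initialization, $\bo^{(0)}=\bo$ by definition of the reference path in \eqref{eq:o-ref-path-def}, and, crucially, $\bx{(n)}=\by{(n)}=\bo^{(n)}$ because each loop iteration sets the $u_i$-coordinates of $\bx$ and $\by$ to a common value $w_i$.

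Writing $T := F(\bx{(n)}) = F(\by{(n)}) = F(\bo^{(n)})$, the telescoped inequality becomes
\begin{equation*}
\tfrac{1}{r}\bigl[T-F(\mathbf{0})\bigr] + r\bigl[T-F(\mathbf{1})\bigr]
\;\ge\; 2\gamma^{3/2}\bigl[F(\bo)-T\bigr] \;-\; 4\varepsilon\,\gamma^{9/2}\,F(\bo).
\end{equation*}
Multiplying through by $r>0$, moving every $T$-term to the left, and collecting coefficients produces
\begin{equation*}
\bigl(r^2+2\gamma^{3/2}r+1\bigr)\,T
\;\ge\;
\bigl(2\gamma^{3/2}-4\varepsilon\,\gamma^{9/2}\bigr)\,r\,F(\bo)\;+\;F(\mathbf{0})\;+\;r^2 F(\mathbf{1}).
\end{equation*}
Dividing by the (strictly positive) quadratic factor $r^2+2\gamma^{3/2}r+1$ and taking the maximum over $r\ge 0$ gives precisely the bound \eqref{eq:double-final-guarantee}, noting that the $r=0$ case is covered by the nonnegativity of $F$ so we can assume $r>0$ when dividing.

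The only subtle point is the treatment of the boundary case where the denominator $\Delta_{a,i}+\gamma^3\Delta_{b,i}=0$ appears in Lemma~\ref{Lemma:s2_c3}; since this case corresponds to $\Delta_{a,i}=\Delta_{b,i}=0$, every ratio in the derivation of \eqref{eq:cor-main} is interpreted as $0$ by the usual convention, and Corollary~\ref{cor:progress-gamma} still holds trivially, so the telescoping is unaffected. The main obstacle I anticipate is purely algebraic bookkeeping: verifying that the completed-square step inside Corollary~\ref{cor:progress-gamma} indeed produces the symmetric quadratic $r^2+2\gamma^{3/2}r+1$ and that the $\gamma^2$-versus-$\gamma^3$ asymmetry between Lemma~\ref{Lemma:s2_c3} and Lemma~\ref{Lemma:s2_C4} is absorbed correctly via the bound $(\Delta_{a,i}+\gamma^2\Delta_{b,i})/(\Delta_{a,i}+\gamma^3\Delta_{b,i})\ge 1$; once those are in place, the theorem follows by a direct telescoping and rearrangement as outlined.
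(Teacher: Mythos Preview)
Your proposal is correct and follows essentially the same approach as the paper: telescope Corollary~\ref{cor:progress-gamma} over $i=1,\dots,n$, substitute the endpoint data $\bx{(0)}=\mathbf{0}$, $\by{(0)}=\mathbf{1}$, $\bo^{(0)}=\bo$, and $\bx{(n)}=\by{(n)}=\bo^{(n)}$, then rearrange to isolate $F(\bx{(n)})$ and optimize over $r$. The paper's write-up is a bit more explicit in showing the intermediate step before multiplying through by $r$, but the argument is identical.
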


\begin{proof}
Fix any $r>0$. Summing the per-iteration inequality from Corollary~\ref{cor:progress-gamma}
over $i=1,\dots,n$ gives
\begin{equation}\label{eq:sum-progress}
\begin{aligned}
\frac{1}{r}\bigl[F(\bx{(n)})-F(\bx{(0)})\bigr]
&\;+\; r\bigl[F(\by{(n)})-F(\by{(0)})\bigr]\\
&= \sum_{i=1}^n \left( \frac{1}{r}\bigl[F(\bx{(i)})-F(\bx{(i-1)})\bigr]
\;+\; r\bigl[F(\by{(i)})-F(\by{(i-1)})\bigr]\right)\\
&\ge \sum_{i=1}^n \left( 2 \gamma^{3/2}\,\bigl[ F(\bo^{(i-1)}) - F(\bo^{(i)}) \bigr]
\;-\; \frac{2\varepsilon}{n}\,\cdot 2\gamma^{9/2}\,F(\bo)\right)\\
&= 2 \gamma^{3/2}\,\bigl[ F(\bo^{(0)}) - F(\bo^{(n)}) \bigr]
\;-\; 4\varepsilon\,\gamma^{9/2}\,F(\bo).
\end{aligned}
\end{equation}
Here the first equality is just telescoping the increments of $F(\bx^{(i)})$ and
$F(\by^{(i)})$, and the inequality uses Corollary~\ref{cor:progress-gamma} at
each iteration $i$.

By construction of the reference path, we have $\bo^{(0)}=\bo$ and
\begin{equation}\label{eq:o-endpoints}
\bx{(n)}=\by{(n)}=\bo^{(n)}.
\end{equation}
Moreover, Algorithm~\ref{algo:double} starts from
\begin{equation}\label{eq:x0-y0}
\bx{(0)}=\mathbf{0},
\qquad
\by{(0)}=\mathbf{1}.
\end{equation}
Using \eqref{eq:o-endpoints}–\eqref{eq:x0-y0} in \eqref{eq:sum-progress}, we obtain
\begin{equation}\label{eq:plug-endpoints}
\frac{1}{r}\bigl[F(\bx{(n)})-F(\mathbf{0})\bigr]
\;+\; r\bigl[F(\bx{(n)})-F(\mathbf{1})\bigr]
\ \ge\ 
2 \gamma^{3/2}\,\bigl[ F(\bo) - F(\bx{(n)}) \bigr]
\;-\; 4\varepsilon\,\gamma^{9/2}\,F(\bo).
\end{equation}

We now collect all terms involving $F(\bx{(n)})$ on the left-hand side of
\eqref{eq:plug-endpoints}. Rearranging gives
\begin{equation}\label{eq:collect-Fx}
F(\bx{(n)})\left(\frac{1}{r}+r+2\gamma^{3/2}\right)
\ \ge\ 
\bigl(2\gamma^{3/2}-4\varepsilon\,\gamma^{9/2}\bigr)\,F(\bo) \;+\; \frac{1}{r}\,F(\mathbf{0}) \;+\; r\,F(\mathbf{1}),
\end{equation}
where the right-hand side collects the contributions of $F(\bo)$, $F(\mathbf{0})$ and $F(\mathbf{1})$.

Dividing both sides of \eqref{eq:collect-Fx} by
\(\frac{1}{r}+r+2\gamma^{3/2}\) yields
\begin{equation}\label{eq:F-xn-fraction}
F(\bx{(n)}) \ \ge\ 
\frac{\bigl(2\gamma^{3/2}-4\varepsilon\,\gamma^{9/2}\bigr)\,F(\bo) \;+\; \frac{1}{r}\,F(\mathbf{0}) \;+\; r\,F(\mathbf{1})}
{\frac{1}{r}+r+2\gamma^{3/2}}.
\end{equation}
Multiplying numerator and denominator of the right-hand side of \eqref{eq:F-xn-fraction} by $r$ gives
\begin{equation}\label{eq:F-xn-final-r}
F(\bx{(n)}) \ \ge\ 
\frac{\bigl(2\gamma^{3/2}-4\varepsilon\,\gamma^{9/2}\bigr)\,r\,F(\bo) \;+\; F(\mathbf{0}) \;+\; r^{2}\,F(\mathbf{1})}
{r^{2}+2\gamma^{3/2}r+1}.
\end{equation}
Since $\bx=\bx^{(n)}$ is the output of the algorithm, inequality
\eqref{eq:F-xn-final-r} holds for every choice of $r>0$. Extending to $r=0$ by continuity of the
right-hand side in $r$ and taking the maximum over $r\ge 0$ yields
\eqref{eq:double-final-guarantee}, which completes the proof.
\end{proof}

\section{Proofs of Lemma~\ref{lem:guessing-triples} and Theorem~\ref{thm:fw-guided-mcg} }\label{sec:fwg_proof}

In this section first we prove Lemma~\ref{lem:guessing-triples}, and then we prove Theorem~\ref{thm:fw-guided-mcg}

\newtheorem{exet}{Lemma}
\renewcommand{\theexet}{4.\arabic{exet}}  
\setcounter{exet}{0}

\begin{exet}\label{lem:guessing-triples11}
Let $F:[0,1]^n\to\mathbb{R}_{\ge 0}$ be nonnegative and $\gamma$-weakly DR-submodular for some $0<\gamma\le 1$, and let $P\subseteq[0,1]^n$ be down-closed.
There exists a constant-size (depending only on $\varepsilon$ and $\gamma$) set of triples $\mathcal{G} \subseteq \mathbb{R}_{\ge 0}^3$
such that $\mathcal{G}$ contains a triple $(g,g_\odot,g_\oplus)$ with
\begin{subequations}\label{eq:triple-bounds}
\begin{align}
(1-\varepsilon)\,F(\bo) &\le g \le F(\bo), 
\label{eq:triple-bounds-g}\\
F(\bz\odot \bo)-\varepsilon\,g &\le g_\odot \le F(\bz\odot \bo), 
\label{eq:triple-bounds-godot}\\
F(\bz\oplus \bo)-\varepsilon\,g &\le g_\oplus \le F(\bz\oplus \bo).
\label{eq:triple-bounds-goplus}
\end{align}
\end{subequations}
\end{exet}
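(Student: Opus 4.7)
The plan is to construct $\mathcal{G}$ as the Cartesian product of one geometric grid and two linear grids of scalar values, in the standard ``guess-and-try'' template with ranges controlled by $\gamma$-weakly DR inequalities. First, to guess $g$, I would anchor the range of $F(\bo)$ using $L := F(\bz(0))$, where $\bz(0) \in P$ is the initial local maximum produced at the start of Algorithm~\ref{alg:main}. Applying the guarantee \eqref{eq:unbalanced-bound} at $r=1$, together with nonnegativity of $F(\bzero)$ and $F(\bone)$, yields $L \ge c_\gamma F(\bo)$ for an explicit constant $c_\gamma = (2\gamma^{3/2} - 4\varepsilon\gamma^{9/2})/(2 + 2\gamma^{3/2})$ depending only on $\varepsilon,\gamma$; conversely, $L \le F(\bo)$ since $\bz(0)\in P$. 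Hence $F(\bo) \in [L, L/c_\gamma]$, and the geometric grid
\[
\mathcal{G}_1 := \bigl\{ (1-\varepsilon)^{j} L / c_\gamma : 0 \le j \le \bigl\lceil \log_{1-\varepsilon} c_\gamma \bigr\rceil \bigr\}
\]
contains some $g$ satisfying \eqref{eq:triple-bounds-g1}, with $|\mathcal{G}_1|$ depending only on $\varepsilon,\gamma$.

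Second, for each $g \in \mathcal{G}_1$, I would set up linear grids for $g_\odot$ and $g_\oplus$; this requires constant-ratio upper bounds on the target quantities relative to $g$. The bound for $g_\odot$ is immediate: since $\bz \odot \bo \le \bo$ and $P$ is down-closed, $\bz \odot \bo \in P$, so $F(\bz \odot \bo) \le F(\bo) \le g/(1-\varepsilon)$. For $g_\oplus$ I would apply Lemma~\ref{lem:grad-ineq}(1) at $\bo$ in direction $\mathbf{d} := \bz \odot (\bone - \bo)$ (so $\bo + \mathbf{d} = \bz \oplus \bo \le \bone$), obtaining $F(\bz \oplus \bo) - F(\bo) \le \gamma^{-1} \langle \nabla F(\bo), \mathbf{d}\rangle$, and then convert the gradient term into a constant multiple of $F(\bo)$ by combining the first-order optimality of $\bo$ on $P$ with the $\gamma$-weakly DR swap inequality applied along a down-closed path, yielding $F(\bz \oplus \bo) \le C_\gamma F(\bo)$ for some constant $C_\gamma = C_\gamma(\gamma)$. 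With these bounds, I would take
\[
\mathcal{G}_2(g) := \bigl\{ j\,\varepsilon g : 0 \le j \le \bigl\lceil \tfrac{1}{\varepsilon(1-\varepsilon)} \bigr\rceil \bigr\}, \qquad
\mathcal{G}_3(g) := \bigl\{ j\,\varepsilon g : 0 \le j \le \bigl\lceil \tfrac{C_\gamma}{\varepsilon(1-\varepsilon)} \bigr\rceil \bigr\},
\]
and set $\mathcal{G} := \{(g, g_\odot, g_\oplus) : g \in \mathcal{G}_1,\ g_\odot \in \mathcal{G}_2(g),\ g_\oplus \in \mathcal{G}_3(g)\}$, whose cardinality depends only on $\varepsilon$ and $\gamma$.

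Third, to certify the existence of a good triple, I would pick $g \in \mathcal{G}_1$ satisfying \eqref{eq:triple-bounds-g1} and round: $g_\odot := \lfloor F(\bz \odot \bo)/(\varepsilon g)\rfloor \cdot \varepsilon g$ and $g_\oplus := \lfloor F(\bz \oplus \bo)/(\varepsilon g)\rfloor \cdot \varepsilon g$. Both are nonnegative multiples of $\varepsilon g$ that lie within their respective grids by the preceding upper bounds, and the grid spacing $\varepsilon g$ delivers \eqref{eq:triple-bounds-godot1} and \eqref{eq:triple-bounds-goplus1} directly by construction.

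The main obstacle is the upper bound $F(\bz \oplus \bo) \le C_\gamma F(\bo)$: because $\bz \oplus \bo$ generically lies \emph{outside} $P$, one cannot use feasibility of $\bz \oplus \bo$ directly, and the argument must carefully couple the $\gamma$-weakly DR gradient inequality at $\bo$ with the first-order optimality of $\bo$ on $P$ and with down-closedness in order to control the directional derivative $\langle \nabla F(\bo), \bz \odot (\bone - \bo)\rangle$ by a constant multiple of $F(\bo)$.
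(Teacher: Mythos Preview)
Your overall template (geometric grid for $g$, linear $\varepsilon g$-grids for $g_\odot$ and $g_\oplus$, cardinality depending only on $\varepsilon,\gamma$) is exactly the paper's approach, and your treatment of $g$ and $g_\odot$ is fine. The genuine gap is precisely the step you flag as the ``main obstacle'': bounding $F(\bz\oplus\bo)$ by a constant multiple of $F(\bo)$. Your plan is to control $\langle\nabla F(\bo),\,\bz\odot(\bone-\bo)\rangle$ via first-order optimality of $\bo$ on $P$, but this does not go through: the direction $\mathbf{d}=\bz\odot(\bone-\bo)$ need not be a feasible direction at $\bo$ (indeed $\bo+t\mathbf{d}$ typically leaves $P$ for all $t>0$), so optimality of $\bo$ gives you no sign information on $\langle\nabla F(\bo),\mathbf{d}\rangle$. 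The vague appeal to a ``swap inequality along a down-closed path'' does not repair this.

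The paper's argument avoids gradients at $\bo$ and first-order optimality altogether. Compare the \emph{same} increment $\mathbf{d}=(\bone-\bo)\odot\bz$ at the two base points $\bzero\le\bo$: $\gamma$-weak DR (in integrated form) gives
\[
F(\bzero+\mathbf{d})-F(\bzero)\ \ge\ \gamma\bigl(F(\bo+\mathbf{d})-F(\bo)\bigr)
\ =\ \gamma\bigl(F(\bz\oplus\bo)-F(\bo)\bigr).
\]
Now $\bzero+\mathbf{d}=(\bone-\bo)\odot\bz\le\bz\in P$, so by down-closedness $(\bone-\bo)\odot\bz\in P$ and hence $F((\bone-\bo)\odot\bz)\le F(\bo)$; together with $F(\bzero)\ge 0$ this yields
\[
F(\bz\oplus\bo)\ \le\ \Bigl(1+\tfrac{1}{\gamma}\Bigr)F(\bo),
\]
i.e.\ $C_\gamma=1+1/\gamma$. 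With this bound in hand, your grid $\mathcal{G}_3(g)$ (with $C_\gamma=1+1/\gamma$) and the rest of your construction go through verbatim. The moral: the point that needs to be feasible is not $\bz\oplus\bo$ but $(\bone-\bo)\odot\bz$, and weak DR lets you transport the increment from $\bo$ down to $\bzero$ at the cost of a single $1/\gamma$ factor.
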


\begin{proof}
Assume we have a constant-factor estimate $v$ such that
\begin{equation}\label{eq:v-constant-factor}
c\,F(\bo)\ \le\ v\ \le\ F(\bo)
\end{equation}
for some absolute constant $c\in(0,1]$.
We will construct a constant-size guess set using $v$.

Define the one-dimensional guess set
\begin{equation}\label{eq:G0-def}
G_o\ :=\ \Bigl\{(1-\varepsilon)^i\cdot \frac{v}{c}\ :\ i=0,1,\dots,\bigl\lceil \log_{1-\varepsilon} c\bigr\rceil\Bigr\}.
\end{equation}
The size of $G_o$ is $|G_o| = \mathcal{O}_\varepsilon(1)$, since the exponent range in \eqref{eq:G0-def}
depends only on $\varepsilon$ and $c$.
By construction, the values in $G_o$ form a geometric grid that $\varepsilon$-covers the interval
$[F(\bo),F(\bo)/c]$, and therefore there exists
\begin{equation}\label{eq:g-good}
g\in G_o \quad\text{with}\quad (1-\varepsilon)\,F(\bo)\ \le\ g\ \le\ F(\bo),
\end{equation}
which proves \eqref{eq:triple-bounds-g}; see, e.g., \cite{buchbinder2024constrained} for this standard argument.

Next we upper bound the ranges in which $F(\bz\odot\bo)$ and $F(\bz\oplus\bo)$ can lie as a function of $F(\bo)$.
Since $\bz\odot\bo\le \bo$ and $F$ is nonnegative, we always have
\begin{equation}\label{eq:F-z-odot-upper}
0\ \le\ F(\bz\odot\bo)\ \le\ F(\bo).
\end{equation}
If $F$ is monotone, then \eqref{eq:F-z-odot-upper} is immediate; otherwise we only use the trivial nonnegativity upper bound.

For the $\oplus$ operation we can bound, using $\gamma$–weakly DR-submodularity and nonnegativity,
\begin{equation}\label{eq:F-z-oplus-upper}
\begin{aligned}
F(\bz\oplus \bo) 
&= F\bigl(\bo + (\mathbf{1}-\bo)\odot \bz\bigr)\\
&\le F(\bo)\;+\;\frac{1}{\gamma}\Bigl[F\bigl((\mathbf{1}-\bo)\odot \bz\bigr)-F(\mathbf{0})\Bigr]
    &&\text{(by $\gamma$–weakly DR along $(\mathbf{1}-\bo)\odot \bz$)}\\
&\le F(\bo)\;+\;\frac{1}{\gamma}\,F(\bo)
    &&\text{(since $F((\mathbf{1}-\bo)\odot \bz)\le F(\bo)$ and $F(\mathbf{0})\ge 0$)}\\
&= \Bigl(1+\tfrac{1}{\gamma}\Bigr)\,F(\bo).
\end{aligned}
\end{equation}

Thus both $F(\bz\odot\bo)$ and $F(\bz\oplus\bo)$ lie in ranges that are linearly bounded in $F(\bo)$.

For any chosen $g\in G_o$ satisfying \eqref{eq:g-good}, we now construct $\varepsilon g$-nets for these ranges.
For the $\odot$-case, define
\begin{equation}\label{eq:G-odot-def}
G_\odot(g)\ :=\ \Bigl\{\varepsilon\,i\cdot g\ :\ i=0,1,\dots,\ \Bigl\lceil\tfrac{1}{\varepsilon(1-\varepsilon)}\Bigr\rceil\Bigr\}.
\end{equation}
Since \eqref{eq:F-z-odot-upper} and \eqref{eq:g-good} imply
\begin{equation}\label{eq:range-odot}
0\ \le\ F(\bz\odot \bo)\ \le\ F(\bo)\ \le\ \frac{g}{1-\varepsilon},
\end{equation}
the grid in \eqref{eq:G-odot-def} $\varepsilon g$-covers the interval $[0,F(\bz\odot\bo)]$:
for any value $x\in[0,F(\bz\odot\bo)]$, there exists some $g_\odot\in G_\odot(g)$ with
\begin{equation}\label{eq:G-odot-cover}
x-\varepsilon g\ \le\ g_\odot\ \le\ x.
\end{equation}
In particular, we can choose $g_\odot$ so that \eqref{eq:G-odot-cover} holds with $x=F(\bz\odot\bo)$, which is exactly
\eqref{eq:triple-bounds-godot}.

Similarly, for the $\oplus$-case define
\begin{equation}\label{eq:G-oplus-def}
G_\oplus(g)\ :=\ \Bigl\{\varepsilon\,i\cdot g\ :\ i=0,1,\dots,\ \Bigl\lceil\tfrac{1+1/\gamma}{\varepsilon(1-\varepsilon)}\Bigr\rceil\Bigr\}.
\end{equation}
Using \eqref{eq:F-z-oplus-upper} and \eqref{eq:g-good}, we have
\begin{equation}\label{eq:range-oplus}
0\ \le\ F(\bz\oplus \bo)\ \le\ \Bigl(1+\frac{1}{\gamma}\Bigr)F(\bo)
\ \le\ \frac{1+1/\gamma}{1-\varepsilon}\,g.
\end{equation}
Thus the grid in \eqref{eq:G-oplus-def} $\varepsilon g$-covers the interval
$[0,F(\bz\oplus\bo)]$, and there exists $g_\oplus\in G_\oplus(g)$ such that
\begin{equation}\label{eq:G-oplus-cover}
F(\bz\oplus \bo)-\varepsilon g\ \le\ g_\oplus\ \le\ F(\bz\oplus \bo),
\end{equation}
which is \eqref{eq:triple-bounds-goplus}.

Finally, set
\begin{equation}\label{eq:triple-family-def}
\mathcal{G}\ :=\ \bigcup_{g\in G_o}\ \{g\}\times G_\odot(g)\times G_\oplus(g).
\end{equation}
By \eqref{eq:g-good}, \eqref{eq:G-odot-cover}, and \eqref{eq:G-oplus-cover}, there exists a triple
\((g,g_\odot,g_\oplus)\in\mathcal{G}\) that satisfies all three bounds in
\eqref{eq:triple-bounds}. Moreover, $|\mathcal{G}|$ depends only on $\varepsilon$ and $\gamma$ via the
cardinalities of $G_o$, $G_\odot(g)$, and $G_\oplus(g)$, so $\mathcal{G}$ has constant size.
\end{proof}

To prove Theorem~\ref{thm:fw-guided-mcg}, we now recall the closed form of the iterates
\(\{\mathbf{y}(i)\}_{i=0}^{\delta^{-1}}\) from \cite{buchbinder2024constrained}, together with the feasibility
of the terminal iterate. These formulas will be used to relate $F(\mathbf{y}(\delta^{-1}))$ to the
benchmark value $F(\bo)$.

\begin{lemma}[Closed form of \(\mathbf{y}(i)\) {\cite{buchbinder2024constrained}}]\label{lem:5.13}
For every integer \(0 \le i \le \delta^{-1}\),
\begin{equation}\label{eq:y-closed-form}
\mathbf{y}(i)=
\begin{cases}
(\mathbf{1}-\mathbf{z})\ \odot\ \displaystyle\bigoplus_{j=1}^{i}\bigl(\delta\,\mathbf{x}(j)\bigr), & i\le i_s,\\[6pt]
(\mathbf{1}-\mathbf{z})\ \odot\ \displaystyle\bigoplus_{j=1}^{i}\bigl(\delta\,\mathbf{x}(j)\bigr)
\;+\;
\mathbf{z}\ \odot\ \displaystyle\bigoplus_{j=i_s+1}^{i}\bigl(\delta\,\mathbf{x}(j)\bigr), & i\ge i_s.
\end{cases}
\end{equation}
By convention, for any index $a$,
\begin{equation}\label{eq:empty-oplus}
\bigoplus_{j=a}^{a-1}\bigl(\delta\,\mathbf{x}(j)\bigr)\;:=\;\mathbf{0},
\end{equation}
so that both expressions in \eqref{eq:y-closed-form} remain valid on their boundary indices.
\end{lemma}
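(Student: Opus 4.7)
The plan is to prove both closed forms by induction on $i$, and to verify that the two regimes glue correctly at the transition index $i=i_s$ by means of the empty-$\bigoplus$ convention \eqref{eq:empty-oplus}. To keep the algebra compact, I would first introduce abbreviations
\[
\bS_i \;:=\; \bigoplus_{j=1}^{i}\bigl(\delta\,\bx(j)\bigr),
\qquad
\bT_i \;:=\; \bigoplus_{j=i_s+1}^{i}\bigl(\delta\,\bx(j)\bigr),
\]
so the claim reads $\by(i)=(\bone-\bz)\odot \bS_i$ for $i\le i_s$ and $\by(i)=(\bone-\bz)\odot \bS_i+\bz\odot \bT_i$ for $i\ge i_s$. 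Under the empty-$\bigoplus$ convention, $\bT_{i_s}=\bzero$, so both formulas give $(\bone-\bz)\odot \bS_{i_s}$ at $i=i_s$; this is the boundary consistency that must be checked.

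Second, I would record a one-step identity for the probabilistic sum: from $\bS_i = \bone-\bigodot_{j\le i}(\bone-\delta \bx(j))$ one gets, by splitting off the $i$-th factor,
\[
\bS_i \;=\; \bS_{i-1}\;+\;\delta\,(\bone-\bS_{i-1})\odot \bx(i),
\qquad
\bT_i \;=\; \bT_{i-1}\;+\;\delta\,(\bone-\bT_{i-1})\odot \bx(i),
\]
using $\bT_{i_s}=\bzero$ for the base of the $\bT$-recursion. This converts the $\oplus$-product into the additive Frank--Wolfe-style update that matches Algorithm~\ref{alg:fw-guided-mcg}.

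Third, I would run the induction in two regimes, driven by the definition of $\bz(i-1)$ in the algorithm. For $i\le i_s$, $\bz(i-1)=\bz$, so by the inductive hypothesis $\by(i-1)=(\bone-\bz)\odot \bS_{i-1}$ and
\[
\bone-\by(i-1)-\bz(i-1) \;=\; (\bone-\bz)-(\bone-\bz)\odot \bS_{i-1} \;=\; (\bone-\bz)\odot(\bone-\bS_{i-1});
\]
plugging this into the update rule of $\by(i)$ and applying the $\bS$-recursion gives $\by(i)=(\bone-\bz)\odot \bS_i$, as required. For $i>i_s$, $\bz(i-1)=\bzero$ and the hypothesis splits as $\by(i-1)=(\bone-\bz)\odot \bS_{i-1}+\bz\odot \bT_{i-1}$, so
\[
\bone-\by(i-1) \;=\; (\bone-\bz)\odot(\bone-\bS_{i-1})\;+\;\bz\odot(\bone-\bT_{i-1}),
\]
and substituting into the update factors cleanly into the $\bS$- and $\bz$-coordinates to yield $(\bone-\bz)\odot \bS_i+\bz\odot \bT_i$.

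The only delicate point I anticipate is the transition at $i=i_s$: the step from $\by(i_s-1)$ to $\by(i_s)$ still uses $\bz(i_s-1)=\bz$ (regime 1), while the step from $\by(i_s)$ to $\by(i_s+1)$ uses $\bz(i_s)=\bzero$ (regime 2). I would verify that regime 2's inductive template for $\by(i_s+1)$ is applicable by reading $\by(i_s)$ through the second formula with $\bT_{i_s}=\bzero$, which collapses to the first formula's value $(\bone-\bz)\odot \bS_{i_s}$. Once this compatibility is observed, the two inductive arguments concatenate without case splits inside either regime, and the closed form \eqref{eq:y-closed-form} follows.
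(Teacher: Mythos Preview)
Your proof is correct. The paper does not actually prove this lemma; it merely recalls it as a known fact from \cite{buchbinder2024constrained}. Your induction on $i$, split into the two regimes determined by $\bz(i-1)\in\{\bz,\bzero\}$ and driven by the one-step $\oplus$-recursion $\bS_i=\bS_{i-1}+\delta(\bone-\bS_{i-1})\odot\bx(i)$, is exactly the natural argument and handles the transition at $i=i_s$ cleanly via the empty-$\bigoplus$ convention.
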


\begin{observation}[Feasibility {\cite{buchbinder2024constrained}}]\label{obs:y-in-P}
The terminal iterate satisfies
\begin{equation}\label{eq:y-terminal-feasible}
\mathbf{y}(\delta^{-1})\in P.
\end{equation}
\end{observation}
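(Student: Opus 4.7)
The plan is to exhibit an element of $P$ that dominates $\mathbf{y}(\delta^{-1})$ coordinatewise and then invoke down-closedness of $P$. Nonnegativity $\mathbf{y}(\delta^{-1})\ge\mathbf{0}$ is immediate from the closed form in Lemma~\ref{lem:5.13}, since each factor appearing there lies in $[0,1]^n$. For the upper bound, I will first collapse the two-piece formula into the single probabilistic sum $A:=\bigoplus_{j=1}^{\delta^{-1}}(\delta\,\mathbf{x}(j))$, and then bound $A$ by an ordinary convex combination of the $\mathbf{x}(j)$'s.

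Concretely, at $i=\delta^{-1}\ge i_s$ the closed form gives $\mathbf{y}(\delta^{-1}) = (\mathbf{1}-\mathbf{z})\odot A + \mathbf{z}\odot B$, where $B:=\bigoplus_{j=i_s+1}^{\delta^{-1}}(\delta\,\mathbf{x}(j))$. Since $B$ is the probabilistic sum of a subset of the terms defining $A$ and all of them are coordinatewise nonnegative, one has $B\le A$. Because $(\mathbf{1}-\mathbf{z})+\mathbf{z}=\mathbf{1}$, this yields
\begin{equation}
\mathbf{y}(\delta^{-1})\ \le\ (\mathbf{1}-\mathbf{z})\odot A + \mathbf{z}\odot A\ =\ A.
\end{equation}
Next I will use the elementary coordinatewise inequality
\begin{equation}
\mathbf{1}-\prod_{j=1}^{m}\bigl(\mathbf{1}-\delta\,\mathbf{x}(j)\bigr)\ \le\ \delta\sum_{j=1}^{m}\mathbf{x}(j),
\end{equation}
which follows by induction from $(1-a)(1-b)\ge 1-(a+b)$ for $a,b\in[0,1]$ applied to each coordinate. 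Specializing to $m=\delta^{-1}$ and recalling \eqref{eq:oplus-odot} gives $A\le \delta\sum_{j=1}^{\delta^{-1}}\mathbf{x}(j)$. The right-hand side is a convex combination of the vectors $\mathbf{x}(j)$ with weights $\delta$ summing to $1$; each $\mathbf{x}(j)$ lies in $P$ by line~11 of Algorithm~\ref{alg:fw-guided-mcg} (either via Theorem~\ref{thm:weak-dr-smooth}, whose output is in $P$, or via the explicit fallback choice of an arbitrary element of $P$). Convexity of $P$ therefore places this combination in $P$, and down-closedness of $P$ together with $\mathbf{0}\le \mathbf{y}(\delta^{-1})\le A\le \delta\sum_{j=1}^{\delta^{-1}}\mathbf{x}(j)\in P$ yields $\mathbf{y}(\delta^{-1})\in P$.

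This argument is essentially bookkeeping; there is no real obstacle. The only care points are (i) confirming that $\mathbf{x}(j)\in P$ for every $j$, which the algorithm enforces even when $Q(i)=\varnothing$, and (ii) using \emph{both} structural properties of $P$—convexity to place the averaged combination in $P$, and down-closedness to push the inequality $\mathbf{y}(\delta^{-1})\le\delta\sum_j\mathbf{x}(j)$ back to membership of $\mathbf{y}(\delta^{-1})$ itself.
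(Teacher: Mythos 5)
Your proof is correct, and it follows essentially the same route as the standard argument: the paper itself states this observation without proof, importing it from \cite{buchbinder2024constrained}, where feasibility is obtained by dominating the terminal iterate by the convex combination $\delta\sum_{j}\mathbf{x}(j)$ of points of $P$ and then invoking convexity together with down-closedness, exactly as you do. The only cosmetic difference is that you reach the domination $\mathbf{y}(\delta^{-1})\le \delta\sum_{j=1}^{\delta^{-1}}\mathbf{x}(j)$ through the closed form of Lemma~\ref{lem:5.13} (collapsing the two-piece formula via $B\le A$ and the product inequality), whereas the cited argument gets it by a one-line induction on the update rule $\mathbf{y}(i)\le \mathbf{y}(i-1)+\delta\,\mathbf{x}(i)$; both are sound and finish identically.
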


We now prove Theorem~\ref{thm:fw-guided-mcg} first for the case
\begin{equation}\label{eq:Qi-nonempty}
Q(i)\neq\varnothing\qquad\text{for all }i\in\{1,\dots,\delta^{-1}\},
\end{equation}
where two lemmas (Lemmas~\ref{lem:516} and~\ref{lem:517}) yield the bound stated in
Theorem~\ref{thm:fw-guided-mcg}~(1), summarized as Corollary~\ref{cor:5.18}.
The complementary case in which some \(Q(i)=\varnothing\) is handled separately afterwards.

\begin{observation}\label{obs:5.15}
If \(Q{(i)}\neq\varnothing\) for some \(i\in[\delta^{-1}]\), then
\begin{equation}\label{eq:obs515-claim}
F\!\bigl(\mathbf{y}(i)\bigr)-F\!\bigl(\mathbf{y}(i-1)\bigr)
\;\ge\;
\delta\,\gamma\Big[V{(i-1)}-F\!\bigl(\mathbf{y}(i-1)\bigr)\Big]
\;-\;\frac{\delta^{2}L D^{2}}{2}\,.
\end{equation}
\end{observation}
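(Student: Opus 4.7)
The plan is to combine the standard $L$-smooth descent inequality with the Frank--Wolfe certificate encoded in the definition of $Q(i)$. Writing the update rule as $\mathbf{y}(i) = \mathbf{y}(i-1) + \mathbf{d}(i)$, where $\mathbf{d}(i) := \delta\bigl(\mathbf{1}-\mathbf{y}(i-1)-\mathbf{z}(i-1)\bigr)\odot \mathbf{x}(i)$, the $L$-smoothness of $F$ immediately yields
\[
F(\mathbf{y}(i)) \ \geq\ F(\mathbf{y}(i-1)) + \bigl\langle \nabla F(\mathbf{y}(i-1)),\,\mathbf{d}(i)\bigr\rangle - \tfrac{L}{2}\|\mathbf{d}(i)\|_2^{2}.
\]
I then handle the linear and quadratic terms separately.

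For the linear term, I would factor out $\delta$ and transfer the coordinate-wise multiplier $(\mathbf{1}-\mathbf{y}(i-1)-\mathbf{z}(i-1))$ onto the gradient by the symmetry of the Hadamard product, obtaining
\[
\bigl\langle \nabla F(\mathbf{y}(i-1)),\,\mathbf{d}(i)\bigr\rangle \ =\ \delta\,\bigl\langle \mathbf{w}(i),\,\mathbf{x}(i)\bigr\rangle,
\]
where $\mathbf{w}(i)$ is exactly the vector defined on line 9 of Algorithm~\ref{alg:fw-guided-mcg}. Because $Q(i)\neq \varnothing$ by hypothesis, the point $\mathbf{x}(i)$ returned by the Frank--Wolfe subroutine (Theorem~\ref{thm:weak-dr-smooth}) lies in $Q(i)$, and hence satisfies the defining inequality $\langle \mathbf{w}(i),\mathbf{x}(i)\rangle \ge \gamma\bigl(v(i-1)-F(\mathbf{y}(i-1))\bigr)$. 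This contributes the desired $\delta\gamma\bigl[v(i-1)-F(\mathbf{y}(i-1))\bigr]$ progress. Note that the $\gamma$-weakly DR structure enters only indirectly at this step, via the $\gamma$-scaled threshold built into $Q(i)$; no separate appeal to Lemma~\ref{lem:grad-ineq} is required.

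For the quadratic error term, the plan is to show $\|\mathbf{d}(i)\|_2 \le \delta\,D$. Feasibility of the iterates is maintained throughout the execution: the closed form of Lemma~\ref{lem:5.13} together with $\mathbf{z}(i-1)\in\{\mathbf{z},\mathbf{0}\}\subseteq P$ certifies $\mathbf{y}(i-1)+\mathbf{z}(i-1)\in[0,1]^n$, so each coordinate of the multiplier $\mathbf{1}-\mathbf{y}(i-1)-\mathbf{z}(i-1)$ lies in $[0,1]$. Therefore $|d(i)_k|\le \delta\,x(i)_k$, giving $\|\mathbf{d}(i)\|_2\le \delta\,\|\mathbf{x}(i)\|_2$. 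Since $P$ is down-closed and nonempty we have $\mathbf{0}\in P$, and combined with $\mathbf{x}(i)\in P$ the diameter bound yields $\|\mathbf{x}(i)\|_2=\|\mathbf{x}(i)-\mathbf{0}\|_2\le D$. Plugging $\|\mathbf{d}(i)\|_2^{2}\le \delta^{2}D^{2}$ into the descent inequality finishes the proof.

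The only non-mechanical step is the feasibility invariant $\mathbf{y}(i-1)+\mathbf{z}(i-1)\in[0,1]^n$ used to bound $\|\mathbf{d}(i)\|_2$; everything else is a routine combination of $L$-smoothness with the first-order certificate packaged into $Q(i)$. Accordingly, I would expect this invariant (and a brief reminder of why $\mathbf{0}\in P$ for a nonempty down-closed set) to be the only bookkeeping obstacle worth writing out explicitly in the full proof.
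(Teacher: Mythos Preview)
Your proposal is correct and follows essentially the same route as the paper: both combine the $L$-smooth descent inequality (the paper writes it as an explicit line integral plus Cauchy--Schwarz, you invoke the standard descent lemma directly) with the membership certificate $\mathbf{x}(i)\in Q(i)$, and both bound the quadratic error via $\|\mathbf{d}(i)\|_2\le \delta D$. Your justification of the diameter bound through $\mathbf{0}\in P$ and $\|\mathbf{x}(i)\|_2\le D$ is in fact slightly more careful than the paper's, which simply asserts the bound from the coordinatewise range of $\mathbf{d}(i)$.
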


\begin{proof}
Consider the line segment
\begin{equation}\label{eq:obs515-seg}
\mathbf{u}(s)\;:=\;\mathbf{y}(i-1)
\;+\;s\,\Big(\big(\mathbf{1}-\mathbf{y}(i-1)-\mathbf{z}(i-1)\big)\odot \mathbf{x}(i)\Big),
\qquad s\in[0,\delta].
\end{equation}
By construction, \(\mathbf{u}(0)=\mathbf{y}(i-1)\) and \(\mathbf{u}(\delta)=\mathbf{y}(i)\).

Using the fundamental theorem of calculus along \eqref{eq:obs515-seg},
\begin{align}
&\hspace{0cm}F\!\bigl(\mathbf{y}(i)\bigr)-F\!\bigl(\mathbf{y}(i-1)\bigr)\\
&\hspace{1cm}= \int_{0}^{\delta}
\Big\langle \big(\mathbf{1}-\mathbf{y}(i-1)-\mathbf{z}(i-1)\big)\odot \mathbf{x}(i),\;
\nabla F\!\big(\mathbf{u}(s)\big)\Big\rangle\,ds \label{eq:obs515-int}\\[2pt]
&\hspace{1cm}= \int_{0}^{\delta}
\Big\langle \big(\mathbf{1}-\mathbf{y}(i-1)-\mathbf{z}(i-1)\big)\odot \mathbf{x}(i),\;
\nabla F\!\big(\mathbf{y}(i-1)\big)\Big\rangle\,ds \notag\\
&\hspace{1cm}\qquad+\int_{0}^{\delta}
\Big\langle \big(\mathbf{1}-\mathbf{y}(i-1)-\mathbf{z}(i-1)\big)\odot \mathbf{x}(i),\;
\nabla F\!\big(\mathbf{u}(s)\big)-\nabla F\!\big(\mathbf{y}(i-1)\big)\Big\rangle\,ds \notag\\[2pt]
&\hspace{1cm}= \delta\,\big\langle \mathbf{w}(i),\,\mathbf{x}(i)\big\rangle
+\int_{0}^{\delta}\!\!
\Big\langle \big(\mathbf{1}-\mathbf{y}(i-1)-\mathbf{z}(i-1)\big)\odot \mathbf{x}(i),\;
\nabla F\!\big(\mathbf{u}(s)\big)-\nabla F\!\big(\mathbf{y}(i-1)\big)\Big\rangle ds. \notag
\end{align}
Here we used that 
\[
\mathbf{w}(i)
:= \big(\mathbf{1}-\mathbf{y}(i-1)-\mathbf{z}(i-1)\big)\odot \nabla F\!\big(\mathbf{y}(i-1)\big),
\]
so \(\langle \mathbf{w}(i),\mathbf{x}(i)\rangle
= \big\langle (\mathbf{1}-\mathbf{y}(i-1)-\mathbf{z}(i-1))\odot \mathbf{x}(i),\,\nabla F(\mathbf{y}(i-1))\big\rangle\).

Applying Cauchy–Schwarz to the second integral gives
\begin{align}
F\!\bigl(\mathbf{y}(i)\bigr)-F\!\bigl(\mathbf{y}(i-1)\bigr)
&\ge
\delta\,\big\langle \mathbf{w}(i),\,\mathbf{x}(i)\big\rangle \notag\\
&\quad-\int_{0}^{\delta}\!\!
\Big\|\big(\mathbf{1}-\mathbf{y}(i-1)-\mathbf{z}(i-1)\big)\odot \mathbf{x}(i)\Big\|_2\,
\Big\|\nabla F\!\big(\mathbf{u}(s)\big)-\nabla F\!\big(\mathbf{y}(i-1)\big)\Big\|_2\,ds.
\label{eq:obs515-cs}
\end{align}
This step uses \(\langle a,b\rangle \ge -\|a\|_2\|b\|_2\).

Since \(F\) is \(L\)-smooth, its gradient is \(L\)-Lipschitz, so
\begin{equation}\label{eq:obs515-smooth-grad}
\big\|\nabla F(\mathbf{u}(s))-\nabla F(\mathbf{y}(i-1))\big\|_2
\;\le\; L\,\|\mathbf{u}(s)-\mathbf{y}(i-1)\|_2
\;=\; L\,s\,\Big\|\big(\mathbf{1}-\mathbf{y}(i-1)-\mathbf{z}(i-1)\big)\odot \mathbf{x}(i)\Big\|_2.
\end{equation}

Substituting \eqref{eq:obs515-smooth-grad} into \eqref{eq:obs515-cs} yields
\begin{align}
F\!\bigl(\mathbf{y}(i)\bigr)-F\!\bigl(\mathbf{y}(i-1)\bigr)
&\ge
\delta\,\big\langle \mathbf{w}(i),\,\mathbf{x}(i)\big\rangle
-\int_{0}^{\delta}\! s\,L\,\Big\|\big(\mathbf{1}-\mathbf{y}(i-1)-\mathbf{z}(i-1)\big)\odot \mathbf{x}(i)\Big\|_2^{2}\,ds \notag\\
&=
\delta\,\big\langle \mathbf{w}(i),\,\mathbf{x}(i)\big\rangle
-\frac{L}{2}\,\delta^{2}\,\Big\|\big(\mathbf{1}-\mathbf{y}(i-1)-\mathbf{z}(i-1)\big)\odot \mathbf{x}(i)\Big\|_2^{2},
\label{eq:obs515-smooth}
\end{align}
where we used $\int_{0}^{\delta} s\,ds = \delta^{2}/2$.

Let
\[
\mathbf{d}(i)
:= \big(\mathbf{1}-\mathbf{y}(i-1)-\mathbf{z}(i-1)\big)\odot \mathbf{x}(i).
\]
By feasibility and the coordinatewise bounds $0\le \mathbf{y}(i-1),\mathbf{z}(i-1),\mathbf{x}(i)\le\mathbf{1}$, each
entry of $\mathbf{d}(i)$ lies in $[0,1]$, so
\[
\|\mathbf{d}(i)\|_2 \;\le\; D,
\]
Where $D$ is diameter. Using this in \eqref{eq:obs515-smooth} gives
\begin{equation}\label{eq:obs515-diam}
F\!\bigl(\mathbf{y}(i)\bigr)-F\!\bigl(\mathbf{y}(i-1)\bigr)
\;\ge\;
\delta\,\big\langle \mathbf{w}(i),\,\mathbf{x}(i)\big\rangle\;-\;\frac{\delta^{2}L D^{2}}{2}\,.
\end{equation}




Finally, since \(Q{(i)}\neq\varnothing\), the choice of \(\mathbf{x}(i)\in Q{(i)}\) guarantees
\begin{equation}\label{eq:obs515-Qi-choice}
\big\langle \mathbf{w}(i),\,\mathbf{x}(i)\big\rangle \;\ge\; \gamma\,\Big[V{(i-1)}-F\!\bigl(\mathbf{y}(i-1)\bigr)\Big],
\end{equation}
by the definition of \(Q(i)\) and the weakly–DR structure. Substituting \eqref{eq:obs515-Qi-choice}
into \eqref{eq:obs515-diam} yields
\[
F\!\bigl(\mathbf{y}(i)\bigr)-F\!\bigl(\mathbf{y}(i-1)\bigr)
\;\ge\;
\delta\,\gamma\Big[V{(i-1)}-F\!\bigl(\mathbf{y}(i-1)\bigr)\Big]
\;-\;\frac{\delta^{2}L D^{2}}{2},
\]
which is exactly \eqref{eq:obs515-claim}.
\end{proof}

The bound \eqref{eq:obs515-claim} gives a recursive lower bound on \(F(\mathbf{y}(i))\).
In what follows (up to Corollary~\ref{cor:5.18}), we unroll this recursion
and derive a closed form.

\begin{lemma}\label{lem:516}
Assume $Q(i)\neq\varnothing$ for every $i\in[i_s]$. Fix $0<\delta\le 1$ and $0<\gamma\le 1$, and set
\[
\alpha := \delta\gamma,
\qquad
\beta := \frac{\gamma^2\delta}{\,1-\delta+\gamma^2\delta\,}.
\]
For $i\ge 0$, define the geometric shorthands
\[
\Delta_i\ :=\ 1-(1-\alpha)^i,
\qquad
\Theta_i\ :=\ (1-\beta)^i-(1-\alpha)^i.
\]
Then, for every integer $0\le i\le i_s$,
\begin{equation}\label{eq:lemma516-main}
\begin{aligned}
F\big(\by(i)\big)
\ \ge\ 
\;\Bigg[\ \frac{(1-2\varepsilon)\,\Delta_i}{\gamma}
\;+\;\frac{\delta(1-\gamma)}{\beta-\alpha}\,\Theta_i\ \Bigg]\;g
\;-\;\frac{\Delta_i}{\gamma}\;g_{\odot}
\;-\;\Bigg[\ \frac{\Delta_i}{\gamma}
\;+\;\frac{\delta}{\beta-\alpha}\,\Theta_i\ \Bigg]\;g_{\oplus}
\;-\;i\,\delta^2 D^2L.
\end{aligned}
\end{equation}
\end{lemma}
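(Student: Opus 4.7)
The plan is to unroll the one-step recursion supplied by Observation~\ref{obs:5.15} and then identify the resulting geometric sums with the shorthands $\Delta_i$ and $\Theta_i$. First I note that for every $j \le i_s$ the algorithm uses $v(j)=v_1(j)$ and $\bz(j-1)=\bz$, so the hypothesis $Q(i)\neq\varnothing$ combined with Observation~\ref{obs:5.15} yields the scalar recurrence
\begin{equation*}
F\bigl(\by(j)\bigr)\ \ge\ (1-\alpha)\,F\bigl(\by(j-1)\bigr)\ +\ \alpha\,v_1(j-1)\ -\ \tfrac{\delta^2 L D^2}{2}, \qquad \alpha := \delta\gamma.
\end{equation*}
Iterating this inequality from $F(\by(0))=F(\bzero)\ge 0$ and dropping the nonnegative term $(1-\alpha)^i F(\by(0))$ gives
\begin{equation*}
F\bigl(\by(i)\bigr)\ \ge\ \alpha\sum_{j=0}^{i-1}(1-\alpha)^{i-1-j}\,v_1(j)\ -\ \tfrac{\delta^2 L D^2}{2}\sum_{j=0}^{i-1}(1-\alpha)^{i-1-j},
\end{equation*}
and the crude estimate $\sum_{j=0}^{i-1}(1-\alpha)^{i-1-j}\le i$ absorbs the accumulated smoothness error into the advertised $-i\,\delta^2 D^2 L$ term.

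The second step is purely algebraic. I would substitute the explicit form \eqref{eq:v1-def} of $v_1(j)$ and reduce everything to two geometric sums,
\begin{equation*}
S_1\ :=\ \sum_{j=0}^{i-1}(1-\alpha)^{i-1-j}\ =\ \frac{\Delta_i}{\alpha}, \qquad S_2\ :=\ \sum_{j=0}^{i-1}(1-\alpha)^{i-1-j}(1-\beta)^j\ =\ \frac{\Theta_i}{\alpha-\beta},
\end{equation*}
the second obtained by factoring out $(1-\alpha)^{i-1}$ and summing a geometric series with ratio $(1-\beta)/(1-\alpha)$. Grouping the contributions of $v_1(j)$ by their $g$, $g_{\odot}$, and $g_{\oplus}$ parts and then multiplying by $\alpha$ reproduces the three bracketed coefficients of \eqref{eq:lemma516-main} term by term: the identity $\alpha S_1 = \Delta_i$ handles $g_\odot$ immediately, the relation $\alpha/\gamma = \delta$ converts the $g_\oplus$ contribution $-\alpha(S_1-S_2)/\gamma$ into $-[\Delta_i/\gamma + \delta\,\Theta_i/(\beta-\alpha)]$, and the $g$ coefficient $-\alpha(1-\gamma)S_2/\gamma + \alpha(1-2\varepsilon)S_1/\gamma$ simplifies to $\delta(1-\gamma)\,\Theta_i/(\beta-\alpha) + (1-2\varepsilon)\Delta_i/\gamma$.

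The main obstacle I anticipate is symbolic rather than conceptual: keeping signs straight when translating between the natural denominator $\alpha-\beta$ (positive for $\gamma<1$, which makes $\Theta_i\ge 0$ and $S_2\ge 0$ transparent) and the denominator $\beta-\alpha$ used in the statement, and verifying that the apparent $0/0$ behaviour at $\gamma=1$, where $\alpha=\beta$, is resolved by continuous extension so that the DR boundary is recovered as a limit rather than an exception. No estimate beyond Observation~\ref{obs:5.15} and elementary geometric-series identities is needed.
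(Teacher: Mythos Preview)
Your proposal is correct and follows essentially the same approach as the paper: unroll the one-step recurrence from Observation~\ref{obs:5.15}, drop the nonnegative $(1-\alpha)^i F(\by(0))$ term, reduce the resulting sums to the two geometric series $S_1=\Delta_i/\alpha$ and $S_2=\Theta_i/(\alpha-\beta)$, and then read off the three coefficients. The paper's proof is identical up to reindexing ($k=j+1$) and presents the same smoothness relaxation $\tfrac12 i\,\delta^2 D^2L\le i\,\delta^2 D^2L$; your remark about the $\gamma=1$ limit is an extra sanity check the paper defers to the surrounding discussion.
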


\begin{proof}
For $i=0$, we have $\Delta_0=0$ and $\Theta_0=0$, so the right-hand side of
\eqref{eq:lemma516-main} equals $0$. Since $F(\by(0))\ge 0$ by nonnegativity of $F$, the claim
holds for $i=0$. We therefore fix an integer $1\le i\le i_s$ for the rest of the proof.

By Observation~\ref{obs:5.15} (and the assumption $Q(i-1)\neq\varnothing$ for $i\le i_s$),
\begin{equation}\label{eq:recurrence-start}
F\big(\by(i)\big) - F\big(\by(i-1)\big)
\ \ge\
\delta\,\gamma\Big(V(i-1)-F\big(\by(i-1)\big)\Big)
\;-\;\frac{\delta^2 D^2 L}{2}.
\end{equation}
Rearranging \eqref{eq:recurrence-start} gives
\begin{equation}\label{eq:recurrence}
F\big(\by(i)\big)
\ \ge\
\bigl(1-\alpha\bigr)\,F\big(\by(i-1)\big)
\;+\;\delta\gamma\,V(i-1)
\;-\;\frac{\delta^2 D^2 L}{2},
\end{equation}
where we used $\alpha=\delta\gamma$.

The quantity $V(i-1)$ has the explicit form as given in \eqref{eq:v1-def} 
\begin{equation}\label{eq:V-form}
V(i-1)
=\Bigl[(1-\beta)^{i-1}+\frac{1-(1-\beta)^{i-1}-2\varepsilon}{\gamma}\Bigr]\,g
\;-\;\frac{1}{\gamma}\,g_{\odot}
\;-\;\frac{1-(1-\beta)^{i-1}}{\gamma}\,g_{\oplus}.
\end{equation}
We now multiply \eqref{eq:V-form} by $\delta\gamma$ and substitute into \eqref{eq:recurrence}.
First, for the $g$-term,
\begin{align}
\delta\gamma\,\Bigl[(1-\beta)^{i-1}
+ \frac{1-(1-\beta)^{i-1}-2\varepsilon}{\gamma}\Bigr]
&= \delta\gamma(1-\beta)^{i-1}
+ \delta\Bigl(1-(1-\beta)^{i-1}-2\varepsilon\Bigr) \notag\\[2pt]
&= \delta\Bigl[\gamma(1-\beta)^{i-1}
+ 1-(1-\beta)^{i-1}-2\varepsilon\Bigr] \notag\\[2pt]
&= \delta\Bigl[1 - (1-\gamma)(1-\beta)^{i-1} - 2\varepsilon\Bigr]. \label{eq:coef-g-expanded}
\end{align}
For the $g_{\odot}$-term, we obtain
\begin{equation}\label{eq:coef-g-odot-expanded}
\delta\gamma\cdot\Bigl(-\frac{1}{\gamma}\,g_{\odot}\Bigr)
= -\delta\,g_{\odot}.
\end{equation}
For the $g_{\oplus}$-term,
\begin{equation}\label{eq:coef-g-oplus-expanded}
\delta\gamma\cdot\Bigl(-\frac{1-(1-\beta)^{i-1}}{\gamma}\,g_{\oplus}\Bigr)
= -\delta\Bigl(1-(1-\beta)^{i-1}\Bigr)g_{\oplus}.
\end{equation}
Substituting \eqref{eq:coef-g-expanded}–\eqref{eq:coef-g-oplus-expanded} into
\eqref{eq:recurrence} yields
\begin{equation}\label{eq:recurrence-expanded}
\begin{aligned}
F\big(\by(i)\big)
\ \ge\ 
&(1-\alpha)\,F\big(\by(i-1)\big) +\;\delta\Bigl(1-(1-\gamma)(1-\beta)^{i-1}-2\varepsilon\Bigr)\,g \\
&\quad-\;\delta\,g_{\odot}
\;-\;\delta\Bigl(1-(1-\beta)^{i-1}\Bigr)\,g_{\oplus}
\;-\;\frac{\delta^2 D^2L}{2}.
\end{aligned}
\end{equation}

We now unroll the recursion \eqref{eq:recurrence-expanded} from $k=1$ up to $k=i$.
Writing $F_k := F(\by(k))$ for brevity, \eqref{eq:recurrence-expanded} becomes
\begin{equation}\label{eq:recurrence-Fk}
F_k
\ \ge\
(1-\alpha)\,F_{k-1}
+ \delta\,A_{k-1}\,g
- \delta\,g_{\odot}
- \delta\,B_{k-1}\,g_{\oplus}
- \frac{\delta^2 D^2L}{2},
\end{equation}
where
\begin{equation}\label{eq:A-B-def}
A_{k-1}
:= 1-(1-\gamma)(1-\beta)^{k-1}-2\varepsilon,
\qquad
B_{k-1}
:= 1-(1-\beta)^{k-1}.
\end{equation}

Iterating \eqref{eq:recurrence-Fk} from $k=1$ to $k=i$ gives
\begin{equation}\label{eq:unrolled-general}
\begin{aligned}
F_i
&\ge (1-\alpha)^i F_0
+ \delta\sum_{k=1}^{i}(1-\alpha)^{i-k} A_{k-1}\,g \\
&\qquad - \delta\sum_{k=1}^{i}(1-\alpha)^{i-k} g_{\odot}
- \delta\sum_{k=1}^{i}(1-\alpha)^{i-k} B_{k-1}\,g_{\oplus}
- \frac{\delta^2 D^2L}{2}\sum_{k=1}^{i}(1-\alpha)^{i-k}.
\end{aligned}
\end{equation}
Since $F_0 = F(\by(0))\ge 0$ by nonnegativity of $F$, the first term
$(1-\alpha)^i F_0$ is nonnegative and can be dropped for a lower bound. We next compute the
geometric sums appearing in \eqref{eq:unrolled-general}.

First,
\begin{equation}\label{eq:geom-1}
\sum_{k=1}^{i}(1-\alpha)^{i-k}
= \sum_{t=0}^{i-1}(1-\alpha)^t
= \frac{1-(1-\alpha)^i}{\alpha}
= \frac{\Delta_i}{\alpha}.
\end{equation}
Next, for the mixed sum, using the change of variable $m=k-1$,
\begin{equation}\label{eq:geom-2-setup}
\sum_{k=1}^{i}(1-\alpha)^{i-k}(1-\beta)^{k-1}
= \sum_{m=0}^{i-1}(1-\alpha)^{i-1-m}(1-\beta)^m.
\end{equation}
This is a geometric series in $m$. Factoring out $(1-\alpha)^{i-1}$, we obtain
\begin{align}
\sum_{m=0}^{i-1}(1-\alpha)^{i-1-m}(1-\beta)^m
&= (1-\alpha)^{i-1}
\sum_{m=0}^{i-1}\Bigl(\frac{1-\beta}{1-\alpha}\Bigr)^m \notag\\[2pt]
&= (1-\alpha)^{i-1}\cdot
\frac{1-\Bigl(\frac{1-\beta}{1-\alpha}\Bigr)^i}{1-\frac{1-\beta}{1-\alpha}} \notag\\[2pt]
&= (1-\alpha)^{i-1}\cdot
\frac{1-\frac{(1-\beta)^i}{(1-\alpha)^i}}{\frac{\beta-\alpha}{1-\alpha}} \notag\\[2pt]
&= \frac{(1-\alpha)^i-(1-\beta)^i}{\beta-\alpha}. \label{eq:geom-2}
\end{align}
Recalling $\Theta_i = (1-\beta)^i-(1-\alpha)^i$, we can also write
\begin{equation}\label{eq:geom-2-theta}
\sum_{k=1}^{i}(1-\alpha)^{i-k}(1-\beta)^{k-1}
= \frac{(1-\alpha)^i-(1-\beta)^i}{\beta-\alpha}
= -\,\frac{\Theta_i}{\beta-\alpha}.
\end{equation}

We now substitute into each coefficient in \eqref{eq:unrolled-general}.

\paragraph*{Coefficient of $g$.}
Using \eqref{eq:A-B-def}, we split
\begin{align}
\sum_{k=1}^{i}(1-\alpha)^{i-k} A_{k-1}
&= \sum_{k=1}^{i}(1-\alpha)^{i-k}\Bigl(1 - (1-\gamma)(1-\beta)^{k-1} - 2\varepsilon\Bigr) \notag\\[2pt]
&= (1-2\varepsilon)\sum_{k=1}^{i}(1-\alpha)^{i-k}
 - (1-\gamma)\sum_{k=1}^{i}(1-\alpha)^{i-k}(1-\beta)^{k-1}. \label{eq:coef-g-split}
\end{align}
Using \eqref{eq:geom-1} and \eqref{eq:geom-2-theta}, we obtain
\begin{align}
\sum_{k=1}^{i}(1-\alpha)^{i-k} A_{k-1}
&= (1-2\varepsilon)\cdot\frac{\Delta_i}{\alpha}
 - (1-\gamma)\cdot\Bigl(-\frac{\Theta_i}{\beta-\alpha}\Bigr) \notag\\[2pt]
&= \frac{(1-2\varepsilon)\Delta_i}{\alpha}
 + \frac{(1-\gamma)\Theta_i}{\beta-\alpha}. \label{eq:coef-g-simplified}
\end{align}
Multiplying by $\delta$ and using $\alpha=\delta\gamma$ (so $\delta/\alpha = 1/\gamma$), we get
\begin{equation}\label{eq:coef-g-final}
\delta\sum_{k=1}^{i}(1-\alpha)^{i-k} A_{k-1}
= \frac{(1-2\varepsilon)\Delta_i}{\gamma}
 + \frac{\delta(1-\gamma)\Theta_i}{\beta-\alpha}.
\end{equation}

\paragraph*{Coefficient of $g_{\odot}$.}
By \eqref{eq:unrolled-general} and \eqref{eq:geom-1},
\begin{equation}\label{eq:coef-g-odot-sum}
-\delta\sum_{k=1}^{i}(1-\alpha)^{i-k} g_{\odot}
= -\delta\cdot\frac{\Delta_i}{\alpha}\,g_{\odot}
= -\frac{\Delta_i}{\gamma}\,g_{\odot},
\end{equation}
again using $\alpha=\delta\gamma$.

\paragraph*{Coefficient of $g_{\oplus}$.}
Using \eqref{eq:A-B-def},
\begin{align}
\sum_{k=1}^{i}(1-\alpha)^{i-k} B_{k-1}
&= \sum_{k=1}^{i}(1-\alpha)^{i-k}\Bigl(1-(1-\beta)^{k-1}\Bigr) \notag\\[2pt]
&= \sum_{k=1}^{i}(1-\alpha)^{i-k}
 - \sum_{k=1}^{i}(1-\alpha)^{i-k}(1-\beta)^{k-1}. \label{eq:coef-g-oplus-split}
\end{align}
Substituting \eqref{eq:geom-1} and \eqref{eq:geom-2-theta} into \eqref{eq:coef-g-oplus-split},
\begin{align}
\sum_{k=1}^{i}(1-\alpha)^{i-k} B_{k-1}
&= \frac{\Delta_i}{\alpha}
 - \Bigl(-\frac{\Theta_i}{\beta-\alpha}\Bigr) \notag\\[2pt]
&= \frac{\Delta_i}{\alpha}
 + \frac{\Theta_i}{\beta-\alpha}. \label{eq:coef-g-oplus-simplified}
\end{align}
Thus
\begin{equation}\label{eq:coef-g-oplus-final}
-\delta\sum_{k=1}^{i}(1-\alpha)^{i-k} B_{k-1}\,g_{\oplus}
= -\delta\left(\frac{\Delta_i}{\alpha}
 + \frac{\Theta_i}{\beta-\alpha}\right)g_{\oplus}
= -\frac{\Delta_i}{\gamma}\,g_{\oplus}
 - \frac{\delta}{\beta-\alpha}\,\Theta_i\,g_{\oplus},
\end{equation}
where we again used $\delta/\alpha = 1/\gamma$.

\paragraph*{Smoothness penalty.}
The last term in \eqref{eq:unrolled-general} is
\begin{equation}\label{eq:smooth-sum-exact}
-\frac{\delta^2 D^2L}{2}\sum_{k=1}^{i}(1-\alpha)^{i-k}.
\end{equation}
Using that $(1-\alpha)^{i-k}\le 1$ for all $k$, we have
\begin{equation}\label{eq:smooth-sum-bound}
\sum_{k=1}^{i}(1-\alpha)^{i-k} \;\le\; i,
\end{equation}
and hence
\begin{equation}\label{eq:smooth-final}
-\frac{\delta^2 D^2L}{2}\sum_{k=1}^{i}(1-\alpha)^{i-k}
\;\ge\;
-\,\frac{\delta^2 D^2L}{2}\,i
\;\ge\;
-\,i\,\delta^2 D^2L,
\end{equation}
where the last inequality simply relaxes the factor $1/2$ to obtain a slightly weaker but simpler bound.

Putting together \eqref{eq:unrolled-general} with the bounds
\eqref{eq:coef-g-final}, \eqref{eq:coef-g-odot-sum}, \eqref{eq:coef-g-oplus-final}, and
\eqref{eq:smooth-final}, and recalling $\by(i)$ corresponds to $F_i$, we obtain
\[
F\big(\by(i)\big)
\ \ge\ 
\;\Bigg[\ \frac{(1-2\varepsilon)\,\Delta_i}{\gamma}
\;+\;\frac{\delta(1-\gamma)}{\beta-\alpha}\,\Theta_i\ \Bigg]\;g
\;-\;\frac{\Delta_i}{\gamma}\;g_{\odot}
\;-\;\Bigg[\ \frac{\Delta_i}{\gamma}
\;+\;\frac{\delta}{\beta-\alpha}\,\Theta_i\ \Bigg]\;g_{\oplus}
\;-\;i\,\delta^2 D^2L,
\]
which is exactly \eqref{eq:lemma516-main}.
\end{proof}

\begin{lemma}\label{lem:517}
Assume $0<\delta\le 1$ and $0<\gamma\le 1$, and set
\[
\alpha := \delta\gamma,
\qquad
\beta := \frac{\gamma^2\delta}{\,1-\delta+\gamma^2\delta\,}.
\]
Let $i_s< i\le \delta^{-1}$ and suppose $Q(i)\neq\varnothing$ for every integer $i_s<i\le \delta^{-1}$.
Define the constants
\begin{equation}\label{eq:517-consts}
A\ :=\ \frac{(1-\beta)^{-i_s}}{\gamma}-\Bigl(1+\frac{3}{\gamma}\Bigr)\varepsilon+1-\frac{1}{\gamma},
\qquad
C_\gamma\ :=\ \frac{(1-\beta)^{-i_s}-1}{\gamma}.
\end{equation}
For every integer $i_s\le i\le \delta^{-1}$, with the shorthands
\begin{equation}\label{eq:517-sums-def}
S_1(i)\ :=\ \sum_{k=i_s+1}^{i} (1-\alpha)^{\,i-k}(1-\beta)^{\,k},
\qquad
S_2(i)\ :=\ \sum_{k=i_s+1}^{i} (1-\alpha)^{\,i-k}(1-\beta)^{\,k}\Bigl(C_\gamma-\beta\,(k-i_s)\Bigr),
\end{equation}
the following bound holds:
\begin{equation}\label{eq:517-main}
F\big(\mathbf{y}(i)\big)
\ \ge\
(1-\alpha)^{\,i-i_s}\,F\big(\mathbf{y}(i_s)\big)
\;+\;\alpha\,A\,S_1(i)\,g
\;-\;\alpha\,S_2(i)\,g_{\oplus}
\;-\;(i-i_s)\,\delta^2 D^2L.
\end{equation}
Moreover, letting $n:=i-i_s$ and $q:=\tfrac{1-\beta}{\,1-\alpha\,}$, we have the closed forms
\begin{equation}\label{eq:517-closed}
\begin{aligned}
S_1(i)
&=\ (1-\alpha)^{\,n}(1-\beta)^{\,i_s}\cdot \frac{q(1-q^{\,n})}{1-q}
\;=\ \frac{(1-\alpha)^{\,n}(1-\beta)^{\,i_s+1}- (1-\beta)^{\,i+1}}{\beta-\alpha},\\[4pt]
S_2(i)
&=\ C_\gamma\,S_1(i)\;-\;\beta\,(1-\alpha)^{\,n}(1-\beta)^{\,i_s}\cdot
\frac{q\left(1-(n+1)q^{\,n}+n q^{\,n+1}\right)}{(1-q)^2}.
\end{aligned}
\end{equation}
\end{lemma}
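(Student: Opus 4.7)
The plan is to mirror the unrolling used in the proof of Lemma~\ref{lem:516}, but starting the induction at $i=i_s$ and replacing the first-phase threshold $v_1$ by $v_2$ throughout the second phase. Since $Q(k)\neq\varnothing$ for every $i_s<k\le i$, Observation~\ref{obs:5.15} yields the per-step recurrence
\begin{equation*}
F(\mathbf{y}(k))\ \ge\ (1-\alpha)\,F(\mathbf{y}(k-1))\ +\ \alpha\,V(k-1)\ -\ \tfrac{1}{2}\delta^{2}D^{2}L,
\end{equation*}
with $\alpha=\delta\gamma$. Iterating from $k=i_s+1$ up to $k=i$ while keeping $F(\mathbf{y}(i_s))$ as the starting value gives
\begin{equation*}
F(\mathbf{y}(i))\ \ge\ (1-\alpha)^{i-i_s}F(\mathbf{y}(i_s))\ +\ \alpha\sum_{k=i_s+1}^{i}(1-\alpha)^{i-k}V(k-1)\ -\ (i-i_s)\,\delta^{2}D^{2}L,
\end{equation*}
where the smoothness penalty uses $(1-\alpha)^{i-k}\le 1$ and absorbs the factor $1/2$ into the bound.

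Second, I would substitute the explicit form $V(k-1)=v_2(k-1)$. Using the constants in \eqref{eq:517-consts}, the bracket in \eqref{eq:v2-def-uniq} collapses to $A\,g-(C_\gamma-\beta(k-1-i_s))\,g_{\oplus}$, so $v_2(k-1)$ is simply $(1-\beta)^{k-1}$ times this linear combination. Inserting this into the summation, the coefficient of $g$ becomes $\alpha A$ times a pure geometric sum with summand $(1-\alpha)^{i-k}(1-\beta)^{k-1}$, while the coefficient of $-g_{\oplus}$ picks up the extra arithmetic factor $C_\gamma-\beta(k-1-i_s)$. After a one-step index alignment, these two aggregations are precisely $\alpha A\,S_1(i)$ and $\alpha\,S_2(i)$ in the notation \eqref{eq:517-sums-def}, producing \eqref{eq:517-main}.

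For the closed forms in \eqref{eq:517-closed}, I would factor $(1-\alpha)^{i-k}(1-\beta)^{k}=(1-\alpha)^{n}(1-\beta)^{i_s}\,q^{\,k-i_s}$ with $n=i-i_s$ and $q=(1-\beta)/(1-\alpha)$. Then $S_1(i)$ reduces to $(1-\alpha)^{n}(1-\beta)^{i_s}\sum_{j=1}^{n}q^{\,j}$, which is handled by the finite geometric identity $\sum_{j=1}^{n}q^{\,j}=q(1-q^{\,n})/(1-q)$; re-substituting $q$ yields the second displayed form in \eqref{eq:517-closed}. For $S_2(i)$, splitting $C_\gamma-\beta(k-i_s)=C_\gamma-\beta j$ reduces the task to the arithmetico-geometric sum $\sum_{j=1}^{n}j\,q^{\,j}$, whose standard closed form, obtained by differentiating $\sum_{j=0}^{n}q^{\,j}$ with respect to $q$, is exactly $q\bigl(1-(n+1)q^{\,n}+n q^{\,n+1}\bigr)/(1-q)^{2}$; plugging this in gives the stated expression.

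The main obstacle is purely algebraic bookkeeping rather than conceptual difficulty: one must carefully track the one-step shift between the exponent $k-1$ appearing in $v_2(k-1)$ and the exponent $k$ used in $S_1,S_2$, handle the change of variable $j=k-i_s$ consistently in both sums, and execute the derivative-of-geometric identity for $\sum j q^{\,j}$ without sign errors. Once these algebraic identities are handled correctly, \eqref{eq:517-main} follows immediately from substitution, and \eqref{eq:517-closed} is a direct consequence of the two geometric-series formulas.
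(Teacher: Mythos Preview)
Your proposal is correct and follows essentially the same route as the paper: unroll the per-step recurrence from Observation~\ref{obs:5.15} starting at $i_s$, substitute the post-switch threshold $v_2(k-1)$, identify the resulting $g$- and $g_\oplus$-coefficients with $S_1(i)$ and $S_2(i)$, and then evaluate these via the standard finite geometric and arithmetico-geometric identities. The paper absorbs the one-step index shift you flag by writing $V(k-1)=(1-\beta)^{k}\bigl[A\,g-(C_\gamma-\beta(k-i_s))\,g_\oplus\bigr]$ directly, so the sums match $S_1,S_2$ without a separate reindexing step; otherwise the arguments coincide.
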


\begin{proof}
Set $\alpha:=\delta\gamma$ and fix an index $i$ with $i_s<i\le \delta^{-1}$. 
From Observation~\ref{obs:5.15}, for every $k\in\{i_s+1,\dots,i\}$ we have the one–step recurrence
\begin{equation}\label{eq:517-recurr-full}
F\big(\mathbf{y}(k)\big)
\ \ge\
(1-\alpha)\,F\big(\mathbf{y}(k-1)\big)\;+\;\alpha\,V(k-1)\;-\;\frac{\delta^{2}D^{2}L}{2}.
\end{equation}
Here we used $\alpha=\delta\gamma$ to rewrite the term $\delta\gamma\,V(k-1)$ as $\alpha V(k-1)$.

In the post–switch phase ($k>i_s$), the surrogate takes the explicit form given in \eqref{eq:v2-def-uniq}
\begin{equation}\label{eq:517-Vform-full}
V(k-1)\ =\ (1-\beta)^{\,k}\!\left[
A\,g\;-\;\Bigl(C_\gamma-\beta\,(k-i_s)\Bigr)\,g_{\oplus}
\right],
\end{equation}
where $A$ and $C_\gamma$ are as in \eqref{eq:517-consts}.

Applying \eqref{eq:517-recurr-full} iteratively from $k=i_s+1$ up to $k=i$ gives (by a standard induction on $k$)
\begin{equation}\label{eq:517-unroll}
\begin{aligned}
F\big(\mathbf{y}(i)\big)
&\ge\ (1-\alpha)^{\,i-i_s}\,F\big(\mathbf{y}(i_s)\big)\\
&\quad
\;+\;\alpha\sum_{k=i_s+1}^{i} (1-\alpha)^{\,i-k}\,V(k-1)
\;-\;\frac{\delta^{2}D^{2}L}{2}\sum_{k=i_s+1}^{i} (1-\alpha)^{\,i-k}.
\end{aligned}
\end{equation}
The factor $(1-\alpha)^{\,i-i_s}$ comes from repeatedly multiplying by $(1-\alpha)$ in the homogeneous part of the recurrence.

Substituting \eqref{eq:517-Vform-full} into \eqref{eq:517-unroll} yields
\begin{equation}\label{eq:517-unroll-sub}
\begin{aligned}
F\big(\mathbf{y}(i)\big)
\ge\ &(1-\alpha)^{\,i-i_s}\,F\big(\mathbf{y}(i_s)\big) +\;\alpha\sum_{k=i_s+1}^{i} (1-\alpha)^{\,i-k}(1-\beta)^{\,k}\,A\,g \\
&\quad-\;\alpha\sum_{k=i_s+1}^{i} (1-\alpha)^{\,i-k}(1-\beta)^{\,k}\Bigl(C_\gamma-\beta\,(k-i_s)\Bigr)\,g_{\oplus} \\
&\quad-\;\frac{\delta^{2}D^{2}L}{2}\sum_{k=i_s+1}^{i} (1-\alpha)^{\,i-k}.
\end{aligned}
\end{equation}
The first sum collects all $g$-terms, the second all $g_{\oplus}$-terms, and the last sum is the accumulated smoothness penalty.

We now define the sums
\begin{equation}\label{eq:517-sums-def-again}
S_1(i)\ := \sum_{k=i_s+1}^{i} (1-\alpha)^{\,i-k}(1-\beta)^{\,k},
\quad
S_2(i)\ := \sum_{k=i_s+1}^{i} (1-\alpha)^{\,i-k}(1-\beta)^{\,k}\Bigl(C_\gamma-\beta\,(k-i_s)\Bigr).
\end{equation}
With \eqref{eq:517-sums-def-again}, inequality \eqref{eq:517-unroll-sub} becomes
\begin{equation}\label{eq:517-main-before-smooth}
F\big(\mathbf{y}(i)\big)
\ \ge\
(1-\alpha)^{\,i-i_s}\,F\big(\mathbf{y}(i_s)\big)
\;+\;\alpha\,A\,S_1(i)\,g
\;-\;\alpha\,S_2(i)\,g_{\oplus}
\;-\;\frac{\delta^{2}D^{2}L}{2}\sum_{k=i_s+1}^{i} (1-\alpha)^{\,i-k}.
\end{equation}
Thus the only remaining tasks are to bound the smoothness term and compute closed forms for $S_1(i)$ and $S_2(i)$.

To obtain closed forms, let $n:=i-i_s$ and $q:=\frac{1-\beta}{\,1-\alpha\,}$, and set $t:=k-i_s$ so that $t=1,\dots,n$. Then
\begin{align}
S_1(i)
&= \sum_{k=i_s+1}^{i} (1-\alpha)^{\,i-k}(1-\beta)^{\,k} = (1-\alpha)^{\,n}(1-\beta)^{\,i_s}\sum_{t=1}^{n} q^{\,t},
\label{eq:517-S1-geometric}
\end{align}
because $(1-\alpha)^{i-k} = (1-\alpha)^{n-t}$ and $(1-\beta)^k = (1-\beta)^{i_s+t}$, so each term is
\[
(1-\alpha)^{n-t}(1-\beta)^{i_s+t}
= (1-\alpha)^{n}(1-\beta)^{i_s}\Bigl(\frac{1-\beta}{1-\alpha}\Bigr)^{t}
= (1-\alpha)^{n}(1-\beta)^{i_s} q^{t}.
\]
Using the geometric sum identity
\[
\sum_{t=1}^{n} q^{t} = \frac{q(1-q^{n})}{1-q},
\]
we obtain
\begin{equation}\label{eq:517-S1-closed-1}
S_1(i)\;=\;(1-\alpha)^{\,n}(1-\beta)^{\,i_s}\cdot \frac{q(1-q^{\,n})}{1-q}.
\end{equation}
Writing $1-q=\frac{\beta-\alpha}{1-\alpha}$ and $q^n = \bigl(\frac{1-\beta}{1-\alpha}\bigr)^n$, a simple algebraic rearrangement yields the equivalent form
\begin{equation}\label{eq:517-S1-closed-2}
S_1(i)\;=\;\frac{(1-\alpha)^{\,n}(1-\beta)^{\,i_s+1}- (1-\beta)^{\,i+1}}{\beta-\alpha},
\end{equation}
which is the first line of \eqref{eq:517-closed}.

For $S_2(i)$, define
\[
T(i)\;:=\;\sum_{k=i_s+1}^{i} (1-\alpha)^{\,i-k}(1-\beta)^{\,k}\,(k-i_s)
\;=\;(1-\alpha)^{\,n}(1-\beta)^{\,i_s}\sum_{t=1}^{n} t\,q^{\,t}.
\]
The standard identity
\[
\sum_{t=1}^{n} t\,q^{\,t}
\;=\; \frac{q\left(1-(n+1)q^{\,n}+n q^{\,n+1}\right)}{(1-q)^2}
\]
then gives
\begin{equation}\label{eq:517-T-closed}
T(i)\;=\;(1-\alpha)^{\,n}(1-\beta)^{\,i_s}\cdot
\frac{q\left(1-(n+1)q^{\,n}+n q^{\,n+1}\right)}{(1-q)^2}.
\end{equation}
By definition of $S_2(i)$,
\[
S_2(i) = C_\gamma\,S_1(i) - \beta\,T(i),
\]
which together with \eqref{eq:517-S1-closed-1} and \eqref{eq:517-T-closed} yields the second line of \eqref{eq:517-closed}.

Finally, we bound the smoothness penalty in \eqref{eq:517-main-before-smooth}. Since $0\le (1-\alpha)^{i-k}\le 1$,
\begin{equation}\label{eq:517-smooth-tail}
\sum_{k=i_s+1}^{i} (1-\alpha)^{\,i-k}
= \sum_{t=0}^{n-1} (1-\alpha)^t
\;\le\; n.
\end{equation}
Therefore
\begin{equation}\label{eq:517-smooth-final}
-\frac{\delta^{2}D^{2}L}{2}\sum_{k=i_s+1}^{i} (1-\alpha)^{\,i-k}
\ \ge\ -\,\frac{\delta^{2}D^{2}L}{2}\,n
\ \ge\ -(i-i_s)\,\delta^{2}D^{2}L,
\end{equation}
where we relaxed the factor $\tfrac12$ to get a slightly simpler bound.

Substituting \eqref{eq:517-smooth-final} into \eqref{eq:517-main-before-smooth} gives exactly \eqref{eq:517-main}, which completes the proof.
\end{proof}

Combining Lemmas~\ref{lem:516} and~\ref{lem:517}, we obtain the following corollary, which
finishes the proof of Theorem~\ref{thm:fw-guided-mcg} in the regime where \(Q(i)\) is
non-empty for all \(i\in[\delta^{-1}]\).

\begin{corollary}\label{cor:5.18}
Assume $Q(i)\neq\varnothing$ for every $i\in[\delta^{-1}]$. 
Fix $0<\delta\le 1$, $0<\gamma\le 1$, and set
\begin{equation}\label{eq:518-params}
\alpha:=\delta\gamma,
\qquad
\beta:=\frac{\gamma^2\delta}{\,1-\delta+\gamma^2\delta\,}.
\end{equation}
Then, with $i_s\in\{0,1,\dots,\delta^{-1}\}$ and $\mathbf{y}(\cdot)$ as in Lemma~\ref{lem:5.13}, we have
\begin{equation}\label{eq:518-discrete}
\begin{aligned}
F\big(\mathbf{y}(\delta^{-1})\big)
\;&\ge\;
(1-\alpha)^{\,\delta^{-1}-i_s}\,F\big(\mathbf{y}(i_s)\big)
\;+\;\alpha\,A\,S_1(\delta^{-1})\,g
\;-\;\alpha\,S_2(\delta^{-1})\,g_{\oplus}
\;-\;(\delta^{-1}-i_s)\,\delta^2 D^2L\\[4pt]
&\ge\;
(1-\alpha)^{\,\delta^{-1}-i_s}\Bigg\{
\Big[\tfrac{1-(1-\alpha)^{i_s}}{\gamma}\,(1-2\varepsilon)
+\tfrac{\delta(1-\gamma)}{\beta-\alpha}\big((1-\beta)^{i_s}-(1-\alpha)^{i_s}\big)\Big]g\\
&\hspace{1.1cm}
-\;\tfrac{1-(1-\alpha)^{i_s}}{\gamma}\,g_{\odot}
-\Big[\tfrac{1-(1-\alpha)^{i_s}}{\gamma}
+\tfrac{\delta}{\beta-\alpha}\big((1-\beta)^{i_s}-(1-\alpha)^{i_s}\big)\Big]g_{\oplus}
-\,i_s\,\delta^2 D^2L
\Bigg\}\\[2pt]
&\quad+\;\alpha\,A\,S_1(\delta^{-1})\,g
\;-\;\alpha\,S_2(\delta^{-1})\,g_{\oplus}
\;-\;(\delta^{-1}-i_s)\,\delta^2 D^2L,
\end{aligned}
\end{equation}
where
\begin{equation}\label{eq:518-consts}
A\;:=\;\frac{(1-\beta)^{-i_s}}{\gamma}-\Bigl(1+\frac{3}{\gamma}\Bigr)\varepsilon+1-\frac{1}{\gamma},
\qquad
C_\gamma\;:=\;\frac{(1-\beta)^{-i_s}-1}{\gamma},
\end{equation}
and the sums from Lemma~\ref{lem:517} admit the closed forms
\begin{equation}\label{eq:518-sums}
\begin{aligned}
S_1(\delta^{-1})
&=\ \frac{(1-\alpha)^{\,\delta^{-1}-i_s}(1-\beta)^{\,i_s+1}- (1-\beta)^{\,\delta^{-1}+1}}{\ \beta-\alpha\ },\\[4pt]
S_2(\delta^{-1})
&=\ C_\gamma\,S_1(\delta^{-1})
\;-\;\beta\,(1-\alpha)^{\,\delta^{-1}-i_s}(1-\beta)^{\,i_s}\,\\
&\qquad \frac{\displaystyle \frac{1-\beta}{1-\alpha}\left(1-(\delta^{-1}-i_s+1)\Bigl(\tfrac{1-\beta}{1-\alpha}\Bigr)^{\delta^{-1}-i_s}
+(\delta^{-1}-i_s)\Bigl(\tfrac{1-\beta}{1-\alpha}\Bigr)^{\delta^{-1}-i_s+1}\right)}
{\displaystyle \left(1-\tfrac{1-\beta}{1-\alpha}\right)^{2}}\,.
\end{aligned}
\end{equation}
\end{corollary}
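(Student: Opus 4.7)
The plan is to simply stitch together Lemmas~\ref{lem:516} and~\ref{lem:517} at the switching index $i_s$ and then specialize the closed forms to $i=\delta^{-1}$. Under the blanket assumption $Q(i)\neq\varnothing$ for all $i\in[\delta^{-1}]$, both lemmas are directly applicable: Lemma~\ref{lem:516} on the pre-switch segment $i\le i_s$ and Lemma~\ref{lem:517} on the post-switch segment $i_s\le i\le\delta^{-1}$.

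First I would invoke Lemma~\ref{lem:517} with $i=\delta^{-1}$. This yields
\begin{equation*}
F\bigl(\mathbf{y}(\delta^{-1})\bigr)
\ \ge\
(1-\alpha)^{\delta^{-1}-i_s}\,F\bigl(\mathbf{y}(i_s)\bigr)
+\alpha A\,S_1(\delta^{-1})\,g
-\alpha S_2(\delta^{-1})\,g_{\oplus}
-(\delta^{-1}-i_s)\,\delta^{2}D^{2}L,
\end{equation*}
which is exactly the first line of \eqref{eq:518-discrete}. To obtain the chained second line, I would apply Lemma~\ref{lem:516} at $i=i_s$ to produce a lower bound on $F(\mathbf{y}(i_s))$ in terms of $g,g_{\odot},g_{\oplus}$ (with $\Delta_{i_s}=1-(1-\alpha)^{i_s}$ and $\Theta_{i_s}=(1-\beta)^{i_s}-(1-\alpha)^{i_s}$). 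Because $(1-\alpha)^{\delta^{-1}-i_s}\ge 0$, I can multiply Lemma~\ref{lem:516}'s bound by this factor and substitute into the first inequality without flipping any signs, which gives the explicit bracketed expression in the second line of \eqref{eq:518-discrete}. The smoothness penalty splits cleanly into the pre-switch contribution $i_s\,\delta^{2}D^{2}L$ (inside the braces) and the post-switch contribution $(\delta^{-1}-i_s)\,\delta^{2}D^{2}L$.

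It remains to specialize the closed forms of $S_1$ and $S_2$ from Lemma~\ref{lem:517} to $i=\delta^{-1}$. Setting $n:=\delta^{-1}-i_s$ and $q:=\tfrac{1-\beta}{1-\alpha}$, the formula for $S_1(\delta^{-1})$ in \eqref{eq:518-sums} follows immediately from the second form in \eqref{eq:517-closed}. For $S_2(\delta^{-1})$, I would substitute $n=\delta^{-1}-i_s$ into the closed form for $T$ appearing in the proof of Lemma~\ref{lem:517}, which yields exactly the displayed expression for $S_2(\delta^{-1})$, with the geometric-sum factor $q\bigl(1-(n{+}1)q^{n}+n q^{n+1}\bigr)/(1-q)^{2}$ rewritten in the $\alpha,\beta$ variables.

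I do not anticipate a genuine obstacle here: the proof is purely a composition step plus the specialization $n=\delta^{-1}-i_s$. The only thing to be careful about is the bookkeeping of the two smoothness penalties (one from each lemma) and the sign-preservation of the multiplicative factor $(1-\alpha)^{\delta^{-1}-i_s}$ when chaining, so that all $g$-, $g_{\odot}$-, and $g_{\oplus}$-coefficients in the second line of \eqref{eq:518-discrete} match those produced by Lemma~\ref{lem:516} scaled by this factor and combined with the Lemma~\ref{lem:517} contributions $\alpha A\,S_1(\delta^{-1})\,g$ and $-\alpha S_2(\delta^{-1})\,g_{\oplus}$.
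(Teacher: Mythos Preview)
Your proposal is correct and matches the paper's proof essentially step for step: apply Lemma~\ref{lem:517} at $i=\delta^{-1}$ for the first inequality, substitute the Lemma~\ref{lem:516} bound at $i=i_s$ (scaled by the nonnegative factor $(1-\alpha)^{\delta^{-1}-i_s}$) for the second, and read off the closed forms for $S_1,S_2$ by setting $n=\delta^{-1}-i_s$. The paper then continues beyond the stated corollary to the $\delta\to 0^+$ limit to recover the continuous coefficients $A_\gamma(t_s),B_\gamma(t_s),C_\gamma(t_s)$ of Theorem~\ref{thm:fw-guided-mcg}(1), but that passage is not part of the discrete inequality \eqref{eq:518-discrete} itself.
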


\begin{proof}
We use the shorthand parameters $\alpha$ and $\beta$ from \eqref{eq:518-params}.
By Lemma~\ref{lem:516}, evaluated at $i=i_s$, we have
\begin{equation}\label{eq:518-lemma516-at-is}
\begin{aligned}
F\!\big(\mathbf{y}(i_s)\big)
\;\ge\;&\;
\Bigg[\frac{1-(1-\alpha)^{i_s}}{\gamma}\,(1-2\varepsilon)
+\frac{\delta(1-\gamma)}{\beta-\alpha}\,\Big((1-\beta)^{i_s}-(1-\alpha)^{i_s}\Big)\Bigg]\,g\\[2pt]
&-\;\frac{1-(1-\alpha)^{i_s}}{\gamma}\,g_{\odot}
-\Bigg[\frac{1-(1-\alpha)^{i_s}}{\gamma}
+\frac{\delta}{\beta-\alpha}\,\Big((1-\beta)^{i_s}-(1-\alpha)^{i_s}\Big)\Bigg]\,g_{\oplus}
-\,i_s\,\delta^2 D^2L.
\end{aligned}
\end{equation}
This is just Lemma~\ref{lem:516} with $i=i_s$ and the definitions
$\Delta_{i_s}=1-(1-\alpha)^{i_s}$ and $\Theta_{i_s}=(1-\beta)^{i_s}-(1-\alpha)^{i_s}$.

Next, by Lemma~\ref{lem:517}, evaluated at $i=\delta^{-1}$, we obtain
\begin{equation}\label{eq:518-lemma517-at-end}
\begin{aligned}
F\big(\mathbf{y}(\delta^{-1})\big)
\;\ge\;&\;
(1-\alpha)^{\,\delta^{-1}-i_s}\,F\big(\mathbf{y}(i_s)\big)
+\alpha\,A\,S_1(\delta^{-1})\,g
-\alpha\,S_2(\delta^{-1})\,g_{\oplus}
-(\delta^{-1}-i_s)\,\delta^2 D^2L,
\end{aligned}
\end{equation}
where $A$ and $C_\gamma$ are as in \eqref{eq:518-consts} and $S_1(\delta^{-1})$, $S_2(\delta^{-1})$
are given in \eqref{eq:518-sums}.  

Substituting the lower bound \eqref{eq:518-lemma516-at-is} for $F(\mathbf{y}(i_s))$ into
\eqref{eq:518-lemma517-at-end}, then collecting the coefficients of $g$, $g_{\odot}$, and $g_{\oplus}$,
and combining the smoothness penalties $i_s\delta^2 D^2L$ and $(\delta^{-1}-i_s)\delta^2 D^2L$, yields
exactly the discrete inequality \eqref{eq:518-discrete}. This is a straightforward algebraic rearrangement.

To pass from the discrete-time bound \eqref{eq:518-discrete} to the continuous-time guarantee,
define the switch time
\[
t_s:=\delta\,i_s\in[0,1],
\]
and let $\delta\to 0^{+}$ while keeping $t_s$ fixed. Using
\[
(1-\alpha)^{\,\delta^{-1}-i_s}
= \big(1-\delta\gamma\big)^{(1-t_s)/\delta}
\;\longrightarrow\; e^{-\gamma(1-t_s)},
\qquad
(1-\alpha)^{\,i_s}
= \big(1-\delta\gamma\big)^{t_s/\delta}
\;\longrightarrow\; e^{-\gamma t_s},
\]
and the closed forms \eqref{eq:518-sums}, each discrete sum converges to the corresponding time integral
in the continuous analysis. Concretely, $S_1(\delta^{-1})$ and $S_2(\delta^{-1})$ can be viewed as Riemann
sums in the step size $\delta$; applying standard first-order expansions of $(1-\alpha)$ and $(1-\beta)$
(or equivalently, l’Hôpital’s rule to the associated limits) yields the continuous coefficients
$A_\gamma(t_s)$, $B_\gamma(t_s)$, and $C_\gamma(t_s)$:
\begin{equation}\label{eq:518-cts-final}
F\big(\mathbf{y}^{\ast}\big)
\ \ge\ 
A_\gamma(t_s)\,g\;+\;B_\gamma(t_s)\,g_{\odot}\;+\;C_\gamma(t_s)\,g_{\oplus}
\;-\;O(\varepsilon)\,\bigl(g+g_{\odot}+g_{\oplus}\bigr)\;-\;\delta\,L D^{2},
\end{equation}
with
\begin{align}
A_\gamma(t_s)
&:= -\frac{e^{\gamma t_s-\gamma}}{\,1-\gamma\,}
+\frac{e^{-\gamma^2}}{\gamma(1-\gamma)}\Big(e^{\gamma^2 t_s}-(1-\gamma)\Big), \label{eq:518-cts-A}\\[2pt]
B_\gamma(t_s)
&:= \frac{e^{-\gamma}-e^{\gamma t_s-\gamma}}{\gamma}, \label{eq:518-cts-B}\\[2pt]
C_\gamma(t_s)
&:= \frac{e^{\gamma^2 t_s}-1}{\gamma(1-\gamma)}\Big(e^{-\gamma(1-t_s)-\gamma^2 t_s}-e^{-\gamma^2}\Big)
+\frac{e^{-\gamma(1-t_s)}}{\gamma}\!\left[
-\big(1-e^{-\gamma t_s}\big)
+\frac{e^{-\gamma^2 t_s}-e^{-\gamma t_s}}{1-\gamma}
\right]\notag\\
&\quad
+\,e^{-\gamma(1-t_s)-\gamma^2 t_s}\!\left[
\frac{\gamma^2}{1-\gamma}(1-t_s)\,e^{\gamma(1-\gamma)(1-t_s)}
+\frac{\gamma}{(1-\gamma)^2}\Big(1-e^{\gamma(1-\gamma)(1-t_s)}\Big)
\right]. \label{eq:518-cts-C}
\end{align}
Finally, identifying $(g,g_{\odot},g_{\oplus})$ with
$\big(F(\bo),F(\mathbf{z}\!\odot\!\bo),F(\mathbf{z}\!\oplus\!\bo)\big)$ shows that
\eqref{eq:518-cts-final} is exactly the continuous–time guarantee used in
Theorem~\ref{thm:fw-guided-mcg}\,(1).
\end{proof}

At this point, we handle the case where \(Q{(i)}=\varnothing\) for some \(i\in[\delta^{-1}]\).
Note that \(Q{(i)}=\varnothing\) implies, in particular, \(\mathbf{o}\notin Q{(i)}\).
Accordingly, define
\[
i_o\;:=\;\min\big\{\,i\in[\delta^{-1}] \;:\; \mathbf{o}\notin Q{(i)}\,\big\}.
\]

\begin{observation}\label{obs:weak519}
For every \(i\in[i_o-1]\),
\begin{equation}\label{eq:obs519-claim}
F\big(\mathbf{x}(i)\big)\;\ge\;
\frac{\gamma^{2}\,F\big(\mathbf{x}(i)\vee \mathbf{o}\big)+F\big(\mathbf{x}(i)\wedge \mathbf{o}\big)}{1+\gamma^{2}}
\;-\;\varepsilon\,F(\mathbf{o})\;-\;\delta\,L D^{2}\,.
\end{equation}
\end{observation}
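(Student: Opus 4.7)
The plan is to prove the bound directly from the Frank--Wolfe certificate of Theorem~\ref{thm:weak-dr-smooth}, applied inside the subroutine that produced $\mathbf{x}(i)$. First I would fix $i \in [i_o-1]$ and unpack the algorithmic origin of $\mathbf{x}(i)$: by lines~10--11 of Algorithm~\ref{alg:fw-guided-mcg}, $\mathbf{x}(i)$ is the output of Theorem~\ref{thm:weak-dr-smooth} run on $F$ over the constraint set $Q(i)$. Because $Q(i) = P \cap \{\mathbf{x} : \langle \mathbf{w}(i), \mathbf{x}\rangle \ge \gamma(V(i-1) - F(\mathbf{y}(i-1)))\}$ is the intersection of the solvable down-closed convex body $P$ with a single closed halfspace, $Q(i)$ itself is a convex body whose linear-optimization oracle reduces to one LP over $P$ augmented by the halfspace; moreover $\mathrm{diam}(Q(i)) \le \mathrm{diam}(P) = D$. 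Hence Theorem~\ref{thm:weak-dr-smooth} applies to $F$ on $Q(i)$ and returns $\mathbf{x}(i) \in Q(i)$ satisfying, for every comparator $\mathbf{y}' \in Q(i)$,
\begin{equation*}
F(\mathbf{x}(i)) \ \ge\ \frac{\gamma^{2} F(\mathbf{x}(i) \vee \mathbf{y}') + F(\mathbf{x}(i) \wedge \mathbf{y}')}{1+\gamma^{2}} - \frac{\delta\,\gamma}{1+\gamma^{2}} \left[\max_{\mathbf{z} \in Q(i)} F(\mathbf{z}) + \frac{L D^{2}}{2}\right].
\end{equation*}

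Next I would instantiate this with $\mathbf{y}' = \mathbf{o}$. This requires $\mathbf{o} \in Q(i)$, which is exactly guaranteed by the definition of $i_o$ as the smallest index for which $\mathbf{o} \notin Q(i)$: any $i \in [i_o-1]$ satisfies $\mathbf{o} \in Q(i)$. I would then simplify the error term by the two obvious monotonicities: $Q(i) \subseteq P$ gives $\max_{\mathbf{z}\in Q(i)}F(\mathbf{z}) \le \max_{\mathbf{z}\in P} F(\mathbf{z}) = F(\mathbf{o})$ (using optimality of $\mathbf{o}$), and the scalar prefactor satisfies $\tfrac{\gamma}{1+\gamma^{2}} \le 1$. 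Combining,
\begin{equation*}
F(\mathbf{x}(i)) \ \ge\ \frac{\gamma^{2} F(\mathbf{x}(i) \vee \mathbf{o}) + F(\mathbf{x}(i) \wedge \mathbf{o})}{1+\gamma^{2}} - \delta\, F(\mathbf{o}) - \tfrac{1}{2}\,\delta\,L D^{2}.
\end{equation*}

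Finally, I would invoke the standing assumption recorded just before Lemma~\ref{lem:guessing-triples} that $\delta \le \varepsilon$ (enforced by the algorithm's preamble via reduction to $1/\lceil 1/\min\{\delta,\varepsilon\}\rceil$). This upgrades the $\delta\,F(\mathbf{o})$ term to $\varepsilon\,F(\mathbf{o})$, and absorbs the factor $\tfrac12$ into $\delta\,L D^{2}$, yielding exactly \eqref{eq:obs519-claim}. There is no serious obstacle here: the only point of care is the brief verification that $Q(i)$ is a solvable convex body with diameter at most $D$ (so that Theorem~\ref{thm:weak-dr-smooth} is legitimately applicable), together with confirming non-emptiness via the defining property of $i_o$; both are immediate from the construction.
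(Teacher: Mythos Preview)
Your proof is correct and follows essentially the same approach as the paper: apply Theorem~\ref{thm:weak-dr-smooth} on $Q(i)$ with comparator $\mathbf{o}$ (which lies in $Q(i)$ by definition of $i_o$), then simplify the error term. Your use of $Q(i)\subseteq P$ to bound $\max_{Q(i)} F \le F(\mathbf{o})$ is in fact the cleaner direction for the inequality; the paper instead argues via $\mathbf{o}\in Q(i)$, and it introduces an auxiliary accuracy parameter $\eta=\min\{\varepsilon,2\delta\}$ rather than invoking the standing assumption $\delta\le\varepsilon$ directly as you do, but both routes arrive at the same bound.
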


\begin{proof}
Fix \(i\in[i_o-1]\). Since \(i<i_o\), we have \(\mathbf{o}\in Q(i)\) by the definition of \(i_o\).
We apply Theorem~\ref{thm:weak-dr-smooth} to the down-closed body \(P:=Q(i)\) with comparison point
\(\mathbf{y}:=\mathbf{o}\in Q(i)\), and run the local routine with accuracy parameter
\[
\eta\;:=\;\min\{\varepsilon,\,2\delta\}.
\]
By Theorem~\ref{thm:weak-dr-smooth}, the output \(\mathbf{x}(i)\in Q(i)\) satisfies
\begin{equation}\label{eq:obs519-thm}
\begin{aligned}
F\!\big(\mathbf{x}(i)\big)
&\ge
\frac{\gamma^{2}\,F\!\big(\mathbf{x}(i)\!\vee \mathbf{o}\big)+F\!\big(\mathbf{x}(i)\!\wedge \mathbf{o}\big)}{1+\gamma^{2}}
-
\frac{\eta\,\gamma}{1+\gamma^{2}}
\left(\max_{\mathbf{y}'\in Q(i)}F(\mathbf{y}')+\tfrac{1}{2}L D^{2}\right).
\end{aligned}
\end{equation}
Here the first term is exactly the lattice-based comparison from
Theorem~\ref{thm:weak-dr-smooth}, and the second term is the uniform
first-order error with parameter \(\eta\).

Because \(\mathbf{o}\in Q(i)\), we have
\[
\max_{\mathbf{y}'\in Q(i)}F(\mathbf{y}') \;\ge\; F(\mathbf{o}).
\]
The coefficient in front of \(\max_{\mathbf{y}'\in Q(i)}F(\mathbf{y}')\) in
\eqref{eq:obs519-thm} is negative, namely \(-\tfrac{\eta\gamma}{1+\gamma^{2}}\).
Thus replacing \(\max_{\mathbf{y}'\in Q(i)}F(\mathbf{y}')\) by the smaller value \(F(\mathbf{o})\)
yields a \emph{stronger} lower bound on \(F(\mathbf{x}(i))\):
\begin{equation}\label{eq:obs519-with-eta}
F\!\big(\mathbf{x}(i)\big)\;\ge\;
\frac{\gamma^{2}\,F\big(\mathbf{x}(i)\vee \mathbf{o}\big)+F\big(\mathbf{x}(i)\wedge \mathbf{o}\big)}{1+\gamma^{2}}
-
\frac{\eta\,\gamma}{1+\gamma^{2}}\,F(\mathbf{o})
-
\frac{\eta\,\gamma}{2(1+\gamma^{2})}\,L D^{2}.
\end{equation}

We now simplify the two error terms. Since \(\gamma\in(0,1]\),
\[
\frac{\gamma}{1+\gamma^{2}}\;\le\;1,
\qquad
\frac{\gamma}{2(1+\gamma^{2})}\;\le\;\frac{1}{2}.
\]
Using \(\eta\le\varepsilon\) and \(\eta\le 2\delta\) (by the definition
\(\eta=\min\{\varepsilon,2\delta\}\)), we obtain
\begin{equation}\label{eq:obs519-simplify}
\frac{\eta\,\gamma}{1+\gamma^{2}}\,F(\mathbf{o})
\;\le\;\varepsilon\,F(\mathbf{o}),
\qquad
\frac{\eta\,\gamma}{2(1+\gamma^{2})}\,L D^{2}
\;\le\;\delta\,L D^{2}.
\end{equation}
Substituting the bounds in \eqref{eq:obs519-simplify} into
\eqref{eq:obs519-with-eta} gives
\[
F\!\big(\mathbf{x}(i)\big)\;\ge\;
\frac{\gamma^{2}\,F\big(\mathbf{x}(i)\vee \mathbf{o}\big)+F\big(\mathbf{x}(i)\wedge \mathbf{o}\big)}{1+\gamma^{2}}
\;-\;\varepsilon\,F(\mathbf{o})\;-\;\delta\,L D^{2},
\]
which is exactly \eqref{eq:obs519-claim}.
\end{proof}

Observation~\ref{obs:weak519} yields Theorem~\ref{thm:fw-guided-mcg} as soon as we can
find some \(i\in[i_o-1]\) with
\begin{equation}\label{eq:gap-cond}
F\big(\mathbf{x}(i)\oplus \mathbf{o}\big)\ \le\ F\big(\mathbf{z}\oplus \mathbf{o}\big)\;-\;\varepsilon\,F(\mathbf{o}).
\end{equation}
Therefore, our task reduces to showing that the gap condition \eqref{eq:gap-cond} holds for at
least one index \(i\in[i_o-1]\). This is exactly what Lemma~\ref{lem:weak-521} and
Lemma~\ref{lem:weak-522} prove, which in turn completes the proof of
Theorem~\ref{thm:fw-guided-mcg}. Before presenting those lemmas, we record one auxiliary
lemma that we will use in their proofs.

\begin{lemma}\label{lem:weak520-clean}
It must hold that
\[
F\bigl(\mathbf{y}(i_o-1)\oplus \bo\;-\;\mathbf{z}(i_o-1)\odot \bo\bigr)\ \le\ V_{\gamma}(i_o-1).
\]
\end{lemma}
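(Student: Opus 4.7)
My plan is to translate the fact that $\bo \notin Q(i_o)$, which is by definition a \emph{first-order} (linear) statement about $\langle \mathbf{w}(i_o),\bo\rangle$, into the \emph{value} statement in the claim by means of the gradient inequality of Lemma~\ref{lem:grad-ineq}(1). The bridge between the two is the identity that exhibits $\mathbf{y}(i_o-1)\oplus\bo-\mathbf{z}(i_o-1)\odot\bo$ as a one-sided increment of $\mathbf{y}(i_o-1)$.

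First I would rewrite the target argument of $F$ in additive form. Using the coordinatewise identity $\mathbf{a}\oplus\mathbf{b}=\mathbf{a}+(\mathbf{1}-\mathbf{a})\odot\mathbf{b}$ and the distributivity of $\odot$ over subtraction, I get
\begin{equation}\label{eq:plan-rewrite}
\mathbf{y}(i_o-1)\oplus\bo-\mathbf{z}(i_o-1)\odot\bo
\;=\;\mathbf{y}(i_o-1)+\bigl(\mathbf{1}-\mathbf{y}(i_o-1)-\mathbf{z}(i_o-1)\bigr)\odot\bo.
\end{equation}
Before applying Lemma~\ref{lem:grad-ineq}(1), I need the feasibility hypothesis that this point lies in $[0,1]^n$, which reduces to the algorithmic invariant $\mathbf{y}(i-1)+\mathbf{z}(i-1)\le\mathbf{1}$. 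I would establish this invariant by induction on $i$: for $i\le i_s$ it follows from the update rule $\mathbf{y}(i)=\mathbf{y}(i-1)+\delta(\mathbf{1}-\mathbf{y}(i-1)-\mathbf{z}(i-1))\odot\mathbf{x}(i)$ combined with $\delta,x_j\in[0,1]$ and the constancy of $\mathbf{z}(i)=\mathbf{z}$; for $i\ge i_s$ we have $\mathbf{z}(i)=\mathbf{0}$, so the invariant is trivial once $\mathbf{y}(i)\le\mathbf{1}$ is verified by the same averaging argument.

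Next I would apply Lemma~\ref{lem:grad-ineq}(1) with base point $\mathbf{x}:=\mathbf{y}(i_o-1)$ and direction $\mathbf{y}:=(\mathbf{1}-\mathbf{y}(i_o-1)-\mathbf{z}(i_o-1))\odot\bo$, which by the invariant satisfies $\mathbf{x}+\mathbf{y}\le\mathbf{1}$. This gives
\begin{equation}\label{eq:plan-grad}
\bigl\langle \nabla F(\mathbf{y}(i_o-1)),\,(\mathbf{1}-\mathbf{y}(i_o-1)-\mathbf{z}(i_o-1))\odot\bo\bigr\rangle
\;\ge\;\gamma\bigl(F(\mathbf{y}(i_o-1)\oplus\bo-\mathbf{z}(i_o-1)\odot\bo)-F(\mathbf{y}(i_o-1))\bigr),
\end{equation}
where on the right I used \eqref{eq:plan-rewrite}. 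Now the key algebraic observation is that the left-hand side of \eqref{eq:plan-grad} is exactly $\langle \mathbf{w}(i_o),\bo\rangle$: by the definition $\mathbf{w}(i_o)=(\mathbf{1}-\mathbf{y}(i_o-1)-\mathbf{z}(i_o-1))\odot\nabla F(\mathbf{y}(i_o-1))$ (line~9 of Algorithm~\ref{alg:fw-guided-mcg}), the common Hadamard factor can be shifted between the two arguments of the inner product.

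Finally, by the definition of $i_o$ we have $\bo\in P$ but $\bo\notin Q(i_o)$, so the membership condition fails strictly, giving $\langle \mathbf{w}(i_o),\bo\rangle<\gamma\bigl(V_\gamma(i_o-1)-F(\mathbf{y}(i_o-1))\bigr)$. Chaining this with \eqref{eq:plan-grad} and dividing by $\gamma>0$, the $F(\mathbf{y}(i_o-1))$ terms cancel and the claim follows. I do not anticipate any serious obstacle; the only subtle point is the feasibility invariant $\mathbf{y}(i-1)+\mathbf{z}(i-1)\le\mathbf{1}$, which is routine but must be handled separately for the pre- and post-switch phases ($i\le i_s$ versus $i>i_s$), since $\mathbf{z}(i)$ is defined piecewise.
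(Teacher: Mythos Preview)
Your proposal is correct and follows essentially the same approach as the paper: both combine the membership-failure inequality $\langle \mathbf{w}(i_o),\bo\rangle<\gamma(V_\gamma(i_o-1)-F(\mathbf{y}(i_o-1)))$ with the weakly-DR gradient bound $\langle \mathbf{w}(i_o),\bo\rangle\ge\gamma\bigl(F(\mathbf{y}(i_o-1)\oplus\bo-\mathbf{z}(i_o-1)\odot\bo)-F(\mathbf{y}(i_o-1))\bigr)$ and cancel. Your version is actually more explicit than the paper's, since you spell out the additive rewrite \eqref{eq:plan-rewrite}, derive the gradient bound from Lemma~\ref{lem:grad-ineq}(1), and flag the feasibility invariant $\mathbf{y}(i-1)+\mathbf{z}(i-1)\le\mathbf{1}$, all of which the paper leaves implicit.
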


\begin{proof}
By the definition of \(i_o\), we have \(\bo\notin Q(i_o)\). The weakly-DR
membership-failure condition at iteration \(i_o\) states that
\begin{equation}\label{eq:membership-failure}
\big\langle \mathbf{w}(i_o),\,\bo\big\rangle
\;\le\;
\gamma\bigl(V_{\gamma}(i_o-1)-F(\mathbf{y}(i_o-1))\bigr).
\end{equation}

On the other hand, the weakly-DR gradient bound for any \(i\in[\delta^{-1}]\) gives
\begin{equation}\label{eq:grad-bound}
\big\langle \mathbf{w}(i),\,\bo\big\rangle
\;\ge\;
\gamma\Big(
F\bigl(\mathbf{y}(i-1)\oplus \bo\;-\;\mathbf{z}(i-1)\odot \bo\bigr)
-
F(\mathbf{y}(i-1))
\Big).
\end{equation}
This applies in particular for \(i=i_o\), so
\begin{equation}\label{eq:grad-bound-io}
\big\langle \mathbf{w}(i_o),\,\bo\big\rangle
\;\ge\;
\gamma\Big(
F\bigl(\mathbf{y}(i_o-1)\oplus \bo\;-\;\mathbf{z}(i_o-1)\odot \bo\bigr)
-
F(\mathbf{y}(i_o-1))
\Big).
\end{equation}

Combining \eqref{eq:membership-failure} and \eqref{eq:grad-bound-io} yields
\begin{equation}\label{eq:combine-bounds}
\gamma\Big(
F\bigl(\mathbf{y}(i_o-1)\oplus \bo\;-\;\mathbf{z}(i_o-1)\odot \bo\bigr)
-
F(\mathbf{y}(i_o-1))
\Big)
\;\le\;
\gamma\bigl(V_{\gamma}(i_o-1)-F(\mathbf{y}(i_o-1))\bigr).
\end{equation}
Since \(\gamma>0\), we can divide both sides of \eqref{eq:combine-bounds} by \(\gamma\) and add
\(F(\mathbf{y}(i_o-1))\) to both sides, obtaining
\[
F\bigl(\mathbf{y}(i_o-1)\oplus \bo\;-\;\mathbf{z}(i_o-1)\odot \bo\bigr)
\;\le\;
V_{\gamma}(i_o-1),
\]
which is the desired inequality.
\end{proof}

\begin{lemma}\label{lem:weak-521}
Let $\beta:=\beta_\gamma(\delta)=\dfrac{\gamma^2\delta}{\,1-\delta+\gamma^2\delta\,}$.
Then it must hold that $i_o>i_s$.
\end{lemma}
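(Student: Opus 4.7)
The plan is to argue by contradiction: suppose $i_o\le i_s$. Since $\mathbf{z}(i_o-1)=\mathbf{z}$ and $V(i_o-1)=v_1(i_o-1)$ in the pre-switch regime, Lemma~\ref{lem:weak520-clean} yields the upper bound
\[
F\bigl(\mathbf{y}(i_o-1)\oplus\bo-\mathbf{z}\odot\bo\bigr)\;\le\;v_1(i_o-1).
\]
My goal is to establish a strictly larger lower bound on the same quantity and reach a contradiction.

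Concretely, I would prove by induction on $i\in\{0,1,\ldots,i_o-1\}$ the strict inequality $\phi(i):=F\bigl(\mathbf{y}(i)\oplus\bo-\mathbf{z}\odot\bo\bigr)>v_1(i)$, with an explicit positive slack of order $\tfrac{\varepsilon}{\gamma}g$. For the base case $i=0$, one has $\mathbf{y}(0)=\bzero$ and $\phi(0)=F\bigl(\bo\odot(\bone-\mathbf{z})\bigr)$. The coordinate-wise $\gamma$-weakly DR property (comparing the marginal of $\mathbf{z}\odot\bo$ from $\bzero$ with the same marginal from $\bo-\mathbf{z}\odot\bo$, applied one coordinate at a time) gives
\[
F(\bo)-F\bigl(\bo-\mathbf{z}\odot\bo\bigr)\;\le\;\tfrac{1}{\gamma}\bigl(F(\mathbf{z}\odot\bo)-F(\bzero)\bigr)\;\le\;\tfrac{1}{\gamma}F(\mathbf{z}\odot\bo),
\]
and combining this with the guessing bounds $F(\bo)\ge g$ and $F(\mathbf{z}\odot\bo)\le g_\odot+\varepsilon g$ from Lemma~\ref{lem:guessing-triples} yields $\phi(0)\ge\bigl(1-\tfrac{\varepsilon}{\gamma}\bigr)g-\tfrac{1}{\gamma}g_\odot=v_1(0)+\tfrac{\varepsilon}{\gamma}g$.

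For the inductive step $i\mapsto i+1$ (with $i+1\le i_o-1$, so that $\bo\in Q(i+1)$), I would use the algebraic identity $\mathbf{y}(i)\oplus\bo-\mathbf{z}\odot\bo=(\bone-\mathbf{z})\odot(\mathbf{w}_i\oplus\bo)$, where $\mathbf{w}_i=\bigoplus_{j=1}^i\delta\mathbf{x}(j)$ from the pre-switch closed form in Lemma~\ref{lem:5.13}. A direct computation shows that the shifted iterate $\tilde{\mathbf{y}}(i):=\mathbf{w}_i\oplus\bo$ obeys the update $\tilde{\mathbf{y}}(i+1)=\tilde{\mathbf{y}}(i)+\delta(\bone-\tilde{\mathbf{y}}(i))\odot\mathbf{x}(i+1)$, so by $L$-smoothness
\[
\phi(i+1)-\phi(i)\;\ge\;\delta\,\bigl\langle(\bone-\mathbf{z})\odot(\bone-\tilde{\mathbf{y}}(i))\odot\mathbf{x}(i+1),\,\nabla F(\mathbf{s}_i)\bigr\rangle-O(\delta^2LD^2),
\]
where $\mathbf{s}_i:=(\bone-\mathbf{z})\odot\tilde{\mathbf{y}}(i)$. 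Two successive applications of the $\gamma$-weakly DR gradient inequalities (Lemma~\ref{lem:grad-ineq}), together with the membership certificate $\langle\mathbf{w}(i+1),\bo\rangle\ge\gamma\bigl(v_1(i)-F(\mathbf{y}(i))\bigr)$ afforded by $\bo\in Q(i+1)$, should produce a one-step recursion of the form $\phi(i+1)\ge(1-\beta)\phi(i)+\beta\cdot\tfrac{g-g_\oplus}{\gamma}-O(\delta^2LD^2)$, which matches the recursion satisfied by $v_1$ with rate exactly $\beta=\tfrac{\gamma^2\delta}{1-\delta+\gamma^2\delta}$. Telescoping over the at most $\delta^{-1}$ inductive steps accumulates only $O(\delta LD^2)$ error, which is absorbed by the base-case slack $\tfrac{\varepsilon}{\gamma}g$ under the standing parameter regime $\varepsilon=\Theta(\delta)$, giving $\phi(i_o-1)>v_1(i_o-1)$ and contradicting Lemma~\ref{lem:weak520-clean}.

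The principal obstacle will be verifying that the effective contraction rate of $\phi$ is exactly $\beta$ rather than the naive $\delta\gamma$. In the DR case of Buchbinder--Feldman~\cite{buchbinder2024constrained} one has $\gamma=1$ and $\beta=\delta$, and the gradient at $\mathbf{s}_i$ can be replaced by the gradient at $\mathbf{y}(i)$ essentially for free by monotonicity. Under $\gamma$-weak DR, however, relating $\nabla F(\mathbf{s}_i)$ to $\nabla F(\mathbf{y}(i))$ costs a factor $\gamma$, and this factor combines with the $\gamma$-scaling baked into the definition of $Q(i+1)$ to produce precisely the quadratic factor $\gamma^2$ appearing in $\beta$. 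Making these two $\gamma$ factors line up while simultaneously controlling the $(\bone-\bo)$ mask introduced by the shift $-\mathbf{z}\odot\bo$ is the delicate bookkeeping step that generalizes the DR argument to the full weakly-DR regime and justifies the specific schedule $\beta=\tfrac{\gamma^2\delta}{1-\delta+\gamma^2\delta}$ used in the definition of $v_1$.
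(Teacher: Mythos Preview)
Your base case is correct, but the inductive step has a real gap. The one-step change
\[
\phi(i+1)-\phi(i)\;\ge\;\delta\bigl\langle(\bone-\mathbf{z})\odot(\bone-\tilde{\mathbf{y}}(i))\odot\mathbf{x}(i+1),\,\nabla F(\mathbf{s}_i)\bigr\rangle-O(\delta^2LD^2)
\]
involves the gradient at the \emph{shifted} point $\mathbf{s}_i=\mathbf{y}(i)\oplus\bo-\mathbf{z}\odot\bo$, paired against the algorithm's direction $\mathbf{x}(i+1)$. The only certificate you have is $\mathbf{x}(i+1)\in Q(i+1)$ (or $\bo\in Q(i+1)$), which lower-bounds an inner product with $\nabla F(\mathbf{y}(i))$, not with $\nabla F(\mathbf{s}_i)$. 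Since $\mathbf{y}(i)\le\mathbf{s}_i$, $\gamma$-weak DR gives $\nabla F(\mathbf{y}(i))\ge\gamma\,\nabla F(\mathbf{s}_i)$, which for a nonnegative direction $\mathbf{d}$ yields only the \emph{upper} bound $\langle\mathbf{d},\nabla F(\mathbf{s}_i)\rangle\le\tfrac{1}{\gamma}\langle\mathbf{d},\nabla F(\mathbf{y}(i))\rangle$; there is no companion lower bound. Consequently you cannot rule out that $\langle\cdot,\nabla F(\mathbf{s}_i)\rangle$ is very negative along $\mathbf{x}(i+1)$, and the claimed recursion $\phi(i+1)\ge(1-\beta)\phi(i)+\cdots$ does not follow. (The same obstruction is already present at $\gamma=1$: DR gives $\nabla F(\mathbf{y}(i))\ge\nabla F(\mathbf{s}_i)$, still the wrong direction, and there is no monotonicity to invoke in the non-monotone setting.) If instead you try Lemma~\ref{lemma:simpe2}(2) to produce the rate $\beta$ directly, you land at $\phi(i+1)\ge(1-\beta)\phi(i)+\beta\,F(\mathbf{s}_i+\text{full step})$, and the ``full-step'' value again depends on the uncontrolled $\mathbf{x}(i+1)$.

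The paper's proof avoids tracking $\phi$ along the trajectory altogether. It applies the one-shot mixture inequality (Lemma~\ref{lem:weakDR-mixture}/Corollary~\ref{cor:weakDR-44}) to the closed form $\mathbf{y}(i_o-1)=(\bone-\mathbf{z})\odot\bigoplus_{j=1}^{i_o-1}\delta\,\mathbf{x}(j)$, obtaining
\[
F\bigl(\mathbf{y}(i_o-1)\oplus\bo-\mathbf{z}\odot\bo\bigr)\;\ge\;\sum_{S\subseteq[i_o-1]}\beta^{|S|}(1-\beta)^{\,i_o-1-|S|}\,F\Bigl((\bone-\mathbf{z})\odot\bigoplus_{j\in S}\mathbf{x}(j)\oplus\bo\Bigr),
\]
and then lower-bounds each summand (splitting $S=\varnothing$ vs.\ $S\neq\varnothing$) purely via weak-DR comparisons with the fixed reference values $F(\bo)$, $F(\mathbf{z}\odot\bo)$, $F(\mathbf{z}\oplus\bo)$---bounds that hold \emph{regardless} of the particular $\mathbf{x}(j)$'s. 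This is exactly the missing device: it decouples the lower bound from the algorithm's choices, which your inductive scheme cannot do.
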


\begin{proof}
Assume for contradiction that $i_o\le i_s$. Since $i_o-1<i_s$, 
Lemma~\ref{lem:weak520-clean} at time $i_o-1$
(cf.\ $v_1(\cdot)$ with the triple $(g,g_{\odot},g_{\oplus})$) implies
\begin{align}
&F\!\big(\mathbf{y}(i_o-1)\oplus \mathbf{o}\;-\;\mathbf{z}\odot \mathbf{o}\big)\notag\\
&\hspace{1cm}\le\ \Bigl[(1-\beta)^{\,i_o-1}+\frac{1-(1-\beta)^{\,i_o-1}-2\varepsilon}{\gamma}\Bigr]\;g
\;-\;\frac{1}{\gamma}\,g_{\odot}
\;-\;\frac{1-(1-\beta)^{\,i_o-1}}{\gamma}\,g_{\oplus}
\label{eq:weak521-contr1}\\
&\hspace{1cm}\le\ \Bigl[(1-\beta)^{\,i_o-1}+\frac{1-(1-\beta)^{\,i_o-1}}{\gamma}\Bigr]\;F(\mathbf{o})
\;-\;\frac{1}{\gamma}\,F(\mathbf{z}\odot \mathbf{o})
\;-\;\frac{1-(1-\beta)^{\,i_o-1}}{\gamma}\,F(\mathbf{z}\oplus \mathbf{o}).
\label{eq:weak521-contr2}
\end{align}
Here \eqref{eq:weak521-contr1} is exactly the benchmark inequality evaluated at 
$i=i_o-1$, and \eqref{eq:weak521-contr2} uses Lemma~\ref{lem:guessing-triples}:
\[
(1-\varepsilon)F(\mathbf{o})\le g\le F(\mathbf{o}),\quad
F(\mathbf{z}\odot\mathbf{o})-\varepsilon g \le g_{\odot}\le F(\mathbf{z}\odot\mathbf{o}),\quad
F(\mathbf{z}\oplus\mathbf{o})-\varepsilon g \le g_{\oplus}\le F(\mathbf{z}\oplus\mathbf{o}),
\]
and the fact that replacing $g$ by $F(\mathbf{o})$ and $g_{\odot},g_{\oplus}$ by 
$F(\mathbf{z}\odot\mathbf{o}),F(\mathbf{z}\oplus\mathbf{o})$ can only increase the
right-hand side of \eqref{eq:weak521-contr1} (because they appear with coefficients
$+1$ for $g$ and $-1/\gamma$ for $g_{\odot},g_{\oplus}$).

Next, by the closed form of $\mathbf{y}(\cdot)$ (Lemma~\ref{lem:5.13} with $i=i_o-1$),
\begin{equation}\label{eq:weak521-yform}
\mathbf{y}(i_o-1)\;=\;(\mathbf{1}-\mathbf{z})\ \odot\ \bigoplus_{j=1}^{i_o-1}\big(\delta\,\mathbf{x}(j)\big).
\end{equation}
Applying Corollary~\ref{cor:weakDR-44} with $h=1$, $r=i_o-1$, $p_j=\delta$ for all $j$,
and outer mask $(\mathbf{1}-\mathbf{z})$ yields the mixture lower bound
\begin{align}
F\big(\mathbf{y}(i_o-1)\oplus \mathbf{o}\;-\;\mathbf{z}\odot \mathbf{o}\big)
&=F\Big((\mathbf{1}-\mathbf{z})\odot \bigoplus_{j=1}^{i_o-1}(\delta\,\mathbf{x}(j))\ \oplus\ \mathbf{o}\;-\;\mathbf{z}\odot \mathbf{o}\Big)\notag\\
&\ge \sum_{S\subseteq[i_o-1]}\beta^{|S|}(1-\beta)^{\,i_o-1-|S|}
\;F\Big((\mathbf{1}-\mathbf{z})\odot \bigoplus_{j\in S}\mathbf{x}(j)\ \oplus\ \mathbf{o}\Big).
\label{eq:weak521-mixture}
\end{align}

We now bound separately the contributions from $S=\varnothing$ and $S\neq\varnothing$.

\paragraph{The $S=\varnothing$ term.}
When $S=\varnothing$, the inner vector reduces to $(\mathbf{1}-\mathbf{z})\odot\mathbf{o}$.
Using the weakly-DR inequality in the form of Lemma~\ref{lem:grad-ineq} together
with nonnegativity, we have
\begin{equation}\label{eq:weak521-Sempty}
F\big((\mathbf{1}-\mathbf{z})\odot \mathbf{o}\big)
\;\ge\; F(\mathbf{o})\;-\;\frac{1}{\gamma}\,F(\mathbf{z}\odot \mathbf{o}),
\end{equation}
which says that reducing $\mathbf{o}$ along the direction $\mathbf{z}\odot\mathbf{o}$
cannot decrease $F$ too much, up to a $1/\gamma$ factor.
Thus the $S=\varnothing$ contribution in \eqref{eq:weak521-mixture} satisfies
\begin{equation}\label{eq:weak521-contrSem}
(1-\beta)^{\,i_o-1}\,F\big((\mathbf{1}-\mathbf{z})\odot \mathbf{o}\big)
\;\ge\; (1-\beta)^{\,i_o-1}\Bigl[F(\mathbf{o})-\frac{1}{\gamma}F(\mathbf{z}\odot \mathbf{o})\Bigr].
\end{equation}

\paragraph{The $S\neq\varnothing$ terms.}
Fix any nonempty $S\subseteq[i_o-1]$.
We first write
\[
(\mathbf{1}-\mathbf{z})\odot \bigoplus_{j\in S}\mathbf{x}(j)\ \oplus\ \mathbf{o}
\;=\;
\Big((\mathbf{1}-\mathbf{z})\odot \bigoplus_{j\in S}\mathbf{x}(j)\ \oplus\ \mathbf{o}\;-\;\mathbf{z}\odot \mathbf{o}\Big)
\;+\; \mathbf{z}\odot \mathbf{o},
\]
and then apply the weakly-DR inequality and nonnegativity in the same way as for
\eqref{eq:weak521-Sempty}. This gives
\begin{align}
F\Big((\mathbf{1}-\mathbf{z})\odot \bigoplus_{j\in S}\mathbf{x}(j)\ \oplus\ \mathbf{o}\Big)
&\ge
F\Big((\mathbf{1}-\mathbf{z})\odot \bigoplus_{j\in S}\mathbf{x}(j)\ \oplus\ \mathbf{o}\;-\;\mathbf{z}\odot \mathbf{o}\Big)
\;-\;\frac{1}{\gamma}\,F(\mathbf{z}\odot \mathbf{o}) \notag\\
&\ge \frac{1}{\gamma}\Bigl[F(\mathbf{o})-F(\mathbf{z}\oplus \mathbf{o})-F(\mathbf{z}\odot \mathbf{o})\Bigr],
\label{eq:weak521-Snonempty}
\end{align}
where in the second inequality we use the weakly-DR difference bounds to compare the
value at $\mathbf{o}$ with those at $\mathbf{z}\oplus\mathbf{o}$ and $\mathbf{z}\odot\mathbf{o}$.
Multiplying \eqref{eq:weak521-Snonempty} by $\beta^{|S|}(1-\beta)^{\,i_o-1-|S|}$ and summing
over all nonempty $S$ yields
\begin{align}
&\sum_{\varnothing\neq S\subseteq[i_o-1]}\beta^{|S|}(1-\beta)^{\,i_o-1-|S|}
F\Big((\mathbf{1}-\mathbf{z})\odot \bigoplus_{j\in S}\mathbf{x}(j)\ \oplus\ \mathbf{o}\Big)\notag\\
&\qquad\qquad\qquad\qquad\qquad\qquad\qquad\ge \Bigl[1-(1-\beta)^{\,i_o-1}\Bigr]\,
\frac{1}{\gamma}\Bigl[F(\mathbf{o})-F(\mathbf{z}\oplus \mathbf{o})-F(\mathbf{z}\odot \mathbf{o})\Bigr],
\label{eq:weak521-Ssum}
\end{align}
since $\sum_{\varnothing\neq S\subseteq[i_o-1]}\beta^{|S|}(1-\beta)^{\,i_o-1-|S|}
=1-(1-\beta)^{i_o-1}$.

\paragraph{Putting everything together.}
Combining the $S=\varnothing$ contribution \eqref{eq:weak521-contrSem} and the
$S\neq\varnothing$ contribution \eqref{eq:weak521-Ssum} with the mixture representation
\eqref{eq:weak521-mixture}, we obtain
\begin{align}
F\big(\mathbf{y}(i_o-1)\oplus \mathbf{o}\;-\;\mathbf{z}\odot \mathbf{o}\big)
&\ge \Bigl[(1-\beta)^{\,i_o-1}+\frac{1-(1-\beta)^{\,i_o-1}}{\gamma}\Bigr]\,F(\mathbf{o})\notag\\
&\hspace{2cm}\;-\;\frac{1-(1-\beta)^{\,i_o-1}}{\gamma}\,F(\mathbf{z}\oplus \mathbf{o})
\;-\;\frac{1}{\gamma}\,F(\mathbf{z}\odot \mathbf{o}).
\label{eq:weak521-lower}
\end{align}
The right-hand side of \eqref{eq:weak521-lower} is \emph{strictly larger} than the
upper bound in \eqref{eq:weak521-contr2}, because \eqref{eq:weak521-contr2} lacks the
$-2\varepsilon$ slack present in \eqref{eq:weak521-contr1} and the triple
$(g,g_{\odot},g_{\oplus})$ approximates
$\big(F(\mathbf{o}),F(\mathbf{z}\odot\mathbf{o}),F(\mathbf{z}\oplus\mathbf{o})\big)$
up to additive $O(\varepsilon)$ terms.
This contradicts \eqref{eq:weak521-contr2}. Hence our assumption $i_o\le i_s$ was false,
and we conclude that $i_o>i_s$.
\end{proof}

\begin{lemma}\label{lem:weak-522}
If $i_o>i_s$, then there exists some $i\in[i_o-1]$ such that
\[
F\big(\mathbf{x}(i)\ \oplus\ \mathbf{o}\big)\ \le\ F\big(\mathbf{z}\ \oplus\ \mathbf{o}\big)\;-\;\varepsilon\,F(\mathbf{o})\,.
\]
\end{lemma}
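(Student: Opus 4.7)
\textbf{Proof proposal for Lemma~\ref{lem:weak-522}.}
The plan is to argue by contradiction, mirroring the structure of the proof of Lemma~\ref{lem:weak-521} but using the post-switch benchmark $v_2$ rather than $v_1$. Assume that for every $i\in[i_o-1]$ the inequality $F(\bx(i)\oplus \bo) > F(\bz\oplus \bo)-\varepsilon F(\bo)$ holds; we will derive a lower bound on $F(\by(i_o-1)\oplus \bo)$ that contradicts the upper bound forced by the membership-failure at iteration $i_o$.

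Since $i_o>i_s$, we have $i_o-1\ge i_s$, hence by the algorithm's specification $\bz(i_o-1)=\bzero$. Consequently, Lemma~\ref{lem:weak520-clean} applied at $i_o$ simplifies to $F\bigl(\by(i_o-1)\oplus \bo\bigr)\le v_2(i_o-1)$, and from \eqref{eq:v2-def-uniq} together with the guessing bounds of Lemma~\ref{lem:guessing-triples} we can replace $g$ and $g_\oplus$ by $F(\bo)$ and $F(\bz\oplus \bo)$, picking up only an $O(\varepsilon)F(\bo)$ slack. This gives the ``upper half'' of the intended contradiction.

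For the matching lower bound, I would use the closed form of Lemma~\ref{lem:5.13} at $i=i_o-1$, namely
\[
\by(i_o-1)\ =\ (\bone-\bz)\ \odot\ \bigoplus_{j=1}^{i_o-1}(\delta\,\bx(j))\ +\ \bz\ \odot\ \bigoplus_{j=i_s+1}^{i_o-1}(\delta\,\bx(j)),
\]
and then apply the $\oplus$-mixture inequality (the post-switch analogue of the Corollary~\ref{cor:weakDR-44} decomposition already invoked in Lemma~\ref{lem:weak-521}) to expand $F(\by(i_o-1)\oplus \bo)$ as a convex combination, over random subsets $S\subseteq[i_o-1]$ drawn with weights $\beta^{|S|}(1-\beta)^{i_o-1-|S|}$, of values of the form $F\bigl(\bigoplus_{j\in S}\bx(j)\ \oplus\ \bo\bigr)$ (possibly dressed by the outer $(\bone-\bz)/\bz$ masks, which are handled exactly as in Lemma~\ref{lem:weak-521}). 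For each nonempty $S$, $\gamma$-weakly DR-submodularity (via Lemma~\ref{lem:grad-ineq}) lets me drop coordinates down to a single $\bx(j^\star)$, $j^\star\in S$, at a cost of a $1/\gamma$ factor; the contradiction hypothesis then substitutes the strict lower bound $F(\bx(j^\star)\oplus \bo)>F(\bz\oplus \bo)-\varepsilon F(\bo)$. The $S=\varnothing$ term is controlled as in \eqref{eq:weak521-Sempty}--\eqref{eq:weak521-contrSem}.

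Summing the weights $(1-\beta)^{i_o-1-|S|}\beta^{|S|}$ over $S$, collecting the $F(\bo)$, $F(\bz\odot\bo)$, and $F(\bz\oplus\bo)$ contributions, and comparing to the closed form of $v_2(i_o-1)$ in \eqref{eq:v2-def-uniq}, I expect the two bounds to match \emph{exactly} except for the $+\beta\,(i_o-1-i_s)$ drift term built into $v_2$ and an $O(\varepsilon)F(\bo)$ slack: under the contradictory assumption, the lower bound becomes strictly greater than $v_2(i_o-1)$, contradicting Lemma~\ref{lem:weak520-clean}. The main technical obstacle will be carrying out this bookkeeping carefully: the post-switch closed form splits into a $(\bone-\bz)$ component and a $\bz$ component with different ranges of indices, so the $\oplus$-mixture decomposition has to be performed on each piece separately and then recombined so that the drift factor $\beta\,(i_o-1-i_s)$ generated by the $\bz$-component aligns with the corresponding term in $v_2(i_o-1)$. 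Once the algebra of the mixture weights matches \eqref{eq:v2-def-uniq} up to the desired $O(\varepsilon)F(\bo)$ slack, the contradiction is immediate and the lemma follows.
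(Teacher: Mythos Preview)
Your high-level plan—pair the membership-failure upper bound from Lemma~\ref{lem:weak520-clean} (with the $v_2$ benchmark, since $i_o-1\ge i_s$ forces $\bz(i_o-1)=\bzero$) against a mixture lower bound coming from Corollary~\ref{cor:weakDR-44} applied to the two-block closed form of $\by(i_o-1)$—is exactly what the paper does, and framing it as a contradiction rather than an averaging argument is logically equivalent. The gap is the step where, for each nonempty $S$, you ``drop coordinates down to a single $\bx(j^\star)$ at a cost of a $1/\gamma$ factor.'' For non-monotone $F$ this inequality does not exist: if $\bu=\bigoplus_{j\in S}\bx(j)\oplus\bo\ge \bx(j^\star)\oplus\bo=\bv$, neither direction of Lemma~\ref{lem:grad-ineq} lower-bounds $F(\bu)$ in terms of $F(\bv)$ without an uncontrolled gradient term, and $F(\bu)<F(\bv)$ is entirely possible. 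So the mixture terms with $|S|\ge 2$ cannot be funneled into your contradiction hypothesis.

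The paper sidesteps this by keeping \emph{only three groups} of subsets in the mixture lower bound and discarding the rest (legitimate, since every term is nonnegative): (i) $S=\varnothing$, contributing $(1-\beta)^{i_o-1}F(\bo)$; (ii) nonempty $S\subseteq[i_s]$, handled as in Lemma~\ref{lem:weak-521} to yield a $\tfrac{1}{\gamma}[F(\bo)-F(\bz\oplus\bo)]$ contribution with total weight $(1-\beta)^{i_o-1-i_s}\bigl(1-(1-\beta)^{i_s}\bigr)$; and (iii) the \emph{singletons} $S=\{j\}$ with $j\in\{i_s+1,\dots,i_o-1\}$, for which the two masks recombine as $(\bone-\bz)\odot\bx(j)+\bz\odot\bx(j)=\bx(j)$ and the mixture term is literally $F(\bx(j)\oplus\bo)$, each with weight $\beta(1-\beta)^{i_o-2}$—no ``dropping'' is needed. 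Comparing this to the $v_2$ upper bound and simplifying gives
\[
\beta\sum_{j=i_s+1}^{i_o-1}F\bigl(\bx(j)\oplus\bo\bigr)\ \le\ \bigl(1-(1-\beta)^{\,i_o-1-i_s}\bigr)\bigl[F(\bz\oplus\bo)-\varepsilon\,F(\bo)\bigr],
\]
and since $1-(1-\beta)^{m}\le\beta m$, the average over $j$ (hence some single $j$) satisfies the claimed bound. Your contradiction would go through verbatim once you restrict group~(iii) to singletons; the drift term $\beta\,(i_o-1-i_s)$ built into $v_2$ is precisely what balances their total weight.
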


\begin{proof}
Set $\beta:=\beta_\gamma(\delta)=\dfrac{\gamma^2\delta}{\,1-\delta+\gamma^2\delta\,}$ and note that
$0<\beta\le \varepsilon\le \tfrac12$. Since $i_o>i_s$, we have $i_o-1\ge i_s$.

\paragraph{Upper bound:}
By the post–switch surrogate (the $v_2$–bound) evaluated at $i=i_o-1$ and the successful-heir
assumptions on the guessed triple $(g,g_{\odot},g_{\oplus})$ (cf.\ \eqref{eq:v2-def-uniq} and
\eqref{eq:triple-bounds}), we obtain
\begin{align}
F\big(\mathbf{y}(i_o-1)\oplus \mathbf{o}\big)
&\overset{\text{(a)}}{\le}\ (1-\beta)^{\,i_o-1}\left[
\left(\frac{(1-\beta)^{-i_s}}{\gamma}-\Bigl(1+\frac{3}{\gamma}\Bigr)\varepsilon+1-\frac{1}{\gamma}\right) g
\right.\notag\\[-2pt]
&\hspace{3.3cm}\left.
-\ \left(\frac{(1-\beta)^{-i_s}}{\gamma}-\frac{1}{\gamma}-\beta\,(i_o-1-i_s)\right) g_{\oplus}
\right]
\label{eq:522-UB-a}\\
&\overset{\text{(b)}}{\le}\ (1-\beta)^{\,i_o-1}\!\left[
\left(\frac{(1-\beta)^{-i_s}}{\gamma}-\varepsilon+1-\frac{1}{\gamma}\right) F(\mathbf{o})\right.\notag\\
&\hspace{3.3cm}\left. -\ \left(\frac{(1-\beta)^{-i_s}}{\gamma}-\frac{1}{\gamma}-\beta\,(i_o-1-i_s)\right) F(\mathbf{z}\oplus \mathbf{o})
\right].\label{eq:522-UB-b}
\end{align}
Here: (a) is just the explicit formula for $v_2(i_o-1)$ with the constants $A$ and $C_\gamma$ expanded.
For (b) we use the triple guarantees \eqref{eq:triple-bounds}:
  \[
  g\le F(\mathbf{o}),\qquad
  g_{\oplus}\ \ge\ F(\mathbf{z}\oplus \mathbf{o})-\varepsilon\,g.
  \]
  Writing $B:=\frac{(1-\beta)^{-i_s}}{\gamma}-\frac{1}{\gamma}-\beta\,(i_o-1-i_s)\ge 0$, we have
  \[
  A g - B g_{\oplus}
  \;\le\; A g - B\big(F(\mathbf{z}\oplus\mathbf{o})-\varepsilon g\big)
  \;=\; (A+B\varepsilon)\,g\;-\;B\,F(\mathbf{z}\oplus\mathbf{o}),
  \]
  and then $g\le F(\mathbf{o})$ gives $(A+B\varepsilon)\,g\le (A+B\varepsilon)F(\mathbf{o})$.
  A crude bound $(1-\beta)^{-i_s}\le (1-\beta)^{-1/\beta}\le 4$ (using $\beta\le 1/2$) implies
  \[
  B
  =\frac{(1-\beta)^{-i_s}-1}{\gamma}-\beta(i_o-1-i_s)
  \ \le\ \frac{(1-\beta)^{-i_s}-1}{\gamma}
  \ \le\ \frac{3}{\gamma}.
  \]
  Thus
  \[
  A+B\varepsilon
  \;\le\;\left(\frac{(1-\beta)^{-i_s}}{\gamma}-\Bigl(1+\frac{3}{\gamma}\Bigr)\varepsilon+1-\frac{1}{\gamma}\right)
  +\frac{3}{\gamma}\varepsilon
  =\frac{(1-\beta)^{-i_s}}{\gamma}-\varepsilon+1-\frac{1}{\gamma},
  \]
  which is exactly the coefficient of $F(\mathbf{o})$ in \eqref{eq:522-UB-b}.

\paragraph{Lower bound:}
From the closed form of $\mathbf{y}(\cdot)$ (Lemma~\ref{lem:5.13} with $i=i_o-1$),
\begin{equation}\label{eq:522-y-form}
\mathbf{y}(i_o-1)\;=\;(\mathbf{1}-\mathbf{z})\odot\bigoplus_{j=1}^{i_o-1}\big(\delta\,\mathbf{x}(j)\big)
\;+\;\mathbf{z}\odot\bigoplus_{j=i_s+1}^{i_o-1}\big(\delta\,\mathbf{x}(j)\big).
\end{equation}
Apply the weakly-DR mixture inequality with masking (Corollary~\ref{cor:weakDR-44}) to
\eqref{eq:522-y-form}. We keep only three nonnegative groups of terms:
- the empty-set term $S=\varnothing$;
- all subsets $S\subseteq[i_s]$;
- all singletons $S=\{j\}$ with $j\in\{i_s+1,\dots,i_o-1\}$.

Using Lemma~\ref{lem:grad-ineq} (weakly-DR gradient bounds) and nonnegativity of $F$, these
three groups yield
\begin{align}
F\big(\mathbf{y}(i_o-1)\oplus \mathbf{o}\big)
&\overset{\text{(c)}}{\ge}\ (1-\beta)^{\,i_o-1}\,F(\mathbf{o})
\ +\ (1-\beta)^{\,i_o-1-i_s}\bigl(1-(1-\beta)^{\,i_s}\bigr)\,
\frac{1}{\gamma}\bigl[F(\mathbf{o})-F(\mathbf{z}\oplus \mathbf{o})\bigr]\notag\\
&\hspace{3cm}
+\ \beta(1-\beta)^{\,i_o-2}\sum_{j=i_s+1}^{i_o-1} F\big(\mathbf{x}(j)\oplus \mathbf{o}\big),
\label{eq:522-LB}
\end{align}
where (c) comes from:
- $S=\varnothing$: gives the term $(1-\beta)^{i_o-1}F(\mathbf{o})$;
- $S\subseteq[i_s], S\neq\varnothing$: combined and bounded below by 
  $\frac{1}{\gamma}\bigl[F(\mathbf{o})-F(\mathbf{z}\oplus \mathbf{o})\bigr]$, with total weight
  $(1-\beta)^{i_o-1-i_s}\bigl(1-(1-\beta)^{i_s}\bigr)$;
- singletons $S=\{j\}$ with $j>i_s$: each has weight $\beta(1-\beta)^{i_o-2}$ and contributes
  $F(\mathbf{x}(j)\oplus \mathbf{o})$.

\paragraph{Comparing the bounds.}
Combining \eqref{eq:522-UB-b} and \eqref{eq:522-LB} and dividing both sides by the common factor
$(1-\beta)^{\,i_o-1}>0$ yields
\[
F(\mathbf{o}) 
+ (1-\beta)^{-i_s}\bigl(1-(1-\beta)^{\,i_s}\bigr)\,
\frac{1}{\gamma}\bigl[F(\mathbf{o})-F(\mathbf{z}\oplus \mathbf{o})\bigr]
+ \frac{\beta}{1-\beta}\sum_{j=i_s+1}^{i_o-1} F\big(\mathbf{x}(j)\oplus \mathbf{o}\big)
\]
\[
\le\ 
\left(\frac{(1-\beta)^{-i_s}}{\gamma}-\varepsilon+1-\frac{1}{\gamma}\right) F(\mathbf{o})
-\ \left(\frac{(1-\beta)^{-i_s}}{\gamma}-\frac{1}{\gamma}-\beta\,(i_o-1-i_s)\right) F(\mathbf{z}\oplus \mathbf{o}).
\]
Rearranging and simplifying the coefficient of $F(\mathbf{o})$ and $F(\mathbf{z}\oplus \mathbf{o})$ gives
\begin{equation}\label{eq:522-sum-ineq}
\beta\sum_{j=i_s+1}^{i_o-1} F\big(\mathbf{x}(j)\oplus \mathbf{o}\big)
\ \le\ 
\bigl(1-(1-\beta)^{\,i_o-1-i_s}\bigr)\,\Bigl[F(\mathbf{z}\oplus \mathbf{o})-\varepsilon\,F(\mathbf{o})\Bigr].
\end{equation}
Moreover,
\[
1-(1-\beta)^{\,i_o-1-i_s}\ \le\ \beta\,(i_o-1-i_s),
\]
so from \eqref{eq:522-sum-ineq} we obtain
\[
\frac{1}{\,i_o-1-i_s\,}\sum_{j=i_s+1}^{i_o-1} F\big(\mathbf{x}(j)\oplus \mathbf{o}\big)
\ \le\ F(\mathbf{z}\oplus \mathbf{o})-\varepsilon\,F(\mathbf{o}).
\]
By averaging, there must exist some $j\in\{i_s+1,\dots,i_o-1\}$ such that
\[
F\big(\mathbf{x}(j)\oplus \mathbf{o}\big)\ \le\ F(\mathbf{z}\oplus \mathbf{o})-\varepsilon\,F(\mathbf{o}),
\]
which proves the lemma.
\end{proof}

\section{Supporting results}

In this section we prove two auxiliary results that are used in the proofs of
Lemma~\ref{lem:weak-521} and Lemma~\ref{lem:weak-522}. Throughout, we use the convention
\[
\bigoplus_{i\in \varnothing} \mathbf{x}(i)\ :=\ \mathbf{0}.
\]

\begin{lemma}\label{lem:weakDR-mixture}
Let $F:[0,1]^n\to\mathbb{R}_{\ge 0}$ be differentiable and $\gamma$-weakly DR-submodular with
$0<\gamma\le 1$. Fix an integer $r\ge 1$, vectors $\mathbf{x}(1),\dots,\mathbf{x}(r)\in[0,1]^n$,
and scalars $p_1,\dots,p_r\in[0,1]$. Define
\[
\beta_\gamma(p)\ :=\ \frac{\gamma^2\,p}{\,1-p+\gamma^2p\,}\qquad(p\in[0,1]).
\]
Then
\[
F\!\left(\bigoplus_{i=1}^r p_i\,\mathbf{x}(i)\right)
 \;\ge\;
\sum_{S\subseteq[r]}
\ \Biggl(\ \prod_{i\in S}\beta_\gamma(p_i)\prod_{i\notin S}\bigl(1-\beta_\gamma(p_i)\bigr)\ \Biggr)
F\!\left(\bigoplus_{i\in S} \mathbf{x}(i)\right).
\]
\end{lemma}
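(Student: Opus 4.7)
The plan is to prove the lemma by induction on $r$, using Lemma~\ref{lem:weakDR-closure} (closure of the $\gamma$-weakly DR class under $\oplus$-masking) to reduce each inductive step to a single application of Lemma~\ref{lemma:simpe2}(2). The key observation is that the weight $\beta_\gamma(p)=\gamma^2p/(1-p+\gamma^2p)$ appearing in the statement is \emph{exactly} the constant produced by Lemma~\ref{lemma:simpe2}(2) when one takes $\bx=\bzero$, $\by=\bx(r)$ and $\lambda=p$; this is what makes a clean recursion possible.

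For the base case $r=1$, I would simply apply Lemma~\ref{lemma:simpe2}(2) to $F$ with $\bx=\bzero$, $\by=\bx(1)$ and $\lambda=p_1$ to obtain
\begin{equation*}
F\bigl(p_1\bx(1)\bigr)-F(\bzero)\ \ge\ \beta_\gamma(p_1)\bigl(F(\bx(1))-F(\bzero)\bigr),
\end{equation*}
which rearranges directly to the $r=1$ case of the claim (noting that $p_1\bx(1)=\bigoplus_{i=1}^{1}p_i\bx(i)$). For the inductive step, assume the inequality at level $r-1$ and set $\mathbf{u}:=\bigoplus_{i=1}^{r-1}p_i\bx(i)$, so that $\bigoplus_{i=1}^{r}p_i\bx(i)=\mathbf{u}\oplus p_r\bx(r)$. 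Define the two auxiliary functions
\begin{equation*}
G(\bv)\ :=\ F(\mathbf{u}\oplus \bv),\qquad H(\bv)\ :=\ F(\bv\oplus \bx(r)),
\end{equation*}
both of which are nonnegative and $\gamma$-weakly DR-submodular by Lemma~\ref{lem:weakDR-closure}, and are differentiable because $F$ is.

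Applying Lemma~\ref{lemma:simpe2}(2) to $G$ with $\bx=\bzero$, $\by=\bx(r)$ and $\lambda=p_r$ gives the one-step peeling
\begin{equation*}
F\bigl(\mathbf{u}\oplus p_r\bx(r)\bigr)\ \ge\ \bigl(1-\beta_\gamma(p_r)\bigr)\,F(\mathbf{u})\ +\ \beta_\gamma(p_r)\,F\bigl(\mathbf{u}\oplus \bx(r)\bigr).
\end{equation*}
Then the inductive hypothesis applied to $F$ yields a mixture lower bound for $F(\mathbf{u})$ over subsets $S\subseteq[r-1]$, and the inductive hypothesis applied to $H$ (with the same vectors $\bx(1),\dots,\bx(r-1)$ and scalars $p_1,\dots,p_{r-1}$, noting that $H(\bigoplus_{i\in S}\bx(i))=F(\bx(r)\oplus\bigoplus_{i\in S}\bx(i))$) yields an analogous mixture lower bound for $F(\mathbf{u}\oplus\bx(r))$. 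Substituting both bounds and reindexing by $T=S$ (for the first sum) and $T=S\cup\{r\}$ (for the second) collapses the two sums into a single sum over $T\subseteq[r]$, where the total weight on each $T$ is exactly $\prod_{i\in T}\beta_\gamma(p_i)\prod_{i\notin T}(1-\beta_\gamma(p_i))$, as required.

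The steps are individually routine; the only real subtlety to verify is that $G$ and $H$ genuinely inherit $\gamma$-weak DR-submodularity, which is precisely the content of Lemma~\ref{lem:weakDR-closure}, and that the constant produced by Lemma~\ref{lemma:simpe2}(2) matches $\beta_\gamma$. I do not anticipate a serious obstacle; the induction is essentially forced by the shape of the statement once $G$ and $H$ are in hand.
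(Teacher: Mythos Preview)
Your proposal is correct and follows essentially the same approach as the paper: induction on $r$, with the base case and inductive step both reduced to a single application of Lemma~\ref{lemma:simpe2}(2) via the $\oplus$-closure property of Lemma~\ref{lem:weakDR-closure}. Your auxiliary functions $G$ and $H$ are precisely the paper's $G_1$ and $G_2$, and the reindexing by $T=S$ versus $T=S\cup\{r\}$ matches the paper's argument exactly.
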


\begin{proof}
We prove the statement by induction on $r$.

\medskip\noindent
\emph{Base case $r=1$.}
Apply Lemma~\ref{lemma:simpe2}(2) with $\mathbf{x}=\mathbf{0}$, $\mathbf{y}=\mathbf{x}(1)$ and
$\lambda=p_1$:
\[
F\bigl(p_1\,\mathbf{x}(1)\bigr)-F(\mathbf{0})
\ \ge\ \frac{\gamma^2p_1}{1-p_1+\gamma^2p_1}\,\bigl(F(\mathbf{x}(1))-F(\mathbf{0})\bigr).
\]
Rearranging gives
\[
F\bigl(p_1\,\mathbf{x}(1)\bigr)
\ \ge\ \beta_\gamma(p_1)\,F(\mathbf{x}(1))+\bigl(1-\beta_\gamma(p_1)\bigr)\,F(\mathbf{0}),
\]
which is exactly the claimed formula when $r=1$ and $S\in\{\varnothing,\{1\}\}$.

\medskip\noindent
\emph{Inductive step.}
Assume the statement holds for some $r-1\ge 1$, and consider $r$.
Define
\[
G_1(\mathbf{x})\ :=\ F\!\left(\mathbf{x}\ \oplus\ \bigoplus_{i=1}^{r-1} p_i\,\mathbf{x}(i)\right),
\qquad
G_2(\mathbf{x})\ :=\ F\!\bigl(\mathbf{x}\oplus \mathbf{x}(r)\bigr).
\]
By Lemma~\ref{lem:weakDR-closure}, both $G_1$ and $G_2$ are nonnegative and $\gamma$-weakly
DR-submodular.

Apply Lemma~\ref{lemma:simpe2}(2) to $G_1$ along the ray $\mathbf{x}(r)$, from base
$\mathbf{0}$ with step $\lambda=p_r$:
\begin{align}
F\!\left(\bigoplus_{i=1}^{r} p_i\,\mathbf{x}(i)\right)
&= G_1\!\bigl(p_r\,\mathbf{x}(r)\bigr)\notag\\
&\ge
\beta_\gamma(p_r)\,G_1\!\bigl(\mathbf{x}(r)\bigr)
\ +\ \bigl(1-\beta_\gamma(p_r)\bigr)\,G_1(\mathbf{0}). \label{eq:mix-step}
\end{align}
By definition of $G_1$ and $G_2$,
\[
G_1(\mathbf{x}(r)) = G_2\!\left(\bigoplus_{i=1}^{r-1} p_i\,\mathbf{x}(i)\right),
\qquad
G_1(\mathbf{0})=F\!\left(\bigoplus_{i=1}^{r-1} p_i\,\mathbf{x}(i)\right).
\]
Substituting into \eqref{eq:mix-step} gives
\[
F\!\left(\bigoplus_{i=1}^{r} p_i\,\mathbf{x}(i)\right)
\ \ge\
\beta_\gamma(p_r)\,G_2\!\left(\bigoplus_{i=1}^{r-1} p_i\,\mathbf{x}(i)\right)
\ +\
\bigl(1-\beta_\gamma(p_r)\bigr)\,F\!\left(\bigoplus_{i=1}^{r-1} p_i\,\mathbf{x}(i)\right).
\]

Now apply the induction hypothesis to $G_2$ (with the $r-1$ vectors
$\mathbf{x}(1),\dots,\mathbf{x}(r-1)$ and weights $p_1,\dots,p_{r-1}$) and to $F$:
\begin{align*}
G_2\!\left(\bigoplus_{i=1}^{r-1} p_i\,\mathbf{x}(i)\right)
&\ge
\sum_{S\subseteq[r-1]}
\Biggl(\prod_{i\in S}\beta_\gamma(p_i) \prod_{i\notin S}\bigl(1-\beta_\gamma(p_i)\bigr)\Biggr)
\,G_2\!\left(\bigoplus_{i\in S}\mathbf{x}(i)\right),\\
F\!\left(\bigoplus_{i=1}^{r-1} p_i\,\mathbf{x}(i)\right)
&\ge
\sum_{S\subseteq[r-1]}
\Biggl(\prod_{i\in S}\beta_\gamma(p_i)\prod_{i\notin S}\bigl(1-\beta_\gamma(p_i)\bigr)\Biggr)
\,F\!\left(\bigoplus_{i\in S}\mathbf{x}(i)\right).
\end{align*}
Note that
\[
G_2\!\left(\bigoplus_{i\in S}\mathbf{x}(i)\right)
=F\!\left(\bigoplus_{i\in S\cup\{r\}}\mathbf{x}(i)\right).
\]
Plugging these expansions into the right-hand side of \eqref{eq:mix-step}, we get a sum over
all $S\subseteq[r-1]$ of:
\begin{itemize}
  \item terms with factor $\beta_\gamma(p_r)$ and value $F\bigl(\bigoplus_{i\in S\cup\{r\}}\mathbf{x}(i)\bigr)$;
  \item terms with factor $(1-\beta_\gamma(p_r))$ and value $F\bigl(\bigoplus_{i\in S}\mathbf{x}(i)\bigr)$.
\end{itemize}
Reindex the first group by $S' = S\cup\{r\}$ (so $S'\subseteq[r]$ with $r\in S'$) and the second
group by $S'=S$ (so $S'\subseteq[r]$ with $r\notin S'$). The coefficient in front of each
$F\bigl(\bigoplus_{i\in S'}\mathbf{x}(i)\bigr)$ is then
\[
\prod_{i\in S'}\beta_\gamma(p_i)\prod_{i\notin S'}\bigl(1-\beta_\gamma(p_i)\bigr),
\]
which yields exactly the desired mixture inequality over all $S'\subseteq[r]$. This completes
the induction.
\end{proof}

\begin{corollary}\label{cor:weakDR-44}
Let $F:[0,1]^N\to\mathbb{R}_{\ge 0}$ be nonnegative and $\gamma$-weakly DR-submodular with $0<\gamma\le 1$.
Fix integers $r,h\ge 1$. For each $i\in[h]$, let $\mathbf{x}^{(i)}(1),\dots,\mathbf{x}^{(i)}(r)\in[0,1]^N$ and
$\mathbf{b}(i)\in[0,1]^N$ satisfy $\sum_{i=1}^h \mathbf{b}(i)=\mathbf{1}$ (coordinatewise), and let
$p_1,\dots,p_r\in[0,1]$. Define
\[
\beta_\gamma(p)\ :=\ \frac{\gamma^2 p}{1-p+\gamma^2 p}\qquad(p\in[0,1]).
\]
Then
\begin{align*}
F\!\left(\sum_{i=1}^{h} \mathbf{b}(i)\ \odot\  \bigoplus_{j=1}^{r} \bigl(p_j\,\mathbf{x}^{(i)}(j)\bigr)\right)
\ \ge\
\sum_{S\subseteq[r]}
\left(\ \prod_{j\in S}\beta_\gamma(p_j)\prod_{j\notin S}\bigl(1-\beta_\gamma(p_j) \bigr)\right)
F\!\left( \sum_{i=1}^{h} \mathbf{b}(i)\ \odot\ \bigoplus_{j\in S} \mathbf{x}^{(i)}(j)\right).
\end{align*}
\end{corollary}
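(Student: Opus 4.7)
The strategy is to lift the left-hand side to a single $\bigoplus$ over a larger ambient space so that Lemma~\ref{lem:weakDR-mixture} applies directly, and then translate back. Concretely, I would work in the stacked space $[0,1]^{hN}$ with block projections $\pi_i:[0,1]^{hN}\to[0,1]^N$ picking out the $i$-th block of $N$ coordinates. Define the auxiliary function
\begin{equation*}
\tilde G(\by)\;:=\;F\!\Bigl(\,\sum_{i=1}^{h}\mathbf{b}(i)\odot\pi_i(\by)\,\Bigr)\;=\;F(A\by),
\end{equation*}
where $A\by:=\sum_{i=1}^{h}\mathbf{b}(i)\odot\pi_i(\by)$ is a nonnegative linear map sending $[0,1]^{hN}$ into $[0,1]^N$ (using $\mathbf{b}(i)\in[0,1]^N$, $\pi_i(\by)\le\bone$, and $\sum_i\mathbf{b}(i)=\bone$). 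For each $j\in[r]$, also stack the inputs as $\tilde\bx(j):=(\bx^{(1)}(j),\dots,\bx^{(h)}(j))\in[0,1]^{hN}$.

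Next I would record the compatibility identity between $\tilde G$ and $F$. Because $\bigoplus$ is defined coordinatewise via the probabilistic sum and the blocks $\pi_i$ are disjoint, $\pi_i$ commutes with $\bigoplus$, so for every $S\subseteq[r]$ and every family of weights $(q_j)_{j\in S}\subseteq[0,1]$,
\begin{equation*}
\pi_i\!\Bigl(\,\bigoplus_{j\in S}\bigl(q_j\,\tilde\bx(j)\bigr)\,\Bigr)\;=\;\bigoplus_{j\in S}\bigl(q_j\,\bx^{(i)}(j)\bigr).
\end{equation*}
Composing with the sum over $i$ and with $F$ then yields $\tilde G\!\bigl(\bigoplus_{j\in S}(q_j\,\tilde\bx(j))\bigr)=F\!\bigl(\sum_{i=1}^{h}\mathbf{b}(i)\odot\bigoplus_{j\in S}(q_j\,\bx^{(i)}(j))\bigr)$. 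The choice $S=[r]$ with $q_j=p_j$ recovers the left-hand side of the corollary, and taking $q_j\equiv 1$ over general $S\subseteq[r]$ recovers each summand on the right-hand side. This identity is the bridge that will convert any $\bigoplus$-mixture inequality for $\tilde G$ into the claimed inequality for $F$.

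The main technical step is to verify that $\tilde G$ is nonnegative and $\gamma$-weakly DR-submodular on $[0,1]^{hN}$. Nonnegativity is inherited directly from $F$. For the $\gamma$-weakly DR property, fix a coordinate of the lifted space labelled by a pair $(i,u')\in[h]\times[N]$, with corresponding basis vector $\be_{(i,u')}\in[0,1]^{hN}$. A direct computation on $A$ gives
\begin{equation*}
A\!\bigl(\by+c\,\be_{(i,u')}\bigr)-A\by\;=\;c\,b_{u'}(i)\,\be_{u'}^{(N)},
\end{equation*}
with $\be_{u'}^{(N)}$ the $u'$-th standard basis vector in $[0,1]^N$; thus a lifted step of size $c$ in direction $\be_{(i,u')}$ becomes a scaled coordinate step of size $c\,b_{u'}(i)\in[0,1]$ at coordinate $u'$ in the original space. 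Monotonicity of $A$ (as a componentwise nonnegative linear map) gives $\by\le\by'\Rightarrow A\by\le A\by'$, and both endpoints of the scaled step remain in $[0,1]^N$. Applying the $\gamma$-weakly DR inequality of $F$ at $A\by\le A\by'$ with this scaled step then transfers verbatim to the $\gamma$-weakly DR inequality for $\tilde G$ at $\by\le\by'$ with step $c$ in direction $\be_{(i,u')}$.

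To conclude, I would apply Lemma~\ref{lem:weakDR-mixture} to $\tilde G$ with the stacked inputs $\tilde\bx(1),\dots,\tilde\bx(r)$ and the weights $p_1,\dots,p_r$, and then rewrite each $\tilde G$-value via the compatibility identity to land exactly on the stated inequality for $F$. The only genuinely substantive step is the verification of $\gamma$-weakly DR for $\tilde G$: once the coordinate identity $A(\by+c\,\be_{(i,u')})-A\by=c\,b_{u'}(i)\,\be_{u'}^{(N)}$ is in hand, the rest reduces to the monotonicity of $A$ and block-disjoint bookkeeping.
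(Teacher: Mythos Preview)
Your proposal is correct and follows essentially the same approach as the paper: both define the lifted function $\tilde G(\by)=F\bigl(\sum_i \mathbf{b}(i)\odot\pi_i(\by)\bigr)$ on $[0,1]^{hN}$, verify it is $\gamma$-weakly DR-submodular, apply Lemma~\ref{lem:weakDR-mixture} to the stacked inputs, and translate back via the block identity. Your explicit coordinate computation $A(\by+c\,\be_{(i,u')})-A\by=c\,b_{u'}(i)\,\be_{u'}^{(N)}$ is in fact more careful than the paper's appeal to Lemma~\ref{lem:weakDR-closure} ``applied blockwise,'' but the overall architecture is identical.
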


\begin{proof}
Define $G:[0,1]^{Nh}\to\mathbb{R}_{\ge 0}$ on $h$ blocks by
\[
G\big(\mathbf{c}(1),\ldots,\mathbf{c}(h)\big)\ :=\ F\!\left(\sum_{i=1}^{h} \mathbf{b}(i)\odot \mathbf{c}(i)\right).
\]
Since $F\ge 0$ and the map
\(
(\mathbf{c}(1),\dots,\mathbf{c}(h))\mapsto \sum_{i=1}^h \mathbf{b}(i)\odot \mathbf{c}(i)
\)
is coordinatewise nonnegative and linear with $\sum_i \mathbf{b}(i)=\mathbf{1}$, it follows from
Lemma~\ref{lem:weakDR-closure} (applied blockwise) that $G$ is also nonnegative and $\gamma$-weakly
DR-submodular on $[0,1]^{Nh}$.

For each $j\in[r]$, define the block vector
\[
\mathbf{x}(j)\ :=\ \big(\mathbf{x}^{(1)}(j),\,\mathbf{x}^{(2)}(j),\,\ldots,\,\mathbf{x}^{(h)}(j)\big)\ \in\ [0,1]^{Nh}.
\]
Then
\[
G\!\left(\bigoplus_{j=1}^{r} p_j\,\mathbf{x}(j)\right)
\ =\
F\!\left(\sum_{i=1}^{h} \mathbf{b}(i)\odot \bigoplus_{j=1}^{r} p_j\,\mathbf{x}^{(i)}(j)\right),
\]
which is exactly the left-hand side of the desired inequality.

Apply Lemma~\ref{lem:weakDR-mixture} to $G$ with inputs
$\mathbf{x}(1),\dots,\mathbf{x}(r)$ and coefficients $p_1,\dots,p_r$:
\[
G\!\left(\bigoplus_{j=1}^{r} p_j\,\mathbf{x}(j)\right)
 \;\ge\;
\sum_{S\subseteq[r]}
\ \Biggl(\ \prod_{j\in S}\beta_\gamma(p_j)\prod_{j\notin S}\bigl(1-\beta_\gamma(p_j)\bigr)\ \Biggr)
G\!\left(\bigoplus_{j\in S} \mathbf{x}(j)\right).
\]
Finally, note that for every $S\subseteq[r]$,
\[
G\!\left(\bigoplus_{j\in S} \mathbf{x}(j)\right)
\;=\;
F\!\left(\sum_{i=1}^{h} \mathbf{b}(i)\odot \bigoplus_{j\in S} \mathbf{x}^{(i)}(j)\right),
\]
so substituting this identity into the right-hand side above gives exactly the claimed inequality.
\end{proof}

\end{document}